\newif\ifshowcomments
\newcommand{\ben}[1]{\textcolor{OliveGreen}{[BA: #1]}}
\newcommand{\jp}[1]{\textcolor{Blue}{[JP: #1]}}
\newcommand{\ben}[1]{}
\newcommand{\jp}[1]{}
\renewcommand{\cal}{\mathcal} 
\newcommand{\e}{{\varepsilon}}
\renewcommand{\P}{\mathbb{P}}
\newcommand{\E}{\mathbb{E}}
\newcommand{\C}{\mathbb{C}}
\newcommand{\V}{\mathbb{V}}
\newcommand{\deq}{\mathrel{\mathop:}=} 
\newcommand{\deqrev}{=\mathrel{\mathop:}} 
\newcommand{\ft}{\sigma_\textsc{t}}
\newcommand{\fs}{\sigma}
\newcommand{\bfx}{\mathbf{x}}
\newcommand{\bfy}{\mathbf{y}}
\newcommand{\bft}{\mathbf{\theta}}
\newcommand{\bfe}{\boldsymbol{\varepsilon}}
\DeclareMathOperator{\diag}{diag}
\newcommand{\bias}{B}
\newcommand{\tE}{T}
\newcommand{\Etrain}{E_\text{train}}
\newcommand{\Etest}{E_\text{test}}
\DeclareMathOperator{\tr}{tr}
\newcommand{\eq}[1]{\begin{equation}#1\end{equation}}
\newcommand{\al}[1]{\begin{align}#1\end{align}}
\newcommand{\pa}[1]{\left({#1}\right)}
\newcommand{\q}[1]{[{#1}]}
\newcommand{\qa}[1]{\left[{#1}\right]}
\newcommand{\ha}[1]{\left\{{#1}\right\}}
\newcommand{\bfi}{\mathbf{i}}
\newcommand{\bfj}{\mathbf{j}}
\newcommand{\bfee}{\mathbf{e}}
\newcommand{\bfX}{\mathbf{X}}
\newtheorem{theorem}{Theorem}
\newtheorem{lemma}{Lemma}
\newtheorem{proposition}{Proposition}
\newtheorem{corollary}{Corollary}
\newtheorem{remark}{Remark}
\newtheorem{example}{Example}
    \title{COMMENTS ARE DISPLAYED: Understanding Double Descent Requires a Fine-Grained Bias-Variance Decomposition}
    \title{Understanding Double Descent Requires a Fine-Grained Bias-Variance Decomposition}
\author{%
  Ben Adlam\thanks{Both authors contributed equally to this work. \textsuperscript{\textdagger}Work done as a member of the Google AI Residency program (https://g.co/airesidency).} ~\textsuperscript{\textdagger} \qquad Jeffrey Pennington\textsuperscript{*}\\
  Google Brain\\
  \texttt{\{adlam, jpennin\}@google.com} \\
}
\begin{document}
\doparttoc 
\faketableofcontents 


\maketitle

\begin{abstract}

Classical learning theory suggests that the optimal generalization performance of a machine learning model should occur at an intermediate model complexity, with simpler models exhibiting high bias and more complex models exhibiting high variance of the predictive function. However, such a simple trade-off does not adequately describe deep learning models that simultaneously attain low bias and variance in the heavily overparameterized regime. A primary obstacle in explaining this behavior is that deep learning algorithms typically involve multiple sources of randomness whose individual contributions are not visible in the total variance. To enable fine-grained analysis, we describe an interpretable, symmetric decomposition of the variance into terms associated with the randomness from sampling, initialization, and the labels. Moreover, we compute the high-dimensional asymptotic behavior of this decomposition for random feature kernel regression, and analyze the strikingly rich phenomenology that arises. We find that the bias decreases monotonically with the network width, but the variance terms exhibit non-monotonic behavior and can diverge at the interpolation boundary, even in the absence of label noise. The divergence is caused by the \emph{interaction} between sampling and initialization and can therefore be eliminated by marginalizing over samples (\emph{i.e.} bagging) \emph{or} over the initial parameters (\emph{i.e.} ensemble learning).
\end{abstract}

\section{Introduction}
\label{sec_intro}

It is undeniable that modern neural networks (NNs) are becoming larger and more complex, with many state-of-the-art models now employing billions of trainable parameters \cite{radford2019language, adiwardana2020towards,shazeer2017outrageously}. While parameter count may be a crude way of quantifying complexity, there is little doubt that these models have enormous capacity, often far more than is needed to perfectly fit the training data, even if the labels are pure noise~\cite{zhang2016understanding}. Surprisingly, these same high-capacity models generalize well when trained on real data.

These observations conflict with classical generalization theory, which contends that models of intermediate complexity should generalize best, striking a balance between the bias and the variance of their predictive functions. A paradigm for understanding the observed generalization behavior of modern methods is known as \emph{double descent}~\cite{belkin2019reconciling}, in which the test error behaves as predicted by classical theory and follows the standard U-shaped curve until the point where the training set can be fit exactly, but after this point it begins to descend again, eventually finding its global minimum in the overparameterized regime.

While double descent has been the focus of significant research, a concrete and interpretable theoretical explanation for the phenomenon has thus far been lacking. One of the challenges in developing such an explanation is that the full phenomenology of double descent is not evident in linear models that are easy to analyze. Indeed, for linear models the number of parameters is tied to the number of features and there is no natural way to adjust the capacity of the model without simultaneously adjusting the data distribution. In this work, we overcome this challenge by providing a precise asymptotic analysis of random feature kernel regression, which is a model rich enough to exhibit all the interesting features of double descent.

Another challenge in understanding double descent is that the classical bias-variance decomposition is \emph{itself} insufficiently nuanced to reveal all the underlying explanatory factors. Indeed, modern learning algorithms typically involve multiple sources of randomness and isolating the variation caused by each of these sources of randomness is key to building an effective interpretation. As we will see, it is not possible to fully understand the spike in test error near the interpolation threshold without performing a truly multivariate variance decomposition.

While decomposing the variance has been proposed before, prior work has naively relied on the law of total variance, which requires specifying an ordering of conditioning that leads to some arbitrariness. Instead, we present a principled symmetric decomposition which leads to unambiguous interpretations and clear credit assignment. Decomposing the variance of a random variable in this way is related to ANOVA \cite{efron1981jackknife}, which has been used previously in a machine learning context to find the best approximating functions (in terms of mean squared error) to a random variable with limited dependence on the inputs \cite{stone1994use,huang1998projection} and to study quasi Monte Carlo methods for integration \cite{owen2003dimension}.

Finally, we remark that an improved understanding of the bias and variance of machine learning models might naturally suggest ways to improve their performance. Specifically, any prior knowledge about what sources of variance may be dominant could help inform decisions about which types of ensemble or bagging techniques to utilize.

\subsection{Related Work}
\label{subsec_related_work}

The idea of a trade-off between bias and variance has a long history, with theoretical and experimental support having been well established in a variety of contexts over the years. The seminal paper of~\citet{geman1992neural} examines a number of models, ranging from kernel regression to $k$-nearest neighbor to neural networks, and concludes that the trade-off exists in all cases\footnote{Interestingly, the variance of simple feed-forward neural networks was observed to eventually be a decreasing function of width, but the authors rationalized this early evidence of double descent as a quirk of the optimization.}. The resulting U-shaped test error curve was verified theoretically in a variety of classical settings, see \emph{e.g.}~\cite{vapnik1999overview}.

In recent years, these conclusions have been called into question by the intriguing experimental results of~\cite{zhang2016understanding,belkin2018understand}, which were later replicated in a number of settings, see \emph{e.g.}~\cite{nakkiran2019deep}, which showed that deep neural networks and kernel methods can generalize well even in the interpolation regime, implying that both the bias and the variance can decrease as the model complexity increases. A number of theoretical results have since established this behavior in certain settings, such as interpolating nearest neighbor schemes~\cite{belkin2018overfitting} and kernel regression~\cite{belkin2018does,liang2018just}. These observations have given rise to the double descent paradigm for understanding how test error depends on model complexity \cite{belkin2019reconciling}. The influential work \cite{advani2017high} (which actually predates \cite{belkin2019reconciling}) established initial theoretical insights for linear networks and found empirical evidence of double descent for nonlinear networks; more evidence has followed recently in \cite{nakkiran2019deep,geiger2019scaling}. Precise theoretical predictions soon confirmed this picture for linear regression in various scenarios~\cite{kobak2018optimal,belkin2019two,hastie2019surprises, mitra2019understanding}, and recently even for kernel regression~\cite{mei2019generalization,adlamneural} with random features related to neural networks.

The primary focus of these recent works has been on double descent in the total test error, or perhaps the standard bias-variance decomposition with respect to label noise~\cite{mei2019generalization}. A multivariate philosophy similar to ours is advanced in~\cite{neal2018modern}, which revisited the empirical study of the bias-variance tradeoff in neural networks from \cite{geman1992neural} and showed the variance can decrease in the overparameterized regime. However, in that work the variance is simply decomposed using the law of total variance, which, while mathematically sound, can lead to ambiguous conclusions, as we discuss in Sec.~\ref{sec_fine_grained_analysis_dd}.

The main mathematical tools we utilize come from random matrix theory and build on the results of~\cite{pennington2017nonlinear,pennington2018spectrum,adlam2019random,louart2018random,peche2019note} for studying random matrices with nonlinear dependencies. We also rely on techniques from operator-valued free probability for computing traces of large block matrices~\cite{far2006spectra}. One advantage of these tools is that they facilitate the extension of our analysis to more general settings, including the case of kernel regression with respect to the Neural Tangent Kernel (NTK)~\cite{jacot2018neural}. To ease the exposition we have deferred the discussion of the NTK and all proofs to the Supplementary Material (SM).

While finalizing this manuscript, we became aware of several concurrent works that examine similar questions. \citet{yang2020rethinking} define the total bias and variance similarly to \cite{neal2018modern}, but they do not attempt a decomposition of the variance. Their results can be derived as a special case of our fine-grained decomposition by summing the variance terms in Thm.~\ref{thm:main}. \citet{jacot2020implicit} study the relationship between the random feature model and the nonparametric Gaussian process which it approximates. The bias-variance decomposition considered in that paper is again univariate and is with respect to the randomness in the random features (the expressions are subsequently averaged over the training data). Closest to our work is \cite{d2020double}, which also studies a multivariate decomposition of the random feature model in the high-dimensional limit. Unlike our approach, their decomposition is not symmetric with respect to the underlying random variables, and the results depend on the chosen order of conditioning. Their particular choice, and indeed all possible choices, arise as special cases of our general result. See Sec.~\ref{sec_dasc_bv} for a detailed discussion.

\subsection{Our Contributions}
\label{subsec_contribution}
\begin{enumerate}[nolistsep]
    \item We develop a symmetric, interpretable variance decomposition suitable for modern deep learning algorithms
    \item We compute this decomposition analytically for random feature kernel regression in the high-dimensional asymptotic regime
    \item We prove that the bias is monotonically decreasing as the width increases and that it is finite at the interpolation threshold
    \item We clarify the relationship between label noise and double descent: while the test loss can diverge at the interpolation threshold without label noise, the divergence is exacerbated by it
    \item We provide a quantitative description of how both ensemble and bagging methods can eliminate double descent, since the divergence is caused by variance terms due to the interactions between sampling and initialization
\end{enumerate}

\section{Bias-Variance Decomposition}
\label{sec_bias_variance}

In this section, we trace through the evolution of several ways to analyze the bias-variance trade-off. By analyzing their shortcomings, we motivate our fine-grained analysis that follows.

\subsection{Classical Bias-Variance Decomposition}
The bias-variance trade-off has long served as a useful paradigm for understanding the generalization of machine learning algorithms. For a given test point $\bfx$, it decomposes the expected error as
\eq{\label{eq_bias_variance_x}
    \E \qa{\hat{y}(\bfx)-y(\bfx)}^2 = \pa{\E\hat{y}(\bfx) - \E y(\bfx)}^2 + \V\qa{\hat{y}(\bfx)} + \V\q{y(\bfx)}\,,
}
and subsequently averages over the test point to obtain a decomposition of the test error in which the first term is the bias, the second term is the variance, and the third term is the irreducible noise. In classical settings, the randomness of the predictive function is usually regarded as coming from randomness in the training data, \emph{i.e.} sampling noise. This leads to two common conventions, where the expectations in eqn.~\eqref{eq_bias_variance_x} are over both $X$ and $\bfy$ or are conditional on $X$ and only over the label noise in $\bfy$. For concreteness and to simplify the exposition, in this subsection we adopt the latter convention and make the common modelling assumption that the sampling noise is an additive term $\bfe$ on the training labels but is zero on the test labels $y(\bfx)$. Using $\E_\bfx$ to denote expectation over the test point, we have
\eq{
\label{eq_bias_variance_eps}
\Etest \deq \E_{\bfx}\E_{\bfe} \qa{\hat{y}(\bfx)-y(\bfx)}^2 = \underbrace{\E_{\bfx}\pa{\E_{\bfe}[\hat{y}(\bfx)] -  y(\bfx)}^2}_{\text{Bias}} + \underbrace{\E_{\bfx}\V_{\bfe}\qa{\hat{y}(\bfx)}}_{\text{Variance}} \,.
}
We refer to eqn.~\eqref{eq_bias_variance_eps} as the \emph{classical bias-variance decomposition}.

\subsection{Bias-Variance Decompositions for Modern Learning Methods}
\label{sec_bv_for_modern_ml}
Modern methods for training neural networks often utilize additional sources of randomness, such as the initial parameter values, minibatch selection, \emph{etc}., which we collectively denote by $\bft$. One is therefore left with a choice regarding whether or not to include $\bft$ in the expectations in eqn.~\eqref{eq_bias_variance_x}, or to simply average over $\bft$ when computing the test loss. We explore the ramifications of these different choices below.

\paragraph{Semi-classical Approach.}
In what we call the \emph{semi-classical} approach, the additional random variables $\bft$ coming from initialization or optimization are not included in the expectations in eqn.~\eqref{eq_bias_variance_x}; we instead average over these quantities to define
 \eq{
\label{eq_bias_variance_theta}
\Etest \deq \E_{\bfx}\E_{\bft}\E_{\bfe} [(\hat{y}(\bfx)-y(\bfx))^2|\bft] =  \underbrace{\E_{\bfx}\E_{\bft}\pa{\E_{\bfe}[\hat{y}(\bfx)|\bft] -  y(\bfx)}^2}_{B_{SC}} + \underbrace{\E_{\bfx}\E_{\bft}\V_{\bfe}[\hat{y}(\bfx)|\bft]}_{V_{SC}}\,.
}
In some scenarios, such as the high-dimensional setup analyzed in~\cite{mei2019generalization}, the additional averaging over $\theta$ is unnecessary as the distributions concentrate around their mean. In those situations, the semi-classical decomposition is identical to the classical one, thus motivating this particular approach.

\paragraph{Multivariate Approach.}
In what we call the \emph{multivariate approach}, the additional random variables $\bft$ are included in the expectations in eqn.~\eqref{eq_bias_variance_x}, so that all random variables are on the same footing. We can then drop explicit references to $\bft$ and $\bfe$ and simply write,
 \eq{
\label{eq_bias_variance_total}
\Etest \deq \E_{\bfx}\E (\hat{y}(\bfx)-y(\bfx))^2 = \underbrace{\E_{\bfx}\pa{\E[\hat{y}(\bfx)] -  y(\bfx)}^2}_{B}  + \underbrace{\E_{\bfx}\V[\hat{y}(\bfx)]}_{V}\,.
}
One advantage of this perspective is that its form is completely symmetric with respect to the underlying random variables. Another is that the predictive function $\hat{y}(\bfx)$ appearing in the bias $B$ is not conditional on any random variables. As we discuss in Sec.~\ref{sec_fine_grained_analysis_dd}, this facilitates its interpretation as a measure of erroneous assumptions in the model.

The downside of this perspective is that the variance $V$ no longer admits a simple interpretation since it contains contributions from multiple random variables. This problem can be remedied by further decomposing the variance.

\subsubsection{Symmetric Decomposition of the Variance}
\label{sec:symm}
To gain further insight into the structure of the total variance $V$ and how individual random variables contribute to it, it can be useful to write $V$ as a sum of individual terms, each with an unambiguous meaning.

One path forward is to rely on the law of total variance: $\V \qa{\cal{Y}} = \E\V\qa{\cal{Y}|\cal{X}} + \V\E\qa{\cal{Y}|\cal{X}} $, where the terms represent the variance of $\cal{Y}$ \emph{unexplained} and \emph{explained} by $\cal{X}$ respectively. However, one is immediately confronted by the question of which source of randomness to condition on. As we discuss in Sec.~\ref{sec:ambig}, different choices yield different terms and can lead to ambiguous interpretations.

To avoid this ambiguity, we introduce a fully-symmetric decomposition, which turns out to be unique if we additionally require self-consistency under marginalization with respect to all variables.

\begin{proposition}
\label{prop:decomp}
    Let $X_1,\ldots,X_K,$ and $Y$ be random variables and $\mathcal{X}\deq \{X_1,\ldots,X_K\}$. We define a \emph{variance decomposition} of $Y$ to be a multiset $\{V_1,\ldots,V_N \}$ of nonnegative real numbers such that $\V[Y] = \sum_i V_i$. Then there exists a unique variance decomposition $\mathcal{V} \deq \{V_s : s \subseteq \mathcal{X}\}$ such that $\mathcal{V}$ is invariant under permutations of $\mathcal{X}$, and such that for all $S \subseteq \mathcal{X}$ the marginal variances satisfy the subset-sum relation,
    \eq{
    \label{eq:subset_sum}
        \V\E[Y|X_j \text{ for } j\in S] = \sum_{s\subseteq S} V_{s}\,.
    }
\end{proposition}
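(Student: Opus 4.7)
The plan is to combine Möbius inversion on the Boolean lattice with the Hoeffding/ANOVA decomposition of $Y$. For \textbf{uniqueness}, the subset-sum relation is a triangular linear system indexed by $(2^{\mathcal{X}}, \subseteq)$ in the unknowns $V_s$. Setting $f(S) \deq \V\E[Y \mid X_j, j \in S]$, Möbius inversion immediately forces
\[
V_S = \sum_{s \subseteq S} (-1)^{|S|-|s|} f(s),
\]
so any collection satisfying the subset-sum constraint must equal this one; permutation invariance of the multiset is then automatic since the formula is stated purely in terms of the subset structure.

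For \textbf{existence}---i.e., nonnegativity of the $V_S$ and the identity $\sum_S V_S = \V[Y]$---I would construct the Hoeffding decomposition explicitly. Under independence of $X_1,\ldots,X_K$ (implicit in the paper, where the $X_i$ correspond to sampling, initialization, and label noise) and with $Y$ a function of $\mathcal{X}$, I define
\[
g_S(X_S) \deq \sum_{T \subseteq S} (-1)^{|S|-|T|} \E[Y \mid X_j, j \in T],
\]
and set $V_S \deq \V[g_S] \ge 0$. A direct inclusion-exclusion argument using $\sum_{S \supseteq T} (-1)^{|S|-|T|} = [\,T = \mathcal{X}\,]$ shows that $\sum_{S \subseteq \mathcal{X}} g_S = \E[Y \mid \mathcal{X}] = Y$.

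The crux of the argument is the conditional-expectation lemma: $\E[g_S \mid X_j, j \in R] = g_S$ when $S \subseteq R$, and equals $0$ otherwise. This reduces to the identity $\E\bigl[\E[Y \mid X_T] \bigm| X_R\bigr] = \E[Y \mid X_{T \cap R}]$, which holds \emph{precisely} because the $X_i$ are independent; substituted into the definition of $g_S$, the resulting signed sum telescopes to zero by inclusion-exclusion whenever $S \not\subseteq R$. From the lemma, pairwise $L^2$-orthogonality follows (for $S \neq T$ with WLOG $S \not\subseteq T$, $\E[g_S g_T] = \E[g_T \E[g_S \mid X_T]] = 0$), and conditioning $Y = \sum_S g_S$ on $X_R$ gives $\E[Y \mid X_R] = \sum_{S \subseteq R} g_S$, whose variance by orthogonality is $\sum_{S \subseteq R} V_S$---the subset-sum relation. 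Specializing to $R = \mathcal{X}$ yields $\V[Y] = \sum_S V_S$, confirming that these $V_S$ form a valid variance decomposition.

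The \textbf{main obstacle} is the conditional-expectation lemma: a careful inclusion-exclusion that crucially relies on independence of the $X_i$. Once it is in hand, nonnegativity, orthogonality, the subset-sum identity, and the sum-to-$\V[Y]$ property all follow mechanically.
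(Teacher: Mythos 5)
Your proof is correct, and for the crucial nonnegativity step it takes a genuinely different route from the paper. You and the paper agree on uniqueness: the subset-sum relation is a triangular system on the Boolean lattice, and Möbius inversion (or the equivalent induction on $|S|$) forces the explicit formula $V_S = \sum_{s\subseteq S}(-1)^{|S|-|s|}\V\E[Y\mid X_s]$. Where you diverge is in proving $V_S\ge 0$. You construct the Hoeffding/ANOVA components $g_S$ directly via inclusion-exclusion on conditional expectations, prove the conditional-expectation lemma $\E[g_S\mid X_R]=g_S\cdot\mathbf{1}[S\subseteq R]$, deduce $L^2$-orthogonality, and get $V_S=\V[g_S]\ge 0$ for free. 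The paper instead introduces a coupling: an i.i.d.\ copy $\tilde{\bfX}$ and the quantities $H_\bfi=\E\qa{h(\bfX)h(\tilde{\bfX})\mid\mathbf B=\bfi}$, then massages the Möbius sum into $V_\bfi=2^{-|\bfi|}\,\E\pa{\sum_{\bfj\le\bfi}\bar h(\bfX_\bfj,\tilde\bfX_{\bfi-\bfj})}^2\ge 0$. Both arguments hinge on the same independence identity $\E\bigl[\E[Y\mid X_T]\bigm| X_R\bigr]=\E[Y\mid X_{T\cap R}]$; your version makes this the explicit named lemma, while in the paper it is buried in the step $H_\bfj=\E\,\bar h(\bfX_\bfj,\bfX_{\bfi-\bfj})\bar h(\bfX_\bfj,\tilde\bfX_{\bfi-\bfj})$. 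Your approach buys a cleaner conceptual picture (an orthogonal $L^2$ decomposition, from which the subset-sum relation and $\sum_S V_S=\V[Y]$ drop out simultaneously) and connects directly to the functional-ANOVA / Sobol'-indices literature; the paper's coupling is more self-contained and computational, and happens to be the same machinery they reuse later to compute the $H_\bfi$ asymptotics. One thing worth flagging in either write-up: the proposition as stated does not assume independence of $X_1,\ldots,X_K$, but both proofs (and indeed nonnegativity itself, via superadditivity of explained variance) require it, so it deserves to be made an explicit hypothesis.
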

\begin{example}\label{ex_2_var}
Consider the case of two random variables, the parameters $P$ and the data $D$. Then $\mathcal{X} = \{P,D\}$ and the decomposition satisfying Prop.~\ref{prop:decomp} is given by
\begin{align}
    V_P &\deq \E_\bfx \V\E[\hat{y}| P]\\
    V_D &\deq \E_\bfx \V\E[\hat{y}| D]\\
    V_{PD} &\deq \E_\bfx \V\E[\hat{y}|P,D] - \E_\bfx \V\E[\hat{y}| P] - \E_\bfx \V\E[\hat{y}| D]\,.
\end{align}
We can interpret $V_{PD}$ as the variance explained by the parameters and data together beyond what they explain individually.
\end{example}
\begin{example}\label{ex_3_var}
Further decomposing $D$ into randomness from sampling the inputs $X$ and label noise $\bfe$, we can write $\mathcal{X} = \{P,X,\bfe\}$ and the decomposition satisfying Prop.~\ref{prop:decomp} is given by,
\begin{align}
\label{eq_A3_start}
    V_{X} & \deq \E_\bfx\V\E[\hat{y}| X], \\
    V_{\bfe} & \deq \E_\bfx\V\E[\hat{y}| \bfe], \\
    V_{P} & \deq \E_\bfx\V\E[\hat{y}| P], \\
    V_{X\bfe} & \deq \E_\bfx\V\E[\hat{y}| X, \bfe] - \E_\bfx\V\E[\hat{y}| X] - \E_\bfx\V\E[\hat{y}| \bfe] ,\\
    V_{PX} & \deq \E_\bfx\V\E[\hat{y}| P, X] -\E_\bfx \V\E[\hat{y}| X] - \E_\bfx\V\E[\hat{y}| P] ,\\
    V_{P\bfe} & \deq \E_\bfx\V\E[\hat{y}|X, \bfe] - \E_\bfx\V\E[\hat{y}| \bfe] -\E_\bfx \V\E[\hat{y}| P] , \\
    V_{P X\bfe} & \deq \E_\bfx\V\E[\hat{y}|P, X,  \bfe] - \E_\bfx\V\E[\hat{y}| X, \bfe] - \E_\bfx\V\E[\hat{y}| P,X] - \E_\bfx\V\E[\hat{y}| X,\bfe]\nonumber \\\label{eq_A3_end}
    &\quad  +\E_\bfx \V\E[\hat{y}| X] + \E_\bfx\V\E[\hat{y}| \bfe] + \E_\bfx\V\E[\hat{y}| P].
\end{align}
\end{example}
\begin{remark}
\label{rem:areas}
Because $V_s \ge 0$ and $V = \V[\hat{y}] = \sum_s V_s$, the subset-sum relation~(\ref{eq:subset_sum}) yields an interpretation of $V$ as the union of disjoint areas, forming a Venn diagram. See Fig.~\ref{fig:venn_variance}(d,e). The reader may also recognize the quantities above as those that are estimated in a three-way ANOVA.
\end{remark}

\section{Asymptotic Variance Decomposition for Random Feature Regression}
\label{sec:model}
\paragraph{Problem setup and notation.}
Following prior work modeling double descent~\cite{hastie2019surprises,mei2019generalization,adlamneural}, we perform our analysis in the high-dimensional asymptotic scaling limit in which the dataset size $m$, feature dimensionality $n_0$, and hidden layer size $n_1$ all tend to infinity at the same rate, with $\phi \deq n_0/m$ and $\psi \deq n_0/n_1$ held constant.

We consider the task of learning an unknown function from $m$ independent samples $(\bfx_i, y_i) \in \mathbb{R}^{n_0}\times \mathbb{R},\,i=1,\ldots, m$, where the datapoints are standard Gaussian, $\bfx_i \sim \cal{N}(0, I_{n_0})$, and the labels are generated by a linear function parameterized by $\beta\in\mathbb{R}^{n_0}$, whose entries are drawn independently from $\mathcal{N}(0,1)$. Concretely, we let
\eq{ 
    y(\bfx_i) = \beta^\top \bfx_i / \sqrt{n_0} + \bfe_i\,,
}
where $\bfe_i \sim \cal{N}(0, \sigma_{\bfe}^2)$ is additive label noise on the training points, yielding a signal-to-noise ratio $\text{SNR} = \sigma_{\bfe}^{-2}$. Although this may seem like a simple data distribution, it turns out that, in these high-dimensional asymptotics, the much more general setting in which the labels are produced by a non-linear teacher neural network can be exactly modeled with a linear teacher of this form (see Sec.~\ref{subsec_setup_sm}).

We consider predictive functions $\hat{y}$ defined by approximate kernel ridge regression using the random feature model\footnote{See the SM for an extension to the Neural Tangent Kernel of a single-hidden-layer neural network \cite{jacot2018neural}.} of \cite{neal1996priors, rahimi2008random}, for which the random features are given by a single-layer neural network with random weights. Specifically, we define the random features on the training set $X = [\bfx_1,\ldots,\bfx_m]$ and test point $\bfx$ to be
\begin{equation}
 F\deq\fs(W_1X/\sqrt{n_0})\,\quad\text{and}\quad f\deq\fs(W_1\bfx/\sqrt{n_0})\,,
\end{equation}
for a weight matrix $W_1 \in \mathbb{R}^{n_1\times n_0}$ with iid entries $[W_1]_{ij} \sim \mathcal{N}(0,1)$\footnote{Any non-zero variance $\sigma_{W_1}^2$ can be absorbed into a redefinition of $\fs$.}. The kernel induced by these random features is 
\eq{\label{eq_K2}
    K(\bfx_1,\bfx_2)\deq \frac{1}{n_1} \fs(W_1\bfx_1/\sqrt{n_0})^\top\fs(W_1\bfx_2/\sqrt{n_0})\,,
}
and the model's predictions are given by
\eq{
\label{eq_yhat_K}
\hat{y}(\bfx) = YK^{-1}K_\bfx\,,
}
where $Y\deq[y(\bfx_1),\ldots,y(\bfx_m)]$, $K\deq K(X,X) + \gamma I_m$, $K_\bfx \deq K(X, \bfx)$, and $\gamma$ is a ridge regularization constant. For this model, $W_1$ plays the role of $\theta$ from Sec.~\ref{sec_bv_for_modern_ml}.

Altogether, the test loss can be written as
\eq{
\label{eq_test_error}
   \Etest = \E_{\beta}\E_{\bfx}(y(\bfx) - \hat{y}(\bfx))^2 = \E_{\bfx}(\beta^\top \bfx/\sqrt{n_0} - Y K^{-1}K_{\bfx})^2\,,
}
where we dropped the outer expectation over $\beta$ because the distribution concentrates around its mean (see the SM).

\subsection{Main Result: Exact Asymptotics for the Fine-Grained Variance Decomposition}
\label{sec:limit}

\label{sec:asymptotic}

\begin{lemma}
\label{lemma:t1t2}
Let $\eta \deq \mathbb{E}[\fs(g)^2]$ and $\zeta \deq (\mathbb{E}[g \sigma(g)])^2$ for $g\sim \mathcal{N}(0,1)$. Then, in the high-dimensional asymptotics defined above, the traces $\tau_1(\gamma) \deq \frac{1}{m}\E\tr(K^{-1})$ and $\tau_2(\gamma) \deq \frac{1}{m}\E \tr(\frac{1}{n_0}X^\top X K^{-1})$ are given by the unique solutions to the coupled polynomial equations,
\begin{equation}
\label{eq:tau1tau2}
  \zeta  \tau_1 \tau_2
   \left(1-\gamma \tau_1\right) = \phi/\psi\left(\zeta  \tau_1 \tau_2 + \phi(\tau_2 -\tau_1) \right) = \left(\tau_1-\tau_2\right) \phi  \left((\eta-\zeta)\tau_1+\zeta \tau_2\right)\,,
\end{equation}
such that $\tau_1, \tau_2\in \mathbb{C}^+$ for $\gamma \in \mathbb{C}^-$.
\end{lemma}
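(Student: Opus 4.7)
The plan is to apply operator-valued free probability to a linear pencil whose resolvent exposes both $\tau_1$ and $\tau_2$ as block traces of a single large Stieltjes transform. First, since $\sigma$ is nonlinear, directly analyzing the spectrum of $K=\frac{1}{n_1}F^\top F + \gamma I_m$ with $F=\sigma(W_1X/\sqrt{n_0})$ is hard. I would invoke the Gaussian-equivalence results of \cite{pennington2017nonlinear,louart2018random,peche2019note,adlam2019random} to replace $F$ in trace computations by its linear surrogate
\eq{
\tilde F \deq \sqrt{\zeta}\, W_1 X/\sqrt{n_0} + \sqrt{\eta-\zeta}\, \Theta,
}
where $\Theta \in \R^{n_1\times m}$ is independent with iid $\mathcal{N}(0,1)$ entries. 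In the high-dimensional asymptotic regime this substitution preserves the joint limit of traces of rational expressions in $K^{-1}$ and $X^\top X/n_0$, so the expectations defining $\tau_1,\tau_2$ are unchanged.

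Next I would build a block-linear pencil $M(\gamma)$ of size $O(m+n_0+n_1)$, with blocks that are multiples of $I$, $W_1$, $X$, $\Theta$, designed by a Schur-complement construction so that $M(\gamma)^{-1}$ has $-K^{-1}$ as one explicit block and $-\frac{1}{n_0}X^\top X K^{-1}$ accessible from another via a further Schur identity. Then $\tau_1$ and $\tau_2$ are, up to sign, $\frac{1}{m}\E\tr$ of two specific blocks of $M(\gamma)^{-1}$.

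I would then apply the matrix Dyson / subordination formalism of \cite{far2006spectra}, in the form used in \cite{adlam2019random}, to $M(\gamma)^{-1}$. Because $W_1,X,\Theta$ are independent and Gaussian, their contributions to the operator-valued covariance map are additive, and the Schwinger--Dyson equation collapses onto a small system in the block-traces. Elimination of the auxiliary block traces, using the algebraic identity $\gamma K^{-1}=I-\frac{1}{n_1}F^\top F K^{-1}$ (which couples the $K^{-1}$ block to the $X^\top X K^{-1}$ block through $W_1$), should reduce the fixed-point system to two independent scalar polynomial constraints, namely the two equalities in \eqref{eq:tau1tau2}.

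The main obstacle I expect is this last algebraic reduction: the raw matrix Dyson equation produces one scalar equation per nontrivial block of the pencil, and turning that larger system into the compact two-equation form of \eqref{eq:tau1tau2} requires carefully exploiting the pencil's block structure and the $\zeta,\eta$ parameterization of the Gaussian equivalent. Uniqueness in $\C^+$ is then essentially automatic: $\tau_1$ is (minus) the Stieltjes transform of the limiting spectral measure of $K$ evaluated at $-\gamma$, and $\tau_2$ has an analogous Stieltjes representation, so both lie in $\C^+$ whenever $\gamma\in\C^-$; the polynomial system has a unique such branch, which can be identified by analytic continuation from the regime $|\gamma|\to\infty$, where $\tau_1,\tau_2\sim -1/\gamma$ and the series expansion in $\gamma^{-1}$ is uniquely determined by the free cumulants.
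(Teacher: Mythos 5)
Your proposal follows essentially the same route as the paper: Gaussian equivalence to replace $F$ by its linear surrogate, construction of a linear pencil encoding $K^{-1}$ and $\frac{1}{n_0}X^\top X K^{-1}$ as blocks, application of the operator-valued Stieltjes transform / Dyson equation machinery of~\cite{far2006spectra,mingo2017free}, elimination of auxiliary block traces down to two scalar polynomial constraints, and uniqueness via the $\mathbb{C}^+\!\to\!\mathbb{C}^+$ analyticity of the Stieltjes branch. The only slip is the asymptotic $\tau_1\sim -1/\gamma$: since $K = K(X,X)+\gamma I_m$ one has $\tau_1 = g(-\gamma)\sim 1/\gamma$ as $|\gamma|\to\infty$, but this does not affect the argument.
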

\begin{theorem}
\label{thm:main}
Let $\tau_1$ and $\tau_2$ be defined as in Lemma~\ref{lemma:t1t2}, and use the prime symbol to denote their derivatives with respect to $\gamma$. Then, as $\Im(\gamma)\to0^-$, the asymptotic bias and variance terms of eqns.~\eqref{eq_A3_start}-\eqref{eq_A3_end} are given by
\begin{equation}
\begin{aligned}[c]
 B &= \tau_2^2/\tau_1^2\\
    V_{P} &= \tau_2'/\tau_1' - B\\
    V_{X} &= \phi B (\tau_1-\tau_2)^2 /(\tau_1^2-\phi(\tau_1-\tau_2)^2)\\
    V_{\bfe} &= 0
\end{aligned}
\qquad
\begin{aligned}[c]
V_{PX} &= -\tau_2'/\tau_1^2-B-V_{P}-V_{X}\\
V_{P\bfe} &= 0\\
V_{X\bfe} &=\sigma_{\bfe}^2 V_X/B\\
    V_{PX\bfe} &= \sigma_{\bfe}^2(-\tau_1'/\tau_1^2-1) - V_{X\bfe}\,.
\end{aligned}
\end{equation}
\end{theorem}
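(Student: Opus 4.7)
The proof reduces each variance component to traces of rational functions of $K^{-1}$, $F$, and $X$, and then evaluates those traces asymptotically via the operator-valued free probability machinery of \cite{far2006spectra, adlam2019random}.

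First, I would decompose the prediction into a signal part and a noise part,
\eqs{
\hat{y}(\bfx) = \tfrac{1}{\sqrt{n_0}}\beta^\top X K^{-1}K_\bfx + \bfe\, K^{-1}K_\bfx,
}
and then, for every $S\subseteq\{P,X,\bfe\}$, compute $\E[\hat{y}(\bfx)\mid S]$, take its variance, and average over the Gaussian test point $\bfx\sim\mathcal{N}(0,I_{n_0})$. Since $\bfe$ enters linearly and is zero-mean independent of $(P,X,\bfx)$, integrating it out simply deletes the noise part; and since $\bfx$ is Gaussian, the $\E_{\bfx}$ step converts every squared linear form into a trace of a Gram-style matrix. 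The inclusion--exclusion identities of Example~\ref{ex_3_var} then isolate each individual $V_s$ as a specific linear combination of these $\V\E[\hat{y}\mid S]$ quantities.

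Second, I would show that every trace which arises can be reduced to the four scalars $\tau_1,\tau_2,\tau_1',\tau_2'$. The differentiated identities $\tfrac{1}{m}\E\tr(K^{-2})=-\tau_1'$ and $\tfrac{1}{mn_0}\E\tr(X^\top X K^{-2})=-\tau_2'$ follow from $\partial_\gamma K^{-1}=-K^{-2}$. The remaining traces, such as $\tfrac{1}{m}\E\tr\bigl(K^{-1}F^\top X X^\top F K^{-1}/(n_0 n_1)\bigr)$, are computed by embedding the nonlinear matrix products into a block linear pencil whose Cauchy transform satisfies a matrix Dyson equation, following the linearization strategy of \cite{far2006spectra, adlam2019random}. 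The scalar reduction of that matrix equation is precisely Lemma~\ref{lemma:t1t2}; its fixed-point relations collapse the raw rational expressions into the closed forms stated in the theorem.

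Third, the vanishing claims $V_\bfe=V_{P\bfe}=0$ reflect the fact that $\E[\hat{y}\mid\bfe]$ and $\E[\hat{y}\mid P,\bfe]$ are deterministic linear functions of $\bfe$ whose coefficient vectors have vanishing $\E_\bfx$-weighted norm in the limit; verifying this is one more resolvent-trace computation of the same type. The main obstacle throughout is the second step: correctly identifying the linearization for each squared-resolvent trace, solving (or differentiating at the appropriate point) the matrix Dyson equation componentwise, and then performing the algebraic simplifications that use Lemma~\ref{lemma:t1t2} repeatedly to match the compact rational expressions in $\tau_1,\tau_2,\tau_1',\tau_2'$ displayed in the theorem. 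Once the traces are in hand, assembling the Venn-diagram subtractions of Example~\ref{ex_3_var} is purely mechanical.
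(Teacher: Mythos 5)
Your roadmap aligns with the paper's at the coarsest level -- both reduce the variance components to large-matrix traces, linearize those traces into block pencils, solve the resulting matrix Dyson equations via operator-valued free probability, and collapse everything onto $\tau_1,\tau_2$ (Lemma~\ref{lemma:t1t2}) and their $\gamma$-derivatives. But there is a genuine gap at the very first reduction step. You propose to ``compute $\E[\hat{y}(\bfx)\mid S]$, take its variance,'' yet for most subsets this intermediate object is not available: $\E[\hat{y}\mid X]$, for instance, would require integrating $W_1$ out of the nonlinear resolvent $K^{-1}K_\bfx$, for which there is no closed form. The paper circumvents this via the coupling device of Sec.~\ref{sec_var_decomp_sm}: introduce iid copies $\tilde P,\tilde X, \tilde\bfe$, compute the eight moments $H_\bfi = \E\bigl[\hat{y}(\bfx;P,X,\bfe)\,\hat{y}(\bfx;\cdot)\bigr]$ in which the second factor shares precisely the variables indicated by $\bfi\in\{0,1\}^3$, and then read off $\V\E[\hat y\mid\bfX_\bfi] = H_\bfi - H_{\mathbf 0}$ and assemble the $V_s$ via eqn.~\eqref{eq:VH}. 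This is essential, not cosmetic: it is what turns the inaccessible conditional expectation into a single expectation of a product of two predictors that free probability can actually handle.

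A concrete consequence, which you do not anticipate, is that the traces one must evaluate carry two \emph{independent} copies of the relevant random matrices. For $H_{010}$ you face $K(X,X;P)^{-1}$ and $K(X,X;\tilde P)^{-1}$ simultaneously; for $H_{100}$ you face $K(X,X;P)^{-1}$ and $K(\tilde X,\tilde X;P)^{-1}$. The resulting pencils are roughly twice the size of, and structurally richer than, the single-resolvent example $\tfrac{1}{m}\E\tr\bigl(K^{-1}F^\top XX^\top FK^{-1}/(n_0 n_1)\bigr)$ you wrote down; getting $V_P$, $V_{PX}$, $V_{X\bfe}$ right hinges on tracking which Gaussian blocks are shared across the two copies and which are free. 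Likewise, $V_\bfe = V_{P\bfe}=0$ does not come from the test-point averaging you suggest; it falls out because the coupled products $H_{001}$ and $H_{101}$ collapse to $H_{000}$ and $H_{100}$ respectively once the $\bfe^\top\bfe$ cross-term is traced against two independent resolvents and found to vanish asymptotically. Once the coupling construction is in place, the remainder of your outline -- Gaussian equivalents, scalar reduction to $\tau_1,\tau_2,\tau_1',\tau_2'$, and the inclusion--exclusion assembly of Example~\ref{ex_3_var} -- matches the paper's derivation closely.
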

\begin{corollary}
\label{cor:interpolation_threshold}
In the ridgeless setting, the bias $B$ is a non-increasing function of the overparameterization ratio $n_1/m = \phi/\psi$. Furthermore, at the interpolation boundary $\psi = \phi$, $V_{PX}$ and $V_{PX\bfe}$ are divergent while the remaining terms are bounded.
\end{corollary}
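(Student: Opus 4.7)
My proof takes the ridgeless limit $\gamma\to 0^+$ of the coupled polynomial system in Lemma~\ref{lemma:t1t2} and treats the overparameterized ($\psi<\phi$) and underparameterized ($\psi>\phi$) regimes separately. The key observation is that $K$ is well-conditioned when $n_1/m=\phi/\psi>1$, but has an asymptotic fraction $1-\phi/\psi$ of vanishing eigenvalues when $\phi/\psi<1$; hence $\tau_1$ is $O(1)$ in the first case and scales like $(1-\phi/\psi)/\gamma$ in the second. Accordingly, in the underparameterized regime I substitute the singular ansatz $\tau_i=\alpha_i/\gamma+O(1)$ and match leading powers of $\gamma$ in the two equalities of Lemma~\ref{lemma:t1t2}.

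\textbf{Monotonicity of the bias.} In the overparameterized regime, setting $\gamma=0$ in the equality $\zeta\tau_1\tau_2=(\tau_1-\tau_2)\phi((\eta-\zeta)\tau_1+\zeta\tau_2)$ and dividing through by $\tau_1^2$ yields a quadratic in $r\deq\tau_2/\tau_1$ whose coefficients depend only on $\phi$, namely $\zeta r=(1-r)\phi((\eta-\zeta)+\zeta r)$. Hence $B=r^2$ is independent of $\psi$ at fixed $\phi$, and is trivially non-increasing in $\phi/\psi$. In the underparameterized regime, matching leading $1/\gamma^2$ coefficients in both equalities gives $\alpha_1=1-\phi/\psi$ and, after dividing by $\alpha_1^2$, the \emph{same} quadratic for $c\deq\alpha_2/\alpha_1$ but with $\phi$ replaced by $\psi$. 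Therefore $B=r_\star^2$, where $r_\star$ is the positive root of the shared quadratic evaluated at argument $\max(\phi,\psi)$. Implicit differentiation, together with the bound $r_\star\in[0,1]$ (derived from the endpoint values $r_\star(0)=0$ and $r_\star(\infty)=1$), shows that $r_\star$ is non-decreasing in its argument, so $B$ is constant in $\psi$ for $\psi<\phi$ and non-decreasing in $\psi$ for $\psi>\phi$; equivalently, $B$ is non-increasing in $\phi/\psi$ throughout.

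\textbf{Divergence at $\psi=\phi$.} Approaching from the overparameterized side, the equality $\zeta\tau_1\tau_2=(\phi/\psi)(\zeta\tau_1\tau_2+\phi(\tau_2-\tau_1))$ at $\gamma=0$ gives $\tau_1^0=\phi^2(1-r)/((\phi-\psi)\zeta r)$, which diverges as $\psi\to\phi^-$. Differentiating both equalities with respect to $\gamma$ and evaluating at $\gamma=0$ produces a $2\times 2$ linear system for $(\tau_1',\tau_2')$ whose coefficients and right-hand side are polynomial in $\tau_1^0$; solving it shows that $\tau_1'$ and $\tau_2'$ each blow up polynomially in $\tau_1^0$ with a common leading factor, so that the ratio $\tau_2'/\tau_1'$ remains bounded. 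Substituting into Theorem~\ref{thm:main}: $B=r^2$ depends only on $\phi$ (bounded); after using $\tau_2=r\tau_1$, the expression $V_X$ simplifies to $\phi B(1-r)^2/(1-\phi(1-r)^2)$, which depends only on $\phi$ (bounded); $V_\bfe=V_{P\bfe}=0$; $V_P=\tau_2'/\tau_1'-B$ is bounded; and $V_{X\bfe}=\sigma_\bfe^2 V_X/B$ is bounded. The only remaining terms contain $\tau_2'/\tau_1^2$ (in $V_{PX}$) and $\tau_1'/\tau_1^2$ (in $V_{PX\bfe}$); the rate analysis shows that $\tau_i'$ grows strictly faster than $(\tau_1^0)^2$, so both ratios, and hence $V_{PX}$ and $V_{PX\bfe}$, diverge.

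\textbf{Main obstacle.} The non-routine step is the divergence-rate analysis in the last paragraph: I must establish that $\tau_1',\tau_2'$ grow strictly faster than $(\tau_1^0)^2$ as $\psi\to\phi^-$, so that $\tau_i'/\tau_1^2\to\infty$ rather than saturating at a finite limit. This requires carefully tracking how each entry of the $2\times 2$ linear system for $(\tau_1',\tau_2')$, and of its inverse, scales in $\tau_1^0$. The remainder of the argument reduces to the implicit function theorem and elementary algebraic manipulation.
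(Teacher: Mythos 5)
Your high-level plan mirrors the paper's argument: pass to the ratio $r=\tau_2/\tau_1=1+\ttau_2$, observe that its defining equation depends only on $\omega=\max(\phi,\psi)$ (the quartic in $\ttau_2$ factorizes into two quadratics, one active in each regime), deduce monotonicity of $B=r^2$, and then control the remaining variance terms through the four ratios $\tau_2/\tau_1$, $\tau_2'/\tau_1'$, $\tau_1'/\tau_1^2$, $\tau_2'/\tau_1^2$. Your singular-ansatz computation in the underparameterized regime correctly reproduces $\alpha_1=1-\phi/\psi$ and the quadratic with $\phi\mapsto\psi$.

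However there are two genuine gaps. First, the step you flag as the ``main obstacle'' is not a side detail but the substance of the second half of the corollary, and you have not carried it out. The paper does not perform an asymptotic rate-tracking argument; it differentiates the coupled system, solves the resulting linear system explicitly (eqns.~\eqref{eq:dtau1}--\eqref{eq:dtau2}), and then substitutes into the four ratios to obtain closed-form expressions such as
\begin{equation*}
\frac{\tau_1'}{\tau_1^2}=\frac{\zeta(\ttau_2+1)}{(\phi-\psi)\,\ttau_2(\zeta\ttau_2+\eta)}\,,\qquad
\frac{\tau_2'}{\tau_1^2}=\frac{\zeta(\ttau_2+1)^2}{(\phi-\psi)\,\ttau_2(\zeta\ttau_2+\eta)}\,,
\end{equation*}
which exhibit the $(\phi-\psi)^{-1}$ pole directly and simultaneously show the finiteness of $\tau_2/\tau_1$ and $\tau_2'/\tau_1'$. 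Without these (or an equivalent computation), the claim that $\tau_i'$ grows ``strictly faster than $(\tau_1^0)^2$'' is unverified, and the corollary's divergence claim is not actually established. Note also that the same computation is needed on both sides of the threshold; the paper treats $\phi<\psi$ and $\phi>\psi$ separately because the algebra differs, and your plan only discusses the approach from the overparameterized side.

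Second, your monotonicity argument asserts ``$r_\star\in[0,1]$ (derived from the endpoint values $r_\star(0)=0$ and $r_\star(\infty)=1$)''—but endpoint values of a function do not bound its intermediate values without already knowing something like monotonicity, so this is circular. The bound is true and easy to get directly (the quadratic $\omega\zeta r^2+[\zeta+\omega(\eta-2\zeta)]r-\omega(\eta-\zeta)=0$ has root product $-(\eta-\zeta)/\zeta\le0$, is non-positive at $r=0$ and equals $\zeta>0$ at $r=1$), and the implicit-differentiation route then does close cleanly because the denominator $2\omega\zeta r+\zeta+\omega(\eta-2\zeta)$ turns out to equal the discriminant's square root; but none of this is shown. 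The paper avoids the issue entirely by writing $\ttau_2$ in closed form and bounding $1+\ttau_2$ and $\partial_\psi\ttau_2$ using only $\eta\ge\zeta$. I would recommend you either compute $r_\star$ explicitly as the paper does, or supply the sign arguments above, and in either case complete the derivative computations that the divergence claim rests on.
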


\section{Fine-Grained Analysis of Double Descent}\label{sec_fine_grained_analysis_dd}
The fine-grained variance decomposition given in Thm.~\ref{thm:main} provides a powerful tool for understanding the origins of double descent. In this section, we use this tool to reinterpret several counterintuitive observations made in prior work and to provide a clear and unambiguous characterization of the source of double descent.

\begin{figure*}[t]
    \centering
\includegraphics[width=\linewidth]{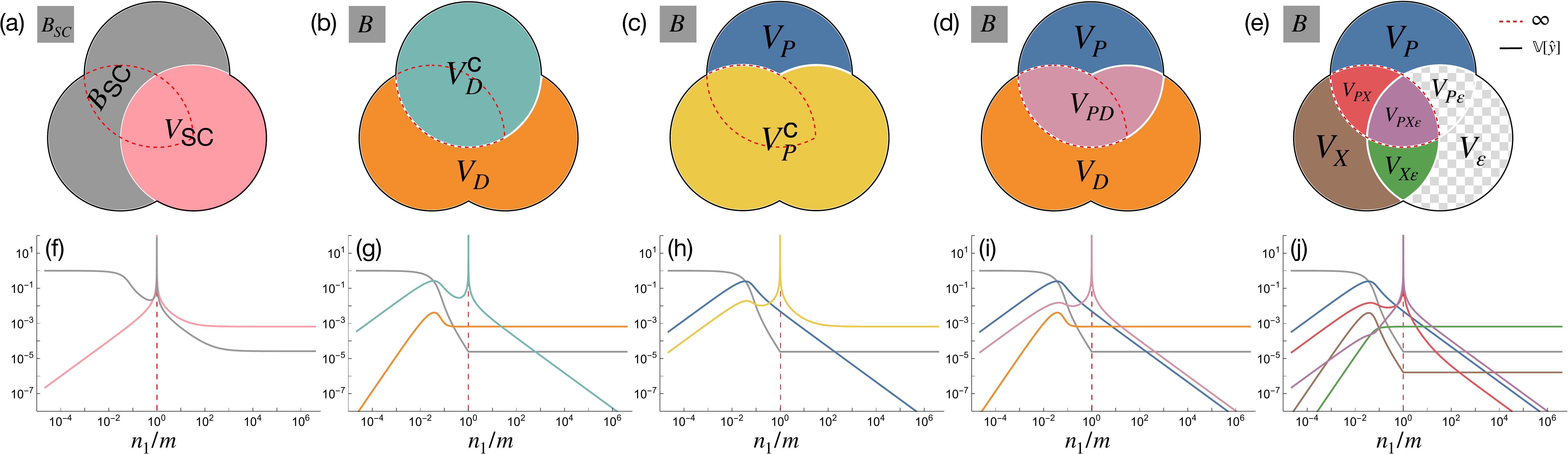}
\caption{(a-e) The different bias-variance decompositions described in Sec.~\ref{sec_fine_grained_analysis_dd}. (f-j) Corresponding theoretical predictions of Thm.~\ref{thm:main} for $\gamma =0$, $\phi=1/16$ and $\fs = \tanh$ with $\text{SNR} = 100$ as the model capacity varies across the interpolation threshold (dashed red). (a,f) The semi-classical decomposition of~\cite{hastie2019surprises,mei2019generalization} has a nonmonotonic and divergent bias term, conflicting with standard definitions of the bias. (b,g) The decomposition of~\cite{neal2018modern} utilizing the law of total variance interprets the diverging term $V_D^\textsc{c}$ as ``variance due to optimization''. (c,h) An alternative application of the law of total variance suggests the opposite, \emph{i.e.} the diverging term $V_P^\textsc{c}$ comes from ``variance due to sampling''. (d,i) A bivariate symmetric decomposition of the variance resolves this ambiguity and shows that the diverging term is actually $V_{PD}$, \emph{i.e.} ``the variance explained by the parameters and data together beyond what they explain individually.'' (e,j) A trivariate symmetric decomposition reveals that the divergence comes from two terms, $V_{PX}$ and $V_{PX\bfe}$ (outlined in dashed red), and shows that label noise exacerbates but does not cause double descent. Since $V_\e=V_{P\e}=0$, they are not shown in (j).}
\label{fig:venn_variance}
\end{figure*}

\subsection{Semi-classical Approach: The Bias Diverges}
In~\cite{hastie2019surprises,mei2019generalization}, double descent in random feature kernel regression was analyzed through the lens of the semi-classical bias-variance decomposition introduced in eqn.~\eqref{eq_bias_variance_theta}. In our setting,
\begin{equation}
    \Etest = B_{SC} + V_{SC}\,,
\end{equation}
where,
\begin{equation}
 B_{SC} = \E_{\bfx}\E_{PX}\pa{\E_{\bfe}[\hat{y}(\bfx)|P,X] -  y(\bfx)}^2\,,\quad\text{and}\quad V_{SC} = \mathbb{E}_{\bfx}\mathbb{E}_{PX}[\mathbb{V}_{\bfe}[\hat{y}|P,X]|\bfx]\,.
\end{equation}

To gain further insight into this decomposition, we can express $B_{SC}$ and $V_{SC}$ in terms of the variables in Thm.~\ref{thm:main}:
\begin{equation}
\label{eq:BSC}
    B_{SC} = B + V_P + V_X + V_{PX} \,,\quad\text{and}\quad V_{SC} = V_{\bfe} + V_{P\bfe}  +  V_{X\bfe} + V_{PX\bfe}\,.
\end{equation}
Using the correspondence between the variance terms and areas mentioned in Remark~\ref{rem:areas}, we illustrate this decomposition in Fig.~\ref{fig:venn_variance}(a). The figure shows that $B_{SC}$ is partially comprised of variance terms. Thm.~\ref{thm:main} allows us to exactly characterize how $B_{SC}$ and $V_{SC}$ depend on the capacity of the model, with results shown in Fig.~\ref{fig:venn_variance}(f). As in~\cite{mei2019generalization}, we observe that the bias $B_{SC}$ and variance $V_{SC}$ exhibit nonmonotonic behavior with respect to the model size and both diverge at the interpolation threshold. 

Because $V_{\bfe} = V_{P\bfe} = 0$ and $V_{X\bfe}$ and $V_{PX\bfe}$ both vanish in the noiseless setting, the semi-classical decomposition has the nice property that $V_{SC} = 0$ when there is no label noise. However, it is hard to reconcile the nonmonotonicity of the bias with its desired interpretation as a measure of the erroneous assumptions in the model as the latter are expected to decrease as the model increases in capacity. For this reason, we believe the multivariate approach outlined in Sec.~\ref{sec_bv_for_modern_ml} provides a more interpretable basis for understanding double descent.

\subsection{Multivariate Approach}
\paragraph{The Law of Total Variance: Ambiguous Conclusions.}
\label{sec:ambig}
Neal \emph{et al.} \cite{neal2018modern} adopt the multivariate approach of Sec.~\ref{sec_bv_for_modern_ml} and decompose the test loss in terms of two sources of randomness, the optimization/initial parameters $P$ and data sampling $D$. The total variance is additionally decomposed according to the law of total variance:
\begin{equation}
\label{eq:vdc}
    V = \underbrace{\E_{\bfx} \V_{D}[\E_{P}[\hat{y}| D]|\bfx]}_{V_D} + \underbrace{\E_{\bfx} \E_{D}[\V_{P}[\hat{y}| D]|\bfx]}_{V^\mathsf{c}_D}\,,
\end{equation}
where Neal \emph{et al.} \cite{neal2018modern} suggests an interpretation for the two terms as ``variance due to sampling'' and ``variance due to optimization,'' respectively. While the expressions in eqn.~\eqref{eq:vdc} are themselves unambiguous, we will see that attributing such an interpretation to them can be somewhat misleading.

Some simple algebra allows us to express $V_D^\textsf{c}$ in terms of the terms in Thm.~\ref{thm:main} as
\eq{\label{eq_neal_v_decomposed}
V_D^\textsf{c} = V_P + V_{PX} + V_{P\bfe} + V_{PX\bfe}\,.
}
Because eqn.~\eqref{eq_neal_v_decomposed} contains $V_{PX}$ and $V_{PX\bfe}$, Corollary~\ref{cor:interpolation_threshold} implies that $V_D^\textsf{c}$ diverges at the interpolation threshold, and indeed we observe that in Fig.~\ref{fig:venn_variance}(g). From the above interpretation of the meaning of $V_D^\textsf{c}$, we might therefore conclude that the ``variance due to optimization'' is the source of double descent.

On the other hand, we could have equally well decided to decompose the variance by conditioning on $P$ instead of $D$, yielding,
\begin{equation}
    V = \underbrace{\E_{\bfx} \V_{P}[\E_{D}[\hat{y}| P]|\bfx]}_{V_P} + \underbrace{\E_{\bfx} \E_{P}[\V_{D}[\hat{y}| P]|\bfx]}_{V^\mathsf{c}_P}\,.
\end{equation}
The corresponding interpretations of these terms would then be ``variance due to optimization'' and ``variance due to sampling,'' respectively. As above, it is straightforward to express $V^\mathsf{c}_P$ as,
\begin{equation}
    V^\mathsf{c}_P = V_X + V_{PX} + V_{X\bfe} + V_{PX\bfe}\,.
\end{equation}
In this case, Corollary~\ref{cor:interpolation_threshold} implies that $V_P^\textsf{c}$ diverges at the interpolation threshold, as Fig.~\ref{fig:venn_variance}(h) confirms. In this case, we might therefore conclude that the ``variance due to sampling'' is the source of double descent.

The above analysis reveals conflicting explanations for the source double descent, depending on which source of randomness is conditioned on when applying the law of total variance. We believe this ambiguity is undesirable and provides further motivation for the symmetric variance decomposition in Prop.~\ref{prop:decomp}.

\paragraph{Bivariate Symmetric Decomposition: $V_{PD}$ is the Source of Divergence.}
In the previous two-variable setting, the symmetric decomposition can be written as (see Example~\ref{ex_2_var}),
\eq{
V = V_P + V_D + V_{PD}\,.
}
See Fig.~\ref{fig:venn_variance}(d) for an illustration of this decomposition. This figure shows that $V_{PD}$ inhabits the ambiguous overlap region that was responsible for the inconsistent interpretations arising from a naive application of the law of total variance. From the theoretical results shown in Fig.~\ref{fig:venn_variance}(i), it is clear that neither the variance explained by the parameters, $V_P$, nor the variance explained by the data, $V_D$, can be responsible for double descent; instead it must be $V_{PD}$ that is causing the divergence. Recalling the definition of $V_{PD}$ in Ex.~\ref{ex_2_var}, we conclude that the divergence at the interpolation boundary is caused by ``the variance explained by the parameters and training data together beyond what they explain individually.'' 

One implication of this interpretation is that if we had a way of removing \emph{either} the variance from the parameters \emph{or} the variance from the data, then the divergence would be eliminated. We examine this phenomenon from the perspective of ensemble and bagging methods in Sec.~\ref{sec:ensemble} and confirm empirically that this is indeed the case. See Fig.~\ref{fig_ens}.

\begin{figure*}[t]
    \centering
    \includegraphics[width=\linewidth]{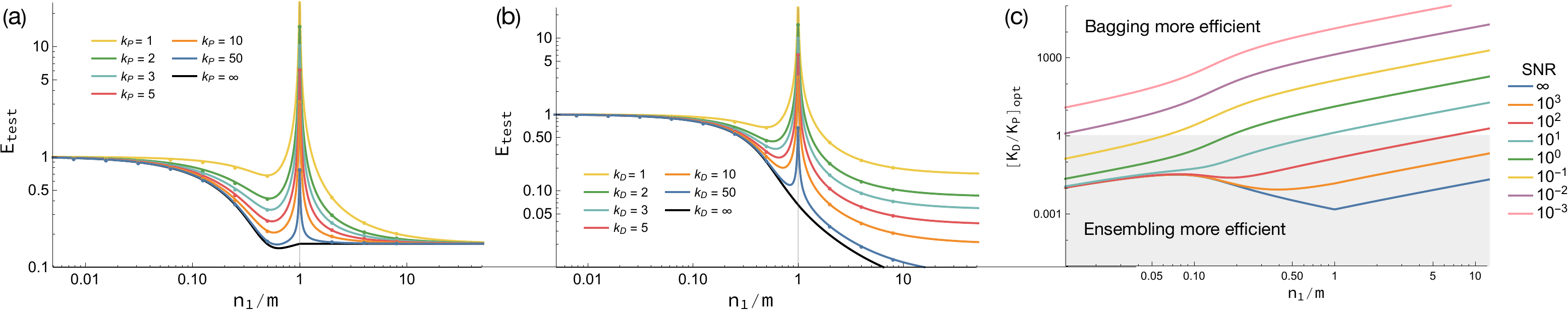}
    \caption{Comparison of (a) ensembles and (b) bagging. Solid lines are theoretical predictions and dots are simulation results. In (a,b) we set $\gamma=10^{-6}$, $n_0=2^{13}$, $m=2^{14}$, $\fs=\tanh$, and $\text{SNR}=5$. Note that as either $k_P$ or $k_D$ increase, the peak around the interpolation threshold decreases. In (c), we plot the optimal ratio $\qa{{k_D/k_P}}_{\text{optimal}}$~\eqref{eq_optimal_ratio} as a function of $n_1/m$ for different SNRs. The shaded area, $\qa{{k_D}/{k_P}}_{\text{optimal}}<1$, is where averaging over the parameters reduces variance more efficiently. As expected, for large width, bagging is much more efficient.}
    \label{fig_ens}
\end{figure*}

\paragraph{Trivariate Symmetric Decomposition: Divergence Persists in Absence of Label Noise.}
Returning to the full model from Sec.~\ref{sec:model} with three sources of randomness, we know from Thm.~\ref{thm:main} that
\eq{
V = V_P + V_X + V_{PX} + V_{X\bfe} + V_{PX\bfe}\,,
}
while the other two variance terms $V_{\bfe}$ and $V_{P\bfe}$ vanish. The seven variance terms are illustrated in Fig.~\ref{fig:venn_variance}(e). The dependence of the five non-zero terms on the model's capacity is plotted in Fig.~\ref{fig:venn_variance}(j). We find that $V_{PX}$ and $V_{PX\bfe}$ both diverge at the interpolation threshold while the other terms remain finite. This result helps explain recent empirical results that have found that label noise amplifies the double descent phenomena~\cite{nakkiran2019deep}: because $V_{PX}$ itself diverges, there is double descent even without label noise, but because $V_{PX\bfe}$ also diverges, label noise can exacerbate the effect.

\section{Ensemble Learning}
\label{sec:ensemble}
The understanding we have developed for the sources of variance enables explicit prediction of the effectiveness of ensemble and bagging techniques. 
We consider averaging the predictive functions of several independently initialized base learners as well as bagging the predictions from models with independent samples of training data. Specifically, we consider $k_P$ independent samples of the parameters, $P_i$, and $k_D$ independent samples of the training data, $X_j$ and $\e_j$. Then our predictive function on a test point $\bfx$ is 
\eq{\label{eq_def_ens_bag}
    \hat{y}^*(\bfx)\deq \frac{1}{k_P k_D} \sum_{i,j} \hat{y}_{ij}(\bfx)\,,
}
where the indicies of $\hat{y}$ indicate the specific sample of parameters and training data used to construct the predictor. A simple calculation gives the variance decomposition of $\hat{y}^*$ as
\al{\label{eq_ensemble_bagging}
    V_P^* &= \frac{V_P}{k_P},~ V_X^* = \frac{V_X}{k_D}, ~ V_\e^* = \frac{V_\e}{k_D}, ~ V_{X\e}^* = \frac{V_{X\e}}{k_D}, \\
     V_{P\e}^* &= \frac{V_{P\e}}{k_Pk_D}, ~ V_{PX}^* = \frac{V_{PX}}{k_Pk_D}, \text{ and } V_{PX\e}^* = \frac{V_{PX\e}}{k_Pk_D}\,,
}
while the bias remains the same. We illustrate these results empirically in Fig.~\ref{fig_ens} and show that ensembles of base learners and bagging are both able to independently reduce the divergence around the interpolation threshold, as they reduce the divergent terms $V_{PX}$ and $V_{PX\e}$.

As the computation of eqn.~\eqref{eq_def_ens_bag} requires evaluating $k_Pk_D$ base learners, it is natural to try to characterize the optimal combination of ensembles and bagging given a fixed computational budget. We find the optimal ratio is given as
\eq{\label{eq_optimal_ratio}
    \qa{k_D/k_P}_\text{optimal} = (V_X + V_{\e} + V_{X\e})/V_P\,.
}
See Fig.~\ref{fig_ens}, which shows that, for the kernel regression problem studied here, ensembles are typically more efficient at small width and bagging is more efficient at large width.

\section{Conclusion}
We analyzed the bias and variance trade-off in the modern setting, where the difference to the classical picture of under- and overfitting is marked. We argued that understanding the behavior of the bias and variance in learning algorithms that depend on large sources of randomness requires rethinking the classical definitions to encompass these sources. 

We presented a bias-variance decomposition that is suitable for these settings, and showed how it can help attribute components of the loss to their causes, while avoiding counterintuitive or ambiguous conclusions. For random feature kernel regression, we gave exact predictions for all of the terms in the decomposition and proved that the bias is monotonically decreasing and identified the source of divergence at the interpolation threshold to be the interaction between the noise from sampling and initialization. We showed that while label noise does not cause the divergence, it can exacerbate the effect. Finally, we made exact predictions for ensemble learning and bagging and provided the computationally optimal strategy to combine them.

\section*{Broader Impact}

While it is hard to envision all future applications of this research, the authors do not believe this theoretical work will raise any ethical concerns or will generate any adverse future societal consequences.

\begin{ack}
We are grateful to Boris Hanin, Jaehoon Lee, Mihai Nica, D. Sculley, Jasper Snoek, and Lechao Xiao for valuable feedback on an earlier version of the paper. We also thank the anonymous reviewers for pointing us to many related works, including the connection to ANOVA.

Funding in direct support of this work came from Google. No third party funding was used.
\end{ack}

\bibliography{references}

\bibliographystyle{unsrtnat}

\newpage
\onecolumn

\newgeometry{verbose,
lmargin=55pt,
rmargin=55pt
}

\setcounter{equation}{0}
\setcounter{figure}{0}
\setcounter{table}{0}
\setcounter{page}{1}
\setcounter{section}{0}

\makeatletter
\renewcommand{\theequation}{S\arabic{equation}}
\renewcommand{\thefigure}{S\arabic{figure}}
\renewcommand{\bibnumfmt}[1]{[S#1]}
\newcommand{\ttau}{\tilde{\tau}}

\setcounter{table}{0}
\renewcommand{\thetable}{S\arabic{table}}
\setcounter{figure}{0}
\renewcommand{\thefigure}{S\arabic{figure}}
\setcounter{section}{0}
\renewcommand{\thesection}{S\arabic{section}}
\setcounter{theorem}{0}
\renewcommand{\thetheorem}{S\arabic{theorem}}
\setcounter{lemma}{0}
\renewcommand{\thelemma}{S\arabic{lemma}}
\setcounter{corollary}{0}
\renewcommand{\thecorollary}{S\arabic{corollary}}
\setcounter{proposition}{0}
\renewcommand{\theproposition}{S\arabic{proposition}}

\newcommand{\thm}[1]{\begin{theorem} #1 \end{theorem}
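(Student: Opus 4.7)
The plan is to apply the explicit formulas in Theorem~\ref{thm:main} and reduce both claims to a careful asymptotic analysis of $\tau_1,\tau_2$ and their $\gamma$-derivatives as $\gamma \to 0^+$, using the fixed-point system in Lemma~\ref{lemma:t1t2}. The system behaves qualitatively differently in the three regimes $\psi<\phi$, $\psi>\phi$, and $\psi=\phi$, and the corollary is really a statement that matches them.

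For the monotonicity of $B=\tau_2^2/\tau_1^2$, I would set $\gamma=0$ directly in eqn.~(\ref{eq:tau1tau2}) and combine the two equalities to eliminate $\tau_1$, obtaining a standalone quadratic in the ratio $s \deq \tau_2/\tau_1$ whose coefficients depend only on $\phi,\eta,\zeta$ and \emph{not} on $\psi$. In the overparameterized regime ($\psi<\phi$) the physical positive root satisfies $s<1$ and makes $\tau_1=\phi^2(s-1)/[\zeta s(\psi-\phi)]$ finite and positive; hence $B=s^2$ is constant in $n_1/m$ there. In the underparameterized regime ($\psi>\phi$), the first equality forces the ansatz $\tau_1=(1-\phi/\psi)/\gamma+O(1)$, and substituting back into the second and third expressions shows $\tau_2/\tau_1 \to 1$, so $B\to 1$. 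Combining, $B$ drops weakly from $1$ to $s^2$ across the interpolation threshold and is therefore non-increasing in $n_1/m$.

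For the behavior at $\psi=\phi$, neither scaling above is consistent because the $(\psi-\phi)$ factor collapses the first equality, forcing the intermediate scaling $\tau_1 \sim A\gamma^{-1/2}$, $\tau_2 \sim A_2\gamma^{-1/2}$ with $A,A_2$ finite constants determined by a pair of algebraic equations in $\phi,\eta,\zeta$ coming from eqn.~(\ref{eq:tau1tau2}); specifically, setting $\psi=\phi$ gives $\tau_1-\tau_2=\gamma\zeta\tau_1^2\tau_2/\phi$ and $1=\gamma\tau_1\bigl(1+(\eta-\zeta)\tau_1+\zeta\tau_2\bigr)$. Differentiating in $\gamma$ then gives $\tau_1' \sim -A\gamma^{-3/2}/2$ and $\tau_2' \sim -A_2\gamma^{-3/2}/2$, so the combinations $-\tau_1'/\tau_1^2$ and $-\tau_2'/\tau_1^2$ scale as $\gamma^{-1/2}$ and diverge; plugging into the Thm.~\ref{thm:main} formulas for $V_{PX}$ and $V_{PX\bfe}$ yields the claimed divergence. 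The remaining terms stay bounded: $B=(\tau_2/\tau_1)^2 \to (A_2/A)^2$; $V_P = \tau_2'/\tau_1' - B$ is finite because the common $\gamma^{-3/2}$ singularity cancels in the ratio $\tau_2'/\tau_1'$; for $V_X = \phi B(\tau_1-\tau_2)^2/(\tau_1^2-\phi(\tau_1-\tau_2)^2)$, both $(\tau_1-\tau_2)^2$ and $\tau_1^2$ scale as $\gamma^{-1}$ and the ratio is finite provided the leading coefficient of the denominator is nonzero; and $V_{X\bfe}=\sigma_\bfe^2 V_X/B$ inherits boundedness.

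The main technical hurdle will be this last step: establishing non-degeneracy of the denominator of $V_X$ (i.e., that $(A-A_2)^2 \neq A^2/\phi$ for the physical solution) and tracking enough subleading structure of $(\tau_1,\tau_2)$ at $\psi=\phi$ that the cancellations rendering $V_P$ and $V_X$ finite are genuine. This will rely on the sign selection from the complex continuation $\tau_1,\tau_2 \in \mathbb{C}^+$ specified in Lemma~\ref{lemma:t1t2} to pick out the correct branch at each step.
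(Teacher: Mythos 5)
Your argument for the divergence at $\psi=\phi$ is a valid alternative to the paper's: you fix $\psi=\phi$ and extract a $\gamma^{-1/2}$ singularity in $\tau_1,\tau_2$ directly, whereas the paper fixes $\gamma=0$ and exhibits a $1/(\phi-\psi)$ pole in $\tau_1'/\tau_1^2,\tau_2'/\tau_1^2$ by writing those ratios explicitly in terms of $\tilde\tau_2$ (eqn.~(\ref{eq:ratios1}) and its $\phi>\psi$ analogues). Both routes correctly isolate $V_{PX}$ and $V_{PX\bfe}$ as the divergent terms and leave $B,V_P,V_X,V_{X\bfe}$ bounded; your version is somewhat more self-contained but, as you note, would still need the non-degeneracy of the $V_X$ denominator and the leading-order cancellation in $\tau_2'/\tau_1'$ spelled out.

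The monotonicity argument, however, has a genuine gap. You claim that for $\psi>\phi$ (underparameterized) the ansatz $\tau_1=(1-\phi/\psi)/\gamma+O(1)$ forces $\tau_2/\tau_1\to 1$ and hence $B\to1$, so that $B$ is piecewise constant and merely ``drops'' at the threshold. This is wrong: the first equality in (\ref{eq:tau1tau2}) only pins down $1-\gamma\tau_1\to\phi/\psi$, and the \emph{third} equality then determines $r=\tau_2/\tau_1$ via the quadratic $\zeta r^2 + r(\zeta/\psi+\eta-2\zeta)-(\eta-\zeta)=0$, whose physical root is strictly less than $1$ for $\psi>\phi$ (e.g., with $\eta=\zeta$ and $\phi=1$, one gets $\ttau_2=-1/\psi$ and $B=(1-1/\psi)^2<1$, matching the paper's eqn.~(\ref{eq:tau2root}) with $\omega=\psi$). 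So $B$ is not $1$ in the underparameterized regime: it is a nontrivial, $\psi$-dependent function there, while being $\psi$-independent only in the overparameterized regime where $\omega=\max\{\phi,\psi\}=\phi$. As a result, ``$B$ drops from $1$ to $s^2$'' is not what happens, and proving $B$ non-increasing requires what the paper actually does, namely differentiating the physical branch $\ttau_2(\psi)$ and verifying $\partial\ttau_2/\partial\psi\ge 0$ together with $1+\ttau_2\ge 0$, both of which rely on the inequality $\eta\ge\zeta$.
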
}
\newcommand{\lem}[1]{\begin{lemma} #1 \end{lemma}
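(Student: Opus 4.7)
The plan is to split the corollary into two parts: bias monotonicity (from the decoupling of the ridgeless equations) and divergence at the interpolation threshold (via a soft-edge asymptotic analysis).

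For the bias, I would take $\gamma \to 0^+$ in Lemma~\ref{lemma:t1t2}. The first expression reduces to $\zeta\tau_1\tau_2$, and equating it with the third yields
\[
\zeta\tau_1\tau_2 = (\tau_1 - \tau_2)\phi\bigl((\eta-\zeta)\tau_1 + \zeta\tau_2\bigr).
\]
Dividing by $\tau_1^2$ and setting $r \deq \tau_2/\tau_1$ gives the quadratic $\zeta r = \phi(1-r)((\eta-\zeta) + \zeta r)$, whose coefficients depend only on $\phi,\eta,\zeta$ and not on $\psi$. On the branch selected by the analyticity constraint of Lemma~\ref{lemma:t1t2}, the physical root $r^*(\phi)$ is therefore independent of the overparameterization ratio $\rho \deq \phi/\psi$, so $B = (r^*(\phi))^2$ depends only on $\phi$ and is trivially non-increasing in $\rho$.

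For the divergence, I would specialize to $\psi = \phi$ and study $\gamma \to 0^+$. The first equation of Lemma~\ref{lemma:t1t2} becomes $r - 1 = -\gamma\zeta r\tau_1^2/\phi$, which forces $\gamma\tau_1^2$ to tend to a finite positive constant for a nondegenerate limit. This fixes the soft-edge scaling $\tau_1 \sim a/\sqrt{\gamma}$, with $a = a(\phi,\eta,\zeta) > 0$ determined by substituting into the other equation (a solvable quadratic in $u \deq \zeta a^2$). Differentiating then gives $\tau_1' \sim -a/(2\gamma^{3/2})$, so $-\tau_1'/\tau_1^2 \sim 1/(2a\sqrt{\gamma}) \to +\infty$, and an analogous computation shows $-\tau_2'/\tau_1^2 \to +\infty$ as well. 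Substituting into the formulas of Theorem~\ref{thm:main} then gives divergence of
\[
V_{PX\bfe} = \sigma_\bfe^2\bigl(-\tau_1'/\tau_1^2 - 1\bigr) - V_{X\bfe}, \qquad V_{PX} = -\tau_2'/\tau_1^2 - B - V_P - V_X.
\]

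To close out, I would check that the remaining terms stay bounded at $\rho = 1$: $B$ is independent of $\rho$ (step~1); $V_P = \tau_2'/\tau_1' - B$ is bounded because $\tau_1'$ and $\tau_2'$ diverge at the same $\gamma^{-3/2}$ rate, so their ratio has a finite limit ($r^* = \phi/(\phi+u)$); $V_X = \phi B(\tau_1 - \tau_2)^2/(\tau_1^2 - \phi(\tau_1 - \tau_2)^2)$ depends only on $\phi$, $B$, and the bounded quantity $1 - r$; $V_\bfe = V_{P\bfe} = 0$ by Theorem~\ref{thm:main}; and $V_{X\bfe} = \sigma_\bfe^2 V_X/B$ inherits boundedness. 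The hardest step throughout is extracting the matched $\gamma^{-3/2}$ divergence rates of $\tau_1'$ and $\tau_2'$ from the coupled polynomials---this matched divergence is what keeps $V_P$ finite while $V_{PX}$ and $V_{PX\bfe}$ blow up, and it reflects a simple Jacobian degeneracy of the implicit system at $(\gamma,\rho) = (0,1)$, analogous to a square-root spectral edge in random matrix theory.
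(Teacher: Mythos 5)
Your argument for the divergence at $\psi=\phi$ is a legitimate alternative to the paper's: you fix $\psi=\phi$ and send $\gamma\to0^+$, extracting the square-root-edge scaling $\tau_1\sim a\gamma^{-1/2}$ from $r-1=-\gamma\zeta r\tau_1^2/\phi$, whereas the paper fixes $\gamma=0$ and shows the relevant ratios scale as $\mathcal{O}(1/(\phi-\psi))$ as $\psi\to\phi$. Both paths establish divergence of $V_{PX},V_{PX\bfe}$ and boundedness of the rest, and the cancellation you invoke for $V_P$ (matched $\gamma^{-3/2}$ rates so $\tau_2'/\tau_1'\to r^*$) is exactly the paper's $\tau_2'/\tau_1'=1+\ttau_2$ in disguise. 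That half of your proposal would go through.

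However, the bias-monotonicity step has a genuine gap. You write that ``the first expression reduces to $\zeta\tau_1\tau_2$'' as $\gamma\to0^+$, so that equating first and third yields a quadratic in $r=\tau_2/\tau_1$ with coefficients depending only on $\phi$, and conclude $B$ is constant in $\psi$. This is only valid when $\gamma\tau_1\to0$, which holds in the overparameterized regime $\psi\le\phi$ where $K$ is invertible. In the underparameterized regime $\psi>\phi$, the kernel $K(X,X)$ has $m-n_1$ zero eigenvalues, so $\tau_1=\tfrac1m\E\tr(K+\gamma I)^{-1}\sim(1-\phi/\psi)/\gamma$ and hence $\gamma\tau_1\to1-\phi/\psi>0$. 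The factor $(1-\gamma\tau_1)$ therefore does \emph{not} drop out: the equation becomes $\zeta r\cdot(\phi/\psi)=(1-r)\phi\bigl((\eta-\zeta)+\zeta r\bigr)$, i.e. your quadratic with $\phi$ replaced by $\psi$. So $r^*$ (and hence $B=(1+\ttau_2)^2$) genuinely depends on $\psi$ when $\psi>\phi$; it is constant only on the other side of the threshold. This is exactly why the paper's solution (their eqn.~\eqref{eq:tau2root}) involves $\omega=\max\{\phi,\psi\}$, and why they must explicitly verify $1+\ttau_2\ge0$ and $\partial\ttau_2/\partial\psi\ge0$ rather than invoke constancy. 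To fix your argument you would need to track $\gamma\tau_1\to1-\phi/\psi$ in the underparameterized branch, derive the $\psi$-dependent quadratic, and then prove the resulting root is monotone — which is the nontrivial content of that half of the corollary.
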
}
\newcommand{\pf}[1]{\begin{proof} #1 \end{proof}}

\addcontentsline{toc}{section}{Appendix} 
\parttoc 

\section{Symmetric variance decomposition}
\label{sec_var_decomp_sm}
The purpose of this section is to prove Prop.~\ref{prop:decomp} and derive eqns.~\eqref{eq_ensemble_bagging}-\eqref{eq_optimal_ratio}. The strategy is to use the subset-sum relationship, eqn.~\eqref{eq:subset_sum}, as a definition, derive explicit formulae for the variance terms, then prove all terms are nonnegative. In the statistics literature this approach is referred to as functional ANOVA \cite{efron1981jackknife,stone1994use,huang1998projection,owen2003dimension}, but we present a derivation here as it may be unfamiliar to members of the machine learning community. 

\paragraph{Motivation.} The law of total variance for two random variables $X$ and $Y$ is
\eq{
    \V[Y] = \E\V[Y|X] + \V\E[Y|X],
}
where the two terms represents the variance of $Y$ that is unexplained and explained by $X$ respectively. Since the variance must be nonnegative it is possible to interpret it as an area. In Fig.~\ref{fig_venn_1}, the total variance is represented by the square, which is in turn broken up into the explained variance (red circle) and unexplained variance (area outside of the circle).

It is possible to extend this idea to several variables. An observation that is key to the interpretation is 
\eq{\label{eq_superadditive}
    \V\E[Y|X_1] + \V\E[Y|X_2] \leq \V\E[Y|X_1,X_2],
}
\emph{i.e.} the ``variance explained'' is a superadditive function. So the decomposition for two variables could be written as
\eq{
    \V[Y] = \V\E[Y|X_1] + \V\E[Y|X_2] + \pa{\V\E[Y|X_1, X_2] - \V\E[Y|X_1] - \V\E[Y|X_2]} + \E\V[Y|X_1,X_2],
}
with the terms interpreted as the variance explained by $X_1$, the variance explained by $X_2$, the additional variance explained by $X_1$ and $X_2$ together, and the variance left unexplained by $X_1$ and $X_2$. Note that the terms are all guaranteed to be positive by eqn.~\eqref{eq_superadditive}. See Fig \ref{fig_venn_2}.

\paragraph{Several variables.} Generalizing, let $\bfX:=(X_1,\ldots,X_k)$ be a collection of random variables. Consider a Venn diagram of $k$ circles, and denote the disjoint areas using $V_\mathbf{i}$ for a vector $\bfi\in\ha{0,1}^k$, where $i_j$ indicates whether the area is inside the $j$th circle (see Fig \ref{fig_venn}). We make use of the natural partial ordering on $\{0,1\}^k$, \emph{i.e.} $\bfi\leq \bfj$ if and only if $i_l\leq j_l$ for all $l$. Note the ordering indicates the subset relation if the vectors are thought of as indicator vectors. We also use the notation $\bfee_j$ for the standard basis vectors, and define the vectors $\bfX_\bfi:=(X_j: i_j=1)$.

For simplicity, assume $Y\in\sigma(\bfX)$, so that $\E[f(Y)|\bfX]= f(Y)$ for any measurable function $f$ and all the variance of $Y$ is explained by $\bfX$, \emph{i.e.} 
\eq{\label{eq_total_var_explained}
    \V\E[Y|\bfX]= \V[Y]
}
or $V_\mathbf{0}=0$. In fact, let us write $Y=h(\bfX)$. We make this assumption without loss of generality as one can otherwise consider $X_{k+1}\deq Y - \E(Y|\bfX)$, \emph{i.e.} the orthogonal complement of $Y$ under projection onto the sigma algebra generated by $\bfX$.

Consistent with the $k=1$ case, we define
\eq{\label{eq_V_def_special}
    V_{\bfee_j} = \V\E[Y|X_{j}],
}
or more generally
\eq{\label{eq_V_def}
    \sum_{\bfi:\bfi\leq \bfj}V_{\bfi} = \V\E[Y|\bfX_{\bfj}].
}
Eqn.~\eqref{eq_V_def} is exactly the subset-sum relationship in \eqref{eq:subset_sum}. 

\begin{lemma}
    Eqn.~\eqref{eq_V_def} is sufficient to define $V_\bfi$ for all $\bfi$.
\end{lemma}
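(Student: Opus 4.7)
The plan is to recognize eqn.~\eqref{eq_V_def} as a linear system of $2^k$ equations in $2^k$ unknowns, indexed by $\bfj,\bfi\in\{0,1\}^k$, whose coefficient matrix is triangular with respect to the partial order $\le$ on $\{0,1\}^k$. This triangularity alone gives uniqueness, and the classical M\"obius inversion formula on the Boolean lattice gives the explicit solution.

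More concretely, I would proceed by induction on $|\bfi|$, where $|\bfi|=\sum_j i_j$. The base case $\bfi=\mathbf{0}$: the equation with $\bfj=\mathbf{0}$ reads $V_\mathbf{0}=\V\E[Y\mid\bfX_\mathbf{0}]=\V[\E Y]=0$, which pins down $V_\mathbf{0}$. For the induction step, suppose $V_\bfi$ has been uniquely determined for every $\bfi$ with $|\bfi|<\ell$, and fix $\bfj$ with $|\bfj|=\ell$. The equation indexed by $\bfj$ in eqn.~\eqref{eq_V_def} can be rewritten as
\eq{\label{eq_plan_isolate}
V_\bfj = \V\E[Y\mid\bfX_\bfj] - \sum_{\bfi\le\bfj,\ \bfi\neq\bfj} V_\bfi,
}
and every term on the right-hand side is already determined, so $V_\bfj$ is uniquely determined as well. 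Iterating over $\ell=0,1,\ldots,k$ exhausts $\{0,1\}^k$, so all $V_\bfi$ are uniquely defined by eqn.~\eqref{eq_V_def}.

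For a closed form, I would then apply M\"obius inversion on the Boolean lattice: solving the upper-triangular system gives
\eq{\label{eq_mobius}
V_\bfi = \sum_{\bfj\le\bfi} (-1)^{|\bfi|-|\bfj|}\,\V\E[Y\mid\bfX_\bfj],
}
which one can verify directly by substituting back into eqn.~\eqref{eq_V_def} and using the identity $\sum_{\bfj\le\bfi\le\bfk}(-1)^{|\bfk|-|\bfi|}=\mathbf{1}[\bfj=\bfk]$. This simultaneously reproves existence and uniqueness, and it matches the explicit formulas in Examples~\ref{ex_2_var} and~\ref{ex_3_var}.

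There is no real obstacle here: the argument is essentially bookkeeping on the Boolean lattice. The only subtlety worth flagging is that the lemma as stated asks only for unique determination, not for nonnegativity of the $V_\bfi$; the latter (needed for the interpretation as areas in Remark~\ref{rem:areas}) requires the superadditivity inequality~\eqref{eq_superadditive} and its higher-order analogues, which are proved separately.
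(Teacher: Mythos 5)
Your proof is correct and takes essentially the same route as the paper's: both argue by induction on $|\bfi|$, isolating $V_\bfi$ from the subset-sum equation using the triangularity of the system with respect to the Boolean-lattice order. The explicit M\"obius-inversion formula you add is proved separately in the paper as Theorem~\ref{thm_V_formula} (stated there in terms of $H_\bfj$ rather than $\V\E[Y\mid\bfX_\bfj]$, which is equivalent since $\V\E[Y\mid\bfX_\bfj]=H_\bfj-H_\mathbf{0}$ and the $H_\mathbf{0}$ contributions cancel under the alternating sum when $|\bfi|>0$).
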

\begin{proof}
    This lemma follows directly from the fact that \eqref{eq_V_def} defines $2^k$ equations in terms of $2^k$ unknowns, $V_\bfi$. However, we may get a more explicit solution for each $V_\bfi$: We proceed by induction on $|\bfi|:=\sum_j i_j$. The special case of eqn.~\eqref{eq_V_def}, eqn.~\eqref{eq_V_def_special}, proves the base case, $|\bfi|=1$. Assume we have defined $V_\bfi$ for all $|\bfi|\leq m$. Then for $\bfi$ such that $|\bfi|=m+1$, using eqn.~\eqref{eq_V_def} we may write
    \eq{\label{eq_V_recursive}
        V_\bfi + \sum_{\bfj: \bfj\leq \bfi , \bfj\neq \bfi} V_\bfj = \V\E[Y|\bfX_{\bfi}].
    }
    Noting that $|\bfj|\leq m$ if $\bfj\leq \bfi$ and $\bfj\neq \bfi$ completes the proof.
\end{proof}

\paragraph{Calculating Variances}
To calculate the variances terms used above for our model, we use a coupling: we introduce a copy of the underlying random variables $\bfX$ and take expectations under different independence assumptions on $\bfX$ and its copy. The simplest illustration of this idea is to express the variance of a random variable $Y$ using an iid copy of $Y$, denoted $Y'$, then define $\tilde{Y}\deq B Y + (1-B) Y'$ for $B\sim\text{Bern}(1/2)$ independent of $Y$ and $Y'$. We have
\eq{
    \V[Y] = \E[Y^2] - \E[Y]^2 = \E[YY] - \E[YY'] = \E[Y\tilde{Y} | B=1] - \E[Y\tilde{Y}|B=0].
}
This idea extends naturally to our setting. Recall $Y=h(\bfX)$, then let $\tilde{\bfX}:=(B_1X_1+(1-B_1)X_1',\ldots,B_kX_k+(1-B_k)X_k')$ for $\bfX'$ an iid copy of $\bfX$ and $B_i\sim\text{Bern}(1/2)$ iid. Define
\eq{
    H_\bfi := \E \qa{h(\bfX) h(\tilde{\bfX}) | \mathbf{B}=\bfi }.
}
Thus, $\V\E[Y|\bfX_{\bfi}] =  H_\bfi - H_\mathbf{0}$ and $\E\V[Y|\bfX_{\bfi}] = H_\mathbf{1} - H_\bfi$.

\begin{theorem}\label{thm_V_formula}
    Using $H$, we have the following formula for the areas. Let $|\bfi|>0$, then
    \eq{
        V_\bfi \deq \sum_{l=0}^{|\mathbf{i}|} \; \sum_{\bfj: \bfj\leq \bfi,|\bfj|=l} (-1)^{|\bfi|-l} H_\bfj.
    }
\end{theorem}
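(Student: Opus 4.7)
The plan is to recognize the claimed identity as a standard instance of M\"obius inversion on the Boolean lattice $\{0,1\}^k$, applied to the subset-sum relation (\ref{eq:subset_sum}). I would first record the key consequence of the coupling construction: for any $\bfi$, conditioning on $\mathbf{B}=\bfi$ fixes the components $\{X_j : i_j = 1\}$ shared between $\bfX$ and $\tilde\bfX$ while making the remaining components independent copies, so averaging yields
\eq{H_\bfi = \E\qa{\E[h(\bfX)\mid \bfX_\bfi]\,\E[h(\tilde\bfX)\mid \bfX_\bfi]} = \E\qa{\E[Y\mid\bfX_\bfi]^2}.}
In particular $H_\mathbf{0} = \E[Y]^2$ and $H_\mathbf{1} = \E[Y^2]$, so $\V\E[Y\mid \bfX_\bfi] = H_\bfi - H_\mathbf{0}$. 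Combined with the subset-sum relation \eqref{eq_V_def} and the assumption $V_\mathbf{0}=0$, the $V_\bfi$ are characterized by the system
\eq{\sum_{\bfj \leq \bfi} V_\bfj = H_\bfi - H_\mathbf{0} \qquad \text{for all } \bfi \neq \mathbf{0}.}

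Next I would apply the standard M\"obius inversion on the Boolean lattice, whose M\"obius function is $\mu(\bfj,\bfi) = (-1)^{|\bfi|-|\bfj|}$ for $\bfj \leq \bfi$. Inverting the relation above formally gives
\eq{V_\bfi = \sum_{\bfj \leq \bfi} (-1)^{|\bfi|-|\bfj|} (H_\bfj - H_\mathbf{0}) = \sum_{\bfj \leq \bfi} (-1)^{|\bfi|-|\bfj|} H_\bfj - H_\mathbf{0} \sum_{\bfj \leq \bfi} (-1)^{|\bfi|-|\bfj|}.}
For $|\bfi|>0$ the second sum equals $(1-1)^{|\bfi|}=0$ by the binomial theorem, so the $H_\mathbf{0}$ contribution drops out and one recovers exactly the claimed formula after grouping by $l = |\bfj|$.

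If a self-contained argument is preferred, the same identity can be obtained by induction on $|\bfi|$ from the recursive formula \eqref{eq_V_recursive}: assuming the formula holds for all $\bfj < \bfi$, substitute and collect the coefficient of each $H_\bfj$; the count of subsets of $\bfi$ strictly between $\bfj$ and $\bfi$ produces $\sum_{r=0}^{|\bfi|-|\bfj|-1}\binom{|\bfi|-|\bfj|}{r}(-1)^r$, which telescopes via the binomial identity to give the sign $(-1)^{|\bfi|-|\bfj|}$. The only genuinely delicate point in either approach is the bookkeeping for the cancellation of $H_\mathbf{0}$, which is handled uniformly by the observation that $\sum_{\bfj \leq \bfi} (-1)^{|\bfi|-|\bfj|}$ vanishes precisely when $|\bfi|>0$; this is where the exclusion of $\bfi = \mathbf{0}$ in the statement becomes essential. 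Nonnegativity of the $V_\bfi$ is not required for the formula itself and, as the surrounding text notes, is deferred to a separate argument.
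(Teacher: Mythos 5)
Your argument is correct, and it reaches the same identity by a cleaner and more standard route than the paper. The paper proves the formula by induction on $\abs{\bfi}$: it plugs the inductive hypothesis into the recursive relation \eqref{eq_V_recursive} and collects coefficients of each $H_\mathbf{k}$ by hand. You instead observe that the subset-sum relation
\eq{
\sum_{\bfj\leq\bfi} V_\bfj = H_\bfi - H_\mathbf{0}
}
is precisely the forward form of M\"obius inversion on the Boolean lattice, whose M\"obius function is $\mu(\bfj,\bfi)=(-1)^{\abs{\bfi}-\abs{\bfj}}$; inverting gives $V_\bfi=\sum_{\bfj\leq\bfi}(-1)^{\abs{\bfi}-\abs{\bfj}}(H_\bfj-H_\mathbf{0})$, and the $H_\mathbf{0}$ contribution disappears for $\abs{\bfi}>0$ since $\sum_{\bfj\leq\bfi}(-1)^{\abs{\bfi}-\abs{\bfj}}=(1-1)^{\abs{\bfi}}=0$. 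This is exactly the bookkeeping that the paper's inductive proof performs implicitly, so the two arguments are equivalent in content; citing M\"obius inversion makes the cancellation of $H_\mathbf{0}$ and the role of the hypothesis $\abs{\bfi}>0$ more transparent, while the paper's direct induction is more self-contained. Your preliminary identification $H_\bfi=\E\qa{\E[Y\mid\bfX_\bfi]^2}$ via the conditional independence of $h(\bfX)$ and $h(\tilde\bfX)$ given $\bfX_\bfi$ is the right justification of $\V\E[Y\mid\bfX_\bfi]=H_\bfi-H_\mathbf{0}$, matching the paper's coupling construction. The alternative inductive argument you sketch at the end is essentially the paper's proof, so you have covered both routes.
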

\begin{proof}
    We again use induction on $|\mathbf{i}|$. The formula clearly holds for $|\mathbf{i}|=1$. Then using the induction hypothesis and eqn.~\eqref{eq_V_recursive}, we see
    \al{
        V_\bfi &= H_\bfi - H_\mathbf{0} - \sum_{\bfj: \bfj\leq \bfi , \bfj\neq \bfi} \sum_{l=0}^{|\bfj|} \; \sum_{\mathbf{k}: \mathbf{k}\leq \bfj,|\mathbf{k}|=l} (-1)^{|\bfj|-l}H_\mathbf{k} \\
        &=\sum_{l=0}^{|\mathbf{i}|} \; \sum_{\bfj: \bfj\leq \bfi,|\bfj|=l} (-1)^{|\bfi|-l} H_\bfj \nonumber.
    }
\end{proof}

\begin{lemma}\label{lem_partially_ordered}
    The function $H$ is partially ordered, that is
    \eq{
        H_\bfi \leq  H_\bfj,
    }
    if and only if $i_k\leq j_k$ for all $k\in\{1,\ldots,k\}$.
\end{lemma}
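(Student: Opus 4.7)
The plan is to first rewrite $H_\bfi$ in a form that makes the role of $\bfi$ transparent. Since $\bfX'$ is an iid copy of $\bfX$ (with the coordinates of $\bfX$ taken to be independent, as in all the concrete uses of the decomposition in the paper), conditioning on $\mathbf{B}=\bfi$ forces $\tilde X_k = X_k$ whenever $i_k=1$ and $\tilde X_k = X'_k$ otherwise. Conditioning further on the shared subvector $\bfX_\bfi$, the remaining pieces $\bfX_{\bar\bfi}$ and $\bfX'_{\bar\bfi}$ are independent and identically distributed, so $h(\bfX)$ and $h(\tilde\bfX)$ are conditionally iid. The tower property then gives the key identity
\[
H_\bfi \;=\; \E\bigl[\E[h(\bfX)\mid \bfX_\bfi]\cdot \E[h(\tilde\bfX)\mid \bfX_\bfi]\bigr] \;=\; \E\bigl[\E[Y\mid \bfX_\bfi]^2\bigr],
\]
which expresses $H_\bfi$ as the squared $L^2$-norm of the projection of $Y$ onto $\sigma(\bfX_\bfi)$.

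With this identity in hand, the ``if'' direction is the standard tower-plus-Jensen argument. If $\bfi \leq \bfj$ then $\sigma(\bfX_\bfi) \subseteq \sigma(\bfX_\bfj)$, so the tower property gives $\E[Y\mid \bfX_\bfi] = \E\bigl[\E[Y\mid \bfX_\bfj]\bigm|\bfX_\bfi\bigr]$. Applying Jensen's inequality to $x\mapsto x^2$ inside and then taking the outer expectation yields $H_\bfi \leq H_\bfj$; this is just the monotonicity of the $L^2$-norm of a conditional expectation under refinement of the conditioning sub-sigma-algebra.

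The ``only if'' direction is more delicate, and the literal statement can fail: if $h$ does not depend on some coordinate $X_l$, then $H$ is invariant under flipping $i_l$, producing incomparable $\bfi, \bfj$ with $H_\bfi = H_\bfj$. My plan is to interpret the lemma as the monotonicity claim that $H$ respects the partial order (which is all that is required downstream). A rigorous strict converse needs a nondegeneracy hypothesis, namely that every coordinate genuinely contributes in the sense $\E[Y\mid \bfX_{\bfj\vee\bfee_l}] \neq \E[Y\mid \bfX_\bfj]$ whenever $j_l=0$. Under such a hypothesis, strict Jensen gives $H_\bfj < H_{\bfj\vee\bfee_l}$, and picking a coordinate $l$ with $i_l=1$, $j_l=0$ and iterating produces the contradiction $H_\bfj < H_{\bfi\vee\bfj}$ combined with $H_\bfi \leq H_{\bfi\vee\bfj}$ paired via a symmetric case analysis.

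The main obstacle is exactly this literal ``only if'' direction: it is not true without some genericity assumption, and the cleanest honest formulation is the monotonicity one. Fortunately, the only downstream use of Lemma~\ref{lem_partially_ordered} is to certify $V_\bfi \geq 0$ via inclusion-exclusion on the subset-sum identities of Thm.~\ref{thm_V_formula}, and for that purpose monotonicity alone suffices: one groups the signed sum of $H_\bfj$ terms defining $V_\bfi$ into telescoping pairs $H_{\bfj\vee\bfee_l} - H_\bfj$, each of which is nonnegative by the monotonicity direction just established.
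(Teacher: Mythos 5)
Your proof of the monotonicity direction matches the paper's argument exactly: you establish the key identity $H_\bfi = \E\qb{\E[Y\mid\bfX_\bfi]^2}$ (which the paper states just before the lemma in the forms $\V\E[Y\mid\bfX_\bfi]=H_\bfi-H_\mathbf{0}$ and $\E\V[Y\mid\bfX_\bfi]=H_\mathbf{1}-H_\bfi$), and then apply the tower property together with conditional Jensen. Your observation that the literal ``only if'' direction fails without a nondegeneracy hypothesis is also apt --- the paper's one-line proof establishes only the monotone direction and never uses a converse --- so reading the lemma as a monotonicity statement is the correct interpretation.

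Your final paragraph, however, is incorrect. Nonnegativity of $V_\bfi$ for $|\bfi|\geq 2$ does \emph{not} follow from monotonicity of $H$ by grouping the inclusion--exclusion sum into ``telescoping pairs $H_{\bfj\vee\bfee_l}-H_\bfj$.'' Already for $V_{11}=H_{11}-H_{01}-H_{10}+H_{00}$ there is no partition of the four terms into two monotone differences of like sign: pairing $(H_{11}-H_{01})$ with $(H_{00}-H_{10})$ leaves one nonnegative and one nonpositive term, and the alternative pairing yields a difference of two nonnegative quantities whose sign monotonicity alone cannot control. In fact $V_{11}\geq 0$ is precisely the superadditivity statement~\eqref{eq_superadditive}, which is strictly stronger than monotonicity of $H$. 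The paper therefore proves nonnegativity of every $V_\bfi$ by a separate mechanism in Theorem~\ref{thm_positive_As}: it marginalizes out the coordinates outside $\bfi$ to define $\bar h$, then rewrites the alternating sum defining $V_\bfi$ as $2^{-|\bfi|}$ times the expectation of a squared sum of the $\bar h$ evaluations, which is manifestly nonnegative. Lemma~\ref{lem_partially_ordered} is not invoked in that proof; the paper merely remarks that Theorem~\ref{thm_positive_As} ``generalizes'' the idea, in the sense that both arguments manipulate the coupled representation $H_\bfj$, not that the general case reduces to monotonicity.
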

\begin{proof}
    Define $Z:=\E[Y|X_\bfi]$, then
    \eq{
        H_\bfi - H_\bfj = \E\qa{ Z^2 - \E[Z|X_\bfj]^2 } \geq \E\qa{ Z^2 - \E[Z^2|X_\bfj] } = 0.
    }
\end{proof}

\begin{theorem}\label{thm_positive_As}
    The areas $V_\bfi$ are nonnegative. 
\end{theorem}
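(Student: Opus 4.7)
The plan is to realize each $V_\bfi$ as the squared $L^2$-norm of an orthogonal functional-ANOVA component of $h$, from which nonnegativity is immediate. Write $g_\bfj(\bfX_\bfj) \deq \E[h(\bfX)\mid\bfX_\bfj]$ for the conditional expectation. The first observation is that, under the independence of $X_1,\ldots,X_k$ implicit in the coupling (used to define $H$), integrating out the iid copy $\bfX'$ in $H_\bfj = \E[h(\bfX) h(\tilde\bfX)\mid\bfB = \bfj]$ gives
\eq{
H_\bfj = \E\qa{ g_\bfj(\bfX_\bfj)^2 } = \|g_\bfj\|^2_{L^2},
}
by conditioning on $\bfX_\bfj$ and using the tower property.

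Next, I would define the ANOVA components by Möbius inversion,
\eq{
\eta_\bfi \deq \sum_{\bfj \leq \bfi} (-1)^{|\bfi|-|\bfj|}\, g_\bfj(\bfX_\bfj),
}
so that $g_\bfi = \sum_{\bfj \leq \bfi} \eta_\bfj$ and $\eta_\bfi$ is a function of $\bfX_\bfi$. The main step is to show that $\{\eta_\bfi\}$ is a mutually orthogonal family in $L^2$, and I would deduce this from the annihilation property
\eq{
\E[\eta_\bfi \mid \bfX_{\bfi-\bfee_l}] = 0 \qquad \text{for every } l \text{ with } i_l = 1.
}
The verification pairs each $\bfj \leq \bfi$ with $j_l = 1$ to its partner $\bfj - \bfee_l$ in the defining sum: using independence and the tower property, $\E[g_\bfj \mid \bfX_{\bfi - \bfee_l}] = g_{\bfj - \bfee_l}$ for $j_l = 1$ and $=g_\bfj$ for $j_l = 0$, and the alternating signs make the paired contributions cancel. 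Orthogonality of $\eta_\bfi$ and $\eta_\bfk$ for $\bfi \neq \bfk$ then follows: WLOG some coordinate $l$ has $i_l = 1$ and $k_l = 0$, so $\eta_\bfk$ is measurable w.r.t.\ $\bfX_{[k]\setminus\{l\}}$, and under independence $\E[\eta_\bfi \mid \bfX_{[k]\setminus\{l\}}] = \E[\eta_\bfi \mid \bfX_{\bfi - \bfee_l}] = 0$; pulling $\eta_\bfk$ out gives $\E[\eta_\bfi \eta_\bfk] = 0$.

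With orthogonality in hand, the decomposition $g_\bfi = \sum_{\bfj \leq \bfi} \eta_\bfj$ is orthogonal, so
\eq{
H_\bfi = \|g_\bfi\|^2 = \sum_{\bfj \leq \bfi} \|\eta_\bfj\|^2.
}
This is precisely the subset-sum relation whose Möbius inversion is the formula of Thm.~\ref{thm_V_formula}, so by uniqueness $V_\bfi = \|\eta_\bfi\|^2 \geq 0$, completing the proof. The main technical obstacle is the annihilation property; once it is verified, the remainder is routine Möbius-inversion bookkeeping, and the argument transparently recovers the functional-ANOVA connection flagged in the text.
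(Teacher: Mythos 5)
Your proof is correct, but it takes a genuinely different route from the paper's. The paper's proof writes $V_\bfi$ directly as a perfect square of a coupled quantity: after defining $\bar{h}(\bfX_\bfi) \deq \E[h(\bfX)\mid\bfX_\bfi]$ and observing $H_\bfj = \E\,\bar{h}(\bfX_\bfj,\bfX_{\bfi-\bfj})\,\bar{h}(\bfX_\bfj,\tilde{\bfX}_{\bfi-\bfj})$ for all $\bfj\le\bfi$, it applies the formula of Thm.~\ref{thm_V_formula} and regroups the terms of the expanded square to obtain
\eq{
V_\bfi = \frac{1}{2^{|\bfi|}}\,\E\pa{\sum_{\bfj\le\bfi}(-1)^{|\bfi|-|\bfj|}\,\bar{h}\bigl(\bfX_\bfj,\tilde{\bfX}_{\bfi-\bfj}\bigr)}^2 \;\ge 0\,,
}
a one-shot identity that stays entirely within the coupling/$H$-function formalism with no orthogonality machinery needed (the alternating sign is implicit in the paper's display; it is what makes the cross-terms in the expansion collapse to $\sum_{\bfm\le\bfi}(-1)^{|\bfi|-|\bfm|}H_\bfm$). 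You instead go through the Hoeffding / functional-ANOVA decomposition of $h$ into mutually orthogonal components, establish orthogonality via the annihilation identity $\E[\eta_\bfi\mid\bfX_{\bfi-\bfee_l}]=0$ for $i_l=1$, and then read off $V_\bfi = \norm{\eta_\bfi}^2_{L^2}$ by uniqueness of the subset-sum decomposition. The two constructions are tightly linked: your $\eta_\bfi$ is exactly the conditional expectation given $\bfX_\bfi$ of the paper's signed sum, since integrating out the tilde copies turns $\bar{h}(\bfX_\bfj,\tilde{\bfX}_{\bfi-\bfj})$ into $g_\bfj(\bfX_\bfj)$. What each buys: the paper's proof is shorter and more self-contained; yours requires an extra lemma (the annihilation/orthogonality step) but exposes the Hilbert-space geometry transparently and makes the ANOVA connection, alluded to in the text, explicit. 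Both exhibit $V_\bfi$ as a squared $L^2$-norm, which is the essence of the nonnegativity claim.
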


\begin{proof}
    The idea is similar to the proof of Lemma \ref{lem_partially_ordered}, and indeed this generalize the result. First we prove eqn.~\eqref{eq_superadditive} to illustrate the idea with simple notation. We see
    \al{
        \V\E[Y|&X_1,X_2]- \V\E[Y|X_1] - \V\E[Y|X_2] \\
        & = H_{11}+H_{00}-H_{01}-H_{10} \nonumber\\
        &= \frac{1}{4}\E\pa{h(X_1,X_2) + h(\tilde{X}_1,\tilde{X}_2) - h(\tilde{X}_1,X_2) - h(X_1,\tilde{X}_2)}^2\nonumber\\
        &\geq0\nonumber.
    }

    For the general case, fix $\bfi$ and define
    \eq{
        \bar{h}(\bfX_\bfi) \deq \E_{\bfX_{\mathbf{1}-\bfi}}[ h(\bfX) | \bfX_{\bfi}],
    }
    that is, marginalize over all $X_j$ such that $i_j=0$. Then note for $\bfj\leq\bfi$ that
    \eq{
        H_\bfj = \E h(\bfX_\bfj, \bfX_{\bfi-\bfj}, \bfX_{\mathbf{1}-\bfi}) h(\bfX_\bfj, \tilde{\bfX}_{\bfi-\bfj}, \tilde{\bfX}_{\mathbf{1}-\bfi}) = \E \bar{h}(\bfX_\bfj, \bfX_{\bfi-\bfj}) \bar{h}(\bfX_\bfj, \tilde{\bfX}_{\bfi-\bfj}).
    }
    Now using Theorem \ref{thm_V_formula}, we see
    \al{
        V_\bfi &= \sum_{l=0}^{|\mathbf{i}|} \; \sum_{\bfj: \bfj\leq \bfi,|\bfj|=l} (-1)^{|\bfi|-l} \E \bar{h}(\bfX_\bfj, \bfX_{\bfi-\bfj}) \bar{h}(\bfX_\bfj, \tilde{\bfX}_{\bfi-\bfj}) \\
        &=\frac{1}{2^{|\bfi|}} \E \pa{ \sum_{\bfj:\bfj\leq\bfi} \bar{h}(\bfX_\bfj, \tilde{\bfX}_{\bfi-\bfj})  }^2 \nonumber\\
        &\geq0 . \nonumber
    }
\end{proof}

\paragraph{Examples for $k=2$ and $k=3$ used in the main text.}

\begin{figure}
    \centering
    \begin{subfigure}[b]{0.3\textwidth}
        \includegraphics[width=\textwidth]{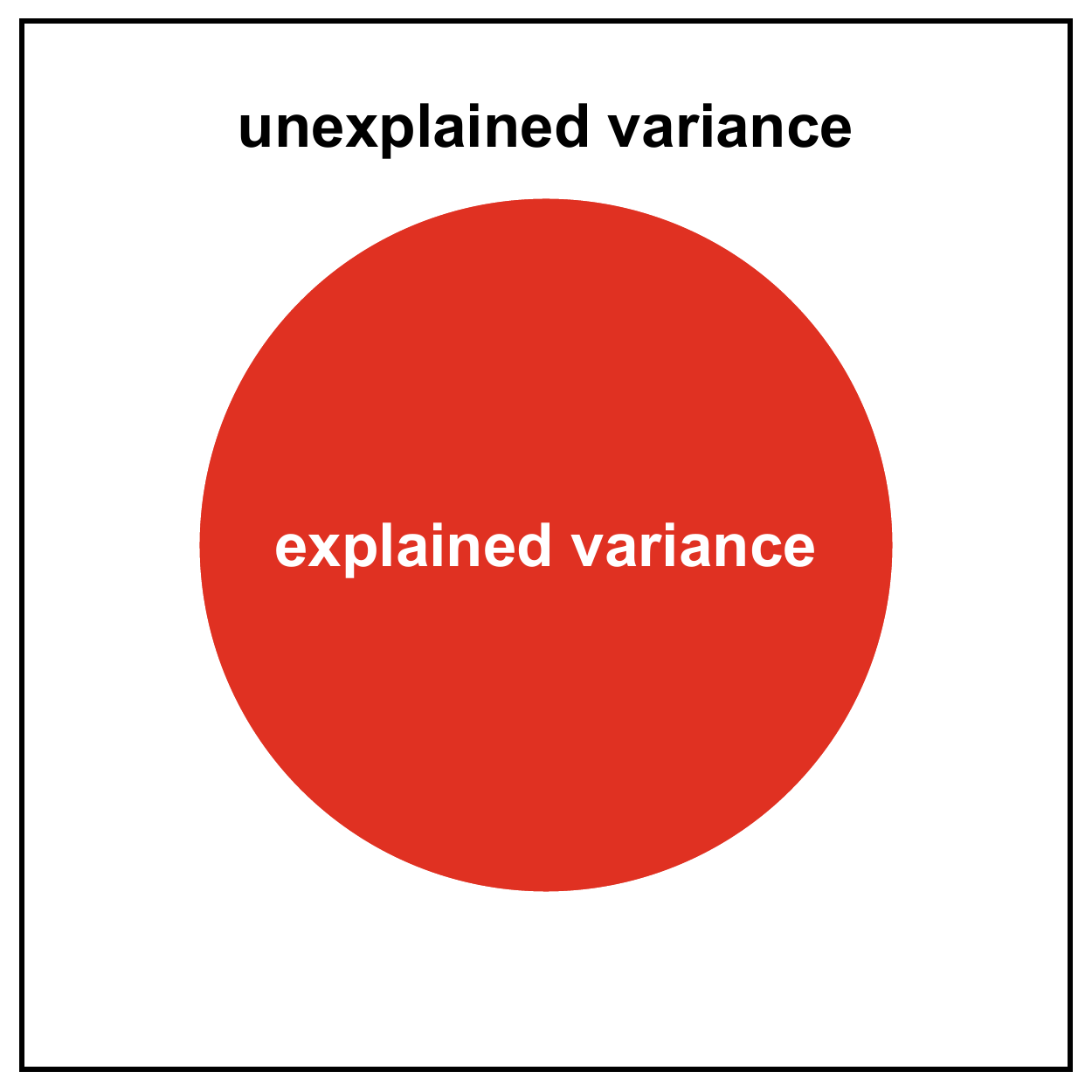}
        \caption{$k=1$}
        \label{fig_venn_1}
    \end{subfigure}
    \begin{subfigure}[b]{0.3\textwidth}
        \includegraphics[width=\textwidth]{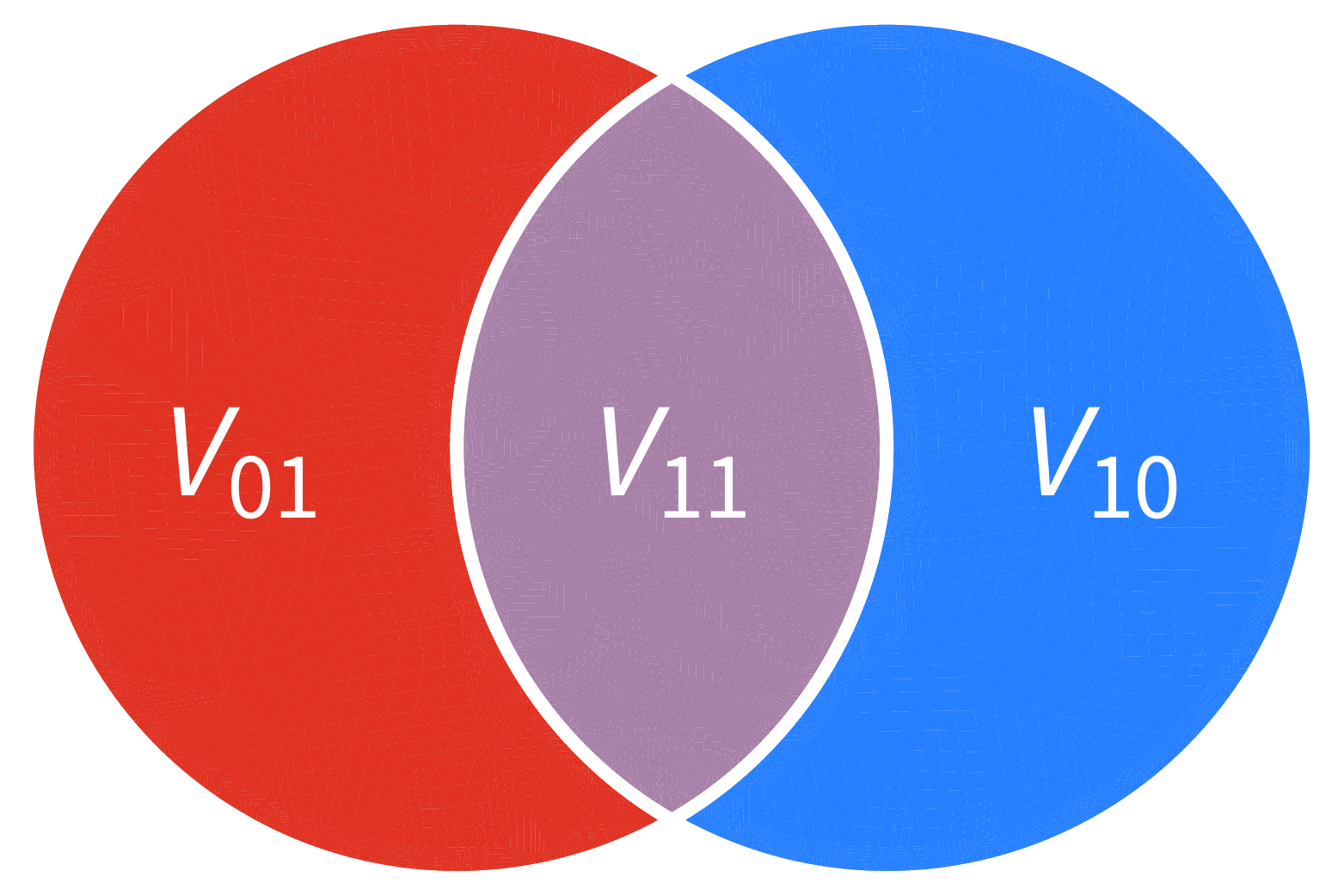}
        \caption{$k=2$}
        \label{fig_venn_2}
    \end{subfigure}
    \begin{subfigure}[b]{0.3\textwidth}
        \includegraphics[width=\textwidth]{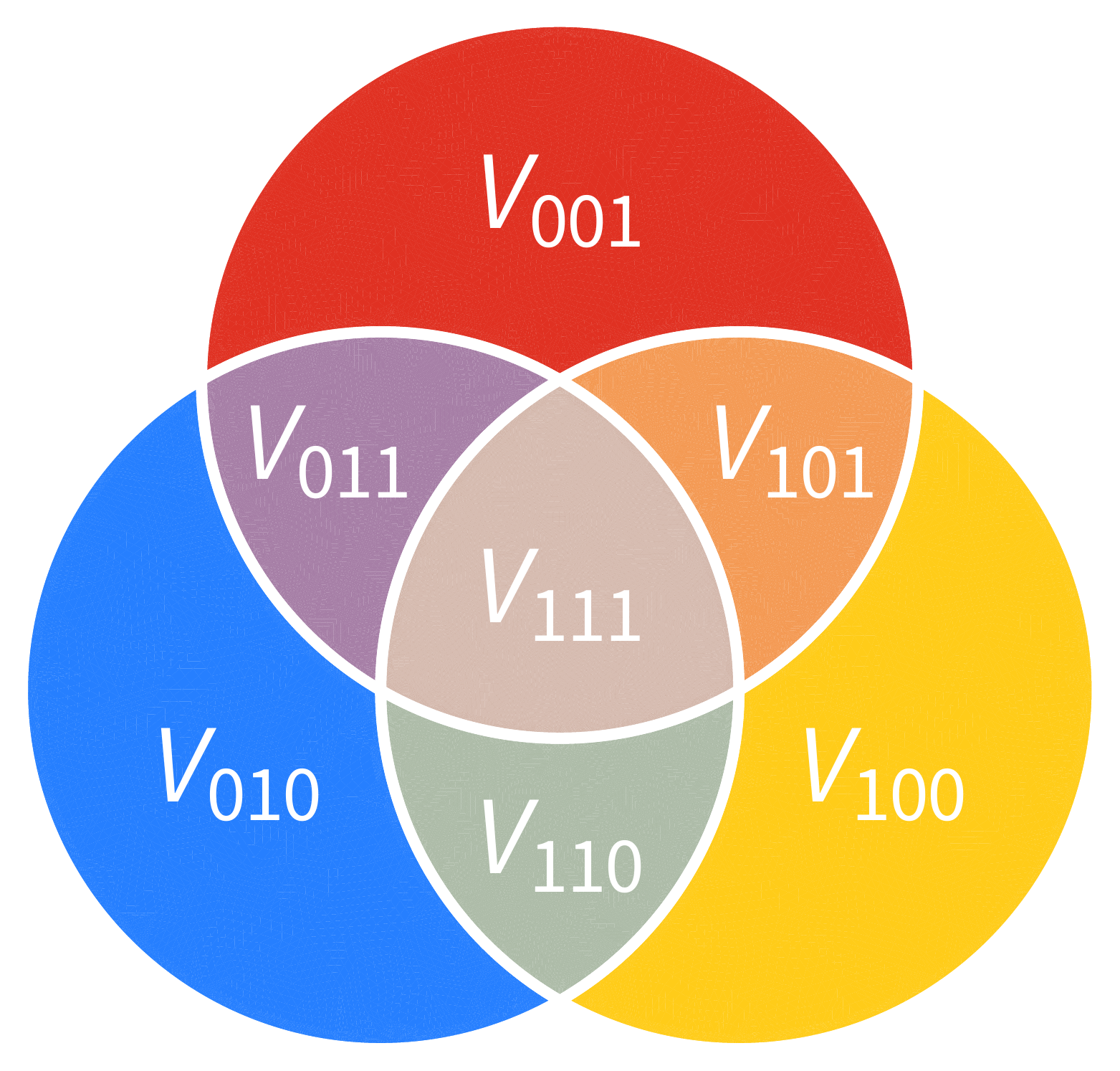}
        \caption{$k=3$}
        \label{fig_venn_3}
    \end{subfigure}
    \caption{In (a) the two disjoint areas represent $\V\E[Y|X]$, the variance of $Y$ explained by $X$, and $\E\V[Y|X]$, the variance of $Y$ unexplained by $X$. For simplicity, we assume that in (b) and (c) there is no variance that is not explained by $\bfX$, so that the area outside of the circles is zero.}
    \label{fig_venn}
\end{figure}

See Fig \ref{fig_venn_2}. For $k=2$, we have:
\begin{align*}
    V_{01} &= H_{10} - H_{00}\\
    V_{10} &= H_{01} - H_{00}\\
    V_{11} &=  H_{11} - H_{01} - H_{10} + H_{00}.
\end{align*}

See Fig \ref{fig_venn_3}. For $k=3$, we have:
\begin{equation}
\label{eq:VH}
\begin{split}
    V_{001} &= H_{001} - H_{000}\\
    V_{010} &= H_{010} - H_{000}\\
    V_{100} &= H_{100} - H_{000}\\
    V_{011} &= H_{011} - H_{001} - H_{010} + H_{000}\\
    V_{101} &= H_{101} - H_{001} - H_{100} + H_{000}\\
    V_{110} &= H_{110} - H_{010} - H_{100} + H_{000}\\
    V_{111} &= H_{111} - H_{011} - H_{101} - H_{110} + H_{001} + H_{010} + H_{100} - H_{000}.
\end{split}
\end{equation}

\paragraph{Ensemble and bagging formulas.} To obtain these results, we first calculate the $H_\bfi$ terms associated with the averaged predictor. Specifically, define $P\deq\ha{P_1,\ldots,P_{k_P}}$, $X\deq\ha{X_1,\ldots,X_{k_D}}$, $\e\deq \ha{\e_1,\ldots,\e_{k_D}}$, and
\eq{
    Y \deq \frac{1}{k_P k_D} \sum_{i,j} \hat{y}_{ij}(\bfx)\,,
}
where the indices denote iid samples. We consider the variance decomposition of $Y$ with respect to $P$, $X$, and $\e$. Note, we could instead use the notation
\eq{
    Y = \hat{y}(P, X, \e) = \frac{1}{k_P k_D} \sum_{i=1}^{k_P}\sum_{j=1}^{k_D} \hat{y}(P_i, X_j, \e_j), 
}
to make explicit the dependence on each of the random variables.

Clearly, $\E \hat{y}(P, X, \e) = \hat{y}(P_1, X_1, \e_1)$, so the predictors have the same bias. Now, we calculate the $H$s using superscripts to denote the ensemble and bagging sizes. First,
\eq{
    H_{000}^{k_Pk_D} = \E \hat{y}(P, X, \e) \hat{y}(\tilde{P}, \tilde{X}, \tilde{\e}) = H_{000}^{11} \,.
}
Next, we see
\al{
    H_{100}^{k_Pk_D} &= \E \hat{y}(P, X, \e) \hat{y}(P, \tilde{X}, \tilde{\e}) \nonumber\\
    &= \frac{1}{k_P^2 k_D^2} \sum_{i=1}^{k_P}\sum_{j=1}^{k_D} \sum_{i'=1}^{k_P}\sum_{j'=1}^{k_D} \E \hat{y}(P_i, X_j, \e_j) \hat{y}(P_{i'}, \tilde{X}_{j'}, \tilde{\e}_{j'})  \nonumber\\ 
    &= \frac{1}{k_P^2 k_D^2} \sum_{i,j,j'} \E \hat{y}(P_i, X_j, \e_j) \hat{y}(P_{i}, \tilde{X}_{j'}, \tilde{\e}_{j'}) + \frac{1}{k_P^2 k_D^2} \sum_{i\neq i'} \sum_{j,j'} \E \hat{y}(P_i, X_j, \e_j) \E\hat{y}(P_{i'}, \tilde{X}_{j'}, \tilde{\e}_{j'}) \nonumber\\ 
    &= \frac{H_{100}^{11} - H_{000}^{11} }{k_P}+ H_{000}^{11} \,.
}
Similarly, to above we find
\eq{
    H_{010}^{k_Pk_D} = \frac{H_{010}^{11}-H_{000}^{11}}{k_D}+ H_{000}^{11}\,,
}
\eq{
    H_{001}^{k_Pk_D} = \frac{H_{001}^{11}-H_{000}^{11}}{k_D}+ H_{000}^{11} \,,
}
and
\eq{
    H_{011}^{k_Pk_D} = \frac{H_{011}^{11}-H_{000}^{11}}{k_D}+ H_{000}^{11} \,.
}
The other terms are more complex, but the idea is the same. We find
\al{
    H_{110}^{k_Pk_D} &= \frac{1}{k_P^2 k_D^2} \sum_{i=1}^{k_P}\sum_{j=1}^{k_D} \sum_{i'=1}^{k_P}\sum_{j'=1}^{k_D} \E \hat{y}(P_i, X_j, \e_j) \hat{y}(P_{i'}, X_{j'}, \tilde{\e}_{j'})  \nonumber\\ 
    &= \frac{1}{k_P^2 k_D^2} \sum_{i}\sum_{j} \E \hat{y}(P_i, X_j, \e_j) \hat{y}(P_{i}, X_{j}, \tilde{\e}_{j}) +\frac{1}{k_P^2 k_D^2} \sum_{i\neq i'}\sum_{j} \E \hat{y}(P_i, X_j, \e_j) \hat{y}(P_{i'}, X_{j}, \tilde{\e}_{j}) \\
    & \quad +\frac{1}{k_P^2 k_D^2} \sum_{i}\sum_{j \neq j'} \E \hat{y}(P_i, X_j, \e_j) \hat{y}(P_{i}, X_{j'}, \tilde{\e}_{j'}) + \frac{1}{k_P^2 k_D^2} \sum_{i\neq i'}\sum_{j\neq j'} \E \hat{y}(P_i, X_j, \e_j) \hat{y}(P_{i'}, X_{j'}, \tilde{\e}_{j'}) \\
    &= \frac{H_{110}^{11} - H_{010}^{11} - H_{100}^{11} + H_{000}^{11}}{k_Dk_P} + \frac{H_{010}^{11}-H_{000}^{11}}{k_P} + \frac{ H_{100}^{11} - H_{000}^{11}}{k_D}  + H_{000}^{11}\,.
    \nonumber\\ 
}
Similarly, we have
\eq{
    H_{101}^{k_Pk_D} = \frac{H_{101}^{11} - H_{001}^{11} - H_{100}^{11} + H_{000}^{11}}{k_Dk_P} + \frac{H_{001}^{11}-H_{000}^{11}}{k_P} + \frac{ H_{100}^{11} - H_{000}^{11}}{k_D}  + H_{000}^{11}
}
and
\eq{
    H_{111}^{k_Pk_D} = \frac{H_{111}^{11} - H_{011}^{11} - H_{100}^{11} + H_{000}^{11}}{k_Dk_P} + \frac{H_{011}^{11}-H_{000}^{11}}{k_P} + \frac{ H_{100}^{11} - H_{000}^{11}}{k_D}  + H_{000}^{11}.
}
Finally, substituting the expressions for the $H$s into eqn.~\eqref{eq:VH} and simplifying completes the derivation.

To find the optimal ratio, we write the test error as
\eq{\label{eq_total_error_for_average}
    B + \frac{V_P}{k_P} + \frac{V_X}{k_D} + \frac{V_\e}{k_D} + \frac{V_{X\e}}{k_D} + \frac{V_{PX}}{k_Pk_D} + \frac{V_{P\e}}{k_Pk_D}+ \frac{V_{PX\e}}{k_Pk_D}
}
and substitute $k_D = K/k_P$, where $K$ is a fixed constant. Then differentiating eqn.~\eqref{eq_total_error_for_average} with respect to $k_P$ and solving for the stationary point yields eqn.~\eqref{eq_optimal_ratio}.

\section{Model Definitions for the Full Neural Tangent Kernel}
\label{sec_sm_ntk}
For clarity of presentation, in the main text we focused on a linear teacher and a simple unstructured random feature model. This model can also be viewed as a degeneration of the Neural Tangent Kernel (NTK) of a single-hidden-layer neural network under which the first-layer weights are held at their randomly-initialized values and only the second-layer weights are optimized. Our analysis and results actually extend to the full NTK, where all weights are optimized, and to a wide nonlinear teacher neural network. The results in the main text are special cases of the more general results we present here.

\subsection{Data distribution}
\label{subsec_setup_sm}
Following~\cite{adlamneural}, we consider the task of learning an unknown function from $m$ independent samples $(\bfx_i, y_i) \in \mathbb{R}^{n_0}\times \mathbb{R},\,i\le m$, where the datapoints are standard Gaussian, $\bfx_i \sim \cal{N}(0, I_{n_0})$, and the labels are generated by a wide\footnote{We assume the width $n_\textsc{t} \to \infty$, but the rate is not important.} single-hidden-layer neural network: 
\eq{
    y_i | \bfx_i, \Omega,\omega \sim \omega \ft(\Omega \bfx_i / \sqrt{n_0}) / \sqrt{n_\textsc{t}} + \varepsilon_i\,.
}
The teacher's activation function $\ft$ is applied coordinate-wise, and its parameters $\Omega \in \mathbb{R}^{n_\textsc{t}\times n_0}$ and $\omega \in \mathbb{R}^{1\times n_\textsc{t}}$ are matrices whose entries are independently sampled once for all data from $\cal{N}(0,1)$. We also allow for independent label noise, $\varepsilon_i \sim \cal{N}(0, \sigma_{\varepsilon}^2)$. In this case, the test loss for a predictive function $\hat{y}$ becomes,
\eq{\label{eq_test_error_sm}
     \E(\omega \ft(\Omega \bfx/\sqrt{n_0})/\sqrt{n_\textsc{t}} + \varepsilon - \hat{y}(\bfx))^2\,.
}

Recall that in our high-dimensional asymptotics the limiting ratios $n_0/m\to\phi$ and $n_0/n_1\to\psi$ are constant.
As we will discuss in Sec.~\ref{sec_gaussian}, in this regime only linear functions of the data can be learned, a finding that is consistent with observations made in~\cite{ghorbani2019linearized,mei2019generalization}. When the teacher width $n_\textsc{t}\to\infty$, a precise decomposition of the teacher emerges that neatly captures its learning and unlearnable components. Specifically, if we define,
\begin{equation}
\label{eq:teacher_consts}
    \zeta_\textsc{t} \deq (\mathbb{E}\ft'(g))^2\,,\;\;\;\,\text{and}\;\;\;\, \eta_\textsc{t} \deq \mathbb{E} \ft(g)^2\,,
\end{equation}
then there is an equivalent linear teacher plus noise with signal-to-noise ratio given by,
\eq{\label{eq_SNR}
\text{SNR} = {\zeta_\textsc{t}}/\pa{\eta_\textsc{t} -\zeta_\textsc{t} + \sigma_{\bfe}^2}\,.
}
We often make this equivalence to a linear teacher explicit by setting $\ft(x) = x$ (which implies $\eta_\textsc{t} = \zeta_\textsc{t} = 1$) and explicitly adding label noise $\sigma_{\varepsilon}^2 = 1/\text{SNR}$. This procedure also removes the noise from the test label, but since this noise merely contributes an additive shift to the test loss, removing it does not change any of our conclusions.

\subsection{NTK Regression}
\label{subsec_kernel_regression_sm}

We consider predictive functions $\hat{y}$ defined by approximate (\emph{i.e.} random feature) kernel ridge regression using the NTK of a single-hidden-layer neural network of width $n_1$ with entry-wise activation function $\fs$, defined by,
\eq{\label{eq_N0}
N_0(\bfx) = W_2\fs(W_1\bfx/\sqrt{n_0})/\sqrt{n_1}\,,
}
for initial $n_1\times n_0$ and $1\times n_1$ weight matrices with iid entries $[W_1]_{ij} \sim \mathcal{N}(0,1)$\footnote{Any non-zero $\sigma_{W_1}^2$ can be absorbed into a redefinition of $\fs$.} and $[W_2]_{i} \sim \mathcal{N}(0,\sigma_{W_2}^2)$.

The NTK can be considered a kernel $K$ that is approximated by random features corresponding to the Jacobian $J$ of the network's output with respect to its parameters, \emph{i.e.} $K(\bfx_1,\bfx_2) = J(\bfx_1)J(\bfx_2)^\top$. The Jacobian itself naturally decomposes into the Jacobian with respect to $W_1$ and $W_2$, \emph{i.e.} $J(\bfx) = [\partial N_0(\bfx)/\partial W_1, \partial N_0(\bfx)/\partial W_2] = [J_1(\bfx), J_2(\bfx)]$. Therefore the kernel $K$ also decomposes this way, and we can write.
\eq{
K(\bfx_1, \bfx_2) =  \; J_1(\bfx_1)J_1(\bfx_2)^\top +  J_2(\bfx_1)J_2(\bfx_2)^\top \deqrev  \;  K_1(\bfx_1, \bfx_2) + K_2(\bfx_1, \bfx_2).
\label{eqn_K_K1K2}
}
As the width of the network becomes very large (compared to all other relevant scales in the system), the approximate NTK converges to a constant kernel determined by the network's initial parameters and describes the trajectory of the network's output under gradient descent.\footnote{If the width is not asymptotically larger than the dataset size, the kernel system may not accurately describe the late-time predictions of the neural network.} In this work, we focus on the predictive function defined by the solution to this kernel regression problem,
\eq{\label{eq_yhat_ntk}
\hat{y}(\bfx) \deq N_0(\bfx) + (Y - N_0(X))K^{-1}K_\bfx\,
}
for $K\deq K(X,X) + \gamma I_m$, $K_\bfx \deq K(X, \bfx)$, and $\gamma$ is a ridge regularization constant. A simple calculation yields the per-layer constituent kernels,
\begin{align}
\label{eq_K1}
K_1(\bfx_1, \bfx_2) & = \frac{X^\top X}{n_0} \odot \frac{\pa{F'}^\top \diag(W_2)^2 F'}{n_1} \quad \text{and}\\
\label{eq_K2_sm}
K_2(\bfx_1, \bfx_2) &= \frac{1}{n_1}F^\top F\,,
\end{align}
where we have introduced the abbreviations $F\deq\fs(W_1X/\sqrt{n_0})$ and $F'\deq\fs'(W_1X/\sqrt{n_0})$. Notice that when $\sigma_{W_2}^2 \to 0$, $K = K_2$, \emph{i.e.} the NTK degenerates into the standard random features kernel of the main text. 

\paragraph{Centering} The predictive function \eqref{eq_yhat_ntk} contains an offset $N_0(\bfx)$ which would typically be set to zero in standard random feature kernel regression because it simply increases the variance of test predictions. Removing this variance component has an analogous operation in neural network training: either the function value at initialization can be subtracted throughout training, or a symmetrization trick can be used in which two copies of the neural network are initialized identically, and their normalized difference $N \equiv \pa{N^{(a)}-N^{(b)}}/{\sqrt{2}}$ is trained with gradient descent. Either method preserves the kernel $K$ while enforcing $N_0 \equiv 0$. We call this procedure \emph{centering}, and present results with and without it.

Finally, we note that ridge regularization in the kernel perspective corresponds to using L2 regularization of the neural network's weights toward their initial values.

\subsection{Exact Asymptotics for the Fine-Grained Variance Decomposition of the NTK}
\ben{Clarify what is meant by asymptotic here. What type of convergence are we talking about?}
Here we state a generalization of the results from Sec.~\ref{sec:limit} to the NTK. As discussed above, the results for random feature kernel regression follow by setting $\sigma_{W_2} = 0$. The proofs are presented in the subsequent sections.

\paragraph{High-dimensional asymptotics.}
We consider the limiting behavior of tracial expressions as the dimensions in our model diverge to infinity as their ratios are held fixed according to $\phi$ and $\psi$. The tracial expressions are random variables that converge in probability to deterministic constants, which are specified as the solution to a coupled equation defined below. 

\begin{lemma}
\label{lemma:t1t2_ntk}
Let $g\sim \mathcal{N}(0,1)$ and define,
\eq{\label{eq_f_consts}
    \zeta \deq (\mathbb{E}\fs'(g))^2\,,\;\;\;\,\eta \deq \mathbb{E}\fs(g)^2\,,\quad\text{and}\quad\eta' \deq \mathbb{E} \fs'(g)^2 \,.
}
Then, in the high-dimensional asymptotics defined above, the limits of the traces $\tau_1(\gamma) = \frac{1}{m}\E \tr(K^{-1})$ and $\tau_2(\gamma) = \frac{1}{m} \E\tr(\frac{1}{n_0}X^\top X K^{-1})$ converge in probability to the unique solutions to the coupled polynomial equations,
\begin{align}
\label{eq:tau_ntk}
    0&=\phi\left(\zeta  \tau_2 \tau_1+ \phi(\tau_2 -\tau_1) \right)+\zeta  \tau_1 \tau_2
  \psi  \left(\gamma \tau_1-1\right) +\zeta  \tau_1 \tau_2 \sigma _{W_2}^2
  \left(\zeta  \left(\tau_2-\tau_1\right) \psi +\tau_1 \psi  \eta '+\phi \right)\\
  0&=\zeta \tau_1^2 \tau_2 \left(\eta'-\eta\right) \sigma _{W_2}^2+\zeta  \tau_1 \tau_2 \left(\gamma \tau_1 -1\right) - \left(\tau_2-\tau_1\right) \phi  \left(\zeta  \left(\tau_2-\tau_1\right)+\eta  \tau_1\right)\,.
\end{align}
such that $\tau_1, \tau_2\in \mathbb{C}^+$ for $\gamma \in \mathbb{C}^+$.
\end{lemma}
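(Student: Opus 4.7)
My plan is to extend the operator-valued free probability / linear-pencil framework that underlies Lemma~\ref{lemma:t1t2} (and that appears in~\cite{adlam2019random, adlamneural, peche2019note, far2006spectra}) to accommodate the additional term $K_1$ coming from the first-layer Jacobian. The first step is to replace the nonlinear random matrices $F = \fs(W_1 X/\sqrt{n_0})$ and $F' = \fs'(W_1 X/\sqrt{n_0})$ by their high-dimensional Gaussian equivalents: $F \rightsquigarrow \sqrt{\zeta}\, W_1 X/\sqrt{n_0} + \sqrt{\eta-\zeta}\,\Theta_F$, with an analogous substitution for $F'$ whose linear coefficient is controlled by $\E\fs''(g)$ and whose noise coefficient is controlled by $\eta'-\zeta$, where $\Theta_F, \Theta_{F'}$ are independent iid Gaussian matrices. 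This substitution is justified asymptotically by the moment-matching / universality arguments of \cite{peche2019note} and was used in the scalar random-features case in the main text.

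Having linearized the nonlinearities, I decompose $K = K_1 + K_2 + \gamma I$ into standard matrix-product form. The part $K_2 = F^\top F/n_1$ is treated exactly as in the main result. The harder piece is $K_1 = \frac{X^\top X}{n_0} \odot \frac{(F')^\top \diag(W_2)^2 F'}{n_1}$: the Hadamard product does not fit directly into the linear-pencil formalism. My plan is to first show, by concentration of the quadratic form in $W_2$ and the $\eta'$-coefficient of $F'$, that $\frac{(F')^\top \diag(W_2)^2 F'}{n_1}$ is asymptotically equivalent to $\sigma_{W_2}^2\big[\zeta\,\mathbf{1}\mathbf{1}^\top + (\eta'-\zeta) I_m\big]$. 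Taking the Hadamard product with $X^\top X/n_0$ and using $\|\bfx_i\|^2/n_0 \to 1$, this reduces $K_1$ to the linear-pencil-friendly form $\sigma_{W_2}^2\big[\zeta\, X^\top X/n_0 + (\eta'-\zeta) I_m\big]$ up to negligible fluctuations (which I would control via Hanson-Wright and the Gaussian-Poincar\'e inequality, following \cite{louart2018random,adlamneural}). The $(\eta'-\zeta)\sigma_{W_2}^2 I_m$ contribution acts as an effective shift of the ridge parameter, explaining the appearance of the combinations $\eta'-\zeta$ and $\sigma_{W_2}^2$ in the equations.

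With this reduction, $K + z I$ becomes a quadratic polynomial in the independent matrices $X, W_1, \Theta_F, W_2$, so I can assemble an explicit block matrix $\mathcal{Q}(\gamma)$ whose $(1,1)$ block is the resolvent of the linearized kernel and whose $(2,2)$ block gives the $\frac{1}{n_0}X^\top X K^{-1}$ trace by a Schur complement. Applying the operator-valued subordination result of \cite{far2006spectra}, the limiting block trace $\mathcal{G}(\gamma) = (\mathrm{id}\otimes \tfrac{1}{m}\tr)\E[\mathcal{Q}(\gamma)^{-1}]$ satisfies a self-consistent matrix equation; projecting onto the two scalar blocks corresponding to $\tau_1$ and $\tau_2$ yields, after algebraic elimination, exactly the coupled polynomial system \eqref{eq:tau_ntk}. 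The main obstacles I anticipate are: (i) rigorously justifying the Hadamard-product linearization (the step outside the standard pencil toolkit), and (ii) uniqueness of the solution branch. For (ii), I would argue as in the main lemma: $\tau_1(\gamma)$ is the Stieltjes transform of the (deterministic) limiting spectral measure of $K$, hence is uniquely determined among analytic functions $\mathbb{C}^+\to\mathbb{C}^+$, and $\tau_2(\gamma)$ is then determined from $\tau_1$ through the first polynomial equation. Convergence in probability of the empirical traces to these deterministic limits follows from Gaussian concentration of $\tfrac{1}{m}\tr(K^{-1})$ as a Lipschitz function of $(X,W_1,W_2)$.
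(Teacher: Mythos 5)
Your proposal follows essentially the same route as the paper's proof: linearize $F$ by its Gaussian equivalent, replace $K_1$ by $\sigma_{W_2}^2(\eta'-\zeta)I_m + \frac{\sigma_{W_2}^2\zeta}{n_0}X^\top X$ (the paper's eqn.~\eqref{eq_replacement1}), assemble a self-adjoint linear pencil whose blocks are constants or proportional to $X,W_1,\Theta_F$ and their transposes, solve the operator-valued Dyson equation of~\cite{far2006spectra,mingo2017free}, and eliminate auxiliary entries to arrive at the coupled polynomial system, with uniqueness pinned down by analyticity of $\tau_1,\tau_2$ in the upper half-plane. Your explicit derivation of the $K_1$ reduction via the concentration of the $F'$-quadratic form, the Hadamard-product structure, and $\|\bfx_i\|^2/n_0 \to 1$ spells out a step the paper asserts by appeal to moment-matching, but the core machinery and proof architecture are identical.
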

\begin{corollary}
Lemma~\ref{lemma:t1t2} follows from Lemma~\ref{lemma:t1t2_ntk} by setting $\sigma_{W_2} = 0$.
\end{corollary}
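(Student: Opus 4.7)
The plan is a direct substitution verification. Both lemmas characterize $(\tau_1, \tau_2)$ as roots of a polynomial system of the same degree together with a half-plane branch selection, and the structural reason the reduction is expected to hold is that the NTK kernel decomposes as $K = K_1 + K_2$ with $K_1$ carrying an overall factor of $\diag(W_2)^2$ (see eqn.~\eqref{eq_K1}). Setting $\sigma_{W_2} = 0$ therefore collapses $K$ to $K_2$, which is exactly the random-feature kernel of the main text, so the polynomial systems should agree in this limit.

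For the first equation of Lemma~\ref{lemma:t1t2_ntk}, setting $\sigma_{W_2}^2 = 0$ annihilates the entire $\sigma_{W_2}^2$-group and leaves $\phi(\zeta\tau_1\tau_2 + \phi(\tau_2 - \tau_1)) + \zeta\tau_1\tau_2\psi(\gamma\tau_1 - 1) = 0$. A direct rearrangement produces $\zeta\tau_1\tau_2(1 - \gamma\tau_1) = (\phi/\psi)(\zeta\tau_1\tau_2 + \phi(\tau_2 - \tau_1))$, which is the first equality in Lemma~\ref{lemma:t1t2}. For the second equation, setting $\sigma_{W_2}^2 = 0$ removes the $(\eta' - \eta)$ term and leaves $\zeta\tau_1\tau_2(\gamma\tau_1 - 1) = (\tau_2 - \tau_1)\phi(\zeta(\tau_2 - \tau_1) + \eta\tau_1)$; negating both sides and applying the elementary identity $\zeta(\tau_2 - \tau_1) + \eta\tau_1 = (\eta - \zeta)\tau_1 + \zeta\tau_2$ recovers the second equality of Lemma~\ref{lemma:t1t2}. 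Together the two reductions reproduce the full triple equality.

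The only ingredient requiring a brief comment is the transfer of the uniqueness/branch condition. Lemma~\ref{lemma:t1t2_ntk} selects the root with $\tau_1, \tau_2 \in \mathbb{C}^+$ for $\gamma \in \mathbb{C}^+$ whereas Lemma~\ref{lemma:t1t2} states $\gamma \in \mathbb{C}^-$, reflecting a standard sign convention for the Stieltjes transform of the resolvent. Since the $\sigma_{W_2}\to 0$ reduced system is a polynomial of the same degree with generically nonvanishing discriminant in the physical range of $\phi, \psi, \eta, \zeta$, the half-plane root of the NTK system extends continuously to the root prescribed in Lemma~\ref{lemma:t1t2} up to this sign relabeling. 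No substantive obstacle arises; the entire proof amounts to the two algebraic reductions above plus routine bookkeeping of branches.
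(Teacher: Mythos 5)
Your verification is correct and matches the paper's (implicit) argument: the corollary is stated without proof, and the intended reasoning is exactly the direct substitution you perform, under which the two NTK equations from Lemma~\ref{lemma:t1t2_ntk} collapse, after elementary rearrangement and the identity $\zeta(\tau_2-\tau_1)+\eta\tau_1=(\eta-\zeta)\tau_1+\zeta\tau_2$, to the triple equality of Lemma~\ref{lemma:t1t2}. Your remark about the $\gamma\in\mathbb{C}^+$ versus $\gamma\in\mathbb{C}^-$ branch convention is a fair observation about the sign convention in the resolvent and does not affect the reduction.
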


\begin{theorem}
\label{thm:main_ntk}
Let $\tau_1$ and $\tau_2$ be defined as in Lemma~\ref{lemma:t1t2_ntk}. Then the asymptotic bias and variance terms of eqns.~\eqref{eq_A3_start}-\eqref{eq_A3_end} for the NTK are given by,
\begin{equation}
\begin{aligned}[c]
B &= \tau_2^2/\tau_1^2\\
V_{P} &= \tau_2'/\tau_1' - B - \nu T_2/\tau_1'\\
V_{X} &= \phi B (\tau_1-\tau_2)^2 /(\tau_1^2-\phi(\tau_1-\tau_2)^2)\\
V_{\bfe} &= 0
\end{aligned}
\qquad
\begin{aligned}[c]
V_{PX} &= -\tau_2'/\tau_1^2-B-V_{P}-V_{X} + \nu T_2/(\gamma \tau_1)^2\\
V_{P\bfe} &= 0\\
V_{X\bfe} &=\sigma_{\bfe}^2 V_X/B\\
V_{PX\bfe} &= \sigma_{\bfe}^2(-\tau_1'/\tau_1^2-1) - V_{X\bfe}\,,
\end{aligned}
\end{equation}
where 
\eq{
    T_2 \deq \sigma_{W_2}^2\gamma^2\left(\tau_1 + (\sigma_{W_2}^2(\eta'-\zeta) + \gamma)\tau_1'+\sigma_{W_2}^2\zeta \tau_2'\right)\label{eq:T2_2}\,,
}
$\tau_i'$ is the derivative of $\tau_i$ with respect to $\gamma$, and $\nu = 0$ with centering and $\nu = 1$ without it.
\end{theorem}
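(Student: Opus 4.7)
The plan is to reduce the theorem to computing the seven coupling quantities $H_\bfi$ for $\bfi\in\{0,1\}^3$ (indexing the sources $P,X,\bfe$) defined in Section~\ref{sec_var_decomp_sm}, since once the $H_\bfi$ are known the variance terms follow from the explicit inclusion--exclusion formulas in eqn.~\eqref{eq:VH}, and the bias follows from expanding $B=\E_\bfx y(\bfx)^2 - 2\E_\bfx y(\bfx)\,\E[\hat y(\bfx)] + H_{000}$. Each $H_\bfi$ reduces to a trace of resolvents of $K$ against matrices built from $X$ and $W_1,W_2$, whose high-dimensional limits are controlled by the deterministic equivalents $\tau_1,\tau_2$ of Lemma~\ref{lemma:t1t2_ntk}.

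First I write $\hat y(\bfx)$ explicitly using eqn.~\eqref{eq_yhat_ntk}: with centering ($\nu=0$) it is $\hat y(\bfx)=(Y_*+\bfe)K^{-1}K_\bfx$ with $Y_*=\beta^\top X/\sqrt{n_0}$, while without centering there is an extra term $N_0(\bfx)-N_0(X)K^{-1}K_\bfx$ depending only on $P$, so only the $P$-involving $H_\bfi$ pick up additional contributions. Because $\bfe$ enters linearly with independent mean-zero coordinates, all $\bfe$-coupled $H_\bfi$ factor cleanly: ``shared-$\bfe$'' branches add a term proportional to $\sigma_{\bfe}^2\,\E_\bfx\tr(K^{-1}K_\bfx K_\bfx^\top K^{-1})/m$, while ``independent-$\bfe$'' branches collapse to the signal case. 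This immediately yields $V_\bfe=V_{P\bfe}=0$, and after re-expressing the shared-$\bfe$ trace via $\partial_\gamma$ of $\tau_1$ it produces $V_{X\bfe}$ and $V_{PX\bfe}$ in the form stated.

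The substantive computation is the four signal couplings $H_{000},H_{100},H_{010},H_{110}$. Averaging over $\beta\sim\mathcal{N}(0,I)$ (so $\E\beta\beta^\top=I$) and $\bfx\sim\mathcal{N}(0,I)$ reduces each to a trace of the form $\tr(K^{-1}M_1 K^{-1}M_2)/m$, with $M_1,M_2$ depending on which branches of the coupling share randomness. For $H_{010}$ (shared $X$, independent $P$), independence of $W_1$ and $\tilde W_1$ lets me factor the inner trace through the deterministic equivalent of $K^{-1}$ from Lemma~\ref{lemma:t1t2_ntk}, yielding expressions in $\tau_1,\tau_2$ alone and hence $V_X=\phi B(\tau_1-\tau_2)^2/(\tau_1^2-\phi(\tau_1-\tau_2)^2)$. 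For $H_{100}$ and $H_{110}$ both resolvents act on the same nonlinear features; the identity $\partial_\gamma K^{-1}=-K^{-2}$ converts the resulting squared-resolvent traces into $\tau_1',\tau_2'$. Crucially, the decomposition $K=K_1+K_2$ means the $P$-coupled traces split into a $K_2$-piece (identical to the analysis of Theorem~\ref{thm:main}) plus a $K_1$-piece that, at the fixed point, is precisely the scalar $T_2$ of eqn.~\eqref{eq:T2_2}; the uncentered offset $N_0(\bfx)-N_0(X)K^{-1}K_\bfx$ then contributes exactly the $\nu T_2$ corrections to $V_P$ and $V_{PX}$. Assembling everything via eqn.~\eqref{eq:VH}, simplifying using the defining equations of Lemma~\ref{lemma:t1t2_ntk} and their $\gamma$-derivatives, and noting that $\E[\hat y(\bfx)]=(\tau_2/\tau_1)\,y(\bfx)$ gives $B=\tau_2^2/\tau_1^2$ and completes the identification.

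The hard part is the $P$-shared computation: traces of the form $\tr(K^{-1}M_1 K^{-1}M_2)$ with $M_1,M_2$ nonlinear in the common $W_1,W_2$ do not collapse to a single-letter Stieltjes transform, and require the linearization and operator-valued free-probability machinery of \cite{adlam2019random,far2006spectra,adlamneural} extended so that $K_1$ and $K_2$ are tracked separately in the limiting fixed-point system. Once that system is differentiated in $\gamma$ one reads off $T_2$; the remaining steps are algebraic bookkeeping, and the specialization $\sigma_{W_2}\to 0$, which sends $T_2\to 0$, recovers Theorem~\ref{thm:main} as a corollary.
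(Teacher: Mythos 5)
Your plan matches the paper's proof strategy: both compute the seven couplings $H_\bfi$ with $\bfi\in\{0,1\}^3$ indexed by $(P,X,\bfe)$, reduce each to a trace of resolvents via linear pencils and operator-valued free probability, use the $\gamma$-derivative to convert squared-resolvent traces into $\tau_1',\tau_2'$, observe that the noise terms factor so $H_{001}=H_{000}$ and $H_{101}=H_{100}$ (giving $V_\bfe=V_{P\bfe}=0$), and handle the uncentered case by the extra $N_0$-dependent offset that contributes $\nu T_2$. The assembly via eqn.~\eqref{eq:VH} is exactly the paper's.

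Two small slips worth flagging. First, the claimed relation $\E[\hat y(\bfx)]=(\tau_2/\tau_1)\,y(\bfx)$ is inconsistent with $B=\tau_2^2/\tau_1^2$: plugging it into $B=\E_\bfx(\E\hat y-y)^2$ would give $(\tau_2/\tau_1-1)^2$. The paper's computation yields $H_{000}=(\tau_2/\tau_1-1)^2$ and cross term $E_{21}=2(\tau_2/\tau_1-1)$, which combine to $B=(\tau_2/\tau_1)^2$; the consistent scalar relation is $y-\E\hat y=(\tau_2/\tau_1)\,y$, i.e.\ $\E\hat y=(1-\tau_2/\tau_1)\,y$. Second, the $T_2$ correction arises from the uncentered offset via $N_0(X)^\top N_0(X)=\sigma_{W_2}^2\,F^\top F/n_1=\sigma_{W_2}^2 K_2$, so it is really a $K_2$-piece, not a $K_1$-piece; more importantly, the NTK modifies the resolvent $K^{-1}$ itself throughout the system, so the split into ``Theorem~\ref{thm:main}'s analysis plus a $T_2$ remainder'' is a property of the final formulas, not a decomposition that holds term-by-term in the intermediate traces. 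Neither slip changes the high-level correctness of the plan.
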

\begin{corollary}
Theorem~\ref{thm:main} follows from Theorem~\ref{thm:main_ntk} by setting $\sigma_{W_2} = 0$.
\end{corollary}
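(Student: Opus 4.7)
The plan is to trace the $\sigma_{W_2}^2 \to 0$ limit through each ingredient of Theorem~\ref{thm:main_ntk} and verify that the random-feature statement of Theorem~\ref{thm:main} is reproduced on the nose. There are three things to check: the model itself reduces to the random-feature kernel regression of Section~\ref{sec:model}; the fixed-point system of Lemma~\ref{lemma:t1t2_ntk} reduces to that of Lemma~\ref{lemma:t1t2}; and the closed-form formulas for $B$ and the $V_s$ collapse term-by-term.

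First, I would verify the model-level reduction. Setting $\sigma_{W_2}^2 = 0$ forces $W_2 \equiv 0$, so the initialization value $N_0(\bfx) \equiv 0$ and the predictive function~\eqref{eq_yhat_ntk} collapses to $\hat{y}(\bfx) = YK^{-1}K_\bfx$, matching~\eqref{eq_yhat_K}. In the same limit the per-layer kernel $K_1$ in~\eqref{eq_K1} carries an overall factor of $\diag(W_2)^2$ and therefore vanishes, leaving $K = K_2 = \tfrac{1}{n_1} F^\top F$, the random-feature kernel of the main text. Because $N_0 \equiv 0$, centering is vacuous and the parameter $\nu$ is immaterial.

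Second, I would substitute $\sigma_{W_2} = 0$ into the coupled equations of Lemma~\ref{lemma:t1t2_ntk}. The first equation collapses to
\eqs{
\phi\!\left(\zeta\tau_1\tau_2 + \phi(\tau_2 - \tau_1)\right) + \zeta\tau_1\tau_2 \psi(\gamma\tau_1 - 1) = 0,
}
which after dividing by $\psi$ and rearranging is the first equality of~\eqref{eq:tau1tau2}. The second equation loses its $(\eta'-\eta)\sigma_{W_2}^2$ term and becomes
\eqs{
\zeta\tau_1\tau_2(\gamma\tau_1 - 1) = (\tau_2 - \tau_1)\phi\!\left(\zeta(\tau_2 - \tau_1) + \eta\tau_1\right),
}
which, using $\zeta(\tau_2 - \tau_1) + \eta\tau_1 = (\eta-\zeta)\tau_1 + \zeta\tau_2$ and flipping an overall sign, is the second equality of~\eqref{eq:tau1tau2}. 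So $(\tau_1, \tau_2)$ of the two lemmas coincide, and implicit differentiation in $\gamma$ gives the same $(\tau_1', \tau_2')$ as well.

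Third, I would pass $\sigma_{W_2}^2 \to 0$ through the Theorem~\ref{thm:main_ntk} formulas. The definition~\eqref{eq:T2_2} of $T_2$ has an overall factor of $\sigma_{W_2}^2$, so $T_2 = 0$ and every $\nu T_2$ correction in $V_P$ and $V_{PX}$ drops out regardless of whether centering is applied. The remaining expressions for $B$, $V_X$, $V_{\bfe}$, $V_{P\bfe}$, $V_{X\bfe}$ and $V_{PX\bfe}$ have no explicit $\sigma_{W_2}^2$ dependence and therefore match Theorem~\ref{thm:main} verbatim. The only subtlety — more a sanity check than an obstacle — is that the right-hand sides of Theorem~\ref{thm:main_ntk} are rational functions of $(\tau_1, \tau_2, \tau_1', \tau_2', \sigma_{W_2}^2)$ that are regular at $\sigma_{W_2}^2 = 0$, so the substitution is justified without re-deriving anything from the finite-dimensional random-matrix expressions.
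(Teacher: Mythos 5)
Your proof is correct and matches the approach the paper leaves implicit; the paper states this corollary without any written argument. Your three-step verification — the model collapses to random-feature kernel regression once $W_2 \equiv 0$ (so $K_1 \to 0$, $N_0 \equiv 0$, and $\nu$ becomes moot), the coupled fixed-point equations of Lemma~\ref{lemma:t1t2_ntk} reduce algebraically to those of Lemma~\ref{lemma:t1t2}, and $T_2$ carries an overall factor of $\sigma_{W_2}^2$ so the $\nu$-dependent corrections in $V_P$ and $V_{PX}$ vanish while the remaining formulas have no explicit $\sigma_{W_2}^2$ dependence — simply spells out the specialization the authors treat as self-evident.
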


\subsection{Discussion of Results for the NTK}

We briefly highlight some results for the full version of Theorem~\ref{thm:main_ntk} that are distinct from the special case Theorem~\ref{thm:main}. Note that since the model in eqn.~\eqref{eq_yhat_ntk} corresponds to the full NTK, the model has $n_1(n_0+1)$ parameters. Thus $n_1=m$ does not occur at the interpolation threshold but instead represents a significantly overparameterized model. Previous work has found nonmonotonic behavior in the test loss for the model in eqn.~\eqref{eq_yhat_ntk} at both the interpolation threshold and when $n_1=m$ \cite{adlamneural}. Since $n_1=m$ is far beyond the interpolation threshold, this second occurrence of nonmonotonicity is qualitatively different than double descent behavior. Our variance decomposition sheds light on the source of this second occurrence of nonmonotonic behavior (see Fig.~\ref{fig:venn_variance_sm}). 

We find that none of the variance terms are divergent, but the sources of the nonmonotonicity are $V_P$, $V_{PX}$, and $V_{PX\e}$. Curiously, bagging this predictive function for a large number of dataset samples would remove all other sources of variance except $V_P$. This would have the effect of highlighting the nonmonotonicity in the total variance.

\begin{figure*}[t]
    \centering
\includegraphics[width=\linewidth]{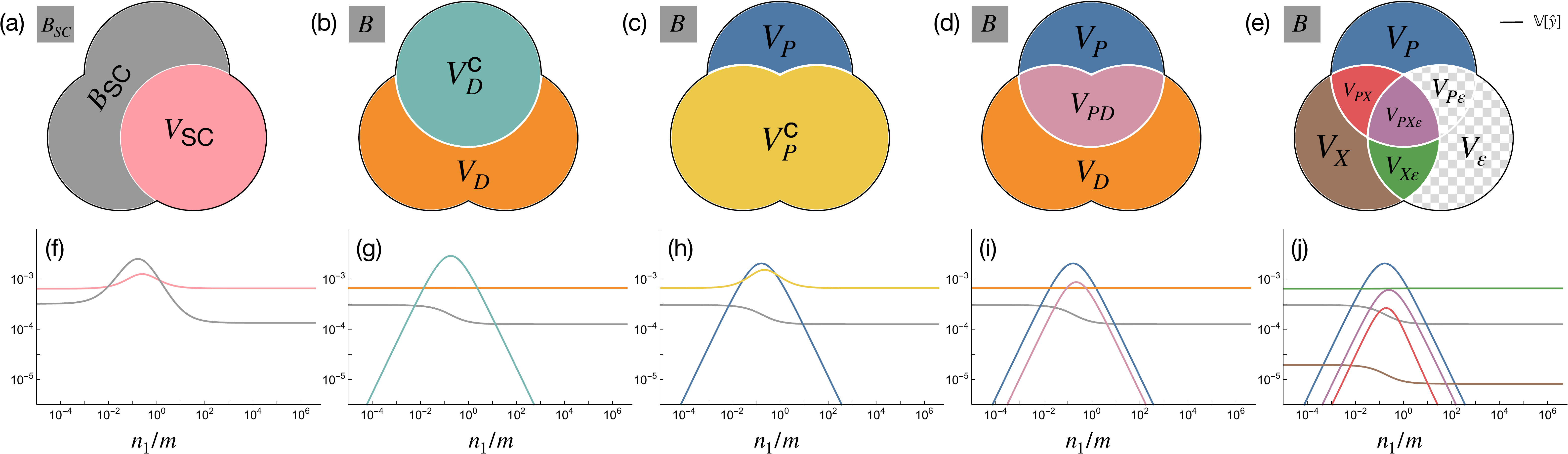}
\caption{We replicate Fig.~\ref{fig:venn_variance} from the main text but using the NTK. As before we set $\gamma =0$, $\phi=1/16$ and $\fs = \tanh$ with $\text{SNR} = 100$, and we use centering.
Recall that the number of trainable parameters for the NTK is $n_1(n_0+1)$, so $n_1=m$ no longer corresponds to the interpolation threshold but represents very overparameterized models. Despite this we still find nonmonotonic behavior in many of the variance terms. Specifically, $V_P$, $V_{PX}$, and $V_{PX\e}$ are all nonmonotonic and have a peak slightly before $n_1=m$. In the semi-classical decomposition (a), these nonmonotonicities would again cause the bias to be nonmonotonic. Similar ambiguities to the random feature case occur for the NTK in (b) and (c). (d) shows the two variable decomposition and (e) the three variable decomposition. As in Fig.~\ref{fig:venn_variance} the terms $V_\e$ and $V_{P\e}$ are zero.}
\label{fig:venn_variance_sm}
\end{figure*}

\section{Gaussian Equivalents}
\label{sec_gaussian}
Here we review the analysis from~\cite{adlamneural} for computing the test loss in our high-dimensional asymptotic limit.  In the next sections, we extend this procedure to compute the constituent bias and variance terms.

As a first step, we exploit some simplifications that happen in our asymptotic limit that allow us to make the following replacements without changing any of the variance terms or the bias:
\begin{align}
K_1 &\to \sigma_{W_2}^2 (\eta'-\zeta) I_m + \frac{\sigma_{W_2}^2 \zeta}{n_0}X^\top X\label{eq_replacement1}\\
F \to & \sqrt{\frac{\zeta}{n_0}} W_1 X + \sqrt{\eta - \zeta} \Theta_F \label{eq_Flin}\\
Y \to & \sqrt{\frac{\zeta_\textsc{t}}{n_\textsc{t} n_0}} \omega \Omega X + \sqrt{\frac{\eta_\textsc{t} - \zeta_\textsc{t}}{n_\textsc{t}}} \omega \Theta_Y + \mathcal{E} \label{eq_Ylin} \\
f \to &\sqrt{\frac{\zeta}{n_0}} W_1 \bfx + \sqrt{\eta - \zeta} \theta_f \label{eq_flin}\\
y \to & \sqrt{\frac{\zeta_\textsc{t}}{n_\textsc{t}n_0}} \omega\Omega \bfx + \sqrt{\frac{\eta_\textsc{t} - \zeta_\textsc{t}}{n_\textsc{t}}} \omega\theta_y \label{eq_ylin}\,.
\end{align}
where $f \deq \fs(W_1\bfx/\sqrt{n_0})$ is the random feature representation of the test point $\bfx$ and $y \deq \omega \ft(\Omega \bfx / \sqrt{n_0}) / \sqrt{n_\textsc{t}}$ is its label. The new objects $\Theta_F$, $\Theta_Y$, $\theta_f$, and $\theta_y$ are matrices of the appropriate shapes with iid standard Gaussian entries. The constants $\eta', \eta$, and $\zeta$ (see eq.~\eqref{eq_f_consts}), as well as $\eta_\textsc{t}$ and $\zeta_\textsc{t}$ (see eqn.~(\ref{eq:teacher_consts})) are chosen so that the mixed moments up to second order are the same for the original and linearized versions. 

To give some intuition on these substitutions, many of the statistics of random matrices are universal, that is, their limiting behavior as the matrix gets larger is insensitive to the detailed properties of their entries' distributions. Considerable work has gone into demonstrating universality for an increasingly large class of random matrices and a growing number of detailed statistics. In our case, the test loss is a global measurement of several random matrices. This perspective gives some intuition for why we are able to replace many of the intractable terms in the expressions we analyze with tractable terms, which only need to match quite superficial properties of the distributions to ensure the limiting test loss is the same.

In Secs.~\ref{sec_exact_asymptotics_train} and \ref{sec_exact_asymptotics}, we use this replacement strategy in two distinct situations. The first is for terms of the form
\eq{\label{eq_con_1}
    \tr(AB) = \sum_{ij}A_{ij}B_{ji},
}
for deterministic $A$ and random $B$. 
Under assumptions on $A$ and $B$, standard concentration inequalities can be used to describe the limiting behavior of sums like eqn.~\eqref{eq_con_1}. In our setting, one finds that this behavior only depends on the the low-order moments of $B$. By matching these low-order moments with Gaussian random variables, we can replace $B$ with a Gaussian random matrix with the same limiting behavior. Note, often $A$ is not actually deterministic, we are simply conditioning on it and only considering the randomness in $B$. The approach is suitable for determining the average behavior of eqn.~\eqref{eq_con_1} when we have control over the (weak) correlations in the entries of $A$ and $B$. Linearizing the matrices $A$ and $B$ in this setting is just a convenient bookkeeping device for performing these computations.

When one of the matrices in eqn.~\eqref{eq_con_1} is inverted, the situation is more complex, and indeed this is the case for the kernel matrix $K$ in expressions for the training and test loss. As in~\cite{adlamneural}, to apply the linear pencil algorithm~\cite{helton2018applications,mingo2017free}, we must first replace the kernels in all expressions with linearized versions (using eqns.~\eqref{eq_replacement1}-\eqref{eq_ylin}), yielding a rational expression of the  i.i.d. Gaussian matrices, $X$, $W_1$, \emph{etc.} 

It should be expected that a linearized version of $F$ will lead to the same asymptotic statistics due to some very general results on the limiting behavior of expressions of the form,
\eq{\label{eq_con_2}
    \tr\pa{A\frac{1}{B-zI}},
}
where $A$ is symmetric and $z\in\C^{+}$. The resolvent matrix $(B-z)^{-1}$ is intimately related to the spectral properties of $B$. 
Recently, isotropic results for quite general $A$ have been developed for matrices with correlated entries, which show that under certain assumptions the limiting behavior of eqn.~\eqref{eq_con_2} depends only on the low-order moments of $B$. Specifically, the limiting behavior of eqn.~\eqref{eq_con_2} is described by the matrix Dyson equation in many cases. For a summary of these results and related topics see e.g.~\cite{erdos2019matrix}. 

Finding Gaussian equivalents for $A$ and $B$ in expressions like eqns.~\eqref{eq_con_1} and \eqref{eq_con_2} is relatively simple in our case. We encounter terms for which the matrix $B$ depends on some other random matrix $C$ through a coordinate-wise nonlinear function $f(C)$. For such cases, Taylor expanding the function $f$ is the key tool to finding these equivalents (see e.g.~\cite{adlam2019random} for more details on this type of approach).

\section{Exact asymptotics for the training loss}
\label{sec_exact_asymptotics_train}
\subsection{Decomposition of terms}
The model's predictions on the training set, $\hat{y}(X)$, take a simple form,
\begin{align}
\hat{y}(X) &= N_0(X) + (Y - N_0(X))K^{-1}K(X,X)\\
&= Y - \gamma (Y-N_0(X))K^{-1}\,.
\end{align}
The training loss can be written as,
\begin{align}
\Etrain &= \frac{1}{m}\mathbb{E}_{(X,Y)}\tr\big((Y - \hat{y}(X))(Y - \hat{y}(X))^\top\big)\\
&= \frac{\gamma^2}{m}\mathbb{E}_{(X,\varepsilon)}\tr\big((Y-N_0(X))^\top (Y-N_0(X))K^{-2}\big)\\
&= \tE_1 + \nu \tE_2
\end{align}
where $\nu=0$ with centering and $\nu=1$ without it and,
\begin{align}
T_1 &\deq \frac{\gamma^2}{m}\mathbb{E}_{\varepsilon}\tr(Y^\top Y K^{-2})\\
T_2 &\deq \frac{\gamma^2}{m}\tr(N_0(X)^\top N_0(X) K^{-2})\,.
\end{align}
We have suppressed the terms linear in $N_0$ since they vanish owing to the linear dependence on the symmetric random variable $W_2$. The Neural Tangent Kernel $K = K(X,X) + \gamma I_m$ and is given by,
\eq{\label{eq_K}
    K = \sigma_{W_2}^2 \qa{(\eta' - \zeta)I_m + \frac{\zeta X^\top\! X}{n_0}} + \frac{F^\top \!F}{n_1} + \gamma I_m\,.
}
Note that $N_0(X)^\top N_0(X) = \sigma_{W_2}^2/n_1 F^T F$, so eqn.~(\ref{eq_K}) gives,
\eq{
    N_0(X)^\top N_0(X) = \sigma_{W_2}^2 K -\sigma_{W_2}^2 \big[\sigma_{W_2}^2\big(\eta' - \zeta\big) + \gamma I_m\big] -\sigma_{W_2}^4 \frac{\zeta X^\top\! X}{n_0}\,.
}
Next we recall the substitution~\eqref{eq_Ylin} (as mentioned above, without loss of generality we special to the case of a linear teacher),
\eq{\label{eq_sub_Y}
Y \to \sqrt{\frac{1}{n_\textsc{t} n_0}} \omega \Omega X + \mathcal{E} \,,
}
and consider the leading order behavior with respect to the random variables $\omega$, $\Omega$, and $W_2$ using eqn.~\eqref{eq_con_1} to find
\begin{equation}
    Y^\top Y = \frac{1}{n_0}X^\top X + \sigma_{\varepsilon}^2 I_m\,.
\end{equation}
Putting these pieces together, we can write for $\tau_1 = \tau_1(\gamma)$ and $\tau_2 = \tau_2(\gamma)$,
\begin{align}
    T_1 &= -\gamma^2(\sigma_\varepsilon^2 \tau_1' + \tau_2')\\
    T_2 &= \sigma_{W_2}^2\gamma^2\left(\tau_1 + (\sigma_{W_2}^2(\eta'-\zeta) + \gamma)\tau_1'+\sigma_{W_2}^2\zeta \tau_2'\right)\label{eq:T2}\,,
\end{align}
where,
\begin{equation}
\label{eqn:tau}
    \tau_1 = \frac{1}{m}\tr(K^{-1})\,,\quad\text{and}\quad\tau_2 = \frac{1}{m}\tr(\frac{1}{n_0}X^\top XK^{-1})\,.
\end{equation}
Self-consistent equations for $\tau_1$ and $\tau_2$ can be computed using the resolvent method, as was done in~\cite{adlam2019random} for the case of $\sigma_{W_2} = 0$. In order to pave the way for the analysis of the test error, we instead demonstrate how to compute these traces using operator-valued free probability.

\begin{remark}
In the remainder of this section, and in Sec.~\ref{sec_exact_asymptotics}, we assume at times that $\sigma$ is non-linear (so that $\eta'>\zeta$ and $\eta > \zeta$) and/or $\gamma > 0$ in order that certain denominator factors are non-zero.  The linear and/or ridgeless cases can be obtained by limits of our general results, or through special cases of the pertinent intermediate formulas.
\end{remark}

\subsection{Linear pencils}
To begin, we construct linear pencils for $\tau_1$ and $\tau_2$. Specifically, straightforward block-matrix inversion confirms that
\begin{equation}
    \tau_1 = \tr([Q_T^{-1}]_{1,1})\,\quad\text{and}\quad \tau_2 = \tr([Q_T^{-1}]_{2,4})\,, 
\end{equation}
where,
\begin{equation}
    Q_T = \left(
\begin{array}{cccc}
 I_m \left(\gamma +\sigma_{W_2}^2 \left(\eta '-\zeta \right)\right) & \frac{\zeta  X^\top \sigma_{W_2}^2}{n_0} & \frac{\sqrt{\eta -\zeta } \Theta_F^\top}{n_1} & \frac{\sqrt{\zeta } X^\top}{\sqrt{n_0} n_1} \\
 -X & I_{n_0} & 0 & 0 \\
 -\sqrt{\eta -\zeta } \Theta_F & -\frac{\sqrt{\zeta } W_1}{\sqrt{n_0}} & I_{n_1} & 0 \\
 0 & 0 & \frac{\sqrt{\zeta } \psi  W_1^\top}{\sqrt{n_0} \phi } & -\frac{\sqrt{\zeta } \psi  I_{n_0}}{\sqrt{n_0} \phi } \\
\end{array}
\right)\,.
\end{equation}
The matrix $Q_T$ is not self-adjoint, but a self-adjoint representation can be obtained from it by doubling the dimensionality. In particular, letting
\eq{
\bar{Q}_T = \begin{pmatrix}
0 & Q_T^\top \\
Q_T & 0
\end{pmatrix}\,,
}
we have,
\eq{
\tau_1 = \tr([\bar{Q}_T^{-1}]_{1,5})\,,\quad\text{and}\quad \tr([\bar{Q}_T^{-1}]_{2,8})\,.
}
Observe that $\bar{Q}_{T}$ is a self-adjoint matrix whose blocks are either constants or proportional to one of $\{X,X^\top,W_1,W_1^\top,\Theta_F,\Theta_F^\top\}$; let us denote the constant terms as $Z$. As such, we can directly utilize the results of~\cite{far2006spectra,mingo2017free} to compute the necessary traces.

\subsection{Operator-valued Stieltjes transform}
The traces can be extracted from the operator-valued Stieltjes transform $G:M_d(\mathbb{C})^+\to M_d(\mathbb{C})^+$, which is a solution of the equation,
\eq{\label{eqn_Geqn_train}
ZG = I_d + \eta(G) G\,,
}
where $d$ is the number of blocks, $\eta: M_d(\mathbb{C})\to M_d(\mathbb{C})$ defined by
\eq{\label{eqn_Deqn}
[\eta(D)]_{ij} = \sum_{kl} \sigma(i,k;l,j) \alpha_k D_{kl} \,,
}
where $\alpha_k$ is dimensionality of the $k$th block and $\sigma(i,k;l,k)$ denotes the covariance between the entries of the blocks $ij$ block of $\bar{Q}$ and entries of the $kl$ block of $\bar{Q}$. Eqn.~\eqref{eqn_Geqn_train} may admit many solutions, but there is a unique solution such that $\text{Im}G \succ 0$ for $\text{Im} Z\succ 0$.

The constants $Z$, the entries of $\sigma$, and therefore the equations~\eqref{eqn_Deqn} are manifest by inspection of the block matrix representation for $\bar{Q}_T$. Although the matrix representation of the equations is too large to reproduce here, we can nevertheless extract the equations satisfied by each entry of $G$.

The equations satisfied by the operator-valued Stieltjes transform $G$ of $\bar{Q}_T$ induce the following structure on $G$,
\eq{
G = \begin{pmatrix} 0 & G_{12} \\ G_{12}^\top & 0 \end{pmatrix}\,,
}\,
where,
\eq{
G_{12} = \left(
\begin{array}{cccc}
 \tau_1 & 0 & 0 & 0 \\
 0 & g_3 & 0 & \tau_2 \\
 0 & 0 & g_4 & 0 \\
 0 & g_6 & 0 & g_5 \\
\end{array}
\right)
}
and the independent entry-wise component functions $g_i$, $\tau_1$ and $\tau_2$ satisfy the following system of polynomial equations,
\begin{align}
0 &= \sqrt{\zeta } g_6 \psi -\zeta  g_3 g_4 \sqrt{n_0}\\
0 &= \sqrt{\zeta } \psi  \big(\tau _2-g_3 \tau _1\big)\\
0 &= \sqrt{\zeta } \psi  \big(g_5-g_6 \tau _1\big)+\sqrt{n_0} \phi\\
0 &= -\zeta  g_4 g_5-g_6 \big(\zeta  \tau _1 \sigma _{W_2}^2+\phi \big)\\
0 &= \sqrt{\zeta } g_5 \psi +\sqrt{n_0} \big(\phi -\zeta  g_4 \tau _2\big)\\
0 &= \phi -g_4 \big(\tau _1 \psi  (\eta -\zeta )+\zeta  \tau _2 \psi +\phi \big)\\
0 &= -\zeta  g_4 \tau _2-g_3 \big(\zeta  \tau _1 \sigma _{W_2}^2+\phi \big)+\phi\\
0 &= -\sqrt{\zeta } g_5 \tau _1 \psi -\sqrt{n_0} \tau _2 \big(\zeta  \tau _1 \sigma _{W_2}^2+\phi \big)\\
0 &= \sqrt{n_0} \big(\phi -g_3 \big(\zeta  \tau _1 \sigma _{W_2}^2+\phi \big)\big)-\sqrt{\zeta } g_6 \tau _1 \psi\\
0 &= \sqrt{n_0} \big(1-\tau _1 \big(\gamma +g_4 (\eta -\zeta )+\sigma _{W_2}^2 \big(\eta '+\zeta  \big(g_3-1\big)\big)\big)\big)-\sqrt{\zeta } g_6 \tau _1 \psi
\,.
\end{align}
It is straightforward algebra to eliminate $g_3,g_4,g_5$ and $g_6$ from the above equations. A simple set of equations for $\tau_1$ and $\tau_2$ follows,
\begin{align}
 0&=\phi\left(\zeta  \tau_2 \tau_1+ \phi(\tau_2 -\tau_1) \right)+\zeta  \tau_1 \tau_2
  \psi  \left(\gamma \tau_1-1\right) +\zeta  \tau_1 \tau_2 \sigma _{W_2}^2
  \left(\zeta  \left(\tau_2-\tau_1\right) \psi +\tau_1 \psi  \eta '+\phi \right)\label{eq:tau1}\\
  0&=\zeta \tau_1^2 \tau_2 \left(\eta'-\eta\right) \sigma _{W_2}^2+\zeta  \tau_1 \tau_2 \left(\gamma \tau_1 -1\right) - \left(\tau_2-\tau_1\right) \phi  \left(\zeta  \left(\tau_2-\tau_1\right)+\eta  \tau_1\right)\label{eq:tau2}\,.
\end{align}
Although these equations admit multiple solutions, the general results of~\cite{far2006spectra,mingo2017free} guarantee that the correct root is given by the unique solutions $\tau_1,\tau_2: \mathbb{C}^+\to\mathbb{C}^+$ which are analytic in the upper half-plane.

It will prove useful to obtain expressions for $\tau_1'(\gamma)$ and $\tau_2'(\gamma)$. By differentiating eqns.~(\ref{eq:tau1}) and~(\ref{eq:tau2}) with respect to $\gamma$, we find
\begin{align}
\tau_1' &= -\frac{\zeta ^2 \tau _2^2 \big(\psi  \tilde{\tau }_1^2-\phi ^2\big)}{\psi  \tilde{\tau }_1^2 \big(\zeta ^2 \big(\tilde{\tau }_2+1\big){}^2+\phi  \big(\zeta  \tilde{\tau }_2+\eta \big) \big(\zeta  \tilde{\tau }_2 \big(2 \tilde{\tau }_2+3\big)+\eta \big)\big)+\zeta ^2 \phi ^2 \big(\tilde{\tau }_2+1\big){}^2 \big(\phi  \tilde{\tau }_2^2-1\big)}\label{eq:dtau1}\\
\tau_2'&=-\frac{\zeta  \tau _2^2 \big(\psi  \tilde{\tau }_1^2 (\zeta -\eta )-\zeta  \phi ^2 \big(\tilde{\tau }_2+1\big){}^2\big)}{\psi  \tilde{\tau }_1^2 \big(\zeta ^2 \big(\tilde{\tau }_2+1\big){}^2+\phi  \big(\zeta  \tilde{\tau }_2+\eta \big) \big(\zeta  \tilde{\tau }_2 \big(2 \tilde{\tau }_2+3\big)+\eta \big)\big)+\zeta ^2 \phi ^2 \big(\tilde{\tau }_2+1\big){}^2 \big(\phi  \tilde{\tau }_2^2-1\big)}\label{eq:dtau2}\,,
\end{align}
where we have introduced some auxiliary variables to ease the presentation,
\begin{equation}
\label{eq:ttau}
    \ttau_1 = \sigma_{W_2}^2 \zeta \tau_2 + \phi \ttau_2\,\quad\text{and}\quad \ttau_2 = -1 + \tau_2/\tau_1\,.
\end{equation}
\section{Exact asymptotics for the test loss}
\label{sec_exact_asymptotics}
\subsection{Decomposition of terms}
The test loss can be written as,
\eq{
\label{eq:Etest}
\Etest = \mathbb{E}_{(\bfx,y)} (y - \hat{y}(\bfx))^2  = E_1 + E_2 + E_3
}
with
\begin{align}
E_1 &= \mathbb{E}_{(\bfx,\varepsilon)}\tr(y(\bfx)y(\bfx)^\top) + \mathbb{E}_{(\bfx,\varepsilon)}\tr(N_0(\bfx)N_0(\bfx)^\top)\\
E_2 &= -2\mathbb{E}_{(\bfx,\varepsilon)}\tr(K_\bfx^\top K^{-1}Y^\top y(\bfx)) - 2\mathbb{E}_{(\bfx,\varepsilon)}\tr(K_\bfx^\top K^{-1}N_0(X)^\top N_0(\bfx))\\
E_3 &= \mathbb{E}_{(\bfx,\varepsilon)}\tr(K_\bfx^\top K^{-1}Y^\top Y K^{-1} K_\bfx) +  \mathbb{E}_{(\bfx,\varepsilon)}\tr(K_\bfx^\top K^{-1}N_0(X)^\top N_0(X) K^{-1} K_\bfx) \,,
\end{align}
where we have suppressed the terms linear in $N_0$ since they vanish owing to the linear dependence on the symmetric random variable $W_2$. The Neural Tangent Kernels $K = K(X,X)$ and $K_\bfx = K(X,\bfx)$ are given by,
\eq{\label{eq_K_and_Kx}
    K = \sigma_{W_2}^2 \qa{(\eta' - \zeta)I_m + \frac{\zeta X^\top\! X}{n_0}} + \frac{F^\top \!F}{n_1} + \gamma I_m\qquad\text{and}\qquad K_\bfx = \frac{\sigma_{W_2}^2 \zeta}{n_0}X^\top \bfx + \frac{1}{n_1} F^\top f\,.
}

\begin{remark}
    In eqn.~\eqref{eq_replacement1}, we argued that the leading order behavior (all that is relevant for the test loss) of $K_1$ is relatively simple, leading to the expression for $K$ in eqn.~\eqref{eq_K_and_Kx}. Implicitly this requires that $\eta'\neq\zeta$, and similarly, in many of the expressions denominators are assumed to be nonzero. We handle degenerate expressions of this kind as special cases, but avoid details here to streamline the presentation.
\end{remark}

Using the cyclicity and linearity of the trace, the expectation over $\bfx$ requires the computation of
\eq{
\mathbb{E}_{\bfx}K_\bfx K_\bfx^\top \,,\qquad \mathbb{E}_{\bfx} y(\bfx)K_\bfx^\top\,,\qquad \mathbb{E}_{\bfx} y(\bfx) y(\bfx)^\top\,,\qquad \mathbb{E}_{\bfx} N_0(\bfx)K_\bfx^\top\,,  \qquad\text{and}\quad \mathbb{E}_{\bfx} N_0(\bfx) N_0(\bfx)^\top\,.
}
As described in Sec.~\ref{sec_gaussian}, without loss of generality we can consider the case of a linear teacher, so that $\eta_\textsc{t} = \zeta_\textsc{t} = 1$ and ~\eqref{eq_ylin} and~\eqref{eq_flin} become
\eq{\label{eq_sub_y_f}
y \to y^{\text{lin}} = \frac{\sqrt{\zeta_\textsc{t}}}{\sqrt{n_0 n_\textsc{t}}} \omega\Omega \bfx + \sqrt{\eta_\textsc{t} - \zeta_\textsc{t}} \frac{1}{\sqrt{n_\textsc{t}}}\omega\theta_y = \frac{
1}{\sqrt{n_0 n_\textsc{t}}}\omega\Omega \bfx\,\qquad\text{and}\qquad f \to f^{\text{lin}} = \frac{\sqrt{\zeta}}{\sqrt{n_0}} W_1 \bfx + \sqrt{\eta - \zeta} \theta_f\, .
}
Using these substitutions, the expectations over $\bfx$ are now trivial and we readily find,
\begin{align}
   \mathbb{E}_\bfx K_\bfx K_\bfx^\top & = \frac{\sigma_{W_2}^4 \zeta^2}{n_0^2} X^\top X + \frac{\sigma_{W_2}^2 \zeta^{3/2}}{n_0^{3/2} n_1}(X^\top W_1^T F + F^\top W_1 X) + \frac{1}{n_1^2}F^\top\big(\frac{\zeta}{n_0} W_1 W_1^\top + (\eta - \zeta) I_{n_1}\big)F\\
\mathbb{E}_{\bfx} y(\bfx)K_\bfx^\top & =   \frac{\sigma_{W_2}^2 \zeta}{n_0^{3/2} \sqrt{n_\textsc{t}}} \omega\Omega X + \frac{\sqrt{\zeta}}{n_0n_1 \sqrt{n_\textsc{t}} } \omega\Omega  W_1^\top F\\
\mathbb{E}_{\bfx} y(\bfx)y(\bfx)^\top &= \frac{1}{n_0 n_\textsc{t}} \omega\Omega \Omega^\top\omega^\top\\
\mathbb{E}_{\bfx} N_0(\bfx)K_\bfx^\top & = \frac{\sigma_{W_2}^2 \zeta^{3/2}}{n_0^{3/2}\sqrt{n_1}} W_2W_1X + \frac{1}{n_1^{3/2}}W_2\big(\frac{\zeta}{n_0}W_1W_1^\top+(\eta-\zeta)I_{n_1}\big)F\\ 
\mathbb{E}_{\bfx} \tr(N_0(\bfx) N_0(\bfx)^\top) & = \sigma_{W_2}^2\eta\,.
\end{align}
One may interpret the substitutions in eqn.~\eqref{eq_sub_y_f} as a tool to calculate the expectations above to leading order as it leads to terms like eqn.~\eqref{eq_con_1}. Next we recall the substitution~\eqref{eq_sub_Y},
\eq{
Y \to \frac{1}{\sqrt{n_0 n_\textsc{t}}} \omega\Omega X + \mathcal{E}\,.
}
As above, we consider the leading order behavior with respect to the random variables $\omega$, $\Omega$, and $W_2$ using eqn.~\eqref{eq_con_1} to find
\begin{align}
    \E_{\omega,\Omega,\mathcal{E}}\qa{Y^\top Y} &= \frac{1}{n_0}X^\top X + \sigma_{\varepsilon}^2 I_m\\
    \E_{\omega,\Omega,\mathcal{E},W_2}\qa{Y^\top \mathbb{E}_{\bfx} y(\bfx)K_\bfx^\top} &= \frac{\sigma_{W_2}^2 \zeta}{n_0^2} X^\top X + \frac{\sqrt{\zeta}}{n_0^{3/2}n_1} X^\top  W_1^\top F\\
    \E_{W_2}\qa{N_0(X)^\top N_0(X)} &= \frac{\sigma_{W_2}^2}{n_1}F^\top F\\
    \E_{W_2}\qa{N_0(X)^\top \mathbb{E}_{\bfx} N_0(\bfx)K_\bfx^\top} &= \frac{\sigma_{W_2}^4 \zeta^{3/2}}{n_0^{3/2}n_1} F^\top W_1X + \frac{\sigma_{W_2}^2}{n_1^2}F^\top\big(\frac{\zeta}{n_0}W_1W_1^\top+(\eta-\zeta)I_{n_1}\big)F\,.
\end{align}
\eq{\label{eq_sub_F}
F \to F^{\text{lin}} = \frac{\sqrt{\zeta}}{\sqrt{n_0}} W_1 X + \sqrt{\eta - \zeta} \Theta_F\,,
}
we can write,
\eq{
\frac{\sqrt{\zeta}}{\sqrt{n_0}} F^\top W_1 X + \frac{\sqrt{\zeta}}{\sqrt{n_0}} X^\top W_1^\top F = F^\top F + \frac{\zeta}{n_0}X^\top W_1^\top W_1 X - (\eta-\zeta)\Theta_F^\top\Theta_F\,.
}
Putting these pieces together, we have
\begin{align}
    E_1 & = 1 + \nu \sigma_{W_2}^2 \eta\label{eqn:E1}\\
    E_2 &= E_{21} + \nu E_{22}\label{eqn:E2}\\
    E_3 &= E_{31} + E_{32} + \nu E_{33}\label{eqn:E3}\,,
\end{align}
where $\nu = 0$ with centering and $\nu = 1$ without it,
\begin{align}
E_{21} & = -\E\tr\bigg(2\frac{\sigma_{W_2}^2 \zeta}{n_0^2}X K^{-1} X^\top + \frac{1}{n_0 n_1} F K^{-1} F^\top + \frac{\zeta}{n_0^2 n_1} W_1 X K^{-1}X^\top W_1^\top - \frac{\eta-\zeta}{n_0 n_1}\Theta_F K^{-1}\Theta_F^\top\bigg)\label{eq_E2}\\
E_{22} & = -\frac{2 \sigma_{W_2}^2}{n_1}\E\tr\bigg( \frac{\sigma_{W_2}^2 \zeta^{3/2}}{n_0^{3/2}} K^{-1}F^\top W_1 X +\frac{\zeta}{n_0 n_1} K^{-1}F^\top W_1W_1^\top F + \frac{\eta-\zeta}{n_1}K^{-1}F^\top F \bigg)\\
E_{31} &= \sigma_{\varepsilon}^2 \E\tr\left(K^{-1} \Sigma_3 K^{-1}\right)\\
E_{32} &= \frac{1}{n_0} \E\tr\left(X K^{-1} \Sigma_3 K^{-1}X^\top\right)\\
E_{33} &= \frac{\sigma_{W_2}^2}{n_1} \E\tr\left(F K^{-1} \Sigma_3 K^{-1}F^\top\right)\,,
\end{align}
and,
\eq{
\Sigma_3 = \frac{\sigma_{W_2}^4 \zeta^2}{n_0^2} X^\top X + \big(\frac{\sigma_{W_2}^2 \zeta}{n_0 n_1} + \frac{\eta-\zeta}{n_1^2}\big)F^\top F + \frac{\zeta}{n_0n_1^2}F^\top W_1 W_1^\top F + \frac{\sigma_{W_2}^2 \zeta^2}{n_0^2 n_1} X^\top W_1^\top W_1 X - \frac{\sigma_{W_2}^2 \zeta(\eta - \zeta)}{n_0 n_1}\Theta_F^\top \Theta_F\,.
}
\subsection{Linear pencils}
Repeated application of the Schur complement formula for block matrix inversion establishes the following representations for $E_{21}, E_{22}, E_{31}, E_{32}, E_{33}.$
\subsubsection{$E_{21}$}
A linear pencil for $E_{21}$ follows from the representation,
\eq{
E_{21} = \tr(U_{21}^T Q_{21}^{-1} V_{21})\,,
}
where,
\begin{align}
    U_{21}^T  &= \left(
\begin{array}{cccccccccccccc}
 0 & -\frac{2 \zeta  I_{n_0} \sigma_{W_2}^2}{n_0} & 0 & 0 & 0 & \frac{(\eta -\zeta ) I_{n_1}}{n_0} & 0 & 0 & 0 & 0 & 0 & -\frac{I_{n_1}}{n_0} & 0 & 0 
\end{array}
\right)\\
V_{21}^T & = 
\left(
\begin{array}{cccccccccccccc}
 0 & 0 & 0 & -\frac{\sqrt{n_0} n_1 I_{n_0}}{\sqrt{\zeta }} & 0 & 0 & 0 & 0 & 0 & I_{n_1} & 0 & 0 & 0 & 0 \\
\end{array}
\right)
\end{align}
and,
\eq{
Q_{21} =
\begin{pmatrix}
Q_{21}^{11} & 0 & 0 \\
0 & Q_{21}^{22} & Q_{21}^{23} \\
0 & 0 & Q_{21}^{33} \\ 
\end{pmatrix}
}
with,
\begin{align}
Q_{21}^{11} &=
\begin{pmatrix}
 I_m \left(\gamma +\sigma_{W_2}^2 \left(\eta '-\zeta \right)\right) & \frac{\zeta  X^\top \sigma_{W_2}^2}{n_0} & \frac{\sqrt{\eta -\zeta } \Theta_F^\top}{n_1} & \frac{\sqrt{\zeta } X^\top}{\sqrt{n_0} n_1}  \\
 -X & I_{n_0} & 0 & 0  \\
 -\sqrt{\eta -\zeta } \Theta_F & -\frac{\sqrt{\zeta } W_1}{\sqrt{n_0}} & I_{n_1} & 0\\
 0 & 0 & -W_1^\top & I_{n_0}
\end{pmatrix}\\
Q_{21}^{22} &=
\begin{pmatrix}
 I_m \left(\gamma +\sigma_{W_2}^2 \left(\eta '-\zeta \right)\right) & 0 & \frac{\zeta  X^\top \sigma_{W_2}^2}{n_0} & \frac{\sqrt{\eta -\zeta } \Theta_F^\top}{n_1} & \frac{\sqrt{\zeta } X^\top}{\sqrt{n_0} n_1}\\
-\Theta_F & I_{n_1} & -\frac{\sqrt{\zeta } W_1}{\sqrt{n_0} \sqrt{\eta -\zeta }} & 0 & 0\\
-X & 0 & I_{n_0} & 0 & 0 \\
-\sqrt{\eta -\zeta } \Theta_F & 0 & -\frac{\sqrt{\zeta } W_1}{\sqrt{n_0}} & I_{n_1}\\
0 & 0 & 0 & -W_1^\top & I_{n_0}\\
\end{pmatrix}\\
Q_{21}^{23} &=
\begin{pmatrix}
-\Theta_F^\top & 0 & 0 & 0 & 0 \\
 0 & 0 & 0 & \frac{\sqrt{\zeta } W_1}{\sqrt{n_0} (\eta -\zeta )} & 0 \\
 0 & 0 & 0 & 0 & 0 \\
 0 & 0 & 0 & 0 & 0 \\
 0 & 0 & 0 & 0 & 0 \\
 I_{n_1} & 0 & 0 & 0 & 0 \\
\end{pmatrix}\\
Q_{21}^{33} &=
\begin{pmatrix}
-\sqrt{\eta -\zeta } \Theta_F^\top & I_m \left(\gamma +\sigma_{W_2}^2 \left(\eta '-\zeta \right)\right) & \frac{\sqrt{\eta -\zeta } \Theta_F^\top}{n_1} & \frac{\zeta  X^\top \sigma_{W_2}^2}{n_0} & \frac{\sqrt{\zeta } X^\top}{\sqrt{n_0} n_1} \\
 0 & -\sqrt{\eta -\zeta } \Theta_F & I_{n_1} & -\frac{\sqrt{\zeta } W_1}{\sqrt{n_0}} & 0 \\
 0 & -X & 0 & I_{n_0} & 0 \\
 n_1 W_1^\top & 0 & -W_1^\top & 0 & I_{n_0}
\end{pmatrix}\,.
\end{align}

\subsubsection{$E_{22}$}
A linear pencil for $E_{22}$ follows from the representation,
\eq{
E_{22} = \tr(U_{22}^T Q_{22}^{-1} V_{22})\,,
}
where,
\begin{align}
    U_{22}^T  &= 
\begin{pmatrix}
0 & -\frac{2 \sqrt{\zeta } I_{n_1} \sigma_{W_2}^2 \left(n_0 (\eta -\zeta )+\zeta  n_1 \sigma_{W_2}^2\right)}{n_0^{3/2} n_1} & 0 & \frac{2 (\zeta -\eta ) I_{n_1} \sigma_{W_2}^2}{n_1} & 0 & 0 & 0
\end{pmatrix}\\
V_{22}^T & = 
\begin{pmatrix}
 0 & 0 & 0 & 0 & 0 & - n_1 I_{n_1} & 0
\end{pmatrix}
\end{align}
and,
\eq{
Q_{22} =
\left(
\begin{smallmatrix}
I_{n_0} & 0 & -X & 0 & 0 & 0 & 0 \\
 -W_1 & I_{n_1} & 0 & 0 & -\frac{\sqrt{n_0} W_1}{\sqrt{\zeta } n_1 \sigma_{W_2}^2} & 0 & 0 \\
 \frac{\zeta  X^\top \sigma_{W_2}^2}{n_0} & 0 & I_m \left(\gamma +\sigma_{W_2}^2 \left(\eta '-\zeta \right)\right) & 0 & 0 & \frac{\sqrt{\eta -\zeta } \Theta_F^\top}{n_1} & \frac{\sqrt{\zeta } X^\top}{\sqrt{n_0} n_1} \\
 0 & 0 & -\sqrt{\eta -\zeta } \Theta_F & I_{n_1} & \frac{W_1}{n_1 \sigma_{W_2}^2} & 0 & 0 \\
 0 & -\frac{\sqrt{\zeta } W_1^\top}{\sqrt{n_0}} & 0 & -W_1^\top & I_{n_0} & 0 & 0 \\
 -\frac{\sqrt{\zeta } W_1}{\sqrt{n_0}} & 0 & -\sqrt{\eta -\zeta } \Theta_F & 0 & 0 & I_{n_1} & 0 \\
 0 & 0 & 0 & 0 & 0 & -W_1^\top & I_{n_0} 
\end{smallmatrix}
\right)\,.
}

\subsubsection{$E_{31}$}
A linear pencil for $E_{31}$ follows from the representation,
\eq{
E_{31} = \tr(U_{31}^T Q_{31}^{-1} V_{31})\,,
}
where,
\eq{
    U_{31}^T  = 
\begin{pmatrix}
m \sigma _ {\varepsilon }^2 I_m & 0 & 0 & 0 & 0 & 0 & 0 & \
0
\end{pmatrix}\,,\quad
V_{31}^T = 
\begin{pmatrix}
 0 & 0 & 0 & 0 & 0 & I_m & 0 & 0
\end{pmatrix}
}
and, for $\beta = \left(n_0 (\zeta -\eta )-\zeta  n_1 \sigma_{W_2}^2\right)$,
\eq{
Q_{31} =
\left(
\begin{smallmatrix}
I_m \left(\gamma +\sigma_{W_2}^2 \left(\eta '-\zeta \right)\right) & \frac{\zeta  X^\top \sigma_{W_2}^2}{n_0} & \frac{\sqrt{\eta -\zeta } \Theta_F^\top}{n_1} & \frac{\sqrt{\zeta } X^\top}{\sqrt{n_0} n_1} & -\frac{\zeta ^2 X^\top \sigma_{W_2}^4}{n_0^2} & 0 & \frac{\sqrt{\eta -\zeta } \Theta_F^\top \beta}{n_0 n_1^2} & \frac{\sqrt{\zeta } X^\top \beta}{n_0^{3/2} n_1^2} \\
 -X & I_{n_0} & 0 & 0 & 0 & 0 & 0 & 0 \\
 -\sqrt{\eta -\zeta } \Theta_F & -\frac{\sqrt{\zeta } W_1}{\sqrt{n_0}} & I_{n_1} & 0 & 0 & -\frac{\zeta  \sqrt{\eta -\zeta } \Theta_F \sigma_{W_2}^2}{n_0} & 0 & \frac{\zeta  W_1}{n_0 n_1} \\
 0 & 0 & -W_1^\top & I_{n_0} & 0 & 0 & \frac{\zeta  W_1^\top \sigma_{W_2}^2}{n_0} & 0 \\
 0 & 0 & 0 & 0 & I_{n_0} & -X & 0 & 0 \\
 0 & 0 & 0 & 0 & \frac{\zeta  X^\top \sigma_{W_2}^2}{n_0} & I_m \left(\gamma +\sigma_{W_2}^2 \left(\eta '-\zeta \right)\right) & \frac{\sqrt{\eta -\zeta } \Theta_F^\top}{n_1} & \frac{\sqrt{\zeta } X^\top}{\sqrt{n_0} n_1} \\
 0 & 0 & 0 & 0 & -\frac{\sqrt{\zeta } W_1}{\sqrt{n_0}} & -\sqrt{\eta -\zeta } \Theta_F & I_{n_1} & 0 \\
 0 & 0 & 0 & 0 & 0 & 0 & -W_1^\top & I_{n_0} 
\end{smallmatrix}
\right)\,.
}
\subsubsection{$E_{32}$}
A linear pencil for $E_{32}$ follows from the representation,
\eq{
E_{32} = \tr(U_{32}^T Q_{32}^{-1} V_{32})\,,
}
where,
\eq{
    U_{32}^T  = 
\begin{pmatrix}
0 & I_{n_0} & 0 & 0 & 0 & 0 & 0 & 0 & 0
\end{pmatrix}\,,\quad
V_{32}^T = 
\begin{pmatrix}
 0 & 0 & 0 & 0 & 0 & 0 & 0 & 0 & -\frac{\sqrt{n_0} n_1 I_{n_0}}{\sqrt{\zeta }}
\end{pmatrix}
}
and, for $\beta = \left(n_0 (\zeta -\eta )-\zeta  n_1 \sigma_{W_2}^2\right)$
\eq{
Q_{32} =
\left(
\begin{smallmatrix}
I_m \left(\gamma +\sigma_{W_2}^2 \left(\eta '-\zeta \right)\right) & 0 & \frac{\zeta  X^\top \sigma_{W_2}^2}{n_0} & \frac{\sqrt{\eta -\zeta } \Theta_F^\top}{n_1} & \frac{\sqrt{\zeta } X^\top}{\sqrt{n_0} n_1} & -\frac{\zeta ^2 X^\top \sigma_{W_2}^4}{n_0^2} & 0 & \frac{\sqrt{\eta -\zeta } \Theta_F^\top \beta}{n_0 n_1^2} & 0 \\
 -X & I_{n_0} & 0 & 0 & 0 & 0 & 0 & 0 & 0 \\
 -X & 0 & I_{n_0} & 0 & 0 & 0 & 0 & \frac{\sqrt{\zeta } W_1^\top}{\sqrt{n_0} n_1} & 0 \\
 -\sqrt{\eta -\zeta } \Theta_F & 0 & -\frac{\sqrt{\zeta } W_1}{\sqrt{n_0}} & I_{n_1} & 0 & 0 & -\frac{\zeta  \sqrt{\eta -\zeta } \Theta_F \sigma_{W_2}^2}{n_0} & 0 & 0 \\
 0 & 0 & 0 & -W_1^\top & I_{n_0} & 0 & 0 & W_1^\top \left(\frac{\eta -\zeta }{n_1}+\frac{\zeta  \sigma_{W_2}^2}{n_0}\right) & 0 \\
 0 & 0 & 0 & 0 & 0 & I_{n_0} & -X & 0 & 0 \\
 0 & 0 & 0 & 0 & 0 & \frac{\zeta  X^\top \sigma_{W_2}^2}{n_0} & I_m \left(\gamma +\sigma_{W_2}^2 \left(\eta '-\zeta \right)\right) & \frac{\sqrt{\eta -\zeta } \Theta_F^\top}{n_1} & \frac{\sqrt{\zeta } X^\top}{\sqrt{n_0} n_1} \\
 0 & 0 & 0 & 0 & 0 & -\frac{\sqrt{\zeta } W_1}{\sqrt{n_0}} & -\sqrt{\eta -\zeta } \Theta_F & I_{n_1} & 0 \\
 0 & 0 & 0 & 0 & 0 & 0 & 0 & -W_1^\top & I_{n_0}
\end{smallmatrix}
\right)\,.
}

\subsubsection{$E_{33}$}
A linear pencil for $E_{33}$ follows from the representation,
\eq{
E_{33} = \tr(U_{33}^T Q_{33}^{-1} V_{33})\,,
}
where,
\begin{align}
    U_{33}^T  & = 
\left(
\begin{array}{ccccccccccc}
 0 & I_{n_1} \sigma_{W_2}^2 & 0 & 0 & 0 & 0 & 0 & 0 & 0 & 0 & 0 \\
\end{array}
\right)\\
V_{33}^T &= 
\left(
\begin{array}{ccccccccccc}
 0 & 0 & 0 & 0 & 0 & 0 & 0 & 0 & 0 & -n_1 I_{n_1} & 0 \\
\end{array}
\right)
\end{align}
and, for $\beta = \left(n_0 (\zeta -\eta )-\zeta  n_1 \sigma_{W_2}^2\right)$,
\eq{
Q_{33} =
\left(
\begin{smallmatrix}
I_m \left(\gamma +\sigma_{W_2}^2 \left(\eta '-\zeta \right)\right) & 0 & 0 & \frac{\zeta  X^\top \sigma_{W_2}^2}{n_0} & \frac{\sqrt{\eta -\zeta } \Theta_F^\top}{n_1} & \frac{\sqrt{\zeta } X^\top}{\sqrt{n_0} n_1} & -\frac{\zeta ^2 X^\top \sigma_{W_2}^4}{n_0^2} & 0 & \frac{\sqrt{\eta -\zeta } \Theta_F^\top \beta}{n_0 n_1^2} & 0 & 0 \\
 -\sqrt{\eta -\zeta } \Theta_F & I_{n_1} & -\frac{\sqrt{\zeta } W_1}{\sqrt{n_0}} & 0 & 0 & 0 & 0 & 0 & 0 & 0 & 0 \\
 -X & 0 & I_{n_0} & 0 & 0 & 0 & 0 & 0 & 0 & 0 & 0 \\
 -X & 0 & 0 & I_{n_0} & 0 & 0 & 0 & 0 & \frac{\sqrt{\zeta } W_1^\top}{\sqrt{n_0} n_1} & 0 & 0 \\
 -\sqrt{\eta -\zeta } \Theta_F & 0 & 0 & -\frac{\sqrt{\zeta } W_1}{\sqrt{n_0}} & I_{n_1} & 0 & 0 & -\frac{\zeta  \sqrt{\eta -\zeta } \Theta_F \sigma_{W_2}^2}{n_0} & 0 & 0 & 0 \\
 0 & 0 & 0 & 0 & -W_1^\top & I_{n_0} & 0 & 0 & W_1^\top \left(\frac{\eta -\zeta }{n_1}+\frac{\zeta  \sigma_{W_2}^2}{n_0}\right) & 0 & 0 \\
 0 & 0 & 0 & 0 & 0 & 0 & I_{n_0} & -X & 0 & 0 & 0 \\
 0 & 0 & 0 & 0 & 0 & 0 & \frac{\zeta  X^\top \sigma_{W_2}^2}{n_0} & I_m \left(\gamma +\sigma_{W_2}^2 \left(\eta '-\zeta \right)\right) & 0 & \frac{\sqrt{\eta -\zeta } \Theta_F^\top}{n_1} & \frac{\sqrt{\zeta } X^\top}{\sqrt{n_0} n_1} \\
 0 & 0 & 0 & 0 & 0 & 0 & -\frac{\sqrt{\zeta } W_1}{\sqrt{n_0}} & -\sqrt{\eta -\zeta } \Theta_F & I_{n_1} & 0 & 0 \\
 0 & 0 & 0 & 0 & 0 & 0 & -\frac{\sqrt{\zeta } W_1}{\sqrt{n_0}} & -\sqrt{\eta -\zeta } \Theta_F & 0 & I_{n_1} & 0 \\
 0 & 0 & 0 & 0 & 0 & 0 & 0 & 0 & 0 & -W_1^\top & I_{n_0} 
\end{smallmatrix}
\right)\,.
}
\subsection{Operator-valued Stieltjes transform}
Even though the individual error terms $E_{21}, E_{22}, E_{31}, E_{32}, E_{33}$ can be written as the trace of self-adjoint matrices, the individual $Q$ matrices are not themselves self-adjoint. However, by enlarging the dimensionality by a factor of two, equivalent self-adjoint representations can easily be constructed. To do so, we simply utilize the identity,
\eq{
U^T Q V = \bar{U}^\top \bar{Q} \bar{V} \equiv
\begin{pmatrix}
\frac{1}{2} U^\top & V^\top
\end{pmatrix}
\begin{pmatrix}
0 & Q^\top \\
Q & 0
\end{pmatrix}
\begin{pmatrix}
\frac{1}{2} U\\
V
\end{pmatrix}\,.
}
Observe that $\bar{Q}_{21},\bar{Q}_{22},\bar{Q}_{31}, \bar{Q}_{32}$ and $\bar{Q}_{33}$ are all self-adjoint block matrices whose blocks are either constants or proportional to one of $\{X,X^\top,W_1,W_1^\top,\Theta_F,\Theta_F^\top\}$; let us denote the constant terms as $Z$. As such, we can directly utilize the results of~\cite{far2006spectra,mingo2017free} to compute the error terms in question.

For each linear pencil, the corresponding error term can be extracted from the operator-valued Stieltjes transform $G:M_d(\mathbb{C})^+\to M_d(\mathbb{C})^+$, which is a solution of the equation,
\eq{\label{eqn_Geqn}
ZG = I_d + \eta(G) G\,,
}
where $d$ is the number of blocks, $\eta: M_d(\mathbb{C})\to M_d(\mathbb{C})$ defined by
\eq{\label{eqn_Deqn2}
[\eta(D)]_{ij} = \sum_{kl} \sigma(i,k;l,j) \alpha_k D_{kl} \,,
}
where $\alpha_k$ is dimensionality of the $k$th block and $\sigma(i,k;l,k)$ denotes the covariance between the entries of the $ij$ block of $\bar{Q}$ and entries of the $kl$ block of $\bar{Q}$. Eqn.~\eqref{eqn_Geqn} may admit many solutions, but there is a unique solution such that $\text{Im}G \succ 0$ for $\text{Im} Z\succ 0$.

The constants $Z$, the entries of $\sigma$, and therefore the equations~\eqref{eqn_Deqn2} are manifest by inspection of the block matrix representations for $Q$. Although the matrix representations are too large to reproduce here, we can nevertheless extract the equations satisfied by each entry of $G$, which we present in the subsequent sections.

\subsubsection{$E_{21}$}
The equations satisfied by the operator-valued Stieltjes transform $G$ of $\bar{Q}  _{21}$ induce the following structure on $G$,
\eq{
G = \begin{pmatrix} 0 & G_{12} \\ G_{12}^\top & 0 \end{pmatrix}\,,
}
where,
\eq{
G_{12} = \left(
\begin{array}{cccccccccccccc}
 g_8 & 0 & 0 & 0 & 0 & 0 & 0 & 0 & 0 & 0 & 0 & 0 & 0 & 0 \\
 0 & g_9 & 0 & g_6 & 0 & 0 & 0 & 0 & 0 & 0 & 0 & 0 & 0 & 0 \\
 0 & 0 & g_{11} & 0 & 0 & 0 & 0 & 0 & 0 & 0 & 0 & 0 & 0 & 0 \\
 0 & g_{12} & 0 & g_{10} & 0 & 0 & 0 & 0 & 0 & 0 & 0 & 0 & 0 & 0 \\
 0 & 0 & 0 & 0 & g_8 & 0 & 0 & 0 & 0 & 0 & 0 & 0 & 0 & 0 \\
 0 & 0 & 0 & 0 & 0 & g_1 & 0 & g_5 & 0 & g_4 & 0 & g_7 & 0 & 0 \\
 0 & 0 & 0 & 0 & 0 & 0 & g_9 & 0 & g_6 & 0 & 0 & 0 & 0 & 0 \\
 0 & 0 & 0 & 0 & 0 & 0 & 0 & g_{11} & 0 & g_3 & 0 & 0 & 0 & 0 \\
 0 & 0 & 0 & 0 & 0 & 0 & g_{12} & 0 & g_{10} & 0 & 0 & 0 & 0 & 0 \\
 0 & 0 & 0 & 0 & 0 & 0 & 0 & 0 & 0 & g_1 & 0 & 0 & 0 & 0 \\
 0 & 0 & 0 & 0 & 0 & 0 & 0 & 0 & 0 & 0 & g_8 & 0 & 0 & 0 \\
 0 & 0 & 0 & 0 & 0 & 0 & 0 & 0 & 0 & g_2 & 0 & g_{11} & 0 & 0 \\
 0 & 0 & 0 & 0 & 0 & 0 & 0 & 0 & 0 & 0 & 0 & 0 & g_9 & g_6 \\
 0 & 0 & 0 & 0 & 0 & 0 & 0 & 0 & 0 & 0 & 0 & 0 & g_{12} & g_{10} \\
\end{array}
\right)\,,
}
and the independent entry-wise component functions $g_i$ combine to produce the error $E_{21}$ through the relation,
\eq{
E_{21} = \frac{g_4 (\eta -\zeta )}{n_0}+\frac{2\sqrt{\zeta } g_6 \sqrt{n_0} \sigma _{W_2}^2}{\psi }-\frac{g_2}{n_0}\,,
}
and themselves satisfy the following system of polynomial equations,
{\small
\begin{subequations}
\begin{align}
0 &= 1-g_1\\
0 &= \sqrt{\zeta } g_9 g_{11} \sqrt{n_0}-g_{12} \psi\\
0 &= \sqrt{\zeta } g_6 g_{11} \sqrt{n_0}-g_{10} \psi +\psi\\
0 &= g_7 (\eta -\zeta )+\sqrt{\zeta } g_6 g_{11} \sqrt{n_0}\\
0 &= g_8 g_{11} n_0 \sqrt{\eta -\zeta }-g_3 \phi  \big(\gamma +\sigma _{W_2}^2 \big(\eta '-\zeta \big)\big)\\
0 &= -\sqrt{\zeta } g_8 g_9 \psi -g_6 \sqrt{n_0} \phi  \big(\gamma +\sigma _{W_2}^2 \big(\eta '-\zeta \big)\big)\\
0 &= -\sqrt{\zeta } g_8 g_{12} \psi -\big(g_{10}-1\big) \sqrt{n_0} \phi  \big(\gamma +\sigma _{W_2}^2 \big(\eta '-\zeta \big)\big)\\
0 &= g_6 \sqrt{n_0} \phi  \big(\gamma +\sigma _{W_2}^2 \big(\eta '-\zeta \big)\big)+g_8 \big(\sqrt{\zeta } g_{10} \psi +\zeta  g_6 \sqrt{n_0} \sigma _{W_2}^2\big)\\
0 &= g_8 g_{11} \psi  (\eta -\zeta )-\phi  \big(g_5 \sqrt{\eta -\zeta }-\sqrt{\zeta } g_6 g_{11} \sqrt{n_0}\big) \big(\sigma _{W_2}^2 \big(\zeta -\eta '\big)-\gamma \big)\\
0 &= \big(g_9-1\big) \sqrt{n_0} \phi  \big(\gamma +\sigma _{W_2}^2 \big(\eta '-\zeta \big)\big)+g_8 \big(\sqrt{\zeta } g_{12} \psi +\zeta  g_9 \sqrt{n_0} \sigma _{W_2}^2\big)\\
0 &= g_1 g_8 n_0 \sqrt{\eta -\zeta }+g_3 \big(g_8 \psi  (\zeta -\eta )+\phi  \big(\sqrt{\zeta } g_6 \sqrt{n_0}-1\big) \big(\gamma +\sigma _{W_2}^2 \big(\eta '-\zeta \big)\big)\big)\\
0 &= \sqrt{\zeta } g_{10} g_{11} \sqrt{n_0} \phi  \big(\sigma _{W_2}^2 \big(\zeta -\eta '\big)-\gamma \big)+g_{12} \psi  \big(\gamma  \phi +\sigma _{W_2}^2 \big(-\zeta  \phi +\phi  \eta '+\zeta  g_8\big)\big)\\
0 &= g_{11} \big(g_8 \psi  (\zeta -\eta )+\phi  \big(\sqrt{\zeta } g_6 \sqrt{n_0}-1\big) \big(\gamma +\sigma _{W_2}^2 \big(\eta '-\zeta \big)\big)\big)+\phi  \big(\gamma +\sigma _{W_2}^2 \big(\eta '-\zeta \big)\big)\\
0 &= g_{11} n_0 \big(g_8 \psi  (\eta -\zeta )+\sqrt{\zeta } g_6 \sqrt{n_0} \phi  \big(\sigma _{W_2}^2 \big(\zeta -\eta '\big)-\gamma \big)\big)-g_2 \psi  \phi  \big(\gamma +\sigma _{W_2}^2 \big(\eta '-\zeta \big)\big)\\
0 &= g_9 \psi  \big(\gamma  \phi +\sigma _{W_2}^2 \big(\phi  \big(\eta '-\zeta \big)+\zeta  g_8\big)\big)-\phi  \big(\sqrt{\zeta } g_6 g_{11} \sqrt{n_0}+\psi \big) \big(\gamma +\sigma _{W_2}^2 \big(\eta '-\zeta \big)\big)\\
0 &= g_8 \big(-\sqrt{\zeta } g_{12} \psi -\sqrt{n_0} \big(\gamma +g_{11} (\eta -\zeta )+\sigma _{W_2}^2 \big(\eta '+\zeta  \big(g_9-1\big)\big)\big)\big)+\sqrt{n_0} \big(\gamma +\sigma _{W_2}^2 \big(\eta '-\zeta \big)\big)\\
0 &= \sqrt{\zeta } g_1 g_6 \sqrt{n_0} \phi  \big(\sigma _{W_2}^2 \big(\zeta -\eta '\big)-\gamma \big)-g_7 (\zeta -\eta ) \big(g_8 \psi  (\zeta -\eta )+\phi  \big(\sqrt{\zeta } g_6 \sqrt{n_0}-1\big) \big(\gamma +\sigma _{W_2}^2 \big(\eta '-\zeta \big)\big)\big)\\
0 &= g_1 n_0 \big(g_8 \psi  (\eta -\zeta )+\sqrt{\zeta } g_6 \sqrt{n_0} \phi  \big(\sigma _{W_2}^2 \big(\zeta -\eta '\big)-\gamma \big)\big)+g_2 \psi  \big(g_8 \psi  (\zeta -\eta )\nonumber\\
&\quad\;+\phi  \big(\sqrt{\zeta } g_6 \sqrt{n_0}-1\big) \big(\gamma +\sigma _{W_2}^2 \big(\eta '-\zeta \big)\big)\big)\\
0 &= g_1 \big(g_8 \psi  (\eta -\zeta )+\sqrt{\zeta } g_6 \sqrt{n_0} \phi  \big(\sigma _{W_2}^2 \big(\zeta -\eta '\big)-\gamma \big)\big)+g_5 \sqrt{\eta -\zeta } \big(g_8 \psi  (\eta -\zeta )\nonumber\\
&\quad\;-\phi  \big(\sqrt{\zeta } g_6 \sqrt{n_0}-1\big) \big(\gamma +\sigma _{W_2}^2 \big(\eta '-\zeta \big)\big)\big)\\
0 &= n_0 \big(-\zeta  g_5 g_8 \psi  \sqrt{\eta -\zeta }+\eta  g_5 g_8 \psi  \sqrt{\eta -\zeta }+g_8 \psi  (\zeta -\eta ) \big(g_7 (\zeta -\eta )-g_1\big)\nonumber\\
&\quad\; +\sqrt{\zeta } g_6 \sqrt{n_0} \phi  \big(g_7 (\zeta -\eta )+g_1\big) \big(\gamma +\sigma _{W_2}^2 \big(\eta '-\zeta \big)\big)\big)+g_4 \psi  \phi  (\zeta -\eta ) \big(\gamma +\sigma _{W_2}^2 \big(\eta '-\zeta \big)\big)\\
0 &= \sqrt{n_0} \sqrt{\eta -\zeta } \big(g_1 g_8 \sqrt{n_0} \psi  (\eta -\zeta )+\sqrt{\zeta } g_1 g_6 n_0 \phi  \big(\gamma +\sigma _{W_2}^2 \big(\eta '-\zeta \big)\big)-\sqrt{\zeta } g_2 g_6 \psi  \phi  \big(\gamma +\sigma _{W_2}^2 \big(\eta '-\zeta \big)\big)\big)\nonumber\\
&\quad\; +g_3 \psi  (\zeta -\eta ) \big(g_8 \psi  (\eta -\zeta )+\sqrt{\zeta } g_6 \sqrt{n_0} \phi  \big(\sigma _{W_2}^2 \big(\zeta -\eta '\big)-\gamma \big)\big)+g_4 \psi  (-\phi ) (\eta -\zeta )^{3/2} \big(\gamma +\sigma _{W_2}^2 \big(\eta '-\zeta \big)\big)\,.
\end{align}
\end{subequations}
}
After some straightforward algebra, one can eliminate all $g_i$ except for $g_6$ and $g_{8}$, which satisfy coupled polynomial equations. Those equations can be shown to be identical to eqn.~(\ref{eqn:tau}) by invoking the change of variables,
\begin{equation}
    g_6 = - \frac{\sqrt{\zeta} \psi}{\sqrt{n_0} \phi} \tau_2\,,\quad\text{and}\quad
    g_8 = \big(\gamma +\sigma _{W_2}^2 \big(\eta '-\zeta \big)\big)\tau_1\,.
\end{equation}
In terms of these variables, the error $E_{21}$ is given by,
\begin{equation}
    E_{21} = 2(\tau_2/\tau_1 - 1)\,.
\end{equation}

\subsubsection{$E_{22}$}
The equations satisfied by the operator-valued Stieltjes transform $G$ of $\bar{Q}_{22}$ induce the following structure on $G$,
\eq{
G = \begin{pmatrix} 0 & G_{12} \\ G_{12}^\top & 0 \end{pmatrix}\,,
}
where,
\eq{
G_{12} = \left(
\begin{array}{ccccccc}
 g_{11} & 0 & 0 & 0 & 0 & 0 & g_7 \\
 0 & g_5 & 0 & g_2 & 0 & g_9 & 0 \\
 0 & 0 & g_{10} & 0 & 0 & 0 & 0 \\
 0 & g_3 & 0 & g_4 & 0 & g_8 & 0 \\
 g_{14} & 0 & 0 & 0 & g_1 & 0 & g_6 \\
 0 & 0 & 0 & 0 & 0 & g_{13} & 0 \\
 g_{14} & 0 & 0 & 0 & 0 & 0 & g_{12} \\
\end{array}
\right)\,,
}
and the independent entry-wise component functions $g_i$ combine to produce the error $E_{22}$ through the relation,
\eq{
E_{22} = \frac{2\sqrt{\zeta } g_9 \sigma _{W_2}^2 \big(\psi  (\eta -\zeta )+\zeta  \sigma _{W_2}^2\big)}{\sqrt{n_0} \psi }+2g_8 (\eta -\zeta ) \sigma _{W_2}^2\,,
}
and themselves satisfy the following system of polynomial equations,
{\small
\begin{subequations}
\begin{align}
0 &= \sqrt{\zeta } g_{11} g_{13} \sqrt{n_0}-g_{14} \psi\\
0 &= \sqrt{\zeta } g_7 g_{13} \sqrt{n_0}-g_{12} \psi +\psi\\
0 &= g_1 \psi  \big(g_3 \sqrt{n_0}-\sqrt{\zeta } g_4\big)-g_3 \sqrt{n_0} \sigma _{W_2}^2\\ 
0 &= -g_1 \psi  \big(\sqrt{\zeta } g_5+g_3 \sqrt{n_0}\big)-g_3 \sqrt{n_0} \sigma _{W_2}^2\\
0 &= g_1 \psi  \big(g_5 \sqrt{n_0}-\sqrt{\zeta } g_2\big)-\sqrt{\zeta } g_2 \sigma _{W_2}^2\\
0 &= g_1 \psi  \big(\sqrt{\zeta } g_2+g_4 \sqrt{n_0}\big)-\sqrt{\zeta } g_2 \sigma _{W_2}^2\\
0 &= g_1 \psi  \big(g_5 \sqrt{n_0}-\sqrt{\zeta } g_2\big)-\big(g_5-1\big) \sqrt{n_0} \sigma _{W_2}^2\\
0 &= -g_1 \psi  \big(\sqrt{\zeta } g_2+g_4 \sqrt{n_0}\big)-\big(g_4-1\big) \sqrt{n_0} \sigma _{W_2}^2\\
0 &= g_1 \psi  \big(g_3 \sqrt{n_0}-\sqrt{\zeta } g_4\big)-\sqrt{\zeta } \big(g_4-1\big) \sigma _{W_2}^2\\
0 &= g_1 \psi  \big(\sqrt{\zeta } g_5+g_3 \sqrt{n_0}\big)-\sqrt{\zeta } \big(g_5-1\big) \sigma _{W_2}^2\\
0 &= -\sqrt{\zeta } g_{10} g_{11} \psi -g_7 \sqrt{n_0} \phi  \big(\gamma +\sigma _{W_2}^2 \big(\eta '-\zeta \big)\big)\\
0 &= -\sqrt{\zeta } g_{10} g_{14} \psi -g_6 \sqrt{n_0} \phi  \big(\gamma +\sigma _{W_2}^2 \big(\eta '-\zeta \big)\big)\\
0 &= -\sqrt{\zeta } g_{10} g_{14} \psi -\big(g_{12}-1\big) \sqrt{n_0} \phi  \big(\gamma +\sigma _{W_2}^2 \big(\eta '-\zeta \big)\big)\\
0 &= g_1 \big(-\zeta  g_2+\sqrt{\zeta } \big(g_5-g_4\big) \sqrt{n_0}+g_3 n_0\big)-\sqrt{\zeta } \big(g_1-1\big) \sqrt{n_0} \sigma _{W_2}^2\\
0 &= g_1 \psi  \big(\sqrt{\zeta } g_9+g_8 \sqrt{n_0}\big)+\sqrt{\zeta } \big(g_7 g_{13} n_0-g_9\big) \sigma _{W_2}^2+g_6 g_{13} \sqrt{n_0} \psi\\
0 &= g_7 \sqrt{n_0} \phi  \big(\gamma +\sigma _{W_2}^2 \big(\eta '-\zeta \big)\big)+g_{10} \big(\sqrt{\zeta } g_{12} \psi +\zeta  g_7 \sqrt{n_0} \sigma _{W_2}^2\big)\\
0 &= \big(g_{11}-1\big) \sqrt{n_0} \phi  \big(\gamma +\sigma _{W_2}^2 \big(\eta '-\zeta \big)\big)+g_{10} \big(\sqrt{\zeta } g_{14} \psi +\zeta  g_{11} \sqrt{n_0} \sigma _{W_2}^2\big)\\
0 &= \sqrt{\zeta } g_{12} g_{13} \sqrt{n_0} \phi  \big(\sigma _{W_2}^2 \big(\zeta -\eta '\big)-\gamma \big)+g_{14} \psi  \big(\gamma  \phi +\sigma _{W_2}^2 \big(-\zeta  \phi +\phi  \eta '+\zeta  g_{10}\big)\big)\\
0 &= g_{13} \big(g_{10} \psi  (\zeta -\eta )+\phi  \big(\sqrt{\zeta } g_7 \sqrt{n_0}-1\big) \big(\gamma +\sigma _{W_2}^2 \big(\eta '-\zeta \big)\big)\big)+\phi  \big(\gamma +\sigma _{W_2}^2 \big(\eta '-\zeta \big)\big)\\
0 &= g_6 \psi  \big(-\zeta  g_2+\sqrt{\zeta } \big(g_5-g_4\big) \sqrt{n_0}+g_3 n_0\big)+\sqrt{\zeta } \sqrt{n_0} \sigma _{W_2}^2 \big(g_7 \big(\zeta  g_9+\sqrt{\zeta } \big(g_5+g_8\big) \sqrt{n_0}+g_3 n_0\big)-g_6 \psi \big)\\
0 &= g_{11} \psi  \big(\gamma  \phi +\sigma _{W_2}^2 \big(\phi  \big(\eta '-\zeta \big)+\zeta  g_{10}\big)\big)-\phi  \big(\sqrt{\zeta } g_7 g_{13} \sqrt{n_0}+\psi \big) \big(\gamma +\sigma _{W_2}^2 \big(\eta '-\zeta \big)\big)\\
0 &= g_{10} \big(-\sqrt{\zeta } g_{14} \psi -\sqrt{n_0} \big(\gamma +g_{13} (\eta -\zeta )+\sigma _{W_2}^2 \big(\eta '+\zeta  \big(g_{11}-1\big)\big)\big)\big)+\sqrt{n_0} \big(\gamma +\sigma _{W_2}^2 \big(\eta '-\zeta \big)\big)\\
0 &= g_{14} \psi  \big(-\zeta  g_2+\sqrt{\zeta } \big(g_5-g_4\big) \sqrt{n_0}+g_3 n_0\big)+\sqrt{\zeta } \sqrt{n_0} \sigma _{W_2}^2 \big(g_{11} \big(\zeta  g_9\nonumber\\
&\quad\quad+\sqrt{\zeta } \big(g_5+g_8\big) \sqrt{n_0}+g_3 n_0\big)-g_{14} \psi \big)\\
0 &= \sqrt{\zeta } g_6 g_{13} \sqrt{n_0} \phi  \big(\sigma _{W_2}^2 \big(\zeta -\eta '\big)-\gamma \big)-g_1 \phi  \big(\zeta  g_9+\sqrt{\zeta } \big(g_5+g_8\big) \sqrt{n_0}+g_3 n_0\big) \big(\gamma +\sigma _{W_2}^2 \big(\eta '-\zeta \big)\big)\nonumber\\
&\quad\quad+g_{14} \psi  \big(\gamma  \phi +\sigma _{W_2}^2 \big(-\zeta  \phi +\phi  \eta '+\zeta  g_{10}\big)\big)\\
0 &= g_1 \psi  \phi  \big(\sqrt{\zeta } g_9+g_8 \sqrt{n_0}\big) \big(\gamma +\sigma _{W_2}^2 \big(\eta '-\zeta \big)\big)+\sqrt{n_0} \big(\sigma _{W_2}^2 \big(g_{10} g_{13} \psi  (\eta -\zeta )+g_8 \phi  \big(\gamma +\sigma _{W_2}^2 \big(\eta '-\zeta \big)\big)\big)\nonumber\\
&\quad\quad+g_6 g_{13} \psi  \phi  \big(\gamma +\sigma _{W_2}^2 \big(\eta '-\zeta \big)\big)\big)\\
0 &= \sqrt{\zeta } g_8 \sigma _{W_2}^2 \big(g_{10} \psi  (\eta -\zeta )-\phi  \big(\sqrt{\zeta } g_7 \sqrt{n_0}-1\big) \big(\gamma +\sigma _{W_2}^2 \big(\eta '-\zeta \big)\big)\big)-g_3 \sqrt{n_0} \phi  \big(\sqrt{\zeta } g_7 \sqrt{n_0} \sigma _{W_2}^2+g_6 \psi \big)\nonumber\\
&\quad\quad \big(\gamma +\sigma _{W_2}^2 \big(\eta '-\zeta \big)\big)+\sqrt{\zeta } g_4 \psi  \big(g_6 \phi  \big(\gamma +\sigma _{W_2}^2 \big(\eta '-\zeta \big)\big)+g_{10} (\eta -\zeta ) \sigma _{W_2}^2\big)\\
0 &= \sqrt{\zeta } g_9 \sigma _{W_2}^2 \big(g_{10} \psi  (\eta -\zeta )-\phi  \big(\sqrt{\zeta } g_7 \sqrt{n_0}-1\big)\big(\gamma +\sigma _{W_2}^2 \big(\eta '-\zeta \big)\big)\big)-g_5 \sqrt{n_0} \phi  \big(\sqrt{\zeta } g_7 \sqrt{n_0} \sigma _{W_2}^2 + g_6 \psi \big)\nonumber\\
&\quad\quad \big(\gamma +\sigma _{W_2}^2 \big(\eta '-\zeta \big)\big)+\sqrt{\zeta } g_2 \psi  \big(g_6 \phi  \big(\gamma +\sigma _{W_2}^2 \big(\eta '-\zeta \big)\big)+g_{10} (\eta -\zeta ) \sigma _{W_2}^2\big)
\end{align}
\end{subequations}
}

After some straightforward algebra, one can eliminate all $g_i$ except for $g_7$ and $g_{10}$, which satisfy coupled polynomial equations. Those equations can be shown to be identical to eqn.~(\ref{eqn:tau}) by invoking the change of variables,
\begin{equation}
    g_7 = - \frac{\sqrt{\zeta} \psi}{\sqrt{n_0} \phi} \tau_2\,,\quad\text{and}\quad
    g_{10} = \big(\gamma +\sigma _{W_2}^2 \big(\eta '-\zeta \big)\big)\tau_1\,.
\end{equation}
The error $E_{22}$ is then given by,
\begin{equation}
    E_{22} = 2 \zeta  \pa{\frac{\tau _2}{\tau _1}-1}+\frac{2\psi  \big(\zeta  \big(\tau _2-\tau _1\big)+\eta  \tau _1\big){}^2 \big(\big(\tau _2-\tau _1\big) \phi +\zeta  \tau _1 \tau _2 \sigma _{W_2}^2\big)}{\zeta  \tau _1^2 \tau _2 \phi }\,.
\end{equation}

\subsubsection{$E_{31}$}
The equations satisfied by the operator-valued Stieltjes transform $G$ of $\bar{Q}_{31}$ induce the following structure on $G$,
\eq{
G = \begin{pmatrix} 0 & G_{12} \\ G_{12}^\top & 0 \end{pmatrix}\,,
}
where,
\eq{
G_{12} =\left(
\begin{array}{cccccccc}
 g_5 & 0 & 0 & 0 & 0 & g_2 & 0 & 0 \\
 0 & g_6 & 0 & g_1 & g_3 & 0 & 0 & g_4 \\
 0 & 0 & g_8 & 0 & 0 & 0 & g_{12} & 0 \\
 0 & g_{11} & 0 & g_7 & g_{10} & 0 & 0 & g_9 \\
 0 & 0 & 0 & 0 & g_6 & 0 & 0 & g_1 \\
 0 & 0 & 0 & 0 & 0 & g_5 & 0 & 0 \\
 0 & 0 & 0 & 0 & 0 & 0 & g_8 & 0 \\
 0 & 0 & 0 & 0 & g_{11} & 0 & 0 & g_7 \\
\end{array}
\right)\,,
}
and the independent entry-wise component functions $g_i$ give the error $E_{31}$ through the relation,
\eq{
E_{31} = \frac{g_2 n_0 \sigma _{\varepsilon }^2}{\phi  \big(\gamma +\sigma _{W_2}^2 \big(\eta '-\zeta \big)\big)}\,,
}
and themselves satisfy the following system of polynomial equations,
{\small
\begin{subequations}
\begin{align}
0 &= \sqrt{\zeta } g_6 g_8 \sqrt{n_0}-g_{11} \psi\\
0 &= \sqrt{\zeta } g_1 g_8 \sqrt{n_0}-g_7 \psi +\psi\\
0 &= -\sqrt{\zeta } g_5 g_6 \psi -g_1 \sqrt{n_0} \phi  \big(\gamma +\sigma _{W_2}^2 \big(\eta '-\zeta \big)\big)\\
0 &= -\sqrt{\zeta } g_5 g_{11} \psi -\big(g_7-1\big) \sqrt{n_0} \phi  \big(\gamma +\sigma _{W_2}^2 \big(\eta '-\zeta \big)\big)\\
0 &= -\zeta  g_7 g_8 \psi +\sqrt{\zeta } \sqrt{n_0} \big(\big(g_4 g_8+g_1 g_{12}\big) n_0-\zeta  g_1 g_8 \sigma _{W_2}^2\big)-g_9 n_0 \psi\\
0 &= g_1 \sqrt{n_0} \phi  \big(\gamma +\sigma _{W_2}^2 \big(\eta '-\zeta \big)\big)+g_5 \big(\sqrt{\zeta } g_7 \psi +\zeta  g_1 \sqrt{n_0} \sigma _{W_2}^2\big)\\
0 &= \sqrt{\zeta } g_6 g_{12} n_0^{3/2}-g_8 \big(\zeta  g_{11} \psi +\sqrt{\zeta } \sqrt{n_0} \big(\zeta  g_6 \sigma _{W_2}^2-g_3 n_0\big)\big)-g_{10} n_0 \psi\\
0 &= \big(g_6-1\big) \sqrt{n_0} \phi  \big(\gamma +\sigma _{W_2}^2 \big(\eta '-\zeta \big)\big)+g_5 \big(\sqrt{\zeta } g_{11} \psi +\zeta  g_6 \sqrt{n_0} \sigma _{W_2}^2\big)\\
0 &= \sqrt{\zeta } g_7 g_8 \sqrt{n_0} \phi  \big(\sigma _{W_2}^2 \big(\zeta -\eta '\big)-\gamma \big)+g_{11} \psi  \big(\gamma  \phi +\sigma _{W_2}^2 \big(-\zeta  \phi +\phi  \eta '+\zeta  g_5\big)\big)\\
0 &= g_8 \big(g_5 \psi  (\zeta -\eta )+\phi  \big(\sqrt{\zeta } g_1 \sqrt{n_0}-1\big) \big(\gamma +\sigma _{W_2}^2 \big(\eta '-\zeta \big)\big)\big)+\phi  \big(\gamma +\sigma _{W_2}^2 \big(\eta '-\zeta \big)\big)\\
0 &= g_6 \psi  \big(\gamma  \phi +\sigma _{W_2}^2 \big(-\zeta  \phi +\phi  \eta '+\zeta  g_5\big)\big)-\phi  \big(\sqrt{\zeta } g_1 g_8 \sqrt{n_0}+\psi \big) \big(\gamma +\sigma _{W_2}^2 \big(\eta '-\zeta \big)\big)\\
0 &= g_5 \big(\sqrt{\zeta } g_{11} \psi +\sqrt{n_0} \big(\gamma +g_8 (\eta -\zeta )+\sigma _{W_2}^2 \big(\eta '+\zeta  \big(g_6-1\big)\big)\big)\big)-\sqrt{n_0} \big(\gamma +\sigma _{W_2}^2 \big(\eta '-\zeta \big)\big)\\
0 &= \sqrt{\zeta } g_5 \psi  \big(g_6 \big(\psi  (\eta -\zeta )+\zeta  \sigma _{W_2}^2\big)-g_3 n_0\big)-\sqrt{n_0} \big(\sqrt{\zeta } g_2 g_6 \sqrt{n_0} \psi +g_4 n_0 \phi  \big(\gamma +\sigma _{W_2}^2 \big(\eta '-\zeta \big)\big)\nonumber\\
&\quad\quad+\zeta  g_1 g_8 \phi  \big(\gamma +\sigma _{W_2}^2 \big(\eta '-\zeta \big)\big)\big)\\
0 &= \sqrt{\zeta } g_5 \psi  \big(g_{11} \big(\psi  (\eta -\zeta )+\zeta  \sigma _{W_2}^2\big)-g_{10} n_0\big)-\sqrt{n_0} \big(\sqrt{\zeta } g_2 g_{11} \sqrt{n_0} \psi +g_9 n_0 \phi  \big(\gamma +\sigma _{W_2}^2 \big(\eta '-\zeta \big)\big)\nonumber\\
&\quad\quad+\zeta  g_7 g_8 \phi  \big(\gamma +\sigma _{W_2}^2 \big(\eta '-\zeta \big)\big)\big)\\
0 &= g_5 \big(-\sqrt{\zeta } g_9 n_0 \psi +\zeta  \sqrt{n_0} \sigma _{W_2}^2 \big(\zeta  g_1 \sigma _{W_2}^2-g_4 n_0\big)+\sqrt{\zeta } g_7 \psi  \big(\psi  (\eta -\zeta )+\zeta  \sigma _{W_2}^2\big)\big)\nonumber\\
&\quad\quad-n_0 \big(g_4 \sqrt{n_0} \phi  \big(\gamma +\sigma _{W_2}^2 \big(\eta '-\zeta \big)\big)+g_2 \big(\sqrt{\zeta } g_7 \psi +\zeta  g_1 \sqrt{n_0} \sigma _{W_2}^2\big)\big)\\
0 &= g_5 \big(-\sqrt{\zeta } g_{10} n_0 \psi +\zeta  \sqrt{n_0} \sigma _{W_2}^2 \big(\zeta  g_6 \sigma _{W_2}^2-g_3 n_0\big)+\sqrt{\zeta } g_{11} \psi  \big(\psi  (\eta -\zeta )+\zeta  \sigma _{W_2}^2\big)\big)\nonumber\\
&\quad\quad-n_0 \big(g_3 \sqrt{n_0} \phi  \big(\gamma +\sigma _{W_2}^2 \big(\eta '-\zeta \big)\big)+g_2 \big(\sqrt{\zeta } g_{11} \psi +\zeta  g_6 \sqrt{n_0} \sigma _{W_2}^2\big)\big)\\
0 &= g_2 g_8 n_0 \psi  (\eta -\zeta )-g_5 \psi  (\zeta -\eta ) \big(g_8 \psi  (\zeta -\eta )+g_{12} n_0\big)-\sqrt{n_0} \phi  \big(g_{12} \big(\sqrt{\zeta } g_1 n_0-\sqrt{n_0}\big)\nonumber\\
&\quad\quad+\sqrt{\zeta } g_8 \big(g_4 n_0-\zeta  g_1 \sigma _{W_2}^2\big)\big) \big(\gamma +\sigma _{W_2}^2 \big(\eta '-\zeta \big)\big)+\zeta  g_7 g_8 \psi  \phi  \big(\gamma +\sigma _{W_2}^2 \big(\eta '-\zeta \big)\big)\\
0 &= g_2 n_0 \big(-\sqrt{\zeta } g_{11} \psi -\sqrt{n_0} \big(\gamma +g_8 (\eta -\zeta )+\sigma _{W_2}^2 \big(\eta '+\zeta  \big(g_6-1\big)\big)\big)\big)+g_5 \big(\sqrt{n_0} \big(g_8 \psi  (\zeta -\eta )^2\nonumber\\
&\quad\quad+g_{12} n_0 (\zeta -\eta )-\sqrt{\zeta } g_{10} \sqrt{n_0} \psi -\zeta  g_3 n_0 \sigma _{W_2}^2+\zeta ^2 g_6 \sigma _{W_2}^4\big)+\sqrt{\zeta } g_{11} \psi  \big(\psi  (\eta -\zeta )+\zeta  \sigma _{W_2}^2\big)\big)\\
0 &= g_3 n_0 \psi  \big(\gamma  \phi +\sigma _{W_2}^2 \big(\phi  \big(\eta '-\zeta \big)+\zeta  g_5\big)\big)-\sqrt{\zeta } \big(g_4 g_8 n_0^{3/2} \phi  \big(\gamma +\sigma _{W_2}^2 \big(\eta '-\zeta \big)\big)\nonumber\\
&\quad\quad+g_1 \sqrt{n_0} \phi  \big(g_{12} n_0-\zeta  g_8 \sigma _{W_2}^2\big) \big(\gamma +\sigma _{W_2}^2 \big(\eta '-\zeta \big)\big)+\sqrt{\zeta } g_6 \psi  \sigma _{W_2}^2 \big(\zeta  g_5 \sigma _{W_2}^2-g_2 n_0\big)\big)\\
0 &= g_{10} n_0 \psi  \big(\gamma  \phi +\sigma _{W_2}^2 \big(\phi  \big(\eta '-\zeta \big)+\zeta  g_5\big)\big)-\sqrt{\zeta } \big(g_7 g_{12} n_0^{3/2} \phi  \big(\gamma +\sigma _{W_2}^2 \big(\eta '-\zeta \big)\big)\nonumber\\
&\quad\quad+g_8 \sqrt{n_0} \phi  \big(g_9 n_0-\zeta  g_7 \sigma _{W_2}^2\big) \big(\gamma +\sigma _{W_2}^2 \big(\eta '-\zeta \big)\big)+\sqrt{\zeta } g_{11} \psi  \sigma _{W_2}^2 \big(\zeta  g_5 \sigma _{W_2}^2-g_2 n_0\big)\big)
\end{align}
\end{subequations}
}
After some straightforward algebra, one can eliminate all $g_i$ except for $g_1$ and $g_{5}$, which satisfy coupled polynomial equations. Those equations can be shown to be identical to eqn.~(\ref{eqn:tau}) by invoking the change of variables,
\begin{equation}
    g_1 = - \frac{\sqrt{\zeta} \psi}{\sqrt{n_0} \phi} \tau_2\,,\quad\text{and}\quad
    g_5 = \big(\gamma +\sigma _{W_2}^2 \big(\eta '-\zeta \big)\big)\tau_1\,.
\end{equation}
The error $E_{31}$ can then be written in terms of $\tau_1$ and its derivative $\tau_1'$~(\ref{eq:dtau1}),
\begin{equation}
    E_{31} = \sigma_{\varepsilon}^2\big(-\tau_1'/\tau_1^2-1\big)\,.
\end{equation}

\subsubsection{$E_{32}$}
The equations satisfied by the operator-valued Stieltjes transform $G$ of $\bar{Q}_{32}$ induce the following structure on $G$,
\eq{
G = \begin{pmatrix} 0 & G_{12} \\ G_{12}^\top & 0 \end{pmatrix}\,,
}
where,
\eq{
G_{12} = \left(
\begin{array}{ccccccccc}
 g_9 & 0 & 0 & 0 & 0 & 0 & g_6 & 0 & 0 \\
 0 & g_1 & g_3 & 0 & g_4 & g_7 & 0 & 0 & g_2 \\
 0 & 0 & g_{10} & 0 & g_4 & g_{13} & 0 & 0 & g_5 \\
 0 & 0 & 0 & g_{12} & 0 & 0 & 0 & g_{16} & 0 \\
 0 & 0 & g_{15} & 0 & g_{11} & g_{14} & 0 & 0 & g_8 \\
 0 & 0 & 0 & 0 & 0 & g_{10} & 0 & 0 & g_4 \\
 0 & 0 & 0 & 0 & 0 & 0 & g_9 & 0 & 0 \\
 0 & 0 & 0 & 0 & 0 & 0 & 0 & g_{12} & 0 \\
 0 & 0 & 0 & 0 & 0 & g_{15} & 0 & 0 & g_{11} \\
\end{array}
\right)\,,
}
and the independent entry-wise component functions $g_i$ give the error $E_{32}$ through the relation,
\eq{
E_{32} = -g_2 n_0^{3/2}/(\sqrt{\zeta } \psi )\,,
}
and themselves satisfy the following system of polynomial equations,
{\small
\begin{subequations}
\begin{align}
0 &= \sqrt{\zeta } g_{10} g_{12} \sqrt{n_0}-g_{15} \psi\\
0 &= \sqrt{\zeta } g_4 g_{12} \sqrt{n_0}-g_{11} \psi +\psi\\
0 &= -\sqrt{\zeta } g_9 g_{10} \psi -g_4 \sqrt{n_0} \phi  \big(\gamma +\sigma _{W_2}^2 \big(\eta '-\zeta \big)\big)\\
0 &= -\sqrt{\zeta } g_9 g_{15} \psi -\big(g_{11}-1\big) \sqrt{n_0} \phi  \big(\gamma +\sigma _{W_2}^2 \big(\eta '-\zeta \big)\big)\\
0 &= -\sqrt{\zeta } g_9 \psi -\sqrt{\zeta } g_3 g_9 \psi -g_4 \sqrt{n_0} \phi  \big(\gamma +\sigma _{W_2}^2 \big(\eta '-\zeta \big)\big)\\
0 &= -\sqrt{\zeta } g_6 g_{10} \psi -\sqrt{\zeta } g_9 g_{13} \psi -g_5 \sqrt{n_0} \phi  \big(\gamma +\sigma _{W_2}^2 \big(\eta '-\zeta \big)\big)\\
0 &= -\sqrt{\zeta } g_9 g_{14} \psi -\sqrt{\zeta } g_6 g_{15} \psi -g_8 \sqrt{n_0} \phi  \big(\gamma +\sigma _{W_2}^2 \big(\eta '-\zeta \big)\big)\\
0 &= \sqrt{\zeta } g_5 g_{12} n_0+\sqrt{\zeta } g_4 \big(g_{16} n_0+g_{12} \big(\zeta  \psi -\eta  \psi -\zeta  \sigma _{W_2}^2\big)\big)+g_8 \sqrt{n_0} (-\psi )\\
0 &= g_4 \sqrt{n_0} \phi  \big(\gamma +\sigma _{W_2}^2 \big(\eta '-\zeta \big)\big)+g_9 \big(\sqrt{\zeta } g_{11} \psi +\zeta  g_4 \sqrt{n_0} \sigma _{W_2}^2\big)\\
0 &= g_3 \sqrt{n_0} \phi  \big(\gamma +\sigma _{W_2}^2 \big(\eta '-\zeta \big)\big)+g_9 \big(\sqrt{\zeta } g_{15} \psi +\zeta  g_{10} \sqrt{n_0} \sigma _{W_2}^2\big)\\
0 &= \sqrt{\zeta } g_{12} g_{13} n_0+\sqrt{\zeta } g_{10} \big(g_{16} n_0+g_{12} \big(\zeta  \psi -\eta  \psi -\zeta  \sigma _{W_2}^2\big)\big)+g_{14} \sqrt{n_0} (-\psi )\\
0 &= \big(g_{10}-1\big) \sqrt{n_0} \phi  \big(\gamma +\sigma _{W_2}^2 \big(\eta '-\zeta \big)\big)+g_9 \big(\sqrt{\zeta } g_{15} \psi +\zeta  g_{10} \sqrt{n_0} \sigma _{W_2}^2\big)\\
0 &= -\sqrt{\zeta } \big(\big(g_1+g_3\big) g_6+g_7 g_9\big) \psi -\gamma  g_2 \sqrt{n_0} \phi +\zeta  g_2 \sqrt{n_0} \phi  \sigma _{W_2}^2+g_2 \sqrt{n_0} (-\phi ) \eta ' \sigma _{W_2}^2\\
0 &= \sqrt{\zeta } g_{11} g_{12} \sqrt{n_0} \phi  \big(\sigma _{W_2}^2 \big(\zeta -\eta '\big)-\gamma \big)+g_{15} \psi  \big(\gamma  \phi +\sigma _{W_2}^2 \big(-\zeta  \phi +\phi  \eta '+\zeta  g_9\big)\big)\\
0 &= g_{12} \big(g_9 \psi  (\zeta -\eta )+\phi  \big(\sqrt{\zeta } g_4 \sqrt{n_0}-1\big) \big(\gamma +\sigma _{W_2}^2 \big(\eta '-\zeta \big)\big)\big)+\phi  \big(\gamma +\sigma _{W_2}^2 \big(\eta '-\zeta \big)\big)\\
0 &= g_{10} \psi  \big(\gamma  \phi +\sigma _{W_2}^2 \big(-\zeta  \phi +\phi  \eta '+\zeta  g_9\big)\big)-\phi  \big(\sqrt{\zeta } g_4 g_{12} \sqrt{n_0}+\psi \big) \big(\gamma +\sigma _{W_2}^2 \big(\eta '-\zeta \big)\big)\\
0 &= g_9 \big(\sqrt{\zeta } g_{15} \psi +\sqrt{n_0} \big(\gamma +g_{12} (\eta -\zeta )+\sigma _{W_2}^2 \big(\eta '+\zeta  \big(g_{10}-1\big)\big)\big)\big)-\sqrt{n_0} \big(\gamma +\sigma _{W_2}^2 \big(\eta '-\zeta \big)\big)\\
0 &= -\sqrt{\zeta } g_4 g_{12} \sqrt{n_0} \phi  \big(\gamma +\sigma _{W_2}^2 \big(\eta '-\zeta \big)\big)+g_3 \psi  \big(\gamma  \phi +\sigma _{W_2}^2 \big(-\zeta  \phi +\phi  \eta '+\zeta  g_9\big)\big)+\zeta  g_9 \psi  \sigma _{W_2}^2\\
0 &= g_7 n_0 \phi  \big(\gamma +\sigma _{W_2}^2 \big(\eta '-\zeta \big)\big)+g_6 \big(\sqrt{\zeta } g_{15} \sqrt{n_0} \psi +\zeta  g_{10} n_0 \sigma _{W_2}^2\big)+g_9 \big(\sqrt{\zeta } g_{14} \sqrt{n_0} \psi +\zeta  \sigma _{W_2}^2 \big(g_{13} n_0-\zeta  g_{10} \sigma _{W_2}^2\big)\big)\\
0 &= \gamma  g_2 n_0 \phi +\sqrt{\zeta } g_8 g_9 \sqrt{n_0} \psi +g_6 \big(\sqrt{\zeta } g_{11} \sqrt{n_0} \psi +\zeta  g_4 n_0 \sigma _{W_2}^2\big)-\zeta  g_2 n_0 \phi  \sigma _{W_2}^2+\zeta  g_5 g_9 n_0 \sigma _{W_2}^2\nonumber\\
&\quad\quad+g_2 n_0 \phi  \eta ' \sigma _{W_2}^2-\zeta ^2 g_4 g_9 \sigma _{W_2}^4\\
0 &= g_6 \big(-\sqrt{\zeta } g_{15} \sqrt{n_0} \psi -n_0 \big(\gamma +g_{12} (\eta -\zeta )+\sigma _{W_2}^2 \big(\eta '+\zeta  \big(g_{10}-1\big)\big)\big)\big)+g_9 \big(g_{12} \psi  (\zeta -\eta )^2\nonumber\\
&\quad\quad+g_{16} n_0 (\zeta -\eta )-\sqrt{\zeta } g_{14} \sqrt{n_0} \psi -\zeta  g_{13} n_0 \sigma _{W_2}^2+\zeta ^2 g_{10} \sigma _{W_2}^4\big)\\
0 &= \gamma  g_5 n_0 \phi +\sqrt{\zeta } g_8 g_9 \sqrt{n_0} \psi +\sqrt{\zeta } g_6 g_{11} \sqrt{n_0} \psi +\zeta  g_4 \big(g_6 n_0 \sigma _{W_2}^2+g_{12} \phi  \big(\gamma +\sigma _{W_2}^2 \big(\eta '-\zeta \big)\big)-\zeta  g_9 \sigma _{W_2}^4\big)\nonumber\\
&\quad\quad-\zeta  g_5 n_0 \phi  \sigma _{W_2}^2+\zeta  g_5 g_9 n_0 \sigma _{W_2}^2+g_5 n_0 \phi  \eta ' \sigma _{W_2}^2\\
0 &= \gamma  g_{13} n_0 \phi +\sqrt{\zeta } g_6 g_{15} \sqrt{n_0} \psi +\zeta  g_{10} \big(g_6 n_0 \sigma _{W_2}^2+g_{12} \phi  \big(\gamma +\sigma _{W_2}^2 \big(\eta '-\zeta \big)\big)-\zeta  g_9 \sigma _{W_2}^4\big)\nonumber\\
&\quad\quad+g_9 \big(\sqrt{\zeta } g_{14} \sqrt{n_0} \psi +\zeta  g_{13} n_0 \sigma _{W_2}^2\big)-\zeta  g_{13} n_0 \phi  \sigma _{W_2}^2+g_{13} n_0 \phi  \eta ' \sigma _{W_2}^2\\
0 &= -\sqrt{\zeta } g_{12} \phi  \big(\gamma +\sigma _{W_2}^2 \big(\eta '-\zeta \big)\big) \big(\sqrt{n_0} \big(g_8 n_0+g_{11} \big(\zeta  \psi -\eta  \psi -\zeta  \sigma _{W_2}^2\big)\big)-\sqrt{\zeta } g_{15} \psi \big)\nonumber\\
&\quad\quad+g_{14} n_0 \psi  \big(\gamma  \phi +\sigma _{W_2}^2 \big(-\zeta  \phi +\phi  \eta '+\zeta  g_9\big)\big)-\sqrt{\zeta } g_{11} g_{16} n_0^{3/2} \phi  \big(\gamma +\sigma _{W_2}^2 \big(\eta '-\zeta \big)\big)+\zeta  g_{15} \psi  \sigma _{W_2}^2 \big(g_6 n_0-\zeta  g_9 \sigma _{W_2}^2\big)\\
0 &= g_9 \psi  (-(\zeta -\eta )) \big(g_{12} \psi  (\zeta -\eta )+g_{16} n_0\big)-\sqrt{\zeta } g_4 \sqrt{n_0} \phi  \big(\gamma +\sigma _{W_2}^2 \big(\eta '-\zeta \big)\big) \big(g_{16} n_0+g_{12} \big(\zeta  \psi -\eta  \psi -\zeta  \sigma _{W_2}^2\big)\big)\nonumber\\
&\quad\quad+n_0 \big(g_6 g_{12} \psi  (\eta -\zeta )+\phi  \big(g_{16}-\sqrt{\zeta } g_5 g_{12} \sqrt{n_0}\big) \big(\gamma +\sigma _{W_2}^2 \big(\eta '-\zeta \big)\big)\big)+\zeta  g_{10} g_{12} \psi  \phi  \big(\gamma +\sigma_{W_2}^2 \big(\eta '-\zeta \big)\big)\\
0 &= g_{13} n_0 \psi  \big(\gamma  \phi +\sigma _{W_2}^2 \big(-\zeta  \phi +\phi  \eta '+\zeta  g_9\big)\big)-\sqrt{\zeta } g_4 \sqrt{n_0} \phi  \big(\gamma +\sigma _{W_2}^2 \big(\eta '-\zeta \big)\big)+\sqrt{\zeta } g_5 g_{12} n_0^{3/2} \phi  \big(\sigma _{W_2}^2 \big(\zeta -\eta '\big)-\gamma \big) \nonumber\\
&\quad\quad\big(g_{16} n_0+g_{12} \big(\zeta  \psi -\eta  \psi -\zeta  \sigma _{W_2}^2\big)\big)+\zeta  g_{10} \psi  \big(g_6 n_0 \sigma _{W_2}^2+g_{12} \phi  \big(\gamma +\sigma _{W_2}^2 \big(\eta '-\zeta \big)\big)-\zeta  g_9 \sigma _{W_2}^4\big)\\
0 &= -\gamma  \sqrt{\zeta } g_2 g_{12} n_0^{3/2} \phi +\gamma  g_7 n_0 \psi  \phi -\sqrt{\zeta } g_4 \sqrt{n_0} \phi  \big(\gamma +\sigma _{W_2}^2 \big(\eta '-\zeta \big)\big) \big(g_{16} n_0+g_{12} \big(\zeta  \psi -\eta  \psi -\zeta  \sigma _{W_2}^2\big)\big)\nonumber\\
&\quad\quad+\zeta  g_3 \psi  \big(g_6 n_0 \sigma _{W_2}^2+g_{12} \phi  \big(\gamma +\sigma _{W_2}^2 \big(\eta '-\zeta \big)\big)-\zeta  g_9 \sigma _{W_2}^4\big)+\zeta ^{3/2} g_2 g_{12} n_0^{3/2} \phi  \sigma _{W_2}^2-\zeta ^2 g_9 \psi  \sigma _{W_2}^4\nonumber\\
&\quad\quad+n_0 \phi  \eta ' \sigma _{W_2}^2 \big(g_7 \psi -\sqrt{\zeta } g_2 g_{12} \sqrt{n_0}\big)+\zeta  g_6 n_0 \psi  \sigma _{W_2}^2+\zeta  g_7 g_9 n_0 \psi  \sigma _{W_2}^2-\zeta  g_7 n_0 \psi  \phi  \sigma _{W_2}^2
\end{align}
\end{subequations}
}
After some straightforward algebra, one can eliminate all $g_i$ except for $g_4$ and $g_9$, which satisfy coupled polynomial equations. Those equations can be shown to be identical to eqn.~(\ref{eqn:tau}) by invoking the change of variables,
\begin{equation}
    g_4 = - \frac{\sqrt{\zeta} \psi}{\sqrt{n_0} \phi} \tau_2\,,\quad\text{and}\quad
    g_9 = \big(\gamma +\sigma _{W_2}^2 \big(\eta '-\zeta \big)\big)\tau_1\,.
\end{equation}
In terms of $\tau_1$, $\tau_2$, and $\tau_2'$~(\ref{eq:dtau2}), the error $E_{32}$ is given by, 
\begin{equation}
    E_{32} = 1-2\tau_2/\tau_1-\tau_2'/\tau_1^2\,.
\end{equation}

\subsubsection{$E_{33}$}
The equations satisfied by the operator-valued Stieltjes transform $G$ of $\bar{Q}_{32}$ induce the following structure on $G$,
\eq{
G = \begin{pmatrix} 0 & G_{12} \\ G_{12}^\top & 0 \end{pmatrix}\,,
}
where,
\eq{
G_{12} = \left(
\begin{array}{ccccccccccc}
 g_{13} & 0 & 0 & 0 & 0 & 0 & 0 & g_8 & 0 & 0 & 0 \\
 0 & g_1 & 0 & 0 & g_5 & 0 & 0 & 0 & g_{11} & g_3 & 0 \\
 0 & 0 & g_1 & g_4 & 0 & g_6 & g_9 & 0 & 0 & 0 & g_2 \\
 0 & 0 & 0 & g_{14} & 0 & g_6 & g_{17} & 0 & 0 & 0 & g_7 \\
 0 & 0 & 0 & 0 & g_{16} & 0 & 0 & 0 & g_{20} & g_{12} & 0 \\
 0 & 0 & 0 & g_{19} & 0 & g_{15} & g_{18} & 0 & 0 & 0 & g_{10} \\
 0 & 0 & 0 & 0 & 0 & 0 & g_{14} & 0 & 0 & 0 & g_6 \\
 0 & 0 & 0 & 0 & 0 & 0 & 0 & g_{13} & 0 & 0 & 0 \\
 0 & 0 & 0 & 0 & 0 & 0 & 0 & 0 & g_1 & g_5 & 0 \\
 0 & 0 & 0 & 0 & 0 & 0 & 0 & 0 & 0 & g_{16} & 0 \\
 0 & 0 & 0 & 0 & 0 & 0 & g_{19} & 0 & 0 & 0 & g_{15} \\
\end{array}
\right)\,,
}
and the independent entry-wise component functions $g_i$ give the error $E_{32}$ through the relation,
\eq{
E_{33} =-g_3 n_0 \sigma _{W_2}^2/\psi\,,
}
and themselves satisfy the following system of polynomial equations,
{\small
\begin{subequations}
\begin{align}
0 &= \sqrt{\zeta } g_{14} g_{16} \sqrt{n_0}-g_{19} \psi\\
0 &= \sqrt{\zeta } g_6 g_{16} \sqrt{n_0}-g_{15} \psi +\psi\\
0 &= -\sqrt{\zeta } g_{13} g_{14} \psi -g_6 \sqrt{n_0} \phi  \big(\gamma +\sigma _{W_2}^2 \big(\eta '-\zeta \big)\big)\\
0 &= -\sqrt{\zeta } g_{13} g_{19} \psi -\big(g_{15}-1\big) \sqrt{n_0} \phi  \big(\gamma +\sigma _{W_2}^2 \big(\eta '-\zeta \big)\big)\\
0 &= -\sqrt{\zeta } g_{13} \psi -\sqrt{\zeta } g_4 g_{13} \psi -g_6 \sqrt{n_0} \phi  \big(\gamma +\sigma _{W_2}^2 \big(\eta '-\zeta \big)\big)\\
0 &= -\sqrt{\zeta } g_8 g_{14} \psi -\sqrt{\zeta } g_{13} g_{17} \psi -g_7 \sqrt{n_0} \phi  \big(\gamma +\sigma _{W_2}^2 \big(\eta '-\zeta \big)\big)\\
0 &= -\sqrt{\zeta } g_{13} g_{18} \psi -\sqrt{\zeta } g_8 g_{19} \psi -g_{10} \sqrt{n_0} \phi  \big(\gamma +\sigma _{W_2}^2 \big(\eta '-\zeta \big)\big)\\
0 &= g_{13} g_{16} \psi  (\zeta -\eta )-\phi  \big(g_5-\sqrt{\zeta } g_6 g_{16} \sqrt{n_0}\big) \big(\gamma +\sigma _{W_2}^2 \big(\eta '-\zeta \big)\big)\\
0 &= g_6 \sqrt{n_0} \phi  \big(\gamma +\sigma _{W_2}^2 \big(\eta '-\zeta \big)\big)+g_{13} \big(\sqrt{\zeta } g_{15} \psi +\zeta  g_6 \sqrt{n_0} \sigma _{W_2}^2\big)\\
0 &= g_4 \sqrt{n_0} \phi  \big(\gamma +\sigma _{W_2}^2 \big(\eta '-\zeta \big)\big)+g_{13} \big(\sqrt{\zeta } g_{19} \psi +\zeta  g_{14} \sqrt{n_0} \sigma _{W_2}^2\big)\\
0 &= \big(g_{14}-1\big) \sqrt{n_0} \phi  \big(\gamma +\sigma _{W_2}^2 \big(\eta '-\zeta \big)\big)+g_{13} \big(\sqrt{\zeta } g_{19} \psi +\zeta  g_{14} \sqrt{n_0} \sigma _{W_2}^2\big)\\
0 &= -\sqrt{\zeta } \big(\big(g_4+1\big) g_8+g_9 g_{13}\big) \psi -\gamma  g_2 \sqrt{n_0} \phi +\zeta  g_2 \sqrt{n_0} \phi  \sigma _{W_2}^2+g_2 \sqrt{n_0} (-\phi ) \eta ' \sigma _{W_2}^2\\
0 &= \sqrt{\zeta } g_{15} g_{16} \sqrt{n_0} \phi  \big(\sigma _{W_2}^2 \big(\zeta -\eta '\big)-\gamma \big)+g_{19} \psi  \big(\gamma  \phi +\sigma _{W_2}^2 \big(-\zeta  \phi +\phi  \eta '+\zeta  g_{13}\big)\big)\\
0 &= g_{16} \big(g_{13} \psi  (\zeta -\eta )+\phi  \big(\sqrt{\zeta } g_6 \sqrt{n_0}-1\big) \big(\gamma +\sigma _{W_2}^2 \big(\eta '-\zeta \big)\big)\big)+\phi  \big(\gamma +\sigma _{W_2}^2 \big(\eta '-\zeta \big)\big)\\
0 &= g_{13} \big(\sqrt{\zeta } g_{19} \psi +\sqrt{n_0} \big(\gamma +g_{16} (\eta -\zeta )+\sigma _{W_2}^2 \big(\eta '+\zeta  \big(g_{14}-1\big)\big)\big)\big)-\sqrt{n_0} \big(\gamma +\sigma _{W_2}^2 \big(\eta '-\zeta \big)\big)\\
0 &= g_{14} \psi  \big(\gamma  \phi +\sigma _{W_2}^2 \big(\phi  \big(\eta '-\zeta \big)+\zeta  g_{13}\big)\big)-\phi  \big(\sqrt{\zeta } g_6 g_{16} \sqrt{n_0}+\psi \big) \big(\gamma +\sigma _{W_2}^2 \big(\eta '-\zeta \big)\big)\\
0 &= -\sqrt{\zeta } g_6 g_{16} \sqrt{n_0} \phi  \big(\gamma +\sigma _{W_2}^2 \big(\eta '-\zeta \big)\big)+g_4 \psi  \big(\gamma  \phi +\sigma _{W_2}^2 \big(-\zeta  \phi +\phi  \eta '+\zeta  g_{13}\big)\big)+\zeta  g_{13} \psi  \sigma _{W_2}^2\\
0 &= \sqrt{\zeta } \big(g_7 g_{16}+g_6 \big(g_{12}+g_{20}\big)\big) n_0+g_{10} \sqrt{n_0} (-\psi )+\sqrt{\zeta } g_6 \big(\psi  (\zeta -\eta )-\zeta  \sigma _{W_2}^2\big)+\sqrt{\zeta } g_5 g_6 \big(\zeta  \psi -\eta  \psi -\zeta  \sigma _{W_2}^2\big)\\
0 &= \sqrt{\zeta } \big(g_{16} g_{17}+g_{14} \big(g_{12}+g_{20}\big)\big) n_0+g_{18} \sqrt{n_0} (-\psi )+\sqrt{\zeta } g_{14} \big(\psi  (\zeta -\eta )-\zeta  \sigma _{W_2}^2\big)+\sqrt{\zeta } g_5 g_{14} \big(\zeta  \psi -\eta  \psi -\zeta  \sigma _{W_2}^2\big)\\
0 &= g_{13} \psi  (\zeta -\eta )+g_5 \big(g_{13} \psi  (\zeta -\eta )+\phi  \big(\sqrt{\zeta } g_6 \sqrt{n_0}-1\big) \big(\gamma +\sigma _{W_2}^2 \big(\eta '-\zeta \big)\big)\big)+\sqrt{\zeta } g_6 \sqrt{n_0} \phi  \big(\gamma +\sigma _{W_2}^2 \big(\eta '-\zeta \big)\big)\\
0 &= g_9 n_0 \phi  \big(\gamma +\sigma _{W_2}^2 \big(\eta '-\zeta \big)\big)+g_8 \big(\sqrt{\zeta } g_{19} \sqrt{n_0} \psi +\zeta  g_{14} n_0 \sigma _{W_2}^2\big)+g_{13} \big(\sqrt{\zeta } g_{18} \sqrt{n_0} \psi +\zeta  \sigma _{W_2}^2 \big(g_{17} n_0-\zeta  g_{14} \sigma _{W_2}^2\big)\big)\\
0 &= \gamma  g_2 n_0 \phi +\sqrt{\zeta } g_{10} g_{13} \sqrt{n_0} \psi +g_8 \big(\sqrt{\zeta } g_{15} \sqrt{n_0} \psi +\zeta  g_6 n_0 \sigma _{W_2}^2\big)-\zeta  g_2 n_0 \phi  \sigma _{W_2}^2\nonumber\\
&\quad\quad+\zeta  g_7 g_{13} n_0 \sigma _{W_2}^2+g_2 n_0 \phi  \eta ' \sigma _{W_2}^2-\zeta ^2 g_6 g_{13} \sigma _{W_2}^4\\
0 &= g_{13} g_{16} \psi  (-(\zeta -\eta )) \big(\psi  (\zeta -\eta )-\zeta  \sigma _{W_2}^2\big)\nonumber\\
&\quad\quad-\phi  \big(\gamma +\sigma _{W_2}^2 \big(\eta '-\zeta \big)\big) \big(-\zeta  g_{14} g_{16} \psi +\sqrt{\zeta } g_6 g_{16} \sqrt{n_0} \big(\zeta  \psi -\eta  \psi -\zeta  \sigma _{W_2}^2\big)-g_{20} n_0\big)\\
0 &= -\sqrt{\zeta } \phi  \big(\gamma +\sigma _{W_2}^2 \big(\eta '-\zeta \big)\big) \big(g_6 \sqrt{n_0} \big(\zeta  \psi -\eta  \psi -\zeta  \sigma _{W_2}^2\big)-\sqrt{\zeta } g_{14} \psi \big)+g_{20} n_0 \big(g_{13} \psi  (\eta -\zeta )\nonumber\\
&\quad\quad-\phi  \big(\sqrt{\zeta } g_6 \sqrt{n_0}-1\big) \big(\gamma +\sigma _{W_2}^2 \big(\eta '-\zeta \big)\big)\big)+g_{13} \psi  (-(\zeta -\eta )) \big(\psi  (\zeta -\eta )-\zeta  \sigma _{W_2}^2\big)\\
0 &= \big(\psi  (\zeta -\eta )-\zeta  \sigma _{W_2}^2\big) \big(g_{13} \psi  (\eta -\zeta )+\sqrt{\zeta } g_6 \sqrt{n_0} \phi  \big(\sigma _{W_2}^2 \big(\zeta -\eta '\big)-\gamma \big)\big)+n_0 \big(g_{13} g_{20} \psi  (\eta -\zeta )\nonumber\\
&\quad\quad+\phi  \big(g_{11}-\sqrt{\zeta } g_6 g_{20} \sqrt{n_0}\big) \big(\gamma +\sigma _{W_2}^2 \big(\eta '-\zeta \big)\big)\big)+\zeta  g_4 \psi  \phi  \big(\gamma +\sigma _{W_2}^2 \big(\eta '-\zeta \big)\big)\\
0 &= \gamma  g_7 n_0 \phi +\sqrt{\zeta } g_{10} g_{13} \sqrt{n_0} \psi +\sqrt{\zeta } g_8 g_{15} \sqrt{n_0} \psi -\zeta  g_7 n_0 \phi  \sigma _{W_2}^2+\zeta  g_6 g_8 n_0 \sigma _{W_2}^2+\zeta  g_7 g_{13} n_0 \sigma _{W_2}^2+g_7 n_0 \phi  \eta ' \sigma _{W_2}^2\nonumber\\
&\quad\quad+\zeta  g_6 \phi  \big(\gamma +\sigma _{W_2}^2 \big(\eta '-\zeta \big)\big)+\zeta  g_5 g_6 \phi  \big(\gamma +\sigma _{W_2}^2 \big(\eta '-\zeta \big)\big)-\zeta ^2 g_6 g_{13} \sigma _{W_2}^4\\
0 &= \gamma  g_{17} n_0 \phi +\sqrt{\zeta } g_{13} g_{18} \sqrt{n_0} \psi +\sqrt{\zeta } g_8 g_{19} \sqrt{n_0} \psi -\zeta  g_{17} n_0 \phi  \sigma _{W_2}^2+\zeta  g_8 g_{14} n_0 \sigma _{W_2}^2+\zeta  g_{13} g_{17} n_0 \sigma _{W_2}^2+g_{17} n_0 \phi  \eta ' \sigma _{W_2}^2\nonumber\\
&\quad\quad+\zeta  g_{14} \phi  \big(\gamma +\sigma _{W_2}^2 \big(\eta '-\zeta \big)\big)+\zeta  g_5 g_{14} \phi  \big(\gamma +\sigma _{W_2}^2 \big(\eta '-\zeta \big)\big)-\zeta ^2 g_{13} g_{14} \sigma _{W_2}^4\\
0 &= g_5 \big(\psi  (\zeta -\eta )-\zeta  \sigma _{W_2}^2\big) \big(g_{13} \psi  (\eta -\zeta )+\sqrt{\zeta } g_6 \sqrt{n_0} \phi  \big(\sigma _{W_2}^2 \big(\zeta -\eta '\big)-\gamma \big)\big)+n_0 \big(\phi  \big(g_3-\sqrt{\zeta } \big(g_6 g_{12}+g_2 g_{16}\big) \sqrt{n_0}\big) \nonumber\\
&\quad\quad\big(\gamma +\sigma _{W_2}^2 \big(\eta '-\zeta \big)\big)-\big(g_{12} g_{13}+g_8 g_{16}\big) \psi  (\zeta -\eta )\big)+\zeta  g_4 g_5 \psi  \phi  \big(\gamma +\sigma _{W_2}^2 \big(\eta '-\zeta \big)\big)\\
0 &= \big(\psi  (\zeta -\eta )-\zeta  \sigma _{W_2}^2\big) \big(g_{13} \psi  (\eta -\zeta )+\sqrt{\zeta } g_6 \sqrt{n_0} \phi  \big(\sigma _{W_2}^2 \big(\zeta -\eta '\big)-\gamma \big)\big)+g_5 \big(g_{13} \psi  (-(\zeta -\eta )) \big(\psi  (\zeta -\eta )-\zeta  \sigma _{W_2}^2\big)\nonumber\\
&\quad\quad-\sqrt{\zeta } \phi  \big(\gamma +\sigma _{W_2}^2 \big(\eta '-\zeta \big)\big) \big(g_6 \sqrt{n_0} \big(\zeta  \psi -\eta  \psi -\zeta  \sigma _{W_2}^2\big)-\sqrt{\zeta } g_{14} \psi \big)\big)+g_{11} n_0 \phi  \big(\gamma +\sigma _{W_2}^2 \big(\eta '-\zeta \big)\big)\nonumber\\
&\quad\quad+\zeta  g_4 \psi  \phi  \big(\gamma +\sigma _{W_2}^2 \big(\eta '-\zeta \big)\big)\\
0 &= g_{12} n_0 \big(g_{13} \psi  (\eta -\zeta )-\phi  \big(\sqrt{\zeta } g_6 \sqrt{n_0}-1\big) \big(\gamma +\sigma _{W_2}^2 \big(\eta '-\zeta \big)\big)\big)-g_{16} \big(g_8 n_0 \psi  (\zeta -\eta )+\sqrt{\zeta } g_7 n_0^{3/2} \phi  \big(\gamma +\sigma _{W_2}^2 \big(\eta '-\zeta \big)\big)\nonumber\\
&\quad\quad+\zeta  g_{13} \psi  (\zeta -\eta ) \sigma _{W_2}^2\big)+g_5 \big(g_{13} \psi  (-(\zeta -\eta )) \big(\psi  (\zeta -\eta )-\zeta  \sigma _{W_2}^2\big)\nonumber\\
&\quad\quad-\sqrt{\zeta } \phi  \big(\gamma +\sigma _{W_2}^2 \big(\eta '-\zeta \big)\big) \big(g_6 \sqrt{n_0} \big(\zeta  \psi -\eta  \psi -\zeta  \sigma _{W_2}^2\big)-\sqrt{\zeta } g_{14} \psi \big)\big)\\
0 &= \gamma  \sqrt{\zeta } g_7 g_{16} n_0^{3/2} \phi +\gamma  \sqrt{\zeta } g_6 g_{20} n_0^{3/2} \phi +g_8 g_{16} n_0 \psi  (\zeta -\eta )+\zeta  g_{13} g_{20} n_0 \psi -\eta  g_{13} g_{20} n_0 \psi \nonumber\\
&\quad\quad+g_{12} n_0 \big(g_{13} \psi  (\zeta -\eta )+\phi  \big(\sqrt{\zeta } g_6 \sqrt{n_0}-1\big) \big(\gamma +\sigma _{W_2}^2 \big(\eta '-\zeta \big)\big)\big)-\zeta ^{3/2} g_7 g_{16} n_0^{3/2} \phi  \sigma _{W_2}^2-\zeta ^{3/2} g_6 g_{20} n_0^{3/2} \phi  \sigma _{W_2}^2\nonumber\\
&\quad\quad+\sqrt{\zeta } \big(g_7 g_{16}+g_6 g_{20}\big) n_0^{3/2} \phi  \eta ' \sigma _{W_2}^2+\zeta ^2 g_{13} g_{16} \psi  \sigma _{W_2}^2-\zeta  \eta  g_{13} g_{16} \psi  \sigma _{W_2}^2\\
0 &= -\gamma  g_8 n_0-\sqrt{\zeta } g_{13} g_{18} \sqrt{n_0} \psi -\sqrt{\zeta } g_8 g_{19} \sqrt{n_0} \psi +\zeta  g_{12} g_{13} n_0+\zeta  g_8 g_{16} n_0+\zeta  g_{13} g_{20} n_0-\eta  g_{12} g_{13} n_0\nonumber\\
&\quad\quad-\eta  g_8 g_{16} n_0-\eta  g_{13} g_{20} n_0+\zeta  g_8 n_0 \sigma _{W_2}^2-\zeta  g_8 g_{14} n_0 \sigma _{W_2}^2-\zeta  g_{13} g_{17} n_0 \sigma _{W_2}^2-g_8 n_0 \eta ' \sigma _{W_2}^2+\zeta ^2 g_{13} g_{14} \sigma _{W_2}^4\nonumber\\
&\quad\quad+\zeta ^2 g_{13} g_{16} \sigma _{W_2}^2+g_{13} (\zeta -\eta ) \big(\psi  (\zeta -\eta )-\zeta  \sigma _{W_2}^2\big)+g_5 g_{13} (\zeta -\eta ) \big(\zeta  \psi -\eta  \psi -\zeta  \sigma _{W_2}^2\big)-\zeta  \eta  g_{13} g_{16} \sigma _{W_2}^2\\
0 &= \gamma  \sqrt{\zeta } g_5 g_7 n_0^{3/2} \phi +\gamma  \sqrt{\zeta } g_6 g_{11} n_0^{3/2} \phi +\zeta  g_5 g_8 n_0 \psi +\zeta  g_{11} g_{13} n_0 \psi -\eta  g_5 g_8 n_0 \psi -\eta  g_{11} g_{13} n_0 \psi\nonumber\\
&\quad\quad +g_3 n_0 \big(g_{13} \psi  (\zeta -\eta )+\phi  \big(\sqrt{\zeta } g_6 \sqrt{n_0}-1\big) \big(\gamma +\sigma _{W_2}^2 \big(\eta '-\zeta \big)\big)\big)+n_0 \big(g_8 \psi  (\zeta -\eta )\nonumber\\
&\quad\quad+\sqrt{\zeta } g_2 \sqrt{n_0} \phi  \big(\gamma +\sigma _{W_2}^2 \big(\eta '-\zeta \big)\big)\big)-\zeta ^{3/2} g_5 g_7 n_0^{3/2} \phi  \sigma _{W_2}^2-\zeta ^{3/2} g_6 g_{11} n_0^{3/2} \phi  \sigma _{W_2}^2+\sqrt{\zeta } g_5 g_7 n_0^{3/2} \phi  \eta ' \sigma _{W_2}^2\nonumber\\
&\quad\quad+\sqrt{\zeta } g_6 g_{11} n_0^{3/2} \phi  \eta ' \sigma _{W_2}^2+\zeta ^2 g_5 g_{13} \psi  \sigma _{W_2}^2-\zeta  \eta  g_5 g_{13} \psi  \sigma _{W_2}^2\\
0 &= -\sqrt{\zeta } g_6 \sqrt{n_0} \phi  \big(\psi  (\zeta -\eta )-\zeta  \sigma _{W_2}^2\big) \big(\gamma +\sigma _{W_2}^2 \big(\eta '-\zeta \big)\big)-\sqrt{\zeta } g_5 g_6 \sqrt{n_0} \phi  \big(\psi  (\zeta -\eta )-\zeta  \sigma _{W_2}^2\big) \big(\gamma +\sigma _{W_2}^2 \big(\eta '-\zeta \big)\big)\nonumber\\
&\quad\quad+g_9 n_0 \psi  \big(\gamma  \phi +\sigma _{W_2}^2 \big(\phi  \big(\eta '-\zeta \big)+\zeta  g_{13}\big)\big)+\zeta  g_4 \psi  \big(g_8 n_0 \sigma _{W_2}^2+g_5 \phi  \big(\gamma +\sigma _{W_2}^2 \big(\eta '-\zeta \big)\big)-\zeta  g_{13} \sigma _{W_2}^4\big)\nonumber\\
&\quad\quad-\sqrt{\zeta } \big(g_2 g_{16}+g_6 \big(g_{12}+g_{20}\big)\big) n_0^{3/2} \phi  \big(\gamma +\sigma _{W_2}^2 \big(\eta '-\zeta \big)\big)+\zeta  \psi  \sigma _{W_2}^2 \big(g_8 n_0-\zeta  g_{13} \sigma _{W_2}^2\big)\nonumber\\
&\quad\quad+\zeta  g_4 \psi  \phi  \big(\gamma +\sigma _{W_2}^2 \big(\eta '-\zeta \big)\big)\\
0 &= -\gamma  \sqrt{\zeta } g_6 g_{12} n_0^{3/2} \phi -\gamma  \sqrt{\zeta } g_7 g_{16} n_0^{3/2} \phi -\gamma  \sqrt{\zeta } g_6 g_{20} n_0^{3/2} \phi +\gamma  g_{17} n_0 \psi  \phi -\sqrt{\zeta } \phi  \big(\gamma +\sigma _{W_2}^2 \big(\eta '-\zeta \big)\big) \nonumber\\
&\quad\quad\big(g_6 \sqrt{n_0} \big(\zeta  \psi -\eta  \psi -\zeta  \sigma _{W_2}^2\big)-\sqrt{\zeta } g_{14} \psi \big)-\sqrt{\zeta } g_5 \phi  \big(\gamma +\sigma _{W_2}^2 \big(\eta '-\zeta \big)\big) \big(g_6 \sqrt{n_0} \big(\zeta  \psi -\eta  \psi -\zeta  \sigma _{W_2}^2\big)-\sqrt{\zeta } g_{14} \psi \big)\nonumber\\
&\quad\quad+\zeta ^{3/2} g_6 g_{12} n_0^{3/2} \phi  \sigma _{W_2}^2+\zeta ^{3/2} g_7 g_{16} n_0^{3/2} \phi  \sigma _{W_2}^2+\zeta ^{3/2} g_6 g_{20} n_0^{3/2} \phi  \sigma _{W_2}^2-n_0 \phi  \eta ' \sigma _{W_2}^2 \big(\sqrt{\zeta } \big(g_7 g_{16}+g_6 \big(g_{12}+g_{20}\big)\big) \sqrt{n_0}\nonumber\\
&\quad\quad-g_{17} \psi \big)+\zeta  g_8 g_{14} n_0 \psi  \sigma _{W_2}^2+\zeta  g_{13} g_{17} n_0 \psi  \sigma _{W_2}^2-\zeta  g_{17} n_0 \psi  \phi  \sigma _{W_2}^2-\zeta ^2 g_{13} g_{14} \psi  \sigma _{W_2}^4\\
0 &= -\gamma  \sqrt{\zeta } g_{12} g_{15} n_0^{3/2} \phi -\gamma  \sqrt{\zeta } g_{10} g_{16} n_0^{3/2} \phi -\gamma  \sqrt{\zeta } g_{15} g_{20} n_0^{3/2} +\gamma  g_{18} n_0 \psi  \phi -\sqrt{\zeta } \phi  \big(\gamma +\sigma _{W_2}^2 \big(\eta '-\zeta \big)\big) \nonumber\\
&\quad\quad\big(g_{15} \sqrt{n_0} \big(\zeta  \psi -\eta  \psi -\zeta  \sigma _{W_2}^2\big)-\sqrt{\zeta } g_{19} \psi \big)-\sqrt{\zeta } g_5 \phi  \big(\gamma +\sigma _{W_2}^2 \big(\eta '-\zeta \big)\big) \big(g_{15} \sqrt{n_0} \big(\zeta  \psi -\eta  \psi -\zeta  \sigma _{W_2}^2\big)-\zeta  g_{18} n_0 \psi  \phi  \sigma _{W_2}^2\nonumber\\
&\quad\quad-\sqrt{\zeta } g_{19} \psi \big)+\zeta ^{3/2} g_{12} g_{15} n_0^{3/2} \phi  \sigma _{W_2}^2+\zeta ^{3/2} g_{10} g_{16} n_0^{3/2} \phi  \sigma _{W_2}^2+\zeta ^{3/2} g_{15} g_{20} n_0^{3/2} \phi  \sigma _{W_2}^2-\zeta ^2 g_{13} g_{19} \psi  \sigma _{W_2}^4\nonumber\\
&\quad\quad-n_0 \phi  \eta ' \sigma _{W_2}^2 \big(\sqrt{\zeta } \big(g_{10} g_{16}+g_{15} \big(g_{12}+g_{20}\big)\big) \sqrt{n_0}-g_{18} \psi \big)+\zeta  g_{13} g_{18} n_0 \psi  \sigma _{W_2}^2+\zeta  g_8 g_{19} n_0 \psi  \sigma _{W_2}^2
\end{align}
\end{subequations}
}
After some straightforward algebra, one can eliminate all $g_i$ except for $g_6$ and $g_{13}$, which satisfy coupled polynomial equations. Those equations can be shown to be identical to eqn.~(\ref{eqn:tau}) by invoking the change of variables,
\begin{equation}
    g_6 = - \frac{\sqrt{\zeta} \psi}{\sqrt{n_0} \phi} \tau_2\,,\quad\text{and}\quad
    g_{13} = \big(\gamma +\sigma _{W_2}^2 \big(\eta '-\zeta \big)\big)\tau_1\,.
\end{equation}
In terms of $\tau_1$, $\tau_2$, and their derivatives $\tau_1'$~(\ref{eq:dtau1}), $\tau_2'$~(\ref{eq:dtau2}), the error $E_{33}$ is given by,
\begin{equation}
    E_{33} = \sigma_{W_2}^2 \left[\left(\tau_1 + (\sigma_{W_2}^2(\eta'-\zeta) + \gamma)\tau_1'+\sigma_{W_2}^2\zeta \tau_2'\right)/\tau_1^2 -\eta\right] - E_{22}\,.
\end{equation}

\section{Exact asymptotics for bias and variance terms}
Following Sec.~\ref{sec_var_decomp_sm}, for each random variable in question we introduce an iid copy of it denoted by a tilde. Using this simplifying notation and recalling $P=\{W_1,W_2\}$ we have,
\begin{align}
\bias &= \mathbb{E}_{(\bfx,y)} (y - \mathbb{E}_{(P,X,\varepsilon)}\hat{y}(\bfx; P,X,\varepsilon))^2\\
&= \mathbb{E}_{(\bfx,y)}\mathbb{E}_{(P,X,\varepsilon)}\mathbb{E}_{(\tilde{P},\tilde{X},\tilde{\varepsilon})}(y - \hat{y}(\bfx; P,X,\varepsilon))(y - \hat{y}(\bfx; \tilde{P},\tilde{X},\tilde{\varepsilon}))\\
&= 1 + E_{21} + H_{000}\,,
\end{align}
where $E_{21}$ was computed previously and $H_{000}$ and the other $H_{ijk}$(also defined above) are, \jp{Need to reconcile the ordering of the indices with the previous sections.}
\begin{align}
H_{000} &= \mathbb{E}\hat{y}(\bfx; P,X,\varepsilon)\hat{y}(\bfx; \tilde{P},\tilde{X},\tilde{\varepsilon})\\
H_{001} &= \mathbb{E}\hat{y}(\bfx; P,X,\varepsilon)\hat{y}(\bfx; \tilde{P},\tilde{X},\varepsilon)\\
H_{010} &= \mathbb{E}\hat{y}(\bfx; P,X,\varepsilon)\hat{y}(\bfx; \tilde{P},X,\tilde{\varepsilon})\\
H_{011} &= \mathbb{E}\hat{y}(\bfx; P,X,\varepsilon)\hat{y}(\bfx; \tilde{P},X,\varepsilon)\\
H_{100} &= \mathbb{E}\hat{y}(\bfx; P,X,\varepsilon)\hat{y}(\bfx; P,\tilde{X},\tilde{\varepsilon})\\
H_{101} &= \mathbb{E}\hat{y}(\bfx; P,X,\varepsilon)\hat{y}(\bfx; P,\tilde{X},\varepsilon)\\
H_{110} &= \mathbb{E}\hat{y}(\bfx; P,X,\varepsilon)\hat{y}(\bfx; P,X,\tilde{\varepsilon})\\
H_{111} &= \mathbb{E}\hat{y}(\bfx; P,X,\varepsilon)\hat{y}(\bfx; P,X,\varepsilon)\,,
\end{align}
where the expectations are over $\bfx,P,X,\varepsilon, \tilde{P},\tilde{X}$, and $\tilde{\varepsilon}$. Recalling the definition of $\hat{y}$,
\eq{
\hat{y}(\bfx; P, X, \varepsilon) \deq N_0(\bfx; P) + (Y(X,\epsilon) - N_0(X; P))K(X,X;P)^{-1}K(X,\bfx;P)\,
}
and the techniques described in the previous section, it is straightforward to analyze each of the above terms, which we do in the following subsections. To aid those calculations, we first note that, similar to above, we can write,
\begin{align}
\mathbb{E}_\bfx K(X,\bfx;P)K(\bfx,\tilde{X};\tilde{P}) & = \frac{\sigma_{W_2}^4 \zeta^2}{n_0^2} X^\top \tilde{X} + \frac{\sigma_{W_2}^2 \zeta^{3/2}}{n_0^{3/2} n_1}(X^\top W_1^T \tilde{F} + F^\top \tilde{W}_1 \tilde{X}) + \frac{\zeta}{n_0 n_1^2}F^\top W_1 \tilde{W}_1^\top\tilde{F}\\
& = (\frac{\sigma_{W_2}^2 \zeta}{n_0} X^\top+\frac{\zeta}{\sqrt{n_0}n_1}F^\top W_1)(\frac{\sigma_{W_2}^2 \zeta}{n_0} \tilde{X}^\top+\frac{\sqrt{\zeta}}{\sqrt{n_0}n_1}\tilde{F}^\top \tilde{W}_1)^\top\\
\mathbb{E}_\bfx K(X,\bfx;P)K(\bfx,X;\tilde{P}) & = (\frac{\sigma_{W_2}^2 \zeta}{n_0} X^\top+\frac{\zeta}{\sqrt{n_0}n_1}F^\top W_1)(\frac{\sigma_{W_2}^2 \zeta}{n_0} X^\top+\frac{\sqrt{\zeta}}{\sqrt{n_0}n_1}f(\tilde{W}_1X)^\top \tilde{W}_1)^\top\\
\mathbb{E}_\bfx K(X,\bfx;P)K(\bfx,\tilde{X};P) & = (\frac{\sigma_{W_2}^2 \zeta}{n_0} X^\top+\frac{\zeta}{\sqrt{n_0}n_1}F^\top W_1)(\frac{\sigma_{W_2}^2 \zeta}{n_0} \tilde{X}^\top+\frac{\sqrt{\zeta}}{\sqrt{n_0}n_1}f(W_1\tilde{X})^\top W_1)^\top + \frac{\eta-\zeta}{n_1^2}F^\top f(W_1\tilde{X})
\end{align}

\subsection{$H_{000}$}
\begin{align}
H_{000} &= \mathbb{E}\hat{y}(\bfx; P,X,\varepsilon)\hat{y}(\bfx; \tilde{P},\tilde{X},\tilde{\varepsilon})\\
&= \mathbb{E} K(\bfx,\tilde{X};\tilde{P})K(\tilde{X},\tilde{X};\tilde{P})^{-1} Y(\tilde{X},\tilde{\varepsilon})^\top Y(X,\varepsilon)K(X,X;P)^{-1}K(X,\bfx;P)\\
&= \mathbb{E} \tr \big(K(\tilde{X},\tilde{X};\tilde{P})^{-1} \tilde{X}^\top X K(X,X;P)^{-1}K(X,\bfx;P)K(\bfx,\tilde{X};\tilde{P})\big)\\
&= \mathbb{E} \tr \big(K(\tilde{X},\tilde{X};\tilde{P})^{-1} \tilde{X}^\top X K(X,X;P)^{-1}  (\frac{\sigma_{W_2}^2 \zeta}{n_0} X^\top+\frac{\sqrt{\zeta}}{\sqrt{n_0}n_1}F^\top W_1)(\frac{\sigma_{W_2}^2 \zeta}{n_0} \tilde{X}^\top+\frac{\sqrt{\zeta}}{\sqrt{n_0}n_1}\tilde{F}^\top \tilde{W}_1)^\top\big)\\
&=\tr\big(X K^{-1} (\frac{\sigma_{W_2}^2 \zeta}{n_0} X^\top+\frac{\sqrt{\zeta}}{\sqrt{n_0}n_1}F^\top W_1)\big)^2\\
&\equiv E_4
\end{align}
A linear pencil for $E_4$ follows from the representation,
\eq{
E_{4} = \tr(U_{4}^T Q_{4}^{-1} V_{4})^2\,,
}
where,
\eq{
    U_{4}^T  = 
\begin{pmatrix}
0 & \frac{\zeta  I_{n_0} \sigma_{W_2}^2}{n_0} & 0 & 0 & \frac{\sqrt{\zeta } I_{m}}{\sqrt{n_0} n_1}
\end{pmatrix}\,,\quad
V_{4}^T = 
\begin{pmatrix}
0 & -\frac{n_0 I_{n_0}}{\zeta  \sigma_{W_2}^2} & 0 & 0 & \frac{\sqrt{n_0} n_1 I_{m}}{\sqrt{\zeta }}
\end{pmatrix}
}
and,
\eq{
Q_{4} =
\left(
\begin{smallmatrix}
I_m \left(\gamma +\sigma_{W_2}^2 \left(\eta '-\zeta \right)\right) & \frac{\zeta  X^\top \sigma_{W_2}^2}{n_0} & \frac{\sqrt{\eta -\zeta } \Theta_F^\top}{n_1} & \frac{\sqrt{\zeta } X^\top}{\sqrt{n_0} n_1} & 0 \\
 -X & I_{n_0} & 0 & 0 & 0 \\
 -\sqrt{\eta -\zeta } \Theta_F & -\frac{\sqrt{\zeta } W_1}{\sqrt{n_0}} & I_{n_1} & 0 & 0 \\
 0 & 0 & -W_1^\top & I_{n_0} & 0 \\
 0 & 0 & 0 & 0 & I_{m}
\end{smallmatrix}
\right)\,.
}
The equations satisfied by the operator-valued Stieltjes transform $G$ of $\bar{Q}_{4}$ induce the following structure on $G$,
\eq{
G = \begin{pmatrix} 0 & G_{12} \\ G_{12}^\top & 0 \end{pmatrix}\,,
}
where,
\eq{
G_{12} =\left(
\begin{array}{ccccc}
 g_3 & 0 & 0 & 0 & 0 \\
 0 & g_4 & 0 & g_2 & 0 \\
 0 & 0 & g_6 & 0 & 0 \\
 0 & g_7 & 0 & g_5 & 0 \\
 0 & 0 & 0 & 0 & g_1 \\
\end{array}
\right)\,,
}
and the independent entry-wise component functions $g_i$ give the error $E_4$ through the relation,
\eq{
E_4 = (g_1-g_4)^2\,,
}
and themselves satisfy the following system of polynomial equations,
\begin{subequations}
\begin{align}
0 &= 1-g_1\\
0 &= \sqrt{\zeta } g_4 g_6 \sqrt{n_0}-g_7 \psi\\
0 &= \sqrt{\zeta } g_2 g_6 \sqrt{n_0}-g_5 \psi +\psi\\
0 &= -\sqrt{\zeta } g_3 g_4 \psi -g_2 \sqrt{n_0} \phi  \big(\gamma +\sigma _{W_2}^2 \big(\eta '-\zeta \big)\big)\\
0 &= -\sqrt{\zeta } g_3 g_7 \psi -\big(g_5-1\big) \sqrt{n_0} \phi  \big(\gamma +\sigma _{W_2}^2 \big(\eta '-\zeta \big)\big)\\
0 &= g_2 \sqrt{n_0} \phi  \big(\gamma +\sigma _{W_2}^2 \big(\eta '-\zeta \big)\big)+g_3 \big(\sqrt{\zeta } g_5 \psi +\zeta  g_2 \sqrt{n_0} \sigma _{W_2}^2\big)\\
0 &= \big(g_4-1\big) \sqrt{n_0} \phi  \big(\gamma +\sigma _{W_2}^2 \big(\eta '-\zeta \big)\big)+g_3 \big(\sqrt{\zeta } g_7 \psi +\zeta  g_4 \sqrt{n_0} \sigma _{W_2}^2\big)\\
0 &= \sqrt{\zeta } g_5 g_6 \sqrt{n_0} \phi  \big(\sigma _{W_2}^2 \big(\zeta -\eta '\big)-\gamma \big)+g_7 \psi  \big(\gamma  \phi +\sigma _{W_2}^2 \big(-\zeta  \phi +\phi  \eta '+\zeta  g_3\big)\big)\\
0 &= g_6 \big(g_3 \psi  (\zeta -\eta )+\phi  \big(\sqrt{\zeta } g_2 \sqrt{n_0}-1\big) \big(\gamma +\sigma _{W_2}^2 \big(\eta '-\zeta \big)\big)\big)+\phi  \big(\gamma +\sigma _{W_2}^2 \big(\eta '-\zeta \big)\big)\\
0 &= g_3 \big(\sqrt{\zeta } g_7 \psi +\sqrt{n_0} \big(\gamma +g_6 (\eta -\zeta )+\sigma _{W_2}^2 \big(\eta '+\zeta  \big(g_4-1\big)\big)\big)\big)-\sqrt{n_0} \big(\gamma +\sigma _{W_2}^2 \big(\eta '-\zeta \big)\big)\\
0 &= g_4 \psi  \big(\gamma  \phi +\sigma _{W_2}^2 \big(-\zeta  \phi +\phi  \eta '+\zeta  g_3\big)\big)-\phi  \big(\sqrt{\zeta } g_2 g_6 \sqrt{n_0}+\psi \big) \big(\gamma +\sigma _{W_2}^2 \big(\eta '-\zeta \big)\big)
\end{align}

After some straightforward algebra, one can eliminate all $g_i$ except for $g_2$ and $g_{3}$, which satisfy coupled polynomial equations. Those equations can be shown to be identical to eqn.~\eqref{eqn:tau} by invoking the change of variables,
\begin{equation}
    g_2 = - \frac{\sqrt{\zeta} \psi}{\sqrt{n_0} \phi} \tau_2\,,\quad\text{and}\quad
    g_3 = \big(\gamma +\sigma _{W_2}^2 \big(\eta '-\zeta \big)\big)\tau_1\,.
\end{equation}

In terms of the related variables defined in eqn.~\eqref{eq:ttau}, the error $E_{4}$ is given by,
\begin{equation}
    E_{4} = \ttau_2^2\,.
\end{equation}

\end{subequations}

\subsection{$H_{001}$}
\begin{align}
H_{001} &= \mathbb{E}\hat{y}(\bfx; P,X,\varepsilon)\hat{y}(\bfx; \tilde{P},\tilde{X},\varepsilon)\\
&= \mathbb{E} K(\bfx,\tilde{X};\tilde{P})K(\tilde{X},\tilde{X};\tilde{P})^{-1} Y(\tilde{X},\varepsilon)^\top Y(X,\varepsilon)K(X,X;P)^{-1}K(X,\bfx;P)\\
&= \mathbb{E} \tr \big(K(\tilde{X},\tilde{X};\tilde{P})^{-1} \tilde{X}^\top X K(X,X;P)^{-1}K(X,\bfx;P)K(\bfx,\tilde{X};\tilde{P})\big)\\
& = H_{000}
\end{align}

\subsection{$H_{010}$}
\begin{align}
H_{010} &= \mathbb{E}\hat{y}(\bfx; P,X,\varepsilon)\hat{y}(\bfx; \tilde{P},X,\tilde{\varepsilon})\\
&= \mathbb{E} K(\bfx,X;\tilde{P})K(X,X;\tilde{P})^{-1} Y(X,\tilde{\varepsilon})^\top Y(X,\varepsilon)K(X,X;P)^{-1}K(X,\bfx;P)\\
&= \mathbb{E} \tr \big(K(X,X;\tilde{P})^{-1} X^\top X K(X,X;P)^{-1}K(X,\bfx;P)K(\bfx,X;\tilde{P})\big)\\
&= \mathbb{E} \tr \Big(K(X,X;\tilde{P})^{-1} X^\top X K(X,X;P)^{-1}K(X,\bfx;P)K(\bfx,X;\tilde{P})\\
&\qquad\qquad\; \times (\frac{\sigma_{W_2}^2 \zeta}{n_0} X^\top+\frac{\zeta}{\sqrt{n_0}n_1}F^\top W_1)(\frac{\sigma_{W_2}^2 \zeta}{n_0} X^\top+\frac{\sqrt{\zeta}}{\sqrt{n_0}n_1}f(\tilde{W}_1X)^\top \tilde{W}_1)^\top\Big)\\
&\equiv E_5\,.
\end{align}
A linear pencil for $E_5$ follows from the representation,
\eq{
E_{5} = \tr(U_{5}^T Q_{5}^{-1} V_{5})\,,
}
where,
\eq{
    U_{5}^T  = 
\begin{pmatrix}
 0 & \frac{I_{n_0}}{m} & 0 & 0 & 0 & 0 & 0 & 0 & 0 
\end{pmatrix}\,,\quad
V_{5}^T = \begin{pmatrix}
0 & 0 & 0 & 0 & 0 & 0 & 0 & 0 & -\frac{\sqrt{n_0} n_1 I_{n_0}}{\sqrt{\zeta }}
\end{pmatrix}
}
and,
\eq{
Q_{5} =
\left(\begin{smallmatrix}
 I_m \left(\gamma +\sigma_{W_2}^2 \left(\eta '-\zeta \right)\right) & 0 & \frac{\zeta  X^\top \sigma_{W_2}^2}{n_0} & \frac{\sqrt{\eta -\zeta } \Theta_F^\top}{n_1} & \frac{\sqrt{\zeta } X^\top}{\sqrt{n_0} n_1} & -\frac{\zeta ^2 m X^\top \sigma_{W_2}^4}{n_0^2} & 0 & 0 & 0 \\
 -X & I_{n_0} & 0 & 0 & 0 & 0 & 0 & 0 & 0 \\
 -X & 0 & I_{n_0} & 0 & 0 & 0 & 0 & \frac{\sqrt{\zeta } m \tilde{W}_1^\top}{\sqrt{n_0} n_1} & 0 \\
 -\sqrt{\eta -\zeta } \Theta_F & 0 & -\frac{\sqrt{\zeta } W_1}{\sqrt{n_0}} & I_{n_1} & 0 & \frac{\zeta ^{3/2} m W_1 \sigma_{W_2}^2}{n_0^{3/2}} & 0 & 0 & 0 \\
 0 & 0 & 0 & -W_1^\top & I_{n_0} & 0 & 0 & 0 & 0 \\
 0 & 0 & 0 & 0 & 0 & I_{n_0} & -X & 0 & 0 \\
 0 & 0 & 0 & 0 & 0 & \frac{\zeta  X^\top \sigma_{W_2}^2}{n_0} & I_m \left(\gamma +\sigma_{W_2}^2 \left(\eta '-\zeta \right)\right) & \frac{\sqrt{\eta -\zeta } \tilde{\Theta }_F^\top}{n_1} & \frac{\sqrt{\zeta } X^\top}{\sqrt{n_0} n_1} \\
 0 & 0 & 0 & 0 & 0 & -\frac{\sqrt{\zeta } \tilde{W}_1}{\sqrt{n_0}} & -\sqrt{\eta -\zeta } \tilde{\Theta }_F & I_{n_1} & 0 \\
 0 & 0 & 0 & 0 & 0 & 0 & 0 & -\tilde{W}_1^\top & I_{n_0} \\
\end{smallmatrix}\right)\,.
}
The equations satisfied by the operator-valued Stieltjes transform $G$ of $\bar{Q}_{5}$ induce the following structure on $G$,
\eq{
G = \begin{pmatrix} 0 & G_{12} \\ G_{12}^\top & 0 \end{pmatrix}\,,
}
where,
\eq{
G_{12} =\left(
\begin{array}{ccccccccc}
 g_9 & 0 & 0 & 0 & 0 & 0 & g_6 & 0 & 0 \\
 0 & g_1 & g_5 & 0 & g_8 & g_3 & 0 & 0 & g_2 \\
 0 & 0 & g_{10} & 0 & g_8 & g_{13} & 0 & 0 & g_7 \\
 0 & 0 & 0 & g_{12} & 0 & 0 & 0 & 0 & 0 \\
 0 & 0 & g_{15} & 0 & g_{11} & g_{14} & 0 & 0 & g_4 \\
 0 & 0 & 0 & 0 & 0 & g_{10} & 0 & 0 & g_8 \\
 0 & 0 & 0 & 0 & 0 & 0 & g_9 & 0 & 0 \\
 0 & 0 & 0 & 0 & 0 & 0 & 0 & g_{12} & 0 \\
 0 & 0 & 0 & 0 & 0 & g_{15} & 0 & 0 & g_{11} \\
\end{array}
\right)\,,
}
and the independent entry-wise component functions $g_i$ give the error $E_5$ through the relation,
\eq{
E_5 = -\frac{g_2 \sqrt{n_0} \phi }{\sqrt{\zeta } \psi }\,,
}
and themselves satisfy the following system of polynomial equations,
{\small
\begin{subequations}
\begin{align}
0 &= 1-g_1\\
0 &= \sqrt{\zeta } g_{10} g_{12} \sqrt{n_0}-g_{15} \psi\\
0 &= \sqrt{\zeta } g_8 g_{12} \sqrt{n_0}-g_{11} \psi +\psi\\
0 &= \sqrt{\zeta } g_{12} \sqrt{n_0} \big(g_7 \phi -\zeta  g_8 \sigma _{W_2}^2\big)-g_4 \psi  \phi\\
0 &= \sqrt{\zeta } g_{12} \sqrt{n_0} \big(g_{13} \phi -\zeta  g_{10} \sigma _{W_2}^2\big)-g_{14} \psi  \phi\\
0 &= -\sqrt{\zeta } g_9 g_{10} \psi -g_8 \sqrt{n_0} \phi  \big(\gamma +\sigma _{W_2}^2 \big(\eta '-\zeta \big)\big)\\
0 &= -\sqrt{\zeta } g_9 g_{15} \psi -\big(g_{11}-1\big) \sqrt{n_0} \phi  \big(\gamma +\sigma _{W_2}^2 \big(\eta '-\zeta \big)\big)\\
0 &= -\sqrt{\zeta } g_1 g_9 \psi -\sqrt{\zeta } g_5 g_9 \psi -g_8 \sqrt{n_0} \phi  \big(\gamma +\sigma _{W_2}^2 \big(\eta '-\zeta \big)\big)\\
0 &= -\sqrt{\zeta } g_6 g_{10} \psi -\sqrt{\zeta } g_9 g_{13} \psi -g_7 \sqrt{n_0} \phi  \big(\gamma +\sigma _{W_2}^2 \big(\eta '-\zeta \big)\big)\\
0 &= -\sqrt{\zeta } g_9 g_{14} \psi -\sqrt{\zeta } g_6 g_{15} \psi -g_4 \sqrt{n_0} \phi  \big(\gamma +\sigma _{W_2}^2 \big(\eta '-\zeta \big)\big)\\
0 &= g_8 \sqrt{n_0} \phi  \big(\gamma +\sigma _{W_2}^2 \big(\eta '-\zeta \big)\big)+g_9 \big(\sqrt{\zeta } g_{11} \psi +\zeta  g_8 \sqrt{n_0} \sigma _{W_2}^2\big)\\
0 &= g_5 \sqrt{n_0} \phi  \big(\gamma +\sigma _{W_2}^2 \big(\eta '-\zeta \big)\big)+g_9 \big(\sqrt{\zeta } g_{15} \psi +\zeta  g_{10} \sqrt{n_0} \sigma _{W_2}^2\big)\\
0 &= \big(g_{10}-1\big) \sqrt{n_0} \phi  \big(\gamma +\sigma _{W_2}^2 \big(\eta '-\zeta \big)\big)+g_9 \big(\sqrt{\zeta } g_{15} \psi +\zeta  g_{10} \sqrt{n_0} \sigma _{W_2}^2\big)\\
0 &= -\sqrt{\zeta } \big(\big(g_1+g_5\big) g_6+g_3 g_9\big) \psi -\gamma  g_2 \sqrt{n_0} \phi +\zeta  g_2 \sqrt{n_0} \phi  \sigma _{W_2}^2+g_2 \sqrt{n_0} (-\phi ) \eta ' \sigma _{W_2}^2\\
0 &= \sqrt{\zeta } g_{11} g_{12} \sqrt{n_0} \phi  \big(\sigma _{W_2}^2 \big(\zeta -\eta '\big)-\gamma \big)+g_{15} \psi  \big(\gamma  \phi +\sigma _{W_2}^2 \big(-\zeta  \phi +\phi  \eta '+\zeta  g_9\big)\big)\\
0 &= g_{12} \big(g_9 \psi  (\zeta -\eta )+\phi  \big(\sqrt{\zeta } g_8 \sqrt{n_0}-1\big) \big(\gamma +\sigma _{W_2}^2 \big(\eta '-\zeta \big)\big)\big)+\phi  \big(\gamma +\sigma _{W_2}^2 \big(\eta '-\zeta \big)\big)\\
0 &= g_{10} \psi  \big(\gamma  \phi +\sigma _{W_2}^2 \big(-\zeta  \phi +\phi  \eta '+\zeta  g_9\big)\big)-\phi  \big(\sqrt{\zeta } g_8 g_{12} \sqrt{n_0}+\psi \big) \big(\gamma +\sigma _{W_2}^2 \big(\eta '-\zeta \big)\big)\\
0 &= g_9 \big(\sqrt{\zeta } g_{15} \psi +\sqrt{n_0} \big(\gamma +g_{12} (\eta -\zeta )+\sigma _{W_2}^2 \big(\eta '+\zeta  \big(g_{10}-1\big)\big)\big)\big)-\sqrt{n_0} \big(\gamma +\sigma _{W_2}^2 \big(\eta '-\zeta \big)\big)\\
0 &= -\sqrt{\zeta } g_8 g_{12} \sqrt{n_0} \phi  \big(\gamma +\sigma _{W_2}^2 \big(\eta '-\zeta \big)\big)+g_5 \psi  \big(\gamma  \phi +\sigma _{W_2}^2 \big(-\zeta  \phi +\phi  \eta '+\zeta  g_9\big)\big)+\zeta  g_1 g_9 \psi  \sigma _{W_2}^2\\
0 &= \sqrt{\zeta } g_4 g_9 \psi  \phi +\sqrt{n_0} \big(g_2 \phi ^2 \big(\gamma +\sigma _{W_2}^2 \big(\eta '-\zeta \big)\big)+\zeta  g_9 \sigma _{W_2}^2 \big(g_7 \phi -\zeta  g_8 \sigma _{W_2}^2\big)\big)+g_6 \big(\sqrt{\zeta } g_{11} \psi  \phi +\zeta  g_8 \sqrt{n_0} \phi  \sigma _{W_2}^2\big)\\
0 &= g_6 \phi  \big(\sqrt{\zeta } g_{15} \psi +\sqrt{n_0} \big(\gamma +g_{12} (\eta -\zeta )+\sigma _{W_2}^2 \big(\eta '+\zeta  \big(g_{10}-1\big)\big)\big)\big)\nonumber\\
&\quad\quad+g_9 \big(\sqrt{\zeta } g_{14} \psi  \phi +\zeta  \sqrt{n_0} \sigma _{W_2}^2 \big(g_{13} \phi -\zeta  g_{10} \sigma _{W_2}^2\big)\big)\\
0 &= \phi  \big(g_3 \sqrt{n_0} \phi  \big(\gamma +\sigma _{W_2}^2 \big(\eta '-\zeta \big)\big)+g_6 \big(\sqrt{\zeta } g_{15} \psi +\zeta  g_{10} \sqrt{n_0} \sigma _{W_2}^2\big)\big)+g_9 \big(\sqrt{\zeta } g_{14} \psi  \phi +\zeta  \sqrt{n_0} \sigma _{W_2}^2 \big(g_{13} \phi -\zeta  g_{10} \sigma _{W_2}^2\big)\big)\\
0 &= \phi  \big(\sqrt{n_0} \big(\zeta  g_{10} g_{12}+g_{13} \phi \big) \big(\gamma +\sigma _{W_2}^2 \big(\eta '-\zeta \big)\big)+g_6 \big(\sqrt{\zeta } g_{15} \psi +\zeta  g_{10} \sqrt{n_0} \sigma _{W_2}^2\big)\big)\nonumber\\
&\quad\quad+g_9 \big(\sqrt{\zeta } g_{14} \psi  \phi +\zeta  \sqrt{n_0} \sigma _{W_2}^2 \big(g_{13} \phi -\zeta  g_{10} \sigma _{W_2}^2\big)\big)\\
0 &= \sqrt{\zeta } g_4 g_9 \psi  \phi +\sqrt{n_0} \big(g_7 \phi  \big(\gamma  \phi +\sigma _{W_2}^2 \big(-\zeta  \phi +\phi  \eta '+\zeta  g_9\big)\big)+\zeta  g_8 \big(g_{12} \phi  \big(\gamma +\sigma _{W_2}^2 \big(\eta '-\zeta \big)\big)-\zeta  g_9 \sigma _{W_2}^4\big)\big)\nonumber\\
&\quad\quad+g_6 \big(\sqrt{\zeta } g_{11} \psi  \phi +\zeta  g_8 \sqrt{n_0} \phi  \sigma _{W_2}^2\big)\\
0 &= \sqrt{\zeta } g_{12} \phi  \big(\gamma +\sigma _{W_2}^2 \big(\eta '-\zeta \big)\big) \big(\sqrt{\zeta } g_{15} \psi +\sqrt{n_0} \big(\zeta  g_{11} \sigma _{W_2}^2-g_4 \phi \big)\big)+g_{14} \psi  \phi  \big(\gamma  \phi +\sigma _{W_2}^2 \big(-\zeta  \phi +\phi  \eta '+\zeta  g_9\big)\big)\nonumber\\
&\quad\quad+\zeta  g_{15} \psi  \sigma _{W_2}^2 \big(g_6 \phi -\zeta  g_9 \sigma _{W_2}^2\big)\\
0 &= \phi  \big(g_{13} \psi  \big(\gamma  \phi +\sigma _{W_2}^2 \big(-\zeta  \phi +\phi  \eta '+\zeta  g_9\big)\big)-\sqrt{\zeta } g_{12} \sqrt{n_0} \big(g_7 \phi -\zeta  g_8 \sigma _{W_2}^2\big) \big(\gamma +\sigma _{W_2}^2 \big(\eta '-\zeta \big)\big)\big)\nonumber\\
&\quad\quad+\zeta  g_{10} \psi  \big(g_{12} \phi  \big(\gamma +\sigma _{W_2}^2 \big(\eta '-\zeta \big)\big)-\zeta  g_9 \sigma _{W_2}^4+g_6 \phi  \sigma _{W_2}^2\big)\\
0 &= -\sqrt{\zeta } g_2 g_{12} \sqrt{n_0} \phi ^2 \big(\gamma +\sigma _{W_2}^2 \big(\eta '-\zeta \big)\big)+\zeta  \sigma _{W_2}^2 \big(\sqrt{\zeta } g_8 g_{12} \sqrt{n_0} \phi  \big(\gamma +\sigma _{W_2}^2 \big(\eta '-\zeta \big)\big)+g_1 \psi  \big(g_6 \phi -\zeta  g_9 \sigma _{W_2}^2\big)\big)\nonumber\\
&\quad\quad+g_3 \psi  \phi  \big(\gamma  \phi +\sigma _{W_2}^2 \big(\phi  \big(\eta '-\zeta \big)+\zeta  g_9\big)\big)+\zeta  g_5 \psi  \big(g_{12} \phi  \big(\gamma +\sigma _{W_2}^2 \big(\eta '-\zeta \big)\big)-\zeta  g_9 \sigma _{W_2}^4+g_6 \phi  \sigma _{W_2}^2\big)
\end{align}
\end{subequations}
}

After some straightforward algebra, one can eliminate all $g_i$ except for $g_8$ and $g_9$, which satisfy coupled polynomial equations. Those equations can be shown to be identical to eqn.~\eqref{eqn:tau} by invoking the change of variables,
\begin{equation}
    g_8 = - \frac{\sqrt{\zeta} \psi}{\sqrt{n_0} \phi} \tau_2\,,\quad\text{and}\quad
    g_9 = \big(\gamma +\sigma _{W_2}^2 \big(\eta '-\zeta \big)\big)\tau_1\,.
\end{equation}

In terms of the related variables defined in eqn.~\eqref{eq:ttau}, the error $E_{4}$ is given by,
\begin{equation}
    E_{5} = \ttau_2^2 (1+\phi+2 \ttau_2 \phi)/(1- \ttau_2^2 \phi)\,.
\end{equation}

\subsection{$H_{011}$}
\begin{align}
H_{011} &= \mathbb{E}\hat{y}(\bfx; P,X,\varepsilon)\hat{y}(\bfx; \tilde{P},X,\tilde{\varepsilon})\\
&= \mathbb{E} K(\bfx,X;\tilde{P})K(X,X;\tilde{P})^{-1} Y(X,\tilde{\varepsilon})^\top Y(X,\varepsilon)K(X,X;P)^{-1}K(X,\bfx;P)\\
&= \mathbb{E} \tr \big(K(X,X;\tilde{P})^{-1} (X^\top X + \sigma_{\varepsilon}^2 n_1 I_m) K(X,X;P)^{-1}K(X,\bfx;P)K(\bfx,X;\tilde{P})\big)\\
&= \mathbb{E} \tr \Big(K(X,X;\tilde{P})^{-1} (X^\top X + \sigma_{\varepsilon}^2 n_1 I_m) K(X,X;P)^{-1}K(X,\bfx;P)K(\bfx,X;\tilde{P})\\
&\qquad\qquad\; \times (\frac{\sigma_{W_2}^2 \zeta}{n_0} X^\top+\frac{\zeta}{\sqrt{n_0}n_1}F^\top W_1)(\frac{\sigma_{W_2}^2 \zeta}{n_0} X^\top+\frac{\sqrt{\zeta}}{\sqrt{n_0}n_1}f(\tilde{W}_1X)^\top \tilde{W}_1)^\top\Big)\\
&\equiv H_{010}+E_6\,,
\end{align}
where,
\begin{align}
E_6 &= \sigma_{\varepsilon}^2 n_1 \mathbb{E} \tr \Big(K(X,X;\tilde{P})^{-1} K(X,X;P)^{-1} \\ &\quad\quad\times(\frac{\sigma_{W_2}^2 \zeta}{n_0} X^\top+\frac{\zeta}{\sqrt{n_0}n_1}F^\top W_1)(\frac{\sigma_{W_2}^2 \zeta}{n_0} X^\top+\frac{\sqrt{\zeta}}{\sqrt{n_0}n_1}f(\tilde{W}_1X)^\top \tilde{W}_1)^\top\Big)\,.
\end{align}
A linear pencil for $E_6$ follows from the representation,
\eq{
E_{6} = \tr(U_{6}^T Q_{6}^{-1} V_{6})\,,
}
where,
\eq{
    U_{6}^T  = 
\begin{pmatrix}
\sigma_{\varepsilon }^2 I_m & 0 & 0 & 0 & 0 & 0 & 0 & 0 
\end{pmatrix}\,,\quad
V_{6}^T = \begin{pmatrix}
 0 & 0 & 0 & 0 & 0 & \frac{I_m}{\gamma +\sigma_{W_2}^2 \left(\eta '-\zeta \right)} & 0 & 0 \\
\end{pmatrix}
}
and,
\eq{
Q_{6} = \left(\begin{smallmatrix}
I_m \left(\gamma +\sigma_{W_2}^2 \left(\eta '-\zeta \right)\right) & \frac{\zeta  X^\top \sigma_{W_2}^2}{n_0} & \frac{\sqrt{\eta -\zeta } \Theta_F^\top}{n_1} & \frac{\sqrt{\zeta } X^\top}{\sqrt{n_0} n_1} & -\frac{\zeta ^2 m X^\top \sigma_{W_2}^4}{n_0^2} & 0 & -\frac{\zeta ^{3/2} m X^\top \sigma_{W_2}^2}{n_0^{3/2} n_1} & 0 \\
 -X & I_{n_0} & 0 & 0 & 0 & 0 & 0 & 0 \\
 -\sqrt{\eta -\zeta } \Theta_F & -\frac{\sqrt{\zeta } W_1}{\sqrt{n_0}} & I_{n_1} & 0 & \frac{\zeta ^{3/2} m W_1 \sigma_{W_2}^2}{n_0^{3/2}} & 0 & \frac{\zeta  m W_1}{n_0 n_1} & 0 \\
 0 & 0 & -W_1^\top & I_{n_0} & 0 & 0 & 0 & 0 \\
 0 & 0 & 0 & 0 & I_{n_0} & -X & 0 & 0 \\
 0 & 0 & 0 & 0 & \frac{\zeta  X^\top \sigma_{W_2}^2}{n_0} & I_m \left(\gamma +\sigma_{W_2}^2 \left(\eta '-\zeta \right)\right) & \frac{\sqrt{\zeta } X^\top}{\sqrt{n_0} n_1} & \frac{\sqrt{\eta -\zeta } \tilde{\Theta }_F^\top}{n_1} \\
 0 & 0 & 0 & 0 & 0 & 0 & I_{n_0} & -\tilde{W}_1^\top \\
 0 & 0 & 0 & 0 & -\frac{\sqrt{\zeta } \tilde{W}_1}{\sqrt{n_0}} & -\sqrt{\eta -\zeta } \tilde{\Theta }_F & 0 & I_{n_1} \\
\end{smallmatrix}\right)\,.
}
The equations satisfied by the operator-valued Stieltjes transform $G$ of $\bar{Q}_{6}$ induce the following structure on $G$,
\eq{
G = \begin{pmatrix} 0 & G_{12} \\ G_{12}^\top & 0 \end{pmatrix}\,,
}
where,
\eq{
G_{12} =\left(
\begin{array}{cccccccc}
 g_5 & 0 & 0 & 0 & 0 & g_2 & 0 & 0 \\
 0 & g_6 & 0 & g_3 & g_1 & 0 & g_4 & 0 \\
 0 & 0 & g_8 & 0 & 0 & 0 & 0 & 0 \\
 0 & g_{11} & 0 & g_7 & g_{10} & 0 & g_9 & 0 \\
 0 & 0 & 0 & 0 & g_6 & 0 & g_3 & 0 \\
 0 & 0 & 0 & 0 & 0 & g_5 & 0 & 0 \\
 0 & 0 & 0 & 0 & g_{11} & 0 & g_7 & 0 \\
 0 & 0 & 0 & 0 & 0 & 0 & 0 & g_8 \\
\end{array}
\right)\,,
}
and the independent entry-wise component functions $g_i$ give the error $E_6$ through the relation,
\eq{
E_6 = \frac{g_2 \sigma _{\varepsilon }^2}{\big(\gamma +\sigma _{W_2}^2 \big(\eta '-\zeta \big)\big)}\,,
}
and themselves satisfy the following system of polynomial equations,
{\small
\begin{subequations}
\begin{align}
0 &= \sqrt{\zeta } g_6 g_8 \sqrt{n_0}-g_{11} \psi\\
0 &= \sqrt{\zeta } g_3 g_8 \sqrt{n_0}-g_7 \psi +\psi\\
0 &= -\sqrt{\zeta } g_5 g_6 \psi -g_3 \sqrt{n_0} \phi  \big(\gamma +\sigma _{W_2}^2 \big(\eta '-\zeta \big)\big)\\
0 &= -\zeta  g_7 g_8 \psi +\sqrt{\zeta } g_8 \sqrt{n_0} \big(g_4 \phi -\zeta  g_3 \sigma _{W_2}^2\big)-g_9 \psi  \phi\\
0 &= -\sqrt{\zeta } g_5 g_{11} \psi -\big(g_7-1\big) \sqrt{n_0} \phi  \big(\gamma +\sigma _{W_2}^2 \big(\eta '-\zeta \big)\big)\\
0 &= -\zeta  g_8 g_{11} \psi +\sqrt{\zeta } g_8 \sqrt{n_0} \big(g_1 \phi -\zeta  g_6 \sigma _{W_2}^2\big)+g_{10} \psi  (-\phi )\\
0 &= g_3 \sqrt{n_0} \phi  \big(\gamma +\sigma _{W_2}^2 \big(\eta '-\zeta \big)\big)+g_5 \big(\sqrt{\zeta } g_7 \psi +\zeta  g_3 \sqrt{n_0} \sigma _{W_2}^2\big)\\
0 &= \big(g_6-1\big) \sqrt{n_0} \phi  \big(\gamma +\sigma _{W_2}^2 \big(\eta '-\zeta \big)\big)+g_5 \big(\sqrt{\zeta } g_{11} \psi +\zeta  g_6 \sqrt{n_0} \sigma _{W_2}^2\big)\\
0 &= \sqrt{\zeta } g_7 g_8 \sqrt{n_0} \phi  \big(\sigma _{W_2}^2 \big(\zeta -\eta '\big)-\gamma \big)+g_{11} \psi  \big(\gamma  \phi +\sigma _{W_2}^2 \big(-\zeta  \phi +\phi  \eta '+\zeta  g_5\big)\big)\\
0 &= g_8 \big(g_5 \psi  (\zeta -\eta )+\phi  \big(\sqrt{\zeta } g_3 \sqrt{n_0}-1\big) \big(\gamma +\sigma _{W_2}^2 \big(\eta '-\zeta \big)\big)\big)+\phi  \big(\gamma +\sigma _{W_2}^2 \big(\eta '-\zeta \big)\big)\\
0 &= g_6 \psi  \big(\gamma  \phi +\sigma _{W_2}^2 \big(-\zeta  \phi +\phi  \eta '+\zeta  g_5\big)\big)-\phi  \big(\sqrt{\zeta } g_3 g_8 \sqrt{n_0}+\psi \big) \big(\gamma +\sigma _{W_2}^2 \big(\eta '-\zeta \big)\big)\\
0 &= g_5 \big(\sqrt{\zeta } g_{11} \psi +\sqrt{n_0} \big(\gamma +g_8 (\eta -\zeta )+\sigma _{W_2}^2 \big(\eta '+\zeta  \big(g_6-1\big)\big)\big)\big)-\sqrt{n_0} \big(\gamma +\sigma _{W_2}^2 \big(\eta '-\zeta \big)\big)\\
0 &= \sqrt{\zeta } g_2 g_{11} \psi  \phi +\sqrt{n_0} \phi  \big(\zeta  g_7 g_8+g_9 \phi \big) \big(\gamma +\sigma _{W_2}^2 \big(\eta '-\zeta \big)\big)+\sqrt{\zeta } g_5 \psi  \big(g_{10} \phi -\zeta  g_{11} \sigma _{W_2}^2\big)\\
0 &= g_2 \phi  \big(\sqrt{\zeta } g_{11} \psi +\sqrt{n_0} \big(\gamma +g_8 (\eta -\zeta )+\sigma _{W_2}^2 \big(\eta '+\zeta  \big(g_6-1\big)\big)\big)\big)\nonumber\\
&\quad\quad+g_5 \big(\sqrt{\zeta } g_{10} \psi  \phi -\zeta  \sigma _{W_2}^2 \big(\sqrt{\zeta } g_{11} \psi +\sqrt{n_0} \big(\zeta  g_6 \sigma _{W_2}^2-g_1 \phi \big)\big)\big)\\
0 &= g_4 \sqrt{n_0} \phi ^2 \big(\gamma +\sigma _{W_2}^2 \big(\eta '-\zeta \big)\big)+g_2 \big(\sqrt{\zeta } g_7 \psi  \phi +\zeta  g_3 \sqrt{n_0} \phi  \sigma _{W_2}^2\big)+g_5 \big(\sqrt{\zeta } g_9 \psi  \phi\nonumber\\
&\quad\quad -\zeta  \sigma _{W_2}^2 \big(\sqrt{\zeta } g_7 \psi +\sqrt{n_0} \big(\zeta  g_3 \sigma _{W_2}^2-g_4 \phi \big)\big)\big)\\
0 &= -\sqrt{\zeta } g_8 \sqrt{n_0} \phi  \big(g_9 \phi -\zeta  g_7 \sigma _{W_2}^2\big) \big(\gamma +\sigma _{W_2}^2 \big(\eta '-\zeta \big)\big)+g_{10} \psi  \phi  \big(\gamma  \phi +\sigma _{W_2}^2 \big(-\zeta  \phi +\phi  \eta '+\zeta  g_5\big)\big)\nonumber\\
&\quad\quad+\zeta  g_{11} \psi  \sigma _{W_2}^2 \big(g_2 \phi -\zeta  g_5 \sigma _{W_2}^2\big)\\
0 &= \phi  \big(g_1 \sqrt{n_0} \phi  \big(\gamma +\sigma _{W_2}^2 \big(\eta '-\zeta \big)\big)+g_2 \big(\sqrt{\zeta } g_{11} \psi +\zeta  g_6 \sqrt{n_0} \sigma _{W_2}^2\big)\big)+g_5 \big(\sqrt{\zeta } g_{10} \psi  \phi \nonumber\\
&\quad\quad-\zeta  \sigma _{W_2}^2 \big(\sqrt{\zeta } g_{11} \psi +\sqrt{n_0} \big(\zeta  g_6 \sigma _{W_2}^2-g_1 \phi \big)\big)\big)\\
0 &= \sqrt{\zeta } g_1 g_5 \psi  \phi +\sqrt{\zeta } g_2 g_6 \psi  \phi +\gamma  \zeta  g_3 g_8 \sqrt{n_0} \phi +\gamma  g_4 \sqrt{n_0} \phi ^2-\zeta ^2 g_3 g_8 \sqrt{n_0} \phi  \sigma _{W_2}^2+\sqrt{n_0} \phi  \eta ' \sigma _{W_2}^2 \big(\zeta  g_3 g_8+g_4 \phi \big)\nonumber\\
&\quad\quad-\zeta  g_4 \sqrt{n_0} \phi ^2 \sigma _{W_2}^2-\zeta ^{3/2} g_5 g_6 \psi  \sigma _{W_2}^2\\
0 &= -\sqrt{\zeta } g_4 g_8 \sqrt{n_0} \phi ^2 \big(\gamma +\sigma _{W_2}^2 \big(\eta '-\zeta \big)\big)+\zeta  \sigma _{W_2}^2 \big(\sqrt{\zeta } g_3 g_8 \sqrt{n_0} \phi  \big(\gamma +\sigma _{W_2}^2 \big(\eta '-\zeta \big)\big)-\zeta  g_5 g_6 \psi  \sigma _{W_2}^2+g_2 g_6 \psi  \phi \big)\nonumber\\
&\quad\quad+g_1 \psi  \phi  \big(\gamma  \phi +\sigma _{W_2}^2 \big(\phi  \big(\eta '-\zeta \big)+\zeta  g_5\big)\big)\\
\end{align}
\end{subequations}
}

After some straightforward algebra, one can eliminate all $g_i$ except for $g_3$ and $g_5$, which satisfy coupled polynomial equations. Those equations can be shown to be identical to eqn.~\eqref{eqn:tau} by invoking the change of variables,
\begin{equation}
    g_3 = - \frac{\sqrt{\zeta} \psi}{\sqrt{n_0} \phi} \tau_2\,,\quad\text{and}\quad
    g_5 = \big(\gamma +\sigma _{W_2}^2 \big(\eta '-\zeta \big)\big)\tau_1\,.
\end{equation}

In terms of the related variables defined in eqn.~\eqref{eq:ttau}, the error $E_{6}$ is given by,
\begin{align}
    E_{6} &= \sigma_{\varepsilon}^2 \phi \ttau_2^2/(1-\ttau_2^2 \phi)
\end{align}

\subsection{$H_{100}$}
\begin{align}
H_{100} &= \mathbb{E}\hat{y}(\bfx; P,X,\varepsilon)\hat{y}(\bfx; P,\tilde{X},\tilde{\varepsilon})\\
&= \mathbb{E}\Big[ N_0(\bfx; P)N_0(\bfx; P)^\top + K(\bfx,\tilde{X};P)K(\tilde{X},\tilde{X};P)^{-1} Y(\tilde{X},\tilde{\varepsilon})^\top Y(X,\varepsilon)K(X,X;P)^{-1}K(X,\bfx;P)\nonumber\\
&\quad + K(\bfx,\tilde{X};\P)K(\tilde{X},\tilde{X};P)^{-1} N_0(\tilde{X})^\top N_0(X)K(X,X;P)^{-1}K(X,\bfx;P)\nonumber\\
&\quad - N_0(\bfx; P) N_0(X; P) K(X,X;P)^{-1}K(X,\bfx;P) - N_0(\bfx; P) N_0(\tilde{X}; P) K(\tilde{X},\tilde{X};P)^{-1}K(\tilde{X},\bfx;P)\Big]\\
&= \nu \sigma_{W_2}^2 \eta + \nu E_{22} + \mathbb{E} \tr \big(K(\tilde{X},\tilde{X};P)^{-1} (\tilde{X}^\top X + \nu \frac{\sigma_{W_2}^2}{n_1}f(W_1 \tilde{X})^T F) K(X,X;P)^{-1}K(X,\bfx;P)K(\bfx,\tilde{X};P)\big)\\
&= \nu \sigma_{W_2}^2 \eta + \nu E_{22}+\mathbb{E} \tr \Big(K(\tilde{X},\tilde{X};P)^{-1} (\tilde{X}^\top X + \nu \frac{\sigma_{W_2}^2}{n_1}f(W_1 \tilde{X})^T F) K(X,X;P)^{-1}\\
&\qquad\qquad\; \times(\frac{\sigma_{W_2}^2 \zeta}{n_0} X^\top+\frac{\zeta}{\sqrt{n_0}n_1}F^\top W_1)(\frac{\sigma_{W_2}^2 \zeta}{n_0} \tilde{X}^\top+\frac{\sqrt{\zeta}}{\sqrt{n_0}n_1}f(W_1\tilde{X})^\top W_1)^\top\Big)\\
&\equiv \nu \sigma_{W_2}^2 \eta + \nu E_{22} + E_{71} + \nu E_{72}\,,
\end{align}
where, $E_{22}$ is given above and,
\begin{align}
E_{71} &= \mathbb{E} \tr \Big(K(\tilde{X},\tilde{X};P)^{-1} \tilde{X}^\top X K(X,X;P)^{-1}\big( \frac{\eta-\zeta}{ n_1^2}f(W_1X)^\top f(W_1\tilde{X})\\
&\qquad\qquad\; + (\frac{\sigma_{W_2}^2 \zeta}{n_0} X^\top+\frac{\zeta}{\sqrt{n_0}n_1}F^\top W_1)(\frac{\sigma_{W_2}^2 \zeta}{n_0} \tilde{X}^\top+\frac{\sqrt{\zeta}}{\sqrt{n_0}n_1}f(W_1\tilde{X})^\top W_1)^\top\big)\Big)\\
E_{72} &= \frac{\sigma_{W_2}^2}{n_1}\mathbb{E} \tr \Big(K(\tilde{X},\tilde{X};P)^{-1} f(W_1 \tilde{X})^T F K(X,X;P)^{-1}\big( \frac{\eta-\zeta}{ n_1^2}f(W_1X)^\top f(W_1\tilde{X})\\
&\qquad\qquad\; + (\frac{\sigma_{W_2}^2 \zeta}{n_0} X^\top+\frac{\zeta}{\sqrt{n_0}n_1}F^\top W_1)(\frac{\sigma_{W_2}^2 \zeta}{n_0} \tilde{X}^\top+\frac{\sqrt{\zeta}}{\sqrt{n_0}n_1}f(W_1\tilde{X})^\top W_1)^\top\big)\Big)\,.
\end{align}

\subsubsection{$E_{71}$}
A linear pencil for $E_{71}$ follows from the representation,
\eq{
E_{71} = \tr(U_{71}^T Q_{71}^{-1} V_{71})\,,
}
where,
\begin{align}
    U_{71}^T  &= \left(
\begin{array}{ccccccccccc}
  0 & \frac{I_{n_0}}{m} & 0 & 0 & 0 & 0 & 0 & 0 & 0 \\
\end{array}
\right)\\
V_{71}^T &= 
\left(
\begin{array}{ccccccccccc}
 0 & 0 & 0 & 0 & 0 & 0 & 0 & 0 & -\frac{\sqrt{n_0} n_1 I_{n_0}}{\sqrt{\zeta }} \\
\end{array}
\right)
\end{align}
and, for $\beta = \left(n_0 (\zeta -\eta )-\zeta  n_1 \sigma_{W_2}^2\right)$,
\eq{
Q_{71} = \left(\begin{smallmatrix}
I_m \left(\gamma +\sigma_{W_2}^2 \left(\eta '-\zeta \right)\right) & 0 & \frac{\zeta  X^\top \sigma_{W_2}^2}{n_0} & \frac{\sqrt{\eta -\zeta } \Theta_F^\top}{n_1} & \frac{\sqrt{\zeta } X^\top}{\sqrt{n_0} n_1} & -\frac{\zeta ^2 m X^\top \sigma_{W_2}^4}{n_0^2} & 0 & 0 & 0 \\
 -X & I_{n_0} & 0 & 0 & 0 & 0 & 0 & 0 & 0 \\
 -X & 0 & I_{n_0} & 0 & 0 & 0 & 0 & \frac{\sqrt{\zeta } m W_1^\top}{\sqrt{n_0} n_1} & 0 \\
 -\sqrt{\eta -\zeta } \Theta_F & 0 & -\frac{\sqrt{\zeta } W_1}{\sqrt{n_0}} & I_{n_1} & 0 & -\frac{\sqrt{\zeta } m W_1 \beta}{n_0^{3/2} n_1} & \frac{m (\eta -\zeta )^{3/2} \tilde{\Theta }_F}{n_1} & 0 & 0 \\
 0 & 0 & 0 & -W_1^\top & I_{n_0} & 0 & 0 & 0 & 0 \\
 0 & 0 & 0 & 0 & 0 & I_{n_0} & -\tilde{X} & 0 & 0 \\
 0 & 0 & 0 & 0 & 0 & \frac{\zeta  \sigma_{W_2}^2 \tilde{X}^\top}{n_0} & I_m \left(\gamma +\sigma_{W_2}^2 \left(\eta '-\zeta \right)\right) & \frac{\sqrt{\eta -\zeta } \tilde{\Theta }_F^\top}{n_1} & \frac{\sqrt{\zeta } \tilde{X}^\top}{\sqrt{n_0} n_1} \\
 0 & 0 & 0 & 0 & 0 & -\frac{\sqrt{\zeta } W_1}{\sqrt{n_0}} & -\sqrt{\eta -\zeta } \tilde{\Theta }_F & I_{n_1} & 0 \\
 0 & 0 & 0 & 0 & 0 & 0 & 0 & -W_1^\top & I_{n_0}
\end{smallmatrix}\right)\,.
}
The equations satisfied by the operator-valued Stieltjes transform $G$ of $\bar{Q}_{71}$ induce the following structure on $G$,
\eq{
G = \begin{pmatrix} 0 & G_{12} \\ G_{12}^\top & 0 \end{pmatrix}\,,
}
where,
\eq{
G_{12} =\left(
\begin{array}{ccccccccc}
 g_8 & 0 & 0 & 0 & 0 & 0 & 0 & 0 & 0 \\
 0 & g_1 & g_3 & 0 & g_5 & g_6 & 0 & 0 & g_2 \\
 0 & 0 & g_9 & 0 & g_5 & g_{12} & 0 & 0 & g_4 \\
 0 & 0 & 0 & g_{11} & 0 & 0 & 0 & g_{15} & 0 \\
 0 & 0 & g_{14} & 0 & g_{10} & g_{13} & 0 & 0 & g_7 \\
 0 & 0 & 0 & 0 & 0 & g_9 & 0 & 0 & g_5 \\
 0 & 0 & 0 & 0 & 0 & 0 & g_8 & 0 & 0 \\
 0 & 0 & 0 & 0 & 0 & 0 & 0 & g_{11} & 0 \\
 0 & 0 & 0 & 0 & 0 & g_{14} & 0 & 0 & g_{10} \\
\end{array}
\right)
\,,
}
and the independent entry-wise component functions $g_i$ give the error $E_{71}$ through the relation,
\eq{
E_{71} = -\frac{g_2 \sqrt{n_0} \phi }{\sqrt{\zeta } \psi }\,,
}
and themselves satisfy the following system of polynomial equations,
{\small
\begin{subequations}
\begin{align}
0 &= 1-g_1\\
0 &= \sqrt{\zeta } g_9 g_{11} \sqrt{n_0}-g_{14} \psi\\
0 &= \sqrt{\zeta } g_5 g_{11} \sqrt{n_0}-g_{10} \psi +\psi\\
0 &= -\sqrt{\zeta } g_6 g_8 \psi -g_2 \sqrt{n_0} \phi  \big(\gamma +\sigma _{W_2}^2 \big(\eta '-\zeta \big)\big)\\
0 &= -\sqrt{\zeta } g_8 g_9 \psi -g_5 \sqrt{n_0} \phi  \big(\gamma +\sigma _{W_2}^2 \big(\eta '-\zeta \big)\big)\\
0 &= -\sqrt{\zeta } g_8 g_{12} \psi -g_4 \sqrt{n_0} \phi  \big(\gamma +\sigma _{W_2}^2 \big(\eta '-\zeta \big)\big)\\
0 &= -\sqrt{\zeta } g_8 g_{13} \psi -g_7 \sqrt{n_0} \phi  \big(\gamma +\sigma _{W_2}^2 \big(\eta '-\zeta \big)\big)\\
0 &= -\sqrt{\zeta } g_8 g_{14} \psi -\big(g_{10}-1\big) \sqrt{n_0} \phi  \big(\gamma +\sigma _{W_2}^2 \big(\eta '-\zeta \big)\big)\\
0 &= -\sqrt{\zeta } g_1 g_8 \psi -\sqrt{\zeta } g_3 g_8 \psi -g_5 \sqrt{n_0} \phi  \big(\gamma +\sigma _{W_2}^2 \big(\eta '-\zeta \big)\big)\\
0 &= g_3 \sqrt{n_0} \phi  \big(\gamma +\sigma _{W_2}^2 \big(\eta '-\zeta \big)\big)+g_8 \big(\sqrt{\zeta } g_{14} \psi +\zeta  g_9 \sqrt{n_0} \sigma _{W_2}^2\big)\\
0 &= g_5 \sqrt{n_0} \phi  \big(\gamma +\sigma _{W_2}^2 \big(\eta '-\zeta \big)\big)+g_8 \big(\sqrt{\zeta } g_{10} \psi +\zeta  g_5 \sqrt{n_0} \sigma _{W_2}^2\big)\\
0 &= \sqrt{\zeta } \sqrt{n_0} \big(g_5 \big(g_{11} \big(\zeta  \psi -\eta  \psi -\zeta  \sigma _{W_2}^2\big)+g_{15} \phi \big)+g_4 g_{11} \phi \big)-g_7 \psi  \phi\\
0 &= \sqrt{\zeta } \sqrt{n_0} \big(g_9 \big(g_{11} \big(\zeta  \psi -\eta  \psi -\zeta  \sigma _{W_2}^2\big)+g_{15} \phi \big)+g_{11} g_{12} \phi \big)-g_{13} \psi  \phi\\
0 &= \big(g_9-1\big) \sqrt{n_0} \phi  \big(\gamma +\sigma _{W_2}^2 \big(\eta '-\zeta \big)\big)+g_8 \big(\sqrt{\zeta } g_{14} \psi +\zeta  g_9 \sqrt{n_0} \sigma _{W_2}^2\big)\\
0 &= \sqrt{\zeta } g_7 g_8 \psi  \phi +\sqrt{n_0} \big(g_2 \phi ^2 \big(\gamma +\sigma _{W_2}^2 \big(\eta '-\zeta \big)\big)+\zeta  g_8 \sigma _{W_2}^2 \big(g_4 \phi -\zeta  g_5 \sigma _{W_2}^2\big)\big)\\
0 &= \sqrt{\zeta } g_{10} g_{11} \sqrt{n_0} \phi  \big(\sigma _{W_2}^2 \big(\zeta -\eta '\big)-\gamma \big)+g_{14} \psi  \big(\gamma  \phi +\sigma _{W_2}^2 \big(-\zeta  \phi +\phi  \eta '+\zeta  g_8\big)\big)\\
0 &= g_{11} \big(g_8 \psi  (\zeta -\eta )+\phi  \big(\sqrt{\zeta } g_5 \sqrt{n_0}-1\big) \big(\gamma +\sigma _{W_2}^2 \big(\eta '-\zeta \big)\big)\big)+\phi  \big(\gamma +\sigma _{W_2}^2 \big(\eta '-\zeta \big)\big)\\
0 &= g_6 \sqrt{n_0} \phi ^2 \big(\gamma +\sigma _{W_2}^2 \big(\eta '-\zeta \big)\big)+g_8 \big(\sqrt{\zeta } g_{13} \psi  \phi +\zeta  \sqrt{n_0} \sigma _{W_2}^2 \big(g_{12} \phi -\zeta  g_9 \sigma _{W_2}^2\big)\big)\\
0 &= g_9 \psi  \big(\gamma  \phi +\sigma _{W_2}^2 \big(-\zeta  \phi +\phi  \eta '+\zeta  g_8\big)\big)-\phi  \big(\sqrt{\zeta } g_5 g_{11} \sqrt{n_0}+\psi \big) \big(\gamma +\sigma _{W_2}^2 \big(\eta '-\zeta \big)\big)\\
0 &= g_8 \big(\sqrt{\zeta } g_{14} \psi +\sqrt{n_0} \big(\gamma +g_{11} (\eta -\zeta )+\sigma _{W_2}^2 \big(\eta '+\zeta  \big(g_9-1\big)\big)\big)\big)-\sqrt{n_0} \big(\gamma +\sigma _{W_2}^2 \big(\eta '-\zeta \big)\big)\\
0 &= -\sqrt{\zeta } g_5 g_{11} \sqrt{n_0} \phi  \big(\gamma +\sigma _{W_2}^2 \big(\eta '-\zeta \big)\big)+g_3 \psi  \big(\gamma  \phi +\sigma _{W_2}^2 \big(-\zeta  \phi +\phi  \eta '+\zeta  g_8\big)\big)+\zeta  g_1 g_8 \psi  \sigma _{W_2}^2\\
0 &= \sqrt{n_0} \phi  \big(\zeta  g_9 g_{11}+g_{12} \phi \big) \big(\gamma +\sigma _{W_2}^2 \big(\eta '-\zeta \big)\big)+g_8 \big(\sqrt{\zeta } g_{13} \psi  \phi +\zeta  \sqrt{n_0} \sigma _{W_2}^2 \big(g_{12} \phi -\zeta  g_9 \sigma _{W_2}^2\big)\big)\\
0 &= \sqrt{\zeta } g_7 g_8 \psi  \phi +\sqrt{n_0} \big(g_4 \phi  \big(\gamma  \phi +\sigma _{W_2}^2 \big(-\zeta  \phi +\phi  \eta '+\zeta  g_8\big)\big)+\zeta  g_5 \big(g_{11} \phi  \big(\gamma +\sigma _{W_2}^2 \big(\eta '-\zeta \big)\big)-\zeta  g_8 \sigma _{W_2}^4\big)\big)\\
0 &= g_8 \psi  (-(\zeta -\eta )) \big(g_{11} \psi  (\zeta -\eta )+g_{15} \phi \big)-\phi  \big(\gamma +\sigma _{W_2}^2 \big(\eta '-\zeta \big)\big)\nonumber\\
&\quad\quad \big(-\zeta  g_9 g_{11} \psi +\sqrt{\zeta } \sqrt{n_0} \big(g_5 \big(g_{11} \big(\zeta  \psi -\eta  \psi -\zeta  \sigma _{W_2}^2\big)+g_{15} \phi \big)+g_4 g_{11} \phi \big)-g_{15} \phi \big)\\
0 &= \phi  \big(g_{12} \psi  \big(\gamma  \phi +\sigma _{W_2}^2 \big(-\zeta  \phi +\phi  \eta '+\zeta  g_8\big)\big)-\sqrt{\zeta } \sqrt{n_0} \big(\gamma +\sigma _{W_2}^2 \big(\eta '-\zeta \big)\big)\nonumber\\
&\quad\quad \big(g_5 \big(g_{11} \big(\zeta  \psi -\eta  \psi -\zeta  \sigma _{W_2}^2\big)+g_{15} \phi \big)+g_4 g_{11} \phi \big)\big)+\zeta  g_9 \psi  \big(g_{11} \phi  \big(\gamma +\sigma _{W_2}^2 \big(\eta '-\zeta \big)\big)-\zeta  g_8 \sigma _{W_2}^4\big)\\
0 &= \sqrt{\zeta } \big(-g_{10} g_{15} \sqrt{n_0} \phi ^2 \big(\gamma +\sigma _{W_2}^2 \big(\eta '-\zeta \big)\big)-g_{11} \phi  \big(\gamma +\sigma _{W_2}^2 \big(\eta '-\zeta \big)\big) \big(\sqrt{n_0} \big(g_{10} \big(\zeta  \psi -\eta  \psi -\zeta  \sigma _{W_2}^2\big)+g_7 \phi \big)\nonumber\\
&\quad\quad-\sqrt{\zeta } g_{14} \psi \big)-\zeta ^{3/2} g_8 g_{14} \psi  \sigma _{W_2}^4\big)+g_{13} \psi  \phi  \big(\gamma  \phi +\sigma _{W_2}^2 \big(\phi  \big(\eta '-\zeta \big)+\zeta  g_8\big)\big)\\
0 &= \sqrt{\zeta } \big(-\gamma  \zeta  g_5 g_{11} \sqrt{n_0} \psi  \phi +\gamma  \eta  g_5 g_{11} \sqrt{n_0} \psi  \phi -\gamma  g_5 g_{15} \sqrt{n_0} \phi ^2-g_2 g_{11} \sqrt{n_0} \phi ^2 \big(\gamma +\sigma _{W_2}^2 \big(\eta '-\zeta \big)\big)\nonumber\\
&\quad\quad+\gamma  \zeta  g_5 g_{11} \sqrt{n_0} \phi  \sigma _{W_2}^2+\zeta ^2 g_5 g_{11} \sqrt{n_0} \psi  \phi  \sigma _{W_2}^2-\zeta ^2 g_5 g_{11} \sqrt{n_0} \phi  \sigma _{W_2}^4\nonumber\\
&\quad\quad+g_5 \sqrt{n_0} \phi  \eta ' \sigma _{W_2}^2 \big(g_{11} \big(-\zeta  \psi +\eta  \psi +\zeta  \sigma _{W_2}^2\big)-g_{15} \phi \big)-\zeta  \eta  g_5 g_{11} \sqrt{n_0} \psi  \phi  \sigma _{W_2}^2+\zeta  g_5 g_{15} \sqrt{n_0} \phi ^2 \sigma _{W_2}^2\nonumber\\
&\quad\quad+\sqrt{\zeta } g_3 \psi  \big(g_{11} \phi  \big(\gamma +\sigma _{W_2}^2 \big(\eta '-\zeta \big)\big)-\zeta  g_8 \sigma _{W_2}^4\big)-\zeta ^{3/2} g_1 g_8 \psi  \sigma _{W_2}^4\big)\nonumber\\
&\quad\quad+g_6 \psi  \phi  \big(\gamma  \phi +\sigma _{W_2}^2 \big(\phi  \big(\eta '-\zeta \big)+\zeta  g_8\big)\big)
\end{align}
\end{subequations}
}

After some straightforward algebra, one can eliminate all $g_i$ except for $g_5$ and $g_8$, which satisfy coupled polynomial equations. Those equations can be shown to be identical to eqn.~\eqref{eqn:tau} by invoking the change of variables,
\begin{equation}
    g_5 = - \frac{\sqrt{\zeta} \psi}{\sqrt{n_0} \phi} \tau_2\,,\quad\text{and}\quad
    g_8 = \big(\gamma +\sigma _{W_2}^2 \big(\eta '-\zeta \big)\big)\tau_1\,.
\end{equation}

In terms of the related variables defined in eqn.~\eqref{eq:ttau}, the error $E_{71}$ is given by,
\begin{align}
    E_{71} &= \frac{\psi  \tilde{\tau }_1^2 \big(2 \zeta  \tilde{\tau }_2+\eta \big)+\zeta  \phi ^2 \tilde{\tau }_2^2}{\zeta  \big(\phi ^2-\psi  \tilde{\tau }_1^2\big)}\\
    &= \tau_2'/\tau_1'-2\tau_2/\tau_1+1\,.
\end{align}

\subsubsection{$E_{72}$}
A linear pencil for $E_{72}$ follows from the representation,
\eq{
E_{72} = \tr(U_{72}^T Q_{72}^{-1} V_{72})\,,
}
where,
\begin{align}
    U_{72}^T  &= \left(
\begin{array}{ccccccccccc}
 0 & \frac{I_{n_1}}{m} & 0 & 0 & 0 & 0 & 0 & 0 & 0 & 0 & 0 \\
\end{array}
\right)\\
V_{72}^T &= 
\left(
\begin{array}{ccccccccccc}
 0 & 0 & 0 & 0 & 0 & 0 & 0 & 0 & 0 & -n_1 I_{n_1} & 0 \\
\end{array}
\right)
\end{align}
and, for $\beta = \left(n_0 (\zeta -\eta )-\zeta  n_1 \sigma_{W_2}^2\right)$,
\eq{
Q_{72} = \left(\begin{smallmatrix}
I_m \left(\gamma +\sigma_{W_2}^2 \left(\eta '-\zeta \right)\right) & 0 & 0 & \frac{\zeta  X^\top \sigma_{W_2}^2}{n_0} & \frac{\sqrt{\eta -\zeta } \Theta_F^\top}{n_1} & \frac{\sqrt{\zeta } X^\top}{\sqrt{n_0} n_1} & -\frac{\zeta ^2 m X^\top \sigma_{W_2}^4}{n_0^2} & 0 & 0 & 0 & 0 \\
 -\sqrt{\eta -\zeta } \Theta_F & I_{n_1} & -\frac{\sqrt{\zeta } W_1}{\sqrt{n_0}} & 0 & 0 & 0 & 0 & 0 & 0 & 0 & 0 \\
 -X & 0 & I_{n_0} & 0 & 0 & 0 & 0 & 0 & 0 & 0 & 0 \\
 -X & 0 & 0 & I_{n_0} & 0 & 0 & 0 & 0 & \frac{\sqrt{\zeta } m W_1^\top}{\sqrt{n_0} n_1} & 0 & 0 \\
 -\sqrt{\eta -\zeta } \Theta_F & 0 & 0 & -\frac{\sqrt{\zeta } W_1}{\sqrt{n_0}} & I_{n_1} & 0 & -\frac{\sqrt{\zeta } m W_1 \beta}{n_0^{3/2} n_1} & \frac{m (\eta -\zeta )^{3/2} \tilde{\Theta }_F}{n_1} & 0 & 0 & 0 \\
 0 & 0 & 0 & 0 & -W_1^\top & I_{n_0} & 0 & 0 & 0 & 0 & 0 \\
 0 & 0 & 0 & 0 & 0 & 0 & I_{n_0} & -\tilde{X} & 0 & 0 & 0 \\
 0 & 0 & 0 & 0 & 0 & 0 & \frac{\zeta  \sigma_{W_2}^2 \tilde{X}^\top}{n_0} & I_m \left(\gamma +\sigma_{W_2}^2 \left(\eta '-\zeta \right)\right) & 0 & \frac{\sqrt{\eta -\zeta } \tilde{\Theta }_F^\top}{n_1} & \frac{\sqrt{\zeta } \tilde{X}^\top}{\sqrt{n_0} n_1} \\
 0 & 0 & 0 & 0 & 0 & 0 & -\frac{\sqrt{\zeta } W_1}{\sqrt{n_0}} & -\sqrt{\eta -\zeta } \tilde{\Theta }_F & I_{n_1} & 0 & 0 \\
 0 & 0 & 0 & 0 & 0 & 0 & -\frac{\sqrt{\zeta } W_1}{\sqrt{n_0}} & -\sqrt{\eta -\zeta } \tilde{\Theta }_F & 0 & I_{n_1} & 0 \\
 0 & 0 & 0 & 0 & 0 & 0 & 0 & 0 & 0 & -W_1^\top & I_{n_0}
\end{smallmatrix}\right)\,.
}
The equations satisfied by the operator-valued Stieltjes transform $G$ of $\bar{Q}_{72}$ induce the following structure on $G$,
\eq{
G = \begin{pmatrix} 0 & G_{12} \\ G_{12}^\top & 0 \end{pmatrix}\,,
}
where,
\eq{
G_{12} = \left(
\begin{array}{ccccccccccc}
 g_{12} & 0 & 0 & 0 & 0 & 0 & 0 & 0 & 0 & 0 & 0 \\
 0 & g_1 & 0 & 0 & g_6 & 0 & 0 & 0 & g_{11} & g_3 & 0 \\
 0 & 0 & g_1 & g_4 & 0 & g_7 & g_8 & 0 & 0 & 0 & g_2 \\
 0 & 0 & 0 & g_{13} & 0 & g_7 & g_{16} & 0 & 0 & 0 & g_5 \\
 0 & 0 & 0 & 0 & g_{15} & 0 & 0 & 0 & g_{19} & g_{10} & 0 \\
 0 & 0 & 0 & g_{18} & 0 & g_{14} & g_{17} & 0 & 0 & 0 & g_9 \\
 0 & 0 & 0 & 0 & 0 & 0 & g_{13} & 0 & 0 & 0 & g_7 \\
 0 & 0 & 0 & 0 & 0 & 0 & 0 & g_{12} & 0 & 0 & 0 \\
 0 & 0 & 0 & 0 & 0 & 0 & 0 & 0 & g_1 & g_6 & 0 \\
 0 & 0 & 0 & 0 & 0 & 0 & 0 & 0 & 0 & g_{15} & 0 \\
 0 & 0 & 0 & 0 & 0 & 0 & g_{18} & 0 & 0 & 0 & g_{14} \\
\end{array}
\right)
\,,
}
and the independent entry-wise component functions $g_i$ give the error $E_{72}$ through the relation,
\eq{
E_{72} = -\frac{\phi g_3}{\psi} \,,
}
and themselves satisfy the following system of polynomial equations,
{\small
\begin{subequations}
\begin{align}
0 &= 1-g_1\\
0 &= -\zeta  g_{13} g_{15} \psi -g_{19} \phi\\
0 &= \sqrt{\zeta } g_{13} g_{15} \sqrt{n_0}-g_{18} \psi\\
0 &= \sqrt{\zeta } g_7 g_{15} \sqrt{n_0}-g_{14} \psi +\psi\\
0 &= g_{11} (-\phi )-\zeta  \big(g_1 g_4+g_6 g_{13}\big) \psi\\
0 &= -\sqrt{\zeta } g_8 g_{12} \psi -g_2 \sqrt{n_0} \phi  \big(\gamma +\sigma _{W_2}^2 \big(\eta '-\zeta \big)\big)\\
0 &= -\sqrt{\zeta } g_{12} g_{13} \psi -g_7 \sqrt{n_0} \phi  \big(\gamma +\sigma _{W_2}^2 \big(\eta '-\zeta \big)\big)\\
0 &= -\sqrt{\zeta } g_{12} g_{16} \psi -g_5 \sqrt{n_0} \phi  \big(\gamma +\sigma _{W_2}^2 \big(\eta '-\zeta \big)\big)\\
0 &= -\sqrt{\zeta } g_{12} g_{17} \psi -g_9 \sqrt{n_0} \phi  \big(\gamma +\sigma _{W_2}^2 \big(\eta '-\zeta \big)\big)\\
0 &= -\sqrt{\zeta } g_{12} g_{18} \psi -\big(g_{14}-1\big) \sqrt{n_0} \phi  \big(\gamma +\sigma _{W_2}^2 \big(\eta '-\zeta \big)\big)\\
0 &= -\sqrt{\zeta } g_1 g_{12} \psi -\sqrt{\zeta } g_4 g_{12} \psi -g_7 \sqrt{n_0} \phi  \big(\gamma +\sigma _{W_2}^2 \big(\eta '-\zeta \big)\big)\\
0 &= g_{12} g_{15} \psi  (\zeta -\eta )-\phi  \big(g_6-\sqrt{\zeta } g_7 g_{15} \sqrt{n_0}\big) \big(\gamma +\sigma _{W_2}^2 \big(\eta '-\zeta \big)\big)\\
0 &= g_7 \sqrt{n_0} \phi  \big(\gamma +\sigma _{W_2}^2 \big(\eta '-\zeta \big)\big)+g_{12} \big(\sqrt{\zeta } g_{14} \psi +\zeta  g_7 \sqrt{n_0} \sigma _{W_2}^2\big)\\
0 &= g_4 \sqrt{n_0} \phi  \big(\gamma +\sigma _{W_2}^2 \big(\eta '-\zeta \big)\big)+g_{12} \big(\sqrt{\zeta } g_{18} \psi +\zeta  g_{13} \sqrt{n_0} \sigma _{W_2}^2\big)\\
0 &= \sqrt{\zeta } \sqrt{n_0} \big(g_7 \big(g_{15} \big(\zeta  \psi -\eta  \psi -\zeta  \sigma _{W_2}^2\big)+\big(g_{10}+g_{19}\big) \phi \big)+g_5 g_{15} \phi \big)-g_9 \psi  \phi\\
0 &= \big(g_{13}-1\big) \sqrt{n_0} \phi  \big(\gamma +\sigma _{W_2}^2 \big(\eta '-\zeta \big)\big)+g_{12} \big(\sqrt{\zeta } g_{18} \psi +\zeta  g_{13} \sqrt{n_0} \sigma _{W_2}^2\big)\\
0 &= \sqrt{\zeta } \sqrt{n_0} \big(g_{13} \big(g_{15} \big(\zeta  \psi -\eta  \psi -\zeta  \sigma _{W_2}^2\big)+g_{19} \phi \big)+g_{10} g_{13} \phi +g_{15} g_{16} \phi \big)-g_{17} \psi  \phi\\
0 &= \sqrt{\zeta } g_9 g_{12} \psi  \phi +\sqrt{n_0} \big(g_2 \phi ^2 \big(\gamma +\sigma _{W_2}^2 \big(\eta '-\zeta \big)\big)+\zeta  g_{12} \sigma _{W_2}^2 \big(g_5 \phi -\zeta  g_7 \sigma _{W_2}^2\big)\big)\\
0 &= \sqrt{\zeta } g_{14} g_{15} \sqrt{n_0} \phi  \big(\sigma _{W_2}^2 \big(\zeta -\eta '\big)-\gamma \big)+g_{18} \psi  \big(\gamma  \phi +\sigma _{W_2}^2 \big(-\zeta  \phi +\phi  \eta '+\zeta  g_{12}\big)\big)\\
0 &= g_{15} \big(g_{12} \psi  (\zeta -\eta )+\phi  \big(\sqrt{\zeta } g_7 \sqrt{n_0}-1\big) \big(\gamma +\sigma _{W_2}^2 \big(\eta '-\zeta \big)\big)\big)+\phi  \big(\gamma +\sigma _{W_2}^2 \big(\eta '-\zeta \big)\big)\\
0 &= g_8 \sqrt{n_0} \phi ^2 \big(\gamma +\sigma _{W_2}^2 \big(\eta '-\zeta \big)\big)+g_{12} \big(\sqrt{\zeta } g_{17} \psi  \phi +\zeta  \sqrt{n_0} \sigma _{W_2}^2 \big(g_{16} \phi -\zeta  g_{13} \sigma _{W_2}^2\big)\big)\\
0 &= g_{13} \psi  \big(\gamma  \phi +\sigma _{W_2}^2 \big(-\zeta  \phi +\phi  \eta '+\zeta  g_{12}\big)\big)-\phi  \big(\sqrt{\zeta } g_7 g_{15} \sqrt{n_0}+\psi \big) \big(\gamma +\sigma _{W_2}^2 \big(\eta '-\zeta \big)\big)\\
0 &= g_{12} \big(\sqrt{\zeta } g_{18} \psi +\sqrt{n_0} \big(\gamma +g_{15} (\eta -\zeta )+\sigma _{W_2}^2 \big(\eta '+\zeta  \big(g_{13}-1\big)\big)\big)\big)-\sqrt{n_0} \big(\gamma +\sigma _{W_2}^2 \big(\eta '-\zeta \big)\big)\\
0 &= g_{12} g_{19} \psi  (\zeta -\eta )-\phi  \big(g_{11}-\sqrt{\zeta } g_7 g_{19} \sqrt{n_0}\big) \big(\gamma +\sigma _{W_2}^2 \big(\eta '-\zeta \big)\big)+\zeta  g_1 g_4 \psi  \big(\sigma _{W_2}^2 \big(\zeta -\eta '\big)-\gamma \big)\\
0 &= g_{19} \big(g_{12} \psi  (\zeta -\eta )+\phi  \big(\sqrt{\zeta } g_7 \sqrt{n_0}-1\big) \big(\gamma +\sigma _{W_2}^2 \big(\eta '-\zeta \big)\big)\big)-\zeta  g_1 g_{13} \psi  \big(\gamma +\sigma _{W_2}^2 \big(\eta '-\zeta \big)\big)\\
0 &= -\sqrt{\zeta } g_7 g_{15} \sqrt{n_0} \phi  \big(\gamma +\sigma _{W_2}^2 \big(\eta '-\zeta \big)\big)+g_4 \psi  \big(\gamma  \phi +\sigma _{W_2}^2 \big(-\zeta  \phi +\phi  \eta '+\zeta  g_{12}\big)\big)+\zeta  g_1 g_{12} \psi  \sigma _{W_2}^2\\
0 &= \sqrt{n_0} \phi  \big(\zeta  \big(g_1+g_6\big) g_{13}+g_{16} \phi \big) \big(\gamma +\sigma _{W_2}^2 \big(\eta '-\zeta \big)\big)+g_{12} \big(\sqrt{\zeta } g_{17} \psi  \phi +\zeta  \sqrt{n_0} \sigma _{W_2}^2 \big(g_{16} \phi -\zeta  g_{13} \sigma _{W_2}^2\big)\big)\\
0 &= g_6 \big(g_{12} \psi  (\zeta -\eta )+\phi  \big(\sqrt{\zeta } g_7 \sqrt{n_0}-1\big) \big(\gamma +\sigma _{W_2}^2 \big(\eta '-\zeta \big)\big)\big)+g_1 \big(g_{12} \psi  (\zeta -\eta )\nonumber\\
&\quad\quad+\sqrt{\zeta } g_7 \sqrt{n_0} \phi  \big(\gamma +\sigma _{W_2}^2 \big(\eta '-\zeta \big)\big)\big)\\
0 &= \sqrt{\zeta } g_9 g_{12} \psi  \phi +\sqrt{n_0} \big(g_5 \phi  \big(\gamma  \phi +\sigma _{W_2}^2 \big(\phi  \big(\eta '-\zeta \big)+\zeta  g_{12}\big)\big)+\zeta  g_7 \big(g_1 \phi  \big(\gamma +\sigma _{W_2}^2 \big(\eta '-\zeta \big)\big)\nonumber\\
&\quad\quad+g_6 \phi  \big(\gamma +\sigma _{W_2}^2 \big(\eta '-\zeta \big)\big)-\zeta  g_{12} \sigma _{W_2}^4\big)\big)\\
0 &= g_{12} \psi  (-(\zeta -\eta )) \big(g_{15} \psi  (\zeta -\eta )+g_{19} \phi \big)-\sqrt{\zeta } \sqrt{n_0} \phi  \big(\gamma +\sigma _{W_2}^2 \big(\eta '-\zeta \big)\big)\nonumber\\
&\quad\quad \big(g_7 \big(g_{15} \big(\zeta  \psi -\eta  \psi -\zeta  \sigma _{W_2}^2\big)+g_{19} \phi \big)+g_5 g_{15} \phi \big)+g_{10} \phi  \big(g_{12} \psi  (\eta -\zeta )\nonumber\\
&\quad\quad-\phi  \big(\sqrt{\zeta } g_7 \sqrt{n_0}-1\big) \big(\gamma +\sigma _{W_2}^2 \big(\eta '-\zeta \big)\big)\big)\\
0 &= g_{15} \big(g_{12} \psi ^2 \big(-(\zeta -\eta )^2\big)-\sqrt{\zeta } \sqrt{n_0} \phi  \big(\gamma +\sigma _{W_2}^2 \big(\eta '-\zeta \big)\big) \big(g_7 \big(\zeta  \psi -\eta  \psi -\zeta  \sigma _{W_2}^2\big)+g_5 \phi \big)\big)\nonumber\\
&\quad\quad+g_{10} \phi  \big(g_{12} \psi  (\eta -\zeta )-\phi  \big(\sqrt{\zeta } g_7 \sqrt{n_0}-1\big) \big(\gamma +\sigma _{W_2}^2 \big(\eta '-\zeta \big)\big)\big)+\zeta  g_6 g_{13} \psi  \phi  \big(\gamma +\sigma _{W_2}^2 \big(\eta '-\zeta \big)\big)\\
0 &= \zeta  g_{10} g_{12} \psi -\eta  g_{10} g_{12} \psi +\gamma  \sqrt{\zeta } g_7 g_{10} \sqrt{n_0} \phi +\gamma  \sqrt{\zeta } g_2 g_{15} \sqrt{n_0} \phi -\zeta ^{3/2} g_7 g_{10} \sqrt{n_0} \phi  \sigma _{W_2}^2\nonumber\\
&\quad\quad-\zeta ^{3/2} g_2 g_{15} \sqrt{n_0} \phi  \sigma _{W_2}^2+\sqrt{\zeta } \big(g_7 g_{10}+g_2 g_{15}\big) \sqrt{n_0} \phi  \eta ' \sigma _{W_2}^2+\zeta  g_4 g_6 \psi  \big(\sigma _{W_2}^2 \big(\zeta -\eta '\big)-\gamma \big)\nonumber\\
&\quad\quad-g_3 \phi  \big(\gamma +\sigma _{W_2}^2 \big(\eta '-\zeta \big)\big)\\
0 &= \phi  \big(g_8 \psi  \big(\gamma  \phi +\sigma _{W_2}^2 \big(-\zeta  \phi +\phi  \eta '+\zeta  g_{12}\big)\big)-\sqrt{\zeta } \sqrt{n_0} \big(\gamma +\sigma _{W_2}^2 \big(\eta '-\zeta \big)\big) \big(g_7 \big(g_{15} \big(\zeta  \psi -\eta  \psi -\zeta  \sigma _{W_2}^2\big)\nonumber\\
&\quad\quad+\big(g_{10}+g_{19}\big) \phi \big)+g_2 g_{15} \phi \big)\big)+\zeta  g_1 \psi  \big(g_4 \phi  \big(\gamma +\sigma _{W_2}^2 \big(\eta '-\zeta \big)\big)-\zeta  g_{12} \sigma _{W_2}^4\big)\nonumber\\
&\quad\quad+\zeta  g_4 \psi  \big(g_6 \phi  \big(\gamma +\sigma _{W_2}^2 \big(\eta '-\zeta \big)\big)-\zeta  g_{12} \sigma _{W_2}^4\big)\\
0 &= g_6 \big(g_{12} \psi ^2 \big(-(\zeta -\eta )^2\big)-\sqrt{\zeta } \sqrt{n_0} \phi  \big(\gamma +\sigma _{W_2}^2 \big(\eta '-\zeta \big)\big) \big(g_7 \big(\zeta  \psi -\eta  \psi -\zeta  \sigma _{W_2}^2\big)+g_5 \phi \big)\big)\nonumber\\
&\quad\quad+\phi  \big(g_{11} \big(g_{12} \psi  (\eta -\zeta )+\sqrt{\zeta } g_7 \sqrt{n_0} \phi  \big(\sigma _{W_2}^2 \big(\zeta -\eta '\big)-\gamma \big)\big)+\sqrt{\zeta } g_1 g_2 \sqrt{n_0} \phi  \big(\sigma _{W_2}^2 \big(\zeta -\eta '\big)-\gamma \big)\big)\nonumber\\
&\quad\quad+g_3 \phi  \big(g_{12} \psi  (\eta -\zeta )-\phi  \big(\sqrt{\zeta } g_7 \sqrt{n_0}-1\big) \big(\gamma +\sigma _{W_2}^2 \big(\eta '-\zeta \big)\big)\big)\\
0 &= \sqrt{\zeta } \big(-\gamma  \zeta  g_{14} g_{15} \sqrt{n_0} \psi  \phi +\gamma  \eta  g_{14} g_{15} \sqrt{n_0} \psi  \phi -\gamma  g_{10} g_{14} \sqrt{n_0} \phi ^2-\gamma  g_9 g_{15} \sqrt{n_0} \phi ^2-\gamma  g_{14} g_{19} \sqrt{n_0} \phi ^2\nonumber\\
&\quad\quad+\gamma  \zeta  g_{14} g_{15} \sqrt{n_0} \phi  \sigma _{W_2}^2+\zeta ^2 g_{14} g_{15} \sqrt{n_0} \psi  \phi  \sigma _{W_2}^2-\zeta ^2 g_{14} g_{15} \sqrt{n_0} \phi  \sigma _{W_2}^4\nonumber\\
&\quad\quad-\sqrt{n_0} \phi  \eta ' \sigma _{W_2}^2 \big(g_{14} \big(g_{15} \big(\zeta  \psi -\eta  \psi -\zeta  \sigma _{W_2}^2\big)+g_{19} \phi \big)+g_{10} g_{14} \phi +g_9 g_{15} \phi \big)-\zeta  \eta  g_{14} g_{15} \sqrt{n_0} \psi  \phi  \sigma _{W_2}^2\nonumber\\
&\quad\quad+\zeta  g_{10} g_{14} \sqrt{n_0} \phi ^2 \sigma _{W_2}^2+\zeta  g_9 g_{15} \sqrt{n_0} \phi ^2 \sigma _{W_2}^2+\zeta  g_{14} g_{19} \sqrt{n_0} \phi ^2 \sigma _{W_2}^2+\sqrt{\zeta } g_1 g_{18} \psi  \phi  \big(\gamma +\sigma _{W_2}^2 \big(\eta '-\zeta \big)\big)\nonumber\\
&\quad\quad+\sqrt{\zeta } g_6 g_{18} \psi  \phi  \big(\gamma +\sigma _{W_2}^2 \big(\eta '-\zeta \big)\big)-\zeta ^{3/2} g_{12} g_{18} \psi  \sigma _{W_2}^4\big)+g_{17} \psi  \phi  \big(\gamma  \phi +\sigma _{W_2}^2 \big(\phi  \big(\eta '-\zeta \big)+\zeta  g_{12}\big)\big)\\
0 &= \gamma  g_{16} \psi  \phi ^2-\gamma  \zeta ^{3/2} g_7 g_{15} \sqrt{n_0} \psi  \phi +\gamma  \sqrt{\zeta } \eta  g_7 g_{15} \sqrt{n_0} \psi  \phi -\gamma  \sqrt{\zeta } g_7 g_{10} \sqrt{n_0} \phi ^2-\gamma  \sqrt{\zeta } g_5 g_{15} \sqrt{n_0} \phi ^2\nonumber\\
&\quad\quad-\gamma  \sqrt{\zeta } g_7 g_{19} \sqrt{n_0} \phi ^2+\gamma  \zeta ^{3/2} g_7 g_{15} \sqrt{n_0} \phi  \sigma _{W_2}^2-\zeta ^{3/2} \eta  g_7 g_{15} \sqrt{n_0} \psi  \phi  \sigma _{W_2}^2+\zeta ^{3/2} g_7 g_{10} \sqrt{n_0} \phi ^2 \sigma _{W_2}^2\nonumber\\
&\quad\quad+\zeta ^{3/2} g_5 g_{15} \sqrt{n_0} \phi ^2 \sigma _{W_2}^2+\zeta ^{3/2} g_7 g_{19} \sqrt{n_0} \phi ^2 \sigma _{W_2}^2+\zeta ^{5/2} g_7 g_{15} \sqrt{n_0} \psi  \phi  \sigma _{W_2}^2-\zeta ^{5/2} g_7 g_{15} \sqrt{n_0} \phi  \sigma _{W_2}^4\nonumber\\
&\quad\quad+\phi  \eta ' \sigma _{W_2}^2 \big(g_{16} \psi  \phi -\sqrt{\zeta } \sqrt{n_0} \big(g_7 \big(g_{15} \big(\zeta  \psi -\eta  \psi -\zeta  \sigma _{W_2}^2\big)+\big(g_{10}+g_{19}\big) \phi \big)+g_5 g_{15} \phi \big)\big)\nonumber\\
&\quad\quad+\zeta  g_1 g_{13} \psi  \phi  \big(\gamma +\sigma _{W_2}^2 \big(\eta '-\zeta \big)\big)+\zeta  g_6 g_{13} \psi  \phi  \big(\gamma +\sigma _{W_2}^2 \big(\eta '-\zeta \big)\big)-\zeta ^2 g_{12} g_{13} \psi  \sigma _{W_2}^4-\zeta  g_{16} \psi  \phi ^2 \sigma _{W_2}^2\nonumber\\
&\quad\quad+\zeta  g_{12} g_{16} \psi  \phi  \sigma _{W_2}^2
\end{align}
\end{subequations}
}

After some straightforward algebra, one can eliminate all $g_i$ except for $g_7$ and $g_{12}$, which satisfy coupled polynomial equations. Those equations can be shown to be identical to eqn.~\eqref{eqn:tau} by invoking the change of variables,
\begin{equation}
    g_7 = - \frac{\sqrt{\zeta} \psi}{\sqrt{n_0} \phi} \tau_2\,,\quad\text{and}\quad
    g_{12} = \big(\gamma +\sigma _{W_2}^2 \big(\eta '-\zeta \big)\big)\tau_1\,.
\end{equation}

In terms of the related variables defined in eqn.~\eqref{eq:ttau}, the error $E_{72}$ is given by,
\begin{align}
        E_{72} &= -\frac{\psi  \tilde{\tau }_1^2 \big(2 \zeta  \psi  \tilde{\tau }_1 (\zeta -2 \eta )+\phi  \big(\zeta ^2 \psi -\zeta  \eta  (\psi +1)-\eta ^2 \psi \big)\big)}{2 \zeta  \phi  \big(\phi ^2-\psi  \tilde{\tau }_1^2\big)}+\frac{\psi ^2 \tilde{\tau }_1^2 (\zeta -\eta )^3}{2 \zeta ^2
        \big(\tilde{\tau }_2+1\big){}^2 \big(\psi  \tilde{\tau }_1^2-\phi ^2\big)}\nonumber\\
        &\quad+\frac{\psi  \tilde{\tau }_1 \tilde{\tau }_2 \big(\psi  \tilde{\tau }_1+\phi \big) \big(\zeta  \tilde{\tau }_1+\eta  \phi \big)}{\phi ^3-\psi  \phi  \tilde{\tau }_1^2}+\frac{\psi ^2 \tilde{\tau }_1^2 (\zeta -\eta )^2 \big(\tilde{\tau }_1+\phi \big)}{\zeta  \phi  \big(\tilde{\tau }_2+1\big) \big(\phi ^2-\psi  \tilde{\tau }_1^2\big)}+\frac{\zeta  \tilde{\tau }_2^2 \big(\psi  \tilde{\tau }_1+\phi \big){}^2}{2 \big(\phi ^2-\psi  \tilde{\tau }_1^2\big)}\\
        &= -T_2/\tau_1' - E_{22} - \eta \sigma_{W_2}^2\,,
\end{align}
where $T_2$ is given in eqn.~(\ref{eq:T2}).

\subsection{$H_{101}$}
\begin{align}
H_{101} &= \mathbb{E}\hat{y}(\bfx; P,X,\varepsilon)\hat{y}(\bfx; P,\tilde{X},\varepsilon)\\
&= \mathbb{E}\Big[ N_0(\bfx; P)N_0(\bfx; P)^\top + K(\bfx,\tilde{X};\P)K(\tilde{X},\tilde{X};P)^{-1} Y(\tilde{X},\varepsilon)^\top Y(X,\varepsilon)K(X,X;P)^{-1}K(X,\bfx;P)\nonumber\\
&\quad + K(\bfx,\tilde{X};\P)K(\tilde{X},\tilde{X};P)^{-1} N_0(\tilde{X})^\top N_0(X)K(X,X;P)^{-1}K(X,\bfx;P)\nonumber\\
&\quad - N_0(\bfx; P) N_0(X; P) K(X,X;P)^{-1}K(X,\bfx;P) - N_0(\bfx; P) N_0(\tilde{X}; P) K(\tilde{X},\tilde{X};P)^{-1}K(\tilde{X},\bfx;P)\Big]\\
&= \nu \sigma_{W_2}^2 \eta + \nu E_{22} + \mathbb{E} \tr \big(K(\tilde{X},\tilde{X};P)^{-1} (\tilde{X}^\top X + \nu \frac{\sigma_{W_2}^2}{n_1}f(W_1 \tilde{X})^T F) K(X,X;P)^{-1}K(X,\bfx;P)K(\bfx,\tilde{X};P)\big)\\
&=H_{100}\,,
\end{align}

\subsection{$H_{110}$}
\begin{align}
H_{110} &= \mathbb{E}\hat{y}(\bfx; P,X,\varepsilon)\hat{y}(\bfx; P,X,\tilde{\varepsilon})\\
&= \mathbb{E}\Big[ N_0(\bfx; P)N_0(\bfx; P)^\top + K(\bfx,;\P)K(X,X;P)^{-1} Y(X,\tilde{\varepsilon})^\top Y(X,\varepsilon)K(X,X;P)^{-1}K(X,\bfx;P)\nonumber\\
&\quad + K(\bfx,X;\P)K(X,X;P)^{-1} N_0(X)^\top N_0(X)K(X,X;P)^{-1}K(X,\bfx;P)\nonumber\\
&\quad -2 N_0(\bfx; P) N_0(X; P) K(X,X;P)^{-1}K(X,\bfx;P)\Big]\\
&= \mathbb{E}\Big[ N_0(\bfx; P)N_0(\bfx; P)^\top + K(\bfx,;\P)K(X,X;P)^{-1} X^\top X K(X,X;P)^{-1}K(X,\bfx;P)\nonumber\\
&\quad + K(\bfx,X;\P)K(X,X;P)^{-1} N_0(X)^\top N_0(X)K(X,X;P)^{-1}K(X,\bfx;P)\nonumber\\
&\quad -2 N_0(\bfx; P) N_0(X; P) K(X,X;P)^{-1}K(X,\bfx;P)\Big]\\
&= \nu \sigma_{W_2}^2 \eta + \nu E_{22} + E_{32} + \nu E_{33}
\end{align}

\subsection{$H_{111}$}
\begin{align}
H_{111} &= \mathbb{E}\hat{y}(\bfx; P,X,\varepsilon)\hat{y}(\bfx; P,X,\varepsilon)\\
&= \mathbb{E}\Big[ N_0(\bfx; P)N_0(\bfx; P)^\top + K(\bfx,;\P)K(X,X;P)^{-1} Y(X,\varepsilon)^\top Y(X,\varepsilon)K(X,X;P)^{-1}K(X,\bfx;P)\nonumber\\
&\quad + K(\bfx,X;\P)K(X,X;P)^{-1} N_0(X)^\top N_0(X)K(X,X;P)^{-1}K(X,\bfx;P)\nonumber\\
&\quad -2 N_0(\bfx; P) N_0(X; P) K(X,X;P)^{-1}K(X,\bfx;P)\Big]\\
&= \mathbb{E}\Big[ N_0(\bfx; P)N_0(\bfx; P)^\top + K(\bfx,;\P)K(X,X;P)^{-1} (X^\top X + \sigma_{\varepsilon}^2 n_1 I_m) K(X,X;P)^{-1}K(X,\bfx;P)\nonumber\\
&\quad + K(\bfx,X;\P)K(X,X;P)^{-1} N_0(X)^\top N_0(X)K(X,X;P)^{-1}K(X,\bfx;P)\nonumber\\
&\quad -2 N_0(\bfx; P) N_0(X; P) K(X,X;P)^{-1}K(X,\bfx;P)\Big]\\
&= \nu \sigma_{W_2}^2 \eta + \nu E_{22} +E_{31} + E_{32} + \nu E_{33}
\end{align}
\subsection{Combining results: asymptotic variance terms}
Summarizing the above result, we have,
\begin{align*}
    H_{000} &= E_4 \\
    H_{001} &= E_4 \\
    H_{010} &= E_5 \\
    H_{011} &= E_5 + E_6 \\
    H_{100} &= \nu \sigma_{W_2}^2 \eta + \nu E_{22} + E_{71} + \nu E_{72} \\
    H_{101} &= \nu \sigma_{W_2}^2 \eta + \nu E_{22} + E_{71} + \nu E_{72} \\
    H_{110} &= \nu \sigma_{W_2}^2 \eta + \nu E_{22} + E_{32} + \nu E_{33} \\
    H_{111} &= \nu \sigma_{W_2}^2 \eta + \nu E_{22} + E_{31} + E_{32} + \nu E_{33}\,,
\end{align*}
which using eqn.~\eqref{eq:VH} gives,
\begin{align*}
    B &= 1 + E_{21} + E_4\\
      &= \tau_2^2/\tau_1^2\\
    V_{P} &= H_{100} - H_{000}\\
          &= \nu \sigma_{W_2}^2 \eta + \nu E_{22} + E_{71} + \nu E_{72}  - E_4\\
          &= E_{71} - E_4 - \nu T_2/\tau_1'\\
          &= \tau_2'/\tau_1'+2\tau_2/\tau_1-1 - (\tau_2/\tau_1-1)^2 - \nu T_2/\tau_1'\\
           &=\tau_2'/\tau_1' - B - \nu T_2/\tau_1'\\
    V_{X} &= H_{010} - H_{000}\\
          &= E_5 - E_4\\
          &= \phi  \tilde{\tau }_2^2 \big(\tilde{\tau }_2+1\big){}^2/(1-\phi  \tilde{\tau }_2^2) \\
          &= \phi B (\tau_1-\tau_2)^2 /(\tau_1^2-\phi(\tau_1-\tau_2)^2)\\
    V_{\bfe} &= H_{001} - H_{000}\\
             &= 0 \\
    V_{PX} &= H_{110} - H_{010} - H_{100} + H_{000}\\
           &= \nu \sigma_{W_2}^2 \eta + \nu E_{22} + E_{32} + \nu E_{33} - E_5 - (\nu \sigma_{W_2}^2 \eta + \nu E_{22} + E_{71} + \nu E_{72}) + E_4\\
           &= E_{32} - E_{71}-E_5+E_4 + \nu(E_{33}-E_{72})\\
           &= 1-2\tau_2/\tau_1-\tau_2'/\tau_1^2 - (\tau_2'/\tau_1'-2\tau_2/\tau_1+1) - V_X\\
           &\quad + \nu\left[\sigma_{W_2}^2 \left[\left(\tau_1 + (\sigma_{W_2}^2(\eta'-\zeta) + \gamma)\tau_1'+\sigma_{W_2}^2\zeta \tau_2'\right)/\tau_1^2 -\eta\right] - E_{22} - (T_2 - E_{22} - \eta \sigma_{W_2}^2))\right]\\
           &=-\tau_2'/\tau_1^2 -\tau_2'/\tau_1' - V_X  + \nu\left[\sigma_{W_2}^2 \left[\left(\tau_1 + (\sigma_{W_2}^2(\eta'-\zeta) + \gamma)\tau_1'+\sigma_{W_2}^2\zeta \tau_2'\right)/\tau_1^2\right] - T_2)\right]\\
           &= -\tau_2'/\tau_1^2-B-V_P-V_X+\nu T_2/(\gamma \tau_1)^{2}\\
   V_{P\bfe} &= H_{101} - H_{001} - H_{100} + H_{000}\\
              &= \nu \sigma_{W_2}^2 \eta + \nu E_{22} + E_{71} + \nu E_{72} - E_4 - (\nu \sigma_{W_2}^2 \eta + \nu E_{22} + E_{71} + \nu E_{72}) + E_4\\
              &= 0\\
    V_{X\bfe} &= H_{011} - H_{001} - H_{010} + H_{000}\\
              &= E_5+E_6 -E_4 -E_5+E_4\\
              &=E_6\\
              &=\sigma_{\varepsilon}^2 \phi \ttau_2^2/(1-\ttau_2^2 \phi)\\
              &=\sigma_{\varepsilon}^2 V_X/B\\
    V_{PX\bfe} &= \nu \sigma_{W_2}^2 \eta + \nu E_{22} + E_{31} + E_{32} + \nu E_{33} - (E_5 + E_6) - (\nu \sigma_{W_2}^2 \eta + \nu E_{22} + E_{71} + \nu E_{72})\\
    &\quad  - (\nu \sigma_{W_2}^2 \eta + \nu E_{22} + E_{32} + \nu E_{33}) + E_4 + E_5 + \nu \sigma_{W_2}^2 \eta + \nu E_{22} + E_{71} + \nu E_{72} - E_4\\
    &= E_{31} - E_6\\
    &=\sigma_{\varepsilon}^2\big(-\tau_1'/\tau_1^2-1\big) - V_{X\bfe}\,.
\end{align*}
Therefore, we have established the main result, Theorem~\ref{thm:main_ntk}.
\section{Proof of Corollary~\ref{cor:interpolation_threshold}}
\subsection{Bias is non-increasing}
In terms of the auxiliary variables $\ttau_1$ and $\ttau_2$ defined in eqn.~(\ref{eq:ttau}), the coupled equations defining $\tau_1$ and $\tau_2$, eqn.~(\ref{eq:tau_ntk}), simplify to 
\begin{align}
    0&=\gamma  \phi  \tilde{\tau }_2-\gamma  \tilde{\tau }_1+\sigma _{W_2}^2 \left(\tilde{\tau
   }_2 \left(\zeta  \phi  \tilde{\tau }_2+\zeta +\phi  \eta '\right)+\tilde{\tau }_1
   \left(\eta -\eta '\right)+\zeta \right)\\
   0 &=\left(\tilde{\tau }_1-\phi  \tilde{\tau }_2\right) \left(\psi  \tilde{\tau }_1
   \left(\zeta  \tilde{\tau }_2+\eta \right)+\zeta  \phi  \left(\tilde{\tau
   }_2+1\right)\right)+\zeta  \phi  \tilde{\tau }_1 \left(\tilde{\tau }_2+1\right) \sigma
   _{W_2}^2\,.
\end{align}
Eliminating $\ttau_1$ from these equations gives,
\begin{equation}
\begin{split}
&\zeta  \phi  \left(\tilde{\tau }_2+1\right) \left(\left(\eta -\eta '\right) \sigma _{W_2}^2-\gamma \right) \left(\tilde{\tau }_2 \left(\zeta  \phi  \tilde{\tau }_2+\phi  (\gamma +\eta )+\zeta \right)+\sigma _{W_2}^2 \left(\tilde{\tau }_2 \left(\zeta  \phi  \tilde{\tau }_2+\zeta +\phi  \eta '\right)+\zeta \right)+\zeta \right) \\
&\quad=\psi  \left(\zeta  \tilde{\tau }_2+\eta \right) \left(\tilde{\tau }_2 \left(\zeta  \phi  \tilde{\tau }_2+\zeta +\eta  \phi \right)+\zeta \right) \left(\gamma  \phi  \tilde{\tau }_2+\sigma _{W_2}^2 \left(\tilde{\tau }_2 \left(\zeta  \phi  \tilde{\tau }_2+\zeta +\phi  \eta '\right)+\zeta \right)\right)
\end{split}
\end{equation}
Specializing to the random feature kernel ($\sigma_{W_2} = 0$), the equation becomes,
\begin{equation}
\label{eq:tau_2}
\left(\tilde{\tau }_2 \left(\zeta  \psi  \tilde{\tau }_2+\zeta +\eta  \psi \right)+\zeta
   \right) \left(\tilde{\tau }_2 \left(\zeta  \phi  \tilde{\tau }_2+\zeta +\eta  \phi
   \right)+\zeta \right)=-\gamma  \zeta  \phi  \tilde{\tau }_2 \left(\tilde{\tau
   }_2+1\right)\,.
\end{equation}
In the ridgeless limit, $\gamma = 0$, and the quartic equation factorizes into the product of two quadratic polynomials. The root of these equations that respects the conditions of Lemma~\ref{lemma:t1t2} is given by
\begin{equation}
    \label{eq:tau2root}
    \ttau_2 = \frac{-\zeta -\eta  \omega + \sqrt{(\zeta +\eta \omega )^2-4 \zeta ^2 \omega }}{2 \zeta  \omega}\,,
\end{equation}
where $\omega = \text{max}\{\phi,\psi\}$.  Next, recall from Theorem~\ref{thm:main} that $B = \tau_2^2/\tau_1^2 = (1+\ttau_2)^2$ so that
\begin{equation}
    \frac{\partial B}{\partial n_1} = -\frac{\psi^2}{n_0} \frac{\partial B}{\partial \psi} (1+\ttau_2)^2 =  -2\frac{\psi^2}{n_0} (1+\ttau_2)\frac{\partial \ttau_2}{\partial \psi}\,.
\end{equation}
To show that $\partial B/\partial n_1 \le 0$, we show that $(1+\ttau_2) \ge 0$ and that $\partial \ttau_2/\partial \psi \ge 0$. First of all,
\begin{align}
    1 + \ttau_2 &= 1 + \frac{-\zeta -\eta  \omega + \sqrt{(\zeta +\eta \omega )^2-4 \zeta ^2 \omega }}{2 \zeta  \omega} \\
    &= \frac{-(-2 \zeta  \omega+\zeta +\eta  \omega) + \sqrt{(\zeta +\eta \omega )^2-4 \zeta ^2 \omega }}{2 \zeta  \omega}\\
    &\ge \frac{-\sqrt{(-2 \zeta  \omega+\zeta +\eta  \omega)^2} + \sqrt{(\zeta +\eta \omega )^2-4 \zeta ^2 \omega }}{2 \zeta  \omega}\\
    &= \frac{-\sqrt{(\zeta +\eta \omega )^2-4 \zeta ^2 \omega-4 \zeta (\eta-\zeta)\omega^2 } + \sqrt{(\zeta +\eta \omega )^2-4 \zeta ^2 \omega }}{2 \zeta  \omega}
    \ge 0\,,
\end{align}
where we used the relation $\eta \ge \zeta$ which was proved in~\cite{pennington2017nonlinear}. As for the derivative, note that $\partial \ttau_2/\partial \psi = 0$ if $\psi < \phi$ and otherwise,
\begin{align}
    \frac{\partial \ttau_2}{\partial \psi} & = \frac{-(-2 \zeta  \omega +\zeta +\eta 
   \omega) + \sqrt{(\zeta +\eta  \omega )^2-4 \zeta ^2 \omega }}{2 \omega ^2 \sqrt{(\zeta +\eta  \omega )^2-4 \zeta ^2 \omega }}\\
   &\ge \frac{-\sqrt{(-2 \zeta  \omega+\zeta +\eta  \omega)^2}  + \sqrt{(\zeta +\eta  \omega )^2-4 \zeta ^2 \omega }}{2 \omega ^2 \sqrt{(\zeta +\eta  \omega )^2-4 \zeta ^2 \omega }}\\
   &= \frac{-\sqrt{(\zeta +\eta \omega )^2-4 \zeta ^2 \omega-4 \zeta (\eta-\zeta)\omega^2 }  + \sqrt{(\zeta +\eta  \omega )^2-4 \zeta ^2 \omega }}{2 \omega ^2 \sqrt{(\zeta +\eta  \omega )^2-4 \zeta ^2 \omega }}\\
   & \ge 0\,.
\end{align}
Therefore we have shown that
\begin{equation}
    \frac{\partial B}{\partial n_1} \le 0\,,
\end{equation}
i.e. the bias $B$ is monotonically decreasing.
\subsection{Behavior near the interpolation boundary}
From inspection of the expressions in Theorem~\ref{thm:main}, the bias and variance terms depend on $\tau_1$ and $\tau_2$ through four ratios, $\tau_2/\tau_1$, $\tau_2'/\tau_1'$, $\tau_2'/\tau_1^2$ and $\tau_1'/\tau_1^2$. In the ridgeless ($\gamma=0$) limit, these ratios can all be expressed in terms of $\ttau_2$ by using eqns.~(\ref{eq:dtau1}), (\ref{eq:dtau2}), (\ref{eq:ttau}) and~(\ref{eq:tau_2}).

We examine the behavior near the interpolation thereshold $\phi=\psi$ by taking the limit from both directions. It is straightforward algebraic substitution to show that for $\phi < \psi$, 
\begin{equation}
    \label{eq:ratios1}
    \frac{\tau_2}{\tau_1} = \frac{\tau_2'}{\tau_1'} = 1+\ttau_2\,,\quad \frac{\tau_1'}{\tau_1^2} = \frac{\zeta  \left(\tilde{\tau }_2+1\right)}{(\phi - \psi)  \tilde{\tau }_2 \left(\zeta 
   \tilde{\tau }_2+\eta \right)}\,,\quad
    \frac{\tau_2'}{\tau_1^2} = \frac{\zeta  \left(\tilde{\tau }_2+1\right){}^2}{(\phi - \psi)  \tilde{\tau }_2 \left(\zeta 
   \tilde{\tau }_2+\eta \right)}\,.
\end{equation}
From eqn.~(\ref{eq:tau2root}), we see that $\ttau_2$ is finite when $\phi = \psi$ so that the terms in eqn.~(\ref{eq:ratios1}) obey,
\begin{equation}
\label{eq:ratios1_limit}
    \frac{\tau_2}{\tau_1} = \frac{\tau_2'}{\tau_1'} = \mathcal{O}(1)\,,\quad \frac{\tau_1'}{\tau_1^2} = \mathcal{O}\left(\frac{1}{\phi - \psi}\right)\,,\quad
    \frac{\tau_2'}{\tau_1^2} = \mathcal{O}\left(\frac{1}{\phi - \psi}\right)\,.
\end{equation}
Turning now to the case of $\phi>\psi$, similar algebraic substitutions yield,
\begin{align}
   \frac{\tau_2}{\tau_1} &= \tilde{\tau }_2+1\\
   \frac{\tau_2'}{\tau_1'} &= \frac{\zeta ^2 \tilde{\tau }_2 \left(\tilde{\tau }_2^2 (-\psi +\phi +1)+3 \tilde{\tau
   }_2+2\right)+\zeta  \eta  \left(\tilde{\tau }_2^3 (\psi -\phi )+\tilde{\tau }_2^2
   (\phi -\psi )+\tilde{\tau }_2+1\right)+\eta ^2 \tilde{\tau }_2^2 (\psi -\phi )}{\zeta 
   \left(\zeta  \tilde{\tau }_2 \left(\tilde{\tau }_2^2 (\phi -\psi )+\tilde{\tau
   }_2+2\right)+\eta  \tilde{\tau }_2^2 (\phi -\psi )+\eta \right)}\\
  \frac{\tau_1'}{\tau_1^2} &= \frac{\zeta  \left(\tilde{\tau }_2+1\right)}{\tilde{\tau }_2 (\psi -\phi ) \left(\zeta 
   \tilde{\tau }_2+\eta \right)}-\frac{\zeta  \tilde{\tau }_2 \left(\tilde{\tau
   }_2+1\right)}{\zeta  \tilde{\tau }_2 \left(\tilde{\tau }_2+2\right)+\eta }\\
   \frac{\tau_2'}{\tau_1^2} &= \frac{\zeta  \left(\tilde{\tau }_2+1\right){}^2}{\tilde{\tau }_2 (\psi -\phi )
   \left(\zeta  \tilde{\tau }_2+\eta \right)}+\frac{\tilde{\tau }_2 \left(\tilde{\tau
   }_2+1\right) (\eta -\zeta )}{\zeta  \tilde{\tau }_2 \left(\tilde{\tau
   }_2+2\right)+\eta }\,.
\end{align}
We can isolate the pole at $\phi=\psi$ by examining the relevant functions of $\ttau_2$. In particular,
substituting the solution~(\ref{eq:tau2root}) gives,
\begin{equation}
\frac{\tau_2'}{\tau_1'}|_{\psi = \phi} = \frac{\sqrt{(\zeta +\eta  \phi )^2-4 \zeta ^2 \phi }+2 \zeta  \phi -\zeta -\eta  \phi }{2\zeta  \phi }
\end{equation}
which is evidently finite when $\phi = \psi$ and,
\begin{equation}
\frac{\tilde{\tau }_2 \left(\tilde{\tau
   }_2+1\right)}{\zeta  \tilde{\tau }_2 \left(\tilde{\tau }_2+2\right)+\eta } = \frac{2 \zeta \phi }{(\zeta +\eta  \phi ) \sqrt{(\zeta +\eta  \phi )^2-4 \zeta ^2 \phi}+(\zeta +\eta  \phi )^2-4 \zeta ^2 \phi}\,
\end{equation}
whose denominator is a sum of non-negative terms that only vanishes if $\phi = 1$ and $\eta = \zeta$, i.e. the activation function is linear. Therefore, we find for $\phi > \psi$ the same behavior as for $\phi < \psi$, namely,
\begin{equation}
\frac{\tau_2}{\tau_1} = \frac{\tau_2'}{\tau_1'} = \mathcal{O}(1)\,,\quad \frac{\tau_1'}{\tau_1^2} = \mathcal{O}\left(\frac{1}{\phi - \psi}\right)\,,\quad\frac{\tau_2'}{\tau_1^2} = \mathcal{O}\left(\frac{1}{\phi - \psi}\right)\,.
\end{equation}
Altogether, we conclude that as $\phi\to\psi$,
\begin{equation}
B  = \mathcal{O}(1)\,,\quad V_P = \mathcal{O}(1)\,,\quad
V_X = \mathcal{O}(1)\,,\quad V_{X\bfe} = \mathcal{O}(1)\,,\quad
V_{PX} = \mathcal{O}\left(\frac{1}{\phi - \psi}\right)\,,\quad V_{PX\bfe} = \mathcal{O}\left(\frac{1}{\phi - \psi}\right)\,,
\end{equation}
i.e. the only divergent terms are $V_{PX}$ and $V_{PX\bfe}$.

\section{The Bias-Variance Decomposition of d’Ascoli \emph{et al.} \cite{d2020double}}
\label{sec_dasc_bv}

While finalizing this manuscript, we became aware of a related work~\cite{d2020double} that similarly proposes and calculates a multivariate variance decomposition in order to examine the origins of double descent. Their approach is sequential in nature, first defining $\mathcal{E}_{\text{Noise}}$ to be the (expected) variance conditional on $P$ and $X$, then $\mathcal{E}_{\text{Init}}$ to be the remaining variance conditional on $X$, and finally $\mathcal{E}_{\text{Samp}}$ to be the remaining variance. In terms of our fine-grained decomposition, their expressions read,
\begin{equation}
   \mathcal{E}_{\text{Bias}} = B\,,\quad \mathcal{E}_{\text{Init}}  = V_{P} + V_{PX}\,,\quad \mathcal{E}_{\text{Samp}} = V_X\,,\quad \mathcal{E}_{\text{Noise}} = V_{PX\varepsilon} + V_{X_\varepsilon} + V_{P\varepsilon} + V_{\varepsilon}\,.
\end{equation}
Fig.~\ref{fig:dAscoli}(a) illustrates their decomposition in terms of a Venn diagram and Fig.~\ref{fig:dAscoli}(b) shows how the components of their decomposition behave as the number of random features varies, similarly to Figs.~\ref{fig:venn_variance} and~\ref{fig:venn_variance_sm}. Note that their total bias and total variance agree with ours, and that their decomposition also resolves the two separate divergent terms at the interpolation threshold (since $\mathcal{E}_\text{Noise}$ contains $V_{PX\e}$ and $\mathcal{E}_\text{Init}$ contains $V_{PX}$). However, because their decomposition is not fully multivariate, the resulting areas do not necessarily possess the interpretations one might expect from the names ``noise variance," ``initialization variance," and ``sampling variance."  For example, the divergence in $\mathcal{E}_{\text{Noise}}$ ultimately comes from the contribution of $V_{PX\varepsilon}$, which vanishes when you ensemble over initial parameters, for example. This strong dependence on the parameters does not seem like a desirable property of a quantity designed to measure the variance due to noise. Similarly, the divergence of $\mathcal{E}_\text{Init}$ can be eliminated by ensembling (bagging) over different training samples, which also seems like a undesirable property of ``initialization variance." The underlying reason for these inconsistent interpretations is that the divergences ultimately arise from the \emph{interaction} terms $V_{PX}$ and $V_{PX\varepsilon}$, but these interactions are not captured in their decomposition.
\begin{figure*}[t!]
    \centering
\includegraphics[width=0.7\linewidth]{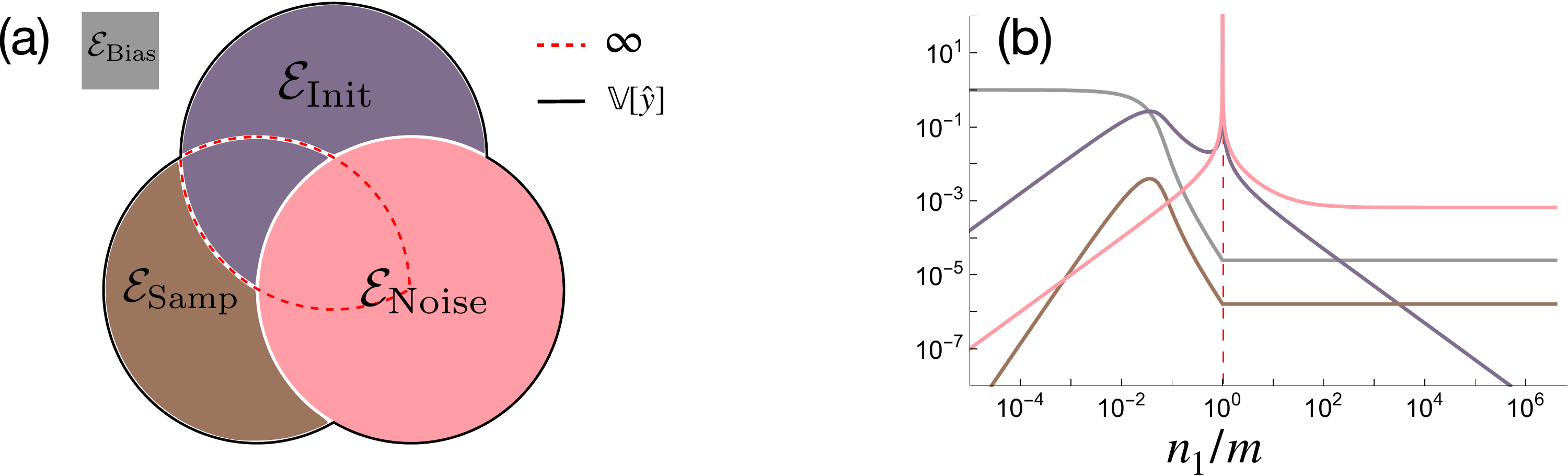}
\caption{The multivariate variance decomposition of~\cite{d2020double}. Following the setup of Fig.~\ref{fig:venn_variance}, panel (a) depicts the decomposition with a Venn diagram and panel (b) shows plots of the individual terms as functions of the overparameterization ratio $n_1/m$. The total variance is partitioned into three terms in a sequential manner, breaking the symmetry of the random variables and failing to account for their interactions. Since it is those interactions that cause the divergences (see Corollary~\ref{cor:interpolation_threshold}), it is not possible to unambiguously attribute the divergences to a univariate source of variance, despite the the observed spikes in $\mathcal{E}_\text{Noise}$ and $\mathcal{E}_\text{Init}$.}
\label{fig:dAscoli}
\end{figure*}

\end{document}